\declaretheorem{theorem}
\newcommand{\poly}{{\text{poly}}}
\newcommand{\eat}[1]{}
\newcommand{\R}{{\mathbb R}}
\newcommand{\E}{{\mathbb{E}}}
\newcommand{\inner}[2]{\left\langle #1, #2 \right\rangle}
\newcommand{\ns}[1]{\left\| #1 \right\|^2}
\newcommand{\n}[1]{\left\| #1 \right\|}
\newcommand{\fns}[1]{\left\| #1 \right\|^2_F}
\newcommand{\fn}[1]{\left\| #1 \right\|_F}
\newcommand{\indic}[1]{\mathbbm{1}\left\{#1\right\}}
\newcommand{\pr}[1]{\left( #1 \right)}
\newcommand{\br}[1]{\left[ #1 \right]}
\newcommand{\absr}[1]{\left| #1 \right|}
\newcommand{\colW}{\text{col}(W)}
\par\vspace{4mm}}
\newenvironment{proofof}[1]{\smallskip\noindent{\bf Proof of #1.}}%
        {\hspace*{\fill}$\Box$\par}
\newtheorem{lemma}[theorem]{Lemma}
\newtheorem{definition}{Definition}
\newtheorem{assumption}{Assumption}
\newtheorem{prop}{Proposition}
\newcommand{\bv}{\bar{v}}
\newcommand{\bw}{\bar{w}}
\newcommand{\wst}{w^{(s,t)}}
\newcommand{\vst}{v^{(s,t)}}
\newcommand{\bwst}{\bar{w}^{(s,t)}}
\newcommand{\bvst}{\bar{v}^{(s,t)}}
\newcommand{\vt}{v^{(t)}}
\newcommand{\wt}{w^{(t)}}
\newcommand{\bvt}{\bar{v}^{(t)}}
\newcommand{\bvO}{\bar{v}^{(0)}}
\newcommand{\bwt}{\bar{w}^{(t)}}
\newcommand{\Wst}{W^{(s,t)}}
\newcommand{\Wt}{W^{(t)}}
\newcommand{\Tst}{T^{(s,t)}}
\newcommand{\Tt}{T^{(t)}}
\newcommand{\hast}{\hat{a}^{(s,t)}}
\newcommand{\Sst}{S^{(s,t)}}
\newcommand{\Est}{\E^{(s,t)}}
\newcommand{\hatt}{\hat{a}^{(t)}}
\newcommand{\St}{S^{(t)}}
\newcommand{\Et}{\E^{(t)}}
\newcommand{\ha}{\hat{a}}
\newcommand{\ta}{\tilde{a}}
\newcommand{\tat}{\tilde{a}^{(t)}}
\newcommand{\Deltat}{\Delta^{(t)}}
\newcommand{\ninf}[1]{\left\| #1 \right\|_{\infty}}
\newcommand{\rd}{\mathrm{d}}
\newcommand{\pt}{p^{(t)}}
\newcommand{\pO}{p^{(0)}}
\newcommand{\qt}{q^{(t)}}
\newcommand{\qO}{q^{(0)}}
\newcommand{\dbprime}{{\prime\prime}}
\newcommand{\triprime}{{\prime\prime\prime}}
\newcommand{\bvx}[1]{\bar{v}^{(#1)}}
\newcommand{\vx}[1]{v^{(#1)}}
\newcommand{\polylog}{\mathrm{polylog}}
\newcommand{\Ss}{S^{(s)}}
\newcommand{\eps}{\varepsilon}
\newcommand{\zt}{z^{(t)}}
\newcommand{\xt}{x^{(t)}}
\newenvironment{remark}[1][Remark]
  {\begin{proof}[\textnormal{\textbf{#1}}]
    
  }
  {\end{proof}}
\newsavebox{\mybox}
\newcommand{\printfnsymbol}[1]{%
  \textsuperscript{\@fnsymbol{#1}}%
}
\title{Understanding Deflation Process in Over-parametrized Tensor Decomposition}
\author{%
    Rong Ge\thanks{Alphabetical order.} \\
    Duke University \\
    \texttt{rongge@cs.duke.edu} \\
    \And
    Yunwei Ren\printfnsymbol{1} \\
    Shanghai Jiao Tong University \\
    \texttt{2016renyunwei@sjtu.edu.cn} \\
    \AND
    \hspace{-5mm}Xiang Wang\printfnsymbol{1} \\
    \hspace{-6.5mm}Duke University \\
    \hspace{-5mm}\texttt{xwang@cs.duke.edu} \\
    \And
    Mo Zhou\printfnsymbol{1} \\
    Duke University \\
    \texttt{mozhou@cs.duke.edu} \\
}
\begin{document}

\maketitle

\begin{abstract}
In this paper we study the training dynamics for gradient flow on over-parametrized tensor decomposition problems. Empirically, such training process often first fits larger components and then discovers smaller components, which is similar to a tensor deflation process that is commonly used in tensor decomposition algorithms. We prove that for orthogonally decomposable tensor, a slightly modified version of gradient flow would follow a tensor deflation process and recover all the tensor components. Our proof suggests that for orthogonal tensors, gradient flow dynamics works similarly as greedy low-rank learning in the matrix setting, which is a first step towards understanding the implicit regularization effect of over-parametrized models for low-rank tensors.
\end{abstract}

\section{Introduction}

Recently, over-parametrization has been recognized as a key feature of neural network optimization. A line of works known as the Neural Tangent Kernel (NTK) showed that it is possible to achieve zero training loss when the network is sufficiently over-parametrized \citep{jacot2018neural,du2018gradient,allen2018convergence}. However, the theory of NTK implies a particular dynamics called lazy training where the neurons do not move much \citep{chizat2019lazy}, which is not natural in many settings and can lead to worse generalization performance~\citep{arora2019exact}. Many works explored other regimes of over-parametrization \citep{chizat2018global,mei2018mean} and analyzed dynamics beyond lazy training \citep{allen2018learning,li2020learning,wang2020beyond}.

Over-parametrization does not only help neural network models. In this work, we focus on a closely related problem of tensor (CP) decomposition. In this problem, we are given a tensor of the form
\[
T^* = \sum_{i=1}^r a_i (U[:,i])^{\otimes 4},
\]
where $a_i\geq 0$ and $U[:,i]$ is the $i$-th column of $U\in \R^{d\times r}$. The goal is to fit $T^*$ using a tensor $T$ of a similar form:
\begin{equation*}
    T = \sum_{i=1}^m \frac{(W[:,i])^{\otimes 4}}{\|W[:,i]\|^2}. 
\end{equation*}
Here $W$ is a $d\times m$ matrix whose columns are components for tensor $T$. The model is over-parametrized when the number of components $m$ is larger than $r$. The choice of normalization factor of $1/\|W[:,i]\|^2$ is made to accelerate gradient flow (similar to \citet{li2020learning,wang2020beyond}). 

Suppose we run gradient flow on the standard objective $\frac{1}{2}\|T-T^*\|_F^2$, that is, we evolve $W$ according to the differential equation:
\[
\frac{\rd W}{\rd t} = - \nabla\pr{ \frac{1}{2}\|T-T^*\|_F^2},
\]
can we expect $T$ to fit $T^*$ with good accuracy? Empirical results (see Figure~\ref{fig:ortho}) show that this is true for orthogonal tensor $T^*$\footnote{We say $T^*$ is an orthogonal tensor if the ground truth components $U[:,i]$'s are orthonormal.} as long as $m$ is large enough. Further, the training dynamics exhibits a behavior that is similar to a {\em tensor deflation process}: it finds the ground truth components one-by-one from larger component to smaller component (if multiple ground truth components have similar norm they might be found simultaneously).


\begin{figure}[t]
\subfigure{
    \includegraphics[width=2.65in]{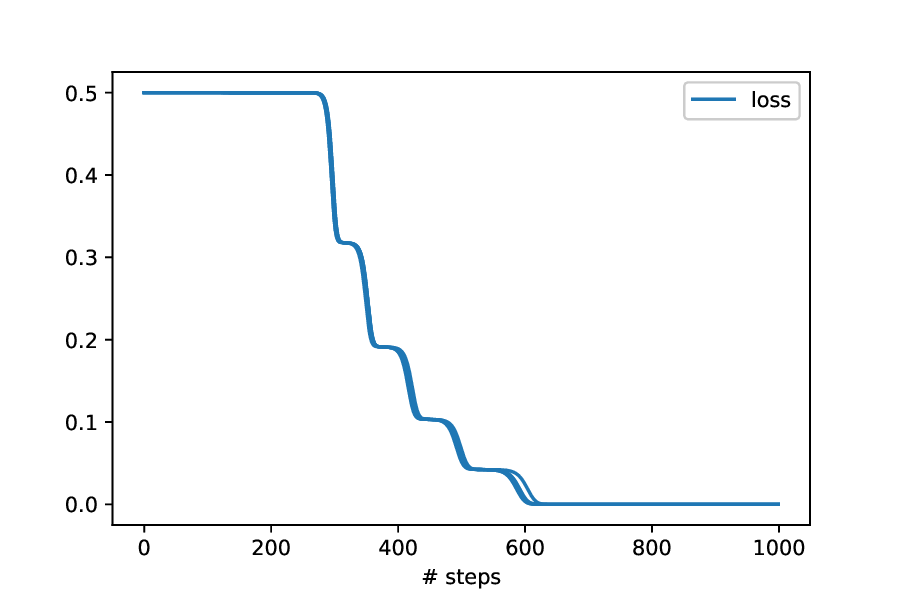}
}
\subfigure{
    \includegraphics[width=2.65in]{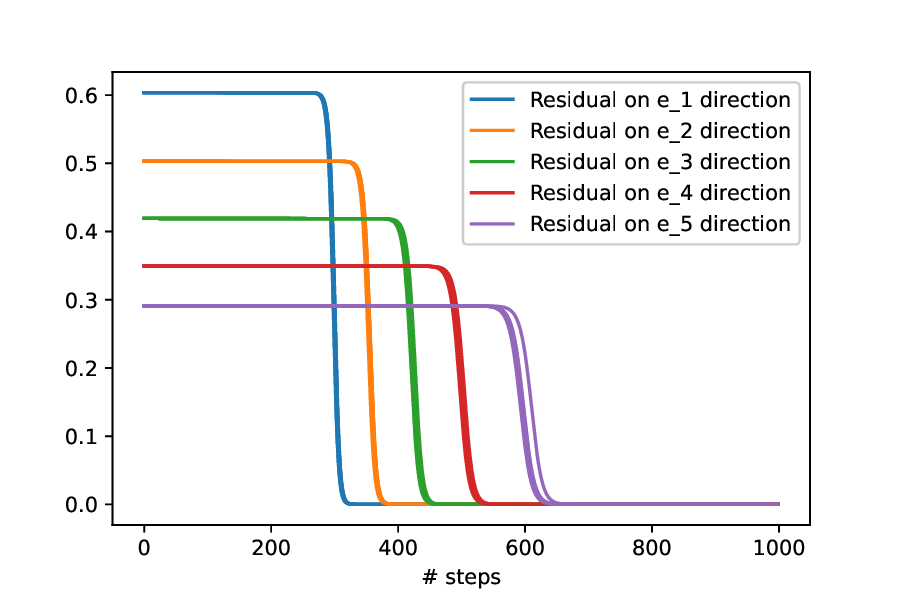}
}
\vspace{-0.2cm}
\caption{The training trajectory of gradient flow on orthogonal tensor decompositions. We chose $T^* = \sum_{i\in [5]}a_i e_i^{\otimes 4}$ with $e_i \in \R^{10}$ and $a_i/a_{i+1}=1.2.$ Our model $T$ has $50$ components and each component is randomly initialized with small norm $10^{-15}$. We ran the experiments from $5$ different initialization and plotted the results separately. The left figure shows the loss $\frac{1}{2}\fns{T-T^*}$ and the right figure shows the residual on each $e_i$ direction that is defined as $(T^*-T)(e_i^{\otimes 4}).$ }\label{fig:ortho}
\vspace{-0.5cm}
\end{figure}

In this paper we show that with a slight modification, gradient flow on over-parametrized tensor decomposition is guaranteed to follow this tensor deflation process, and can fit any orthogonal tensor to desired accuracy\footnotemark (see Section~\ref{sec:algorithm} for the algorithm and Theorem~\ref{thm:main} for the main theorem). This shows that for orthogonal tensors, the trajectory of modified gradient-flow is similar to a greedy low-rank process that was used to analyze the implicit bias of low-rank matrix factorization~\citep{li2020towards}. 
We emphasize that our goal is not to propose another tensor 
decomposition algorithm. Instead, we hope our results can serve as a first step in understanding the implicit bias 
of over-parameterized gradient descent for low-rank tensor 
problems.
\footnotetext{Due to some technical challenges, we actually require 
the target accuracy to be at least $\exp(-o(d / \log d))$. This is only a 
very mild restriction since the dependence is exponential in $d$, and in 
practice, $d$ is usually large and this lower bound can easily drop below the
numerical precision.
}

\subsection{Our approach and technique}

To understand the tensor deflation process shown in Figure~\ref{fig:ortho}, intuitively we can think about the discovery and fitting of a ground truth component in two phases. Consider the beginning of the gradient flow as an example. Initially all the components in $T$ are small, which makes $T$ negligible compared to $T^*$. In this case each component $w$ in $W$ will evolve according to a simpler dynamics that is similar to tensor power method, where one updates $w$ to $T^*(w^{\otimes 3},I)/\n{T^*(w^{\otimes 3},I)}$ (see Section~\ref{sec:deflation} for details).

For orthogonal tensors, it's known that tensor power method with random initializations would be able to discover the largest ground truth components (see \cite{anandkumar2014tensor}). Once the largest ground truth component has been discovered, the corresponding component (or multiple components) $w$ will quickly grow in norm, which eventually fits the ground truth component. The flat regions in the trajectory in Figure~\ref{fig:ortho} correspond to the period of time where the components $w$'s are small and $T-T^*$ remains stable, while the decreasing regions correspond to the period of time where a ground truth component is being fitted. 

However, there are many challenges in analyzing this process. The main problem is that the gradient flow would introduce a lot of dependencies throughout the trajectory, making it harder to analyze the fitting of later ground truth components, especially ones that are much smaller. We modify the algorithm to include a reinitialization step per epoch, which alleviates the dependency issue. Even after the modification we still need a few more techniques:

\paragraph{Local stability} One major problem in analyzing the dynamics in a later stage is that the components used to fit the previous ground truth components are still moving according to their gradients, therefore it might be possible for these components to move away. To address this problem, we add a small regularizer to the objective, and give a new local stability analysis that bounds the distance to the fitted ground truth component both individually and on average. The idea of bounding the distance on average is important as just assuming each component $w$ is close enough to the fitted ground truth component is not sufficient to prove that $w$ cannot move far. While similar ideas were considered in \cite{chizat2021sparse}, the setting of tensor decomposition is different.

\paragraph{Norm/Correlation relation} A key step in our analysis establishes a relationship between norm and correlation: we show if a component $w$ crosses a certain norm threshold, then it must have a very large correlation with one of the ground truth components. This offers an initial condition for local stability and makes sure the residual $T^*-T$ is almost close to an orthogonal tensor. Establishing this relation is difficult as unlike the high level intuition, we cannot guarantee $T^*-T$ remains unchanged even within a single epoch: it is possible that one ground truth component is already fitted while no large component is near another ground truth component of same size. In previous work, \cite{li2020learning} deals with a similar problem for neural networks using gradient truncation that prevents components from growing in the first phase (and as a result has super-exponential dependency on the ratio between largest and smallest $a_i$). We give a new technique to control the influence of ground truth components that are fitted within this epoch, so we do not need the gradient truncation and can characterize the deflation process.

\subsection{Related works}

\paragraph{Neural Tangent Kernel}
There is a recent line of work showing the connection between Neural Tangent Kernel (NTK) and sufficiently wide neural networks trained by gradient descent \citep{jacot2018neural,allen2018convergence,du2018gradient,du2019gradient,li2018learning,arora2019exact,arora2019fine,zou2020gradient,oymak2020towards,ghorbani2021linearized}. These papers show when the width of a neural network is large enough, it will stay around the initialization and its training dynamic is close to the dynamic of the kernel regression with NTK. In this paper we go beyond the NTK setting and analyze the trajectory from a very small initialization.

\paragraph{Mean-field analysis} There is another line of works that use mean-field approach to study the optimization for infinite-wide neural networks \citep{mei2018mean,chizat2018global,nguyen2020rigorous,nitanda2017stochastic,wei2019regularization,rotskoff2018trainability,sirignano2020mean}. \cite{chizat2019lazy} showed that, unlike NTK regime, the parameters can move away from its initialization in mean-field regime. However, most of the existing works need width to be exponential in dimension and do not provide a polynomial convergence rate.

\paragraph{Beyond NTK}
There are many works showing the gap between neural networks and NTK \citep{allen2019can,allen2018learning,yehudai2019power,ghorbani2019limitations,ghorbani2020neural,dyer2019asymptotics,woodworth2020kernel,bai2019beyond,bai2020taylorized,huang2020dynamics,chen2020towards}. In particular, \cite{li2020learning} and \cite{wang2020beyond} are closely related with our setting. While \cite{li2020learning} focused on learning two-layer ReLU neural networks with orthogonal weights, they relied on the connection between tensor decomposition and neural networks \citep{ge2017learning} and essentially worked with tensor decomposition problems. In their result, all the $a_i$'s are within a constant factor and all components are learned simultaneously. We allow ground truth components with very different scale and show a deflation phenomenon. 
\cite{wang2020beyond} studied learning a low-rank non-orthogonal 
tensor, but they only showed the learned tensor $T$ will eventually be close to 
the ground truth tensor $T^*$ and does not guarantee the components of $T$ will
align with the components of $T^*$. On the other hand, we fully characterize 
the training trajectory and the components of the learned tensor.

\paragraph{Implicit regularization} 
Many works recently showed that different optimization methods tend to converge to different optima and have different optimization trajectories in several settings \citep{Saxe14exactsolutions, soudry2018implicit, nacson2019convergence, ji2018gradient, ji2018risk, ji2019refined, ji2020directional, gunasekar2018characterizing, gunasekar2018implicit, moroshko2020implicit, arora2019implicit, lyu2019gradient, chizat2020implicit}. 
In particular, \cite{Saxe14exactsolutions} related the dynamics of 
gradient descent to the magnitude of the singular values of the target weight 
matrices for linear networks with orthogonal inputs. The phenomenon there is
qualitatively similar to our results, but the settings and the proof techniques
are very different. The more related and recent works are \cite{li2020towards}
and \cite{razin2021implicit}.
\cite{li2020towards} studied matrix factorization problem and showed gradient descent with infinitesimal initialization is similar to greedy low-rank learning, which is a multi-epoch algorithm that finds the best approximation within the rank constraint and relax the constraint after every epoch. 
\cite{razin2021implicit} studied the tensor factorization problem and showed that it biases towards low rank tensor. Both of these works considered partially observable matrix or tensor and are only able to fully analyze the first epoch (i.e., recover the largest direction). We focus on a simpler setting with fully-observable ground truth tensor and give a complete analysis of learning all the ground truth components.

\subsection{Outline}
In Section~\ref{sec:prelim} we introduce the basic notations and problem setup. In Section~\ref{sec:deflation} we review tensor deflation process and tensor power method. We then give our algorithm in Section~\ref{sec:algorithm}. Section~\ref{sec:sketch} gives the formal main theorem and discusses high-level proof ideas. We conclude in Section~\ref{sec:conclude} and discuss some limitations of the work. The detailed proofs and additional experiments are left in the appendix.

\section{Preliminaries}\label{sec:prelim}

\paragraph{Notations} We use upper-case letters to denote matrices and tensors, and lower-case letters to denote vectors. For any positive integer $n,$ we use $[n]$ to denote the set $\{1,2,\cdots, n\}.$ We use $I_d$ to denote $d\times d$ identity matrix, and omit the subscript $d$ when the dimension is clear. We use $\delta_0$Unif$(\mathbb{S}^{d-1})$ to denote the uniform distribution over $(d-1)$-dimensional sphere with radius $\delta_0.$

For vector $v$, we use $\|v\|$ to denote its $\ell_2$ norm. We use $v_k$ to denote the $k$-th entry of vector $v$, and use $v_{-k}$ to denote vector $v$ with its $k$-th entry removed. We use $\bv$ to denote the normalized vector $\bv=v/\n{v}$, and use $\bv_k$ to denote the $k$-th entry of $\bv.$

For a matrix $A$, we use $A[:,i]$ to denote its $i$-th column and $\mathrm{col}(A)$ to denote the set of all column vectors of $A$. For matrix $M$ or tensor $T$, we use $\|M\|_F$ and $\|T\|_F$ to denote their Frobenius norm, which is equal to the $\ell_2$ norm of their vectorization.

For simplicity we restrict our attention to symmetric 4-th order tensors. For a vector $v\in \R^d$, we use $v^{\otimes 4}$ to denote a $d\times d\times d\times d$ tensor whose $(i,j,k,l)$-th entry is equal to $v_iv_jv_kv_l$. Suppose $T=\sum_{w} w^{\otimes 4},$ we define $T(v^{\otimes 4})$ as $\sum_{w} \inner{w}{v}^4$, $T(v^{\otimes 3}, I)$ as $\sum_{w} \inner{w}{v}^3 w$, and $T(v^{\otimes 2}, u, I) = \sum_w \inner{w}{v}^2 \inner{w}{u} w$.

For clarity, we always call a component in $T^*$ as ground truth component and call a component in our model $T$ simply as component.

\paragraph{Problem setup} We consider the problem of fitting a 4-th order tensor. The components of the ground truth tensor is arranged as columns of a matrix $U \in \R^{d\times r}$, and the tensor $T^*$ is defined as
\[T^* = \sum_{i=1}^r a_i (U[:,i]^{\otimes 4}),\]
where $a_1\geq a_2\geq \cdots \geq a_r\geq 0$ and $\sum_{i=1}^r a_i = 1$. For convenience in the analysis, we assume $a_i\geq \epsilon/\sqrt{d}$ for all $i\in [r].$ This is without loss of generality because the target accuracy is $\epsilon$ and we can safely ignore very small ground truth components with $a_i<\epsilon/\sqrt{d}.$
In this paper, we focus on the case where the components are orthogonal---that is, the columns $U[:,i]$'s are orthonormal. For simplicity we assume without loss of generality that $U[:,i] = e_i$ where $e_i$ is the $i$-th standard basis vector\footnote{This is without loss of generality because gradient flow (and our modifications) is invariant under rotation of the ground truth parameters.}. To reduce the number of parameters we also assume $r = d$, again this is without loss of generality because we can simply set $a_i = 0$ for $i > r$.

There can be many different ways to parametrize the tensor that we use to fit $T^*$. Following previous works~\citep{wang2020beyond,li2020learning}, we use an over-parameterized and two-homogeneous tensor 
\[T=\sum_{i=1}^m \frac{W[:,i]^{\otimes 4}}{\ns{W[:,i]}} .\]
Here $W\in \R^{d\times m}$ is a matrix with $m$ columns that corresponds to the components in $T$. It is overparametrized when $m > r$. 

Since the tensor $T$ only depends on the set of columns $W[:,i]$ instead of the orderings of the columns, for the most part of the paper we will instead write the tensor $T$ as
\begin{equation*}
    T=\sum_{w\in \mathrm{col}(W)} \frac{w^{\otimes 4}}{\ns{w}},
\end{equation*}

where $\colW$ is the set of all the column vectors in $W$. This allows us to discuss the dynamics of coordinates for a component $w$ without using the index for the component. In particular, $w_i$ always represents the $i$-th coordinate of the vector $w$. This representation is similar to the mean-field setup \citep{chizat2018global,mei2018mean} where one considers a distribution on $w$, however since we do not rely on analysis related to infinite-width limit we use the sum formulation instead. For the ease of presentation, we choose to restrict our setting to fourth-order
tensor decomposition, but our results can be easily generalized to tensor with order at least three.

\section{Tensor deflation process and tensor power method}
\label{sec:deflation}
In this section we will first discuss the basic tensor deflation process for orthogonal tensor decomposition. Then we show the connection between the tensor power method and gradient flow.

\paragraph{Tensor deflation} For orthogonal tensor decomposition, a popular approach is to first fit the largest ground truth component in the tensor, then subtract it out and recurse on the residual. The general process is given in Algorithm~\ref{algo:deflate}. In this process, there are multiple ways to find the best rank-1 approximation. For example, \cite{anandkumar2014tensor} uses tensor power method, which picks many random vectors $w$, and update them as $w = T^*(w^{\otimes 3},I)/\n{T^*(w^{\otimes 3},I)}$.

\begin{algorithm}[htbp]
\caption{Tensor Deflation Process}\label{algo:deflate}
\begin{algorithmic}
\STATE \textbf{Input:} Tensor $T^*$
\STATE \textbf{Output:} Components $W$ such that $T^* \approx \sum_{w\in \colW} w^{\otimes 4}/\|w\|^2$
\STATE{Initially let the residual $R$ be $T^*$.}
\WHILE{$\|R\|_F$ is large}
\STATE Find the best rank 1 approximation $w^{\otimes 4}/\|w\|^2$ for $R$.
\STATE Add $w$ as a new column in $W$, and let $R = R - w^{\otimes 4}/\|w\|^2$.
\ENDWHILE
\end{algorithmic}
\end{algorithm}

\paragraph{Tensor power method and gradient flow}

If we run tensor power method using a tensor $T^*$ that is equal to $\sum_{i=1}^d a_i e_i^{\otimes 4}$, then a component $w$ will converge to the direction of $e_i$ where $i$ is equal to $\arg\max_i a_i \bw_i^2$. If there is a tie (which happens with probability 0 for random $w$), then the point will be stuck at a saddle point.

Let's consider running gradient flow on $W$ with objective function $\frac{1}{2}\fns{T-T^*}$ as $T:=\sum_{w\in\colW}w^{\otimes 4}/\|w\|^2$. If $T$ does not change much, the residual $R := T^*-T$ is close to a constant. 
In this case the trajectory of one component $w$ is determined by the following differential equation:
\begin{equation}
  \label{eq:powerdynamics}
  \frac{\rd w}{\rd t} 
  = 4 R(\bw^{\otimes 2},w, I) - 2R(\bw^{\otimes 4})w.
\end{equation}
To understand how this process works,  we can take a look at $\frac{\rd w_i^2/\rd t}{w_i^2}$ (intuitively this corresponds to the growth rate for $w_i^2$). If $R \approx T^*$ then we have:
$$
    \frac{\rd w_i^2/\rd t}{w_i^2} \approx 8a_i\bw_i^2 - 4\sum_{j\in [d]}a_j\bw_j^4.
$$
From this formula it is clear that the coordinate with larger $a_i \bw_i^2$ has a faster growth rate, so eventually the process will converge to $e_i$ where $i$ is equal to $\arg\max_i a_i \bw_i^2$, same as the tensor power method. Because of their similarity later we refer to dynamics in Eqn. \eqref{eq:powerdynamics} as tensor power dynamics.

\section{Our algorithm}
\label{sec:algorithm}
Our algorithm is a modified version of gradient flow as described in Algorithm~\ref{algo:main}. First, we change the loss function to
\[
L(W) = \frac{1}{2}\fns{T-T^*}+\frac{\lambda}{2}\fns{W}.
\]

The additional small regularization $\frac{\lambda}{2}\fns{W}$ allows us to prove a {\em local stability} result that shows if there are components $w$ that are close to the ground truth components in direction, then they will not move too much (see Section~\ref{sec:induction_sketch}). 

Our algorithm runs in multiple epochs with increasing length. We use $\Wst$ to denote the weight matrix in epoch $s$ at time $t$. We use similar notation for tensor $\Tst.$ In each epoch we try to fit ground truth components with  $a_i \ge \beta^{(s)}.$ In general, the time it takes to fit one ground truth direction is inversely proportional to its magnitude $a_i$. The earlier epochs have shorter length so only large directions can be fitted, and later epochs are longer to fit small directions. 

At the middle of each epoch, we reinitialize all components that do not have a large norm. This serves several purposes: first we will show that all components that exceed the norm threshold will have good correlation with one of the ground truth components, therefore giving an initial condition to the local stability result; second, the reinitialization will reduce the dependencies between different epochs and allow us to analyze each epoch almost independently. These modifications do not change the dynamics significantly, however they allow us to do a rigorous analysis. 

\begin{algorithm}[htbp]
\caption{Modified Gradient Flow}\label{algo:main}
\begin{algorithmic}
\STATE \textbf{Input:} Number of components $m$, initialization scale $\delta_0$, re-initialization threshold $\delta_1$, increasing rate of epoch length $\gamma$, target accuracy $\epsilon$, regularization coefficient $\lambda$
\STATE \textbf{Output:} Tensor $T$ satisfying $\fn{T-T^*}\leq \epsilon.$
\STATE {Initialize $W^{(0,0)}$ as a $d\times m$ matrix with each column $w^{(0,0)}$ i.i.d. sampled from $\delta_0 \text{Unif}(\mathbb{S}^{d-1})$}.
\STATE {$\beta^{(0)}\leftarrow \fn{T^{(0,0)}-T^*}$; $s\leftarrow 0$}
\WHILE {$\fn{T^{(s,0)}-T^*}>\epsilon$}
\STATE {Phase 1: Starting from $W^{(s,0)}$, run gradient flow for time $t_1^{(s)}= O(\frac{d}{\beta^{(s)}\log(d)})$.}
\STATE {Reinitialize all components that have $\ell_2$ norm less than $\delta_1$ by sampling i.i.d. from $\delta_0 \text{Unif}(\mathbb{S}^{d-1})$.}
\STATE {Phase 2: Starting from $W^{(s,t_1^{(s)})}$, run gradient flow for $t_2^{(s)}-t_1^{(s)}= O(\frac{\log(1/\delta_1)+\log(1/\lambda)}{\beta^{(s)}})$ time}
\STATE {$W^{(s+1,0)}\leftarrow W^{(s,t_2^{(s)})};\ \beta^{(s+1)}\leftarrow \beta^{(s)}(1-\gamma)$; $s\leftarrow s+1$}
\ENDWHILE
\end{algorithmic}
\end{algorithm}

\section{Main theorem and proof sketch}
\label{sec:sketch}
In this section we discuss the ideas to prove the following main theorem\footnote{In the theorem statement, we have a parameter $\alpha$ that is not used in our algorithm but is very useful in the analysis (see for example Definition~\ref{def:sst}). Basically, $\alpha$ measures the closeness between a component and its corresponding ground truth direction (see more in Section~\ref{sec:induction_sketch}).}
  

\begin{restatable}{theorem}{maintheorem}\label{thm:main}
For any $\epsilon \ge \exp(-o(d / \log d)) $,
there exists $\gamma = \Theta(1)$, $m=\poly(d)$, $\lambda=\min\{O(\log d/d),O(\epsilon/d^{1/2})\})$, $\alpha=\min\{O(\lambda /d^{3/2}), O(\lambda^2), O(\epsilon^2/d^4)\}$, $\delta_1=O(\alpha^{3/2}/m^{1/2})$, $\delta_0=\Theta(\delta_1\alpha/\log^{1/2} (d))$ such that with probability $1-1/\poly(d)$ in the (re)-initializations, Algorithm~\ref{algo:main} terminates in $O(\log(d/\epsilon))$ epochs and returns a tensor $T$ such that \[\fn{T-T^*}\leq \epsilon.\]
\end{restatable}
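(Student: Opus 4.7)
My plan is to prove the theorem by induction on the epoch index $s$, showing that each epoch of Algorithm~\ref{algo:main} reduces the residual $\fn{T^{(s,0)}-T^*}$ by a factor of roughly $(1-\gamma)$, in close analogy with one step of the tensor deflation process of Algorithm~\ref{algo:deflate}. The inductive hypothesis at the start of epoch $s$ would assert: (i) the residual $T^{(s,0)}-T^*$ is close (up to $O(\alpha)$ in Frobenius norm) to a tensor $\sum_{i\in I_s}\tilde a_i\, e_i^{\otimes 4}$ supported on directions with $\tilde a_i\le\beta^{(s)}$, and (ii) for every $i\notin I_s$, the aggregate $\sum_{w\in C_i} w^{\otimes 4}/\ns{w}\approx a_i e_i^{\otimes 4}$, where $C_i$ denotes the cluster of components aligned with $e_i$, with both individual and aggregate distance bounds that will persist through all subsequent epochs.

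Within a single epoch I would split the analysis into three steps. First, at (re)initialization each column is drawn from $\delta_0\,\mathrm{Unif}(\mathbb{S}^{d-1})$, so by standard concentration with probability $1-1/\poly(d)$ every ground truth direction $e_i$ with $a_i\ge\beta^{(s)}$ attracts many fresh components whose initial coordinate $\bw_i$ is slightly above $1/\sqrt d$. Second, during Phase 1 all live components still have norm $\ll 1$, so the residual is approximately the orthogonal tensor $\sum_{i\in I_s} a_i e_i^{\otimes 4}$ plus a controlled perturbation and each component evolves under the tensor power dynamics \eqref{eq:powerdynamics}. Using the growth-rate estimate $(\rd w_i^2/\rd t)/w_i^2\approx 8a_i\bw_i^2-4\sum_j a_j\bw_j^4$ I would track the largest coordinate of each component and argue via comparison with pure tensor power method that, after time $t_1^{(s)}=O(d/(\beta^{(s)}\log d))$, any component reaching norm $\delta_1$ satisfies a \emph{norm/correlation relation}: its direction is $(1-\alpha)$-correlated with some $e_i$ having $a_i\ge\beta^{(s)}$. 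Reinitializing the remaining small components then wipes out stray correlations without disturbing the surviving clusters. Third, during Phase 2 the regularizer $\tfrac{\lambda}{2}\fns{W}$ enables a local-stability analysis: for each new $e_i$ with $a_i\ge\beta^{(s)}$, the aligned cluster $C_i$ jointly fits $a_i e_i^{\otimes 4}$ to accuracy $O(\lambda+\alpha)$ within time $O((\log(1/\delta_1)+\log(1/\lambda))/\beta^{(s)})$, while previously fitted clusters drift by at most $O(\alpha)$.

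The main obstacle is the joint control of \emph{individual} and \emph{aggregate} distances during Phase 2, particularly when several directions $e_i,e_j$ with comparable $a_i\approx a_j\ge\beta^{(s)}$ are fitted in the same epoch and when already-fitted clusters lie near the current trajectory. Individual bounds alone do not preclude drift because a single component could wander around $e_i$ while its cluster still covers $e_i$; conversely, aggregate-only bounds allow a single component to shoot off. I would therefore set up a joint Lyapunov argument on the pair
\[
\Bigl(\,\max_{w\in C_i}\min_{\sigma\in\{\pm 1\}}\n{\bw-\sigma e_i},\ \fn{\textstyle\sum_{w\in C_i} w^{\otimes 4}/\ns{w}-a_i e_i^{\otimes 4}}\Bigr),
\]
using the orthogonality of the ground truth to show that once the $e_i$-aggregate is fitted the gradient component in direction $e_i$ is suppressed, while the regularizer provides a restoring force keeping each cluster compact. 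A further subtlety, highlighted in the introduction, is that within a single epoch some ground truth directions may finish being fitted while others are still being discovered, so the residual is not constant; I would absorb this by letting the proxy tensor $\sum_i\tilde a_i e_i^{\otimes 4}$ tracked in Phase 1 have time-varying coefficients and by verifying that the tensor power dynamics is robust to coefficients that only monotonically decrease. Iterating the per-epoch analysis $O(\log(d/\epsilon))$ times, combining $\beta^{(s)}=\beta^{(0)}(1-\gamma)^s$ with the assumption $a_i\ge\epsilon/\sqrt d$, gives $\fn{T-T^*}\le\epsilon$ and completes the proof.
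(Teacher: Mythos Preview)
Your overall architecture---induction over epochs, Phase~1 discovery via tensor-power dynamics with a norm/correlation relation, Phase~2 fitting via local stability, and the insistence on \emph{both} individual and aggregate control---matches the paper's proof. The per-epoch structure you describe is essentially the content of Lemma~\ref{lem:phase1} and Lemma~\ref{lem:phase2}, and your handling of the time-varying residual in Phase~1 is what the paper does via the partition in Definition~\ref{def-phase1-partition}.

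There is, however, a genuine gap in your local stability mechanism. Your proposed Lyapunov pair uses
\[
\fn{\textstyle\sum_{w\in C_i} w^{\otimes 4}/\ns{w}-a_i e_i^{\otimes 4}}
\]
as the aggregate quantity, and your explanation is that ``once the $e_i$-aggregate is fitted the gradient component in direction $e_i$ is suppressed, while the regularizer provides a restoring force keeping each cluster compact.'' This is backwards. If the aggregate were actually driven to $a_i e_i^{\otimes 4}$, the residual along $e_i$ would vanish and there would be \emph{no} attraction toward $e_i$ at all; the regularizer by itself exerts only a radial pull toward the origin, not a tangential pull toward $e_i$. The paper's mechanism is the opposite: the regularizer forces systematic \emph{under}-fitting, so that $a_k-\hat a_k^{(s,t)}\ge \lambda/6$ always holds (condition~(\ref{itm: Ist, residual}) of Proposition~\ref{prop:main}), and it is precisely this leftover $\Theta(\lambda)$ residual that supplies the tangential restoring force in Lemma~\ref{lem:individual_average}. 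Consequently your aggregate Frobenius distance will stabilize at $\Theta(\lambda)$, not at anything small, and cannot deliver the $O(\alpha^2)$-level directional control you need.

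The fix the paper uses is to \emph{decouple} norm from direction in the invariant: track $a_k-\hat a_k$ on its own (maintained in $[\lambda/6,\,2\lambda]$ once fitted), and track the \emph{norm-squared-weighted average correlation} $\Est_{k,w}[\bwst_k]^2\ge 1-\alpha^2$ separately from the individual bound $[\bwst_k]^2\ge 1-\alpha$. The $\alpha$ versus $\alpha^2$ separation is not cosmetic; Claim~\ref{clm:example} exhibits a configuration in which the individual bound is tight and $\frac{\rd}{\rd t}[\bv_k]^2<0$, so the individual bound alone cannot be closed. The tighter average bound is what makes the error terms in Lemma~\ref{lemma: individual bound} drop to $O(\alpha^{1.5})$, which is then beaten by the $\Theta(\lambda\alpha)$ attraction coming from the residual lower bound. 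Replacing your Lyapunov pair by the triple $(\,[\bw_k]^2,\ \Est_{k,w}[\bw_k]^2,\ a_k-\hat a_k\,)$ with these specific thresholds would close the gap.
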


Intuitively, epoch $s$ of Algorithm~\ref{algo:main} will try to discover all ground truth components with $a_i$ that is at least as large as $\beta^{(s)}$. The algorithm does this in two phases. In Phase 1, the small components $w$ will evolve according to tensor power dynamics. For each ground truth component with large enough $a_i$ that has not been fitted yet, we hope there will be at least one component in $W$ that becomes large and correlated with $e_i$. We call such ground truth components ``discovered''. Phase 1 ends with a check that reinitilizes all components with small norm. Phase 2 is relatively short, and in Phase 2 we guarantee that every ground truth component that has been discovered become ``fitted'', which means the residual $T-T^*$ becomes small in this direction.

However, there are still many difficulties in analyzing each of the steps. In particular, why would ground truth components that are fitted in previous epochs remain fitted? How to guarantee only components that are correlated with a ground truth component grow to a large norm? Why wouldn't the gradient flow in Phase 2 mess up with the initialization we require in Phase 1? We discuss the high level ideas to solve these issues. In particular, in Section~\ref{sec:induction_sketch} we first give an induction hypothesis that is preserved throughout the algorithm, which guarantees that every ground truth component that is fitted remains fitted. In Section~\ref{sec:phase1_sketch} we discuss the properties in Phase 1, and in Section~\ref{sec:phase2_sketch} we discuss the properties in Phase 2.

\subsection{Induction hypothesis and local stability}\label{sec:induction_sketch}

In order to formally define what it means for a ground truth component to be ``discovered'' or ``fitted'', we need some more definitions and notations.

\begin{definition}\label{def:sst}
Define $\Sst_i\subseteq [m]$ as the subset of components that satisfy the following conditions: the $k$-th component is in $\Sst_i$ if and only if there exists some time $(s',t')$ that is no later than $(s,t)$ and no earlier than the latest re-initialization of $W[:,k]$ such that
\[\n{W^{(s',t')}[:,k]}=\delta_1 \text{ and } [\overline{W^{(s',t')}[:,k]}_i]^2\geq 1-\alpha^2.\]
We say that ground truth component $i$ is {\em discovered} in epoch $s$ at time $t$, if  $\Sst_i$ is not empty.
\end{definition}

Intuitively, $\Sst_i$ is a subset of components in $W$ such that they have large enough norm and good correlation with the $i$-th ground truth component. Although such components may not have a large enough norm to fit $a_i$ yet, their norm will eventually grow. Therefore we say ground truth component $i$ is discovered when such components exist.

For convenience, we shorthand $\wst \in \{\Wst[:, j] | j\in \Sst_i\}$ by $\wst \in \Sst_i.$ 
Now we will discuss when a ground truth component is fitted, for that, let 
\[\hat{a}_i^{(s,t)} = \sum_{\wst\in S^{(s,t)}_i}\ns{\wst}.\]
Here $\hat{a}_i^{(s,t)}$ is the total squared norm for all the components in $\Sst_i$. We say a ground truth component is {\em fitted} if $a_i - \hat{a}_i^{(s,t)} \le 2\lambda$.

Note that one can partition the columns in $W$ using sets $\Sst_i$, giving $d$ groups and one extra group that contains everything else. We define the extra group as $S^{(s,t)}_\varnothing := [m]\setminus \bigcup_{k\in [d]}S^{(s,t)}_k$. 

For each of the non-empty $\Sst_i$, we can take the average of its component (weighted by $\ns{\wst}$):
\[\Est_{i,w} f(\wst):=\frac{1}{\hast_i}\sum_{\wst\in S^{(s,t)}_i}\ns{\wst} f(\wst).\]
If $S^{(s,t)}_i =  \varnothing,$ we define $\Est_{i,w} f(\wst)$ as zero.  
Now we are ready to state the induction hypothesis:

\begin{restatable}[Induction hypothesis]{prop}{proposition}
\label{prop:main}
  In the setting of Theorem~\ref{thm:main}, for any epoch $s$ and time $t$ and every $k \in [d]$, the following hold.
  \begin{enumerate}[(a)]
    \item \label{itm: Ist, individual}
      For any $\wst \in \Sst_k$, we have $\br{\bwst_k}^2 \ge 1 - \alpha$.
    \item \label{itm: Ist, average}
      If $\Sst_k$ is nonempty, $\Est_{k, w} \br{\bwst_k}^2 \ge 
      1 - \alpha^2 - 4sm\delta_1^2$. 
    \item \label{itm: Ist, residual}
      We always have $a_k - \hat{a}^{(s, t)}_k \ge \lambda/6 - s m\delta_1^2$;
      if $a_k \geq \frac{\beta^{(s)}}{1-\gamma}$, we further know 
      $a_k - \hat{a}_k^{(s,t)} \le \lambda+ s m\delta_1^2$. 
    \item If $\wst \in \Sst_\varnothing$, then $\|\wst\| \le \delta_1$.
  \end{enumerate}
\end{restatable}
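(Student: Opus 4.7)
The plan is to induct jointly on the epoch index $s$ and on continuous time $t$ within each epoch, with discrete checks at the reinitialization point of Phase~1 and at the transition between epochs. The base case at $(s,t)=(0,0)$ is immediate: every component has norm $\delta_0<\delta_1$, so each $\Sst_k$ is empty and (a),(b) are vacuous; (c) reduces to $a_k\ge \lambda/6$, which holds by the parameter choice $\lambda=O(\epsilon/\sqrt d)$ combined with $a_i\ge \epsilon/\sqrt d$; and (d) is immediate. For the continuous inductive step I would first write out the gradient flow with regularizer,
\[
\frac{\rd w}{\rd t} \;=\; 4(T^*-T)(\bw^{\otimes 2},w,I) \;-\; 2(T^*-T)(\bw^{\otimes 4})\,w \;-\; \lambda\,w,
\]
and differentiate the scalar quantities $\bwst_k{}^2$, $\|\wst\|^2$, $\hast_k$, and $\Est_{k,w}[\bwst_k{}^2]$ that appear in the four clauses.

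For (a), I would show that near the boundary $\bwst_k{}^2=1-\alpha$ the time derivative is strictly positive: the dominant term in $\rd \bwst_k{}^2/\rd t$ is proportional to $(a_k-\hast_k)\bwst_k{}^2(1-\bwst_k{}^2)$ minus cross terms of order $\max_{j\ne k}(a_j-\hast_j)\bwst_j{}^2$, which is bounded above by combining (c) and the chosen $\alpha\ll\lambda$; this makes $\{\bwst_k{}^2\ge 1-\alpha\}$ a forward-invariant set. For (c), I would compute $\rd\hast_k/\rd t$ using the fact that components in $\Sst_k$ are almost aligned with $e_k$; up to $O(\alpha)$ corrections the dynamics of $\hast_k$ behaves like the scalar ODE $\dot h=4h(a_k-h-\lambda/2)+O(\alpha)$, which (i) cannot overshoot past $a_k-\lambda$, giving the universal lower bound $a_k-\hast_k\ge \lambda/6$ once one absorbs the reinitialization slack $sm\delta_1^2$, and (ii) exponentially approaches $a_k-\lambda$ with rate $\Theta(\beta^{(s)})$, so the Phase~2 time budget $O(\log(1/\lambda)/\beta^{(s)})$ suffices to reach the upper bound whenever $a_k\ge \beta^{(s)}/(1-\gamma)$ has already been discovered. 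For (d), I would show a component outside every $\Sst_k$ has $\rd\|\wst\|^2/\rd t\le (8\max_k(a_k-\hast_k)\bwst_k{}^2-2\lambda)\|\wst\|^2$, which either keeps $\|\wst\|$ below $\delta_1$ or forces $\bwst_k{}^2$ close to $1$, but in the latter case the component would have joined $\Sst_k$ when it crossed $\delta_1$ — a contradiction.

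The most delicate clause is (b); here the point is that although each individual $\bwst_k{}^2$ is only controlled by $1-\alpha$, the $\ns{\wst}$-weighted average must be controlled by $1-\alpha^2$. I would derive an ODE for $\Est_{k,w}[1-\bwst_k{}^2]$ and show that it contracts at rate $\Theta(a_k)$ toward $0$ faster than individual components can drift outward, because the orthogonal coordinates $\bwst_j{}^2$ ($j\ne k$) obey $\rd \bwst_j{}^2/\rd t\le -c\,(a_k-a_j)\bwst_j{}^2\bwst_k{}^2$ after weighting; the square appears because both $\bwst_k{}^2$ and $\bwst_j{}^2$ enter the gain. Combined with the initial condition at the moment a component enters $\Sst_k$ (where $\bwst_k{}^2\ge 1-\alpha$ and the contribution to the average is weighted only by $\delta_1^2$), this yields the $\alpha^2+4sm\delta_1^2$ envelope, where the $sm\delta_1^2$ error per epoch accounts for at most $m$ reinitialized components entering fresh $\Sst_k$'s.

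The transitions are the easier part. The mid-epoch reinitialization only resets components with $\|\wst\|<\delta_1$; by (d) these belong to $\Sst_\varnothing$, so no $\Sst_k$ loses a member and the resampled components start at norm $\delta_0<\delta_1$, again in $\Sst_\varnothing$. The epoch boundary does not change $W$, so clauses (a),(d) carry over verbatim and clauses (b),(c) pick up exactly the single extra $m\delta_1^2$ term that the induction hypothesis already budgets. The main obstacle I anticipate is synchronizing the entry of new components into $\Sst_k$ during Phase~1 with the averaged bound (b): at the instant a component crosses $\|\wst\|=\delta_1$, the norm/correlation relation promised elsewhere in the paper must supply $\bwst_k{}^2\ge 1-\alpha$ with room to spare (otherwise the averaged bound jumps discontinuously by $\Omega(\alpha)$ rather than $O(\alpha^2)$), and proving this simultaneously with the deflation-style control on the residual $T^*-T$ — which itself depends on (c) holding up to that moment — is the circular dependency that the bootstrapped induction is designed to break.
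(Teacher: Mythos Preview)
Your overall architecture---a continuity argument clause by clause, with discrete checks at reinitialization and epoch boundaries---matches the paper, and clauses (a) and (c) are handled essentially as you describe. But your treatment of clause~(b) has a genuine gap, and there is a logical slip in the transition analysis.

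For (b), your entry condition is too weak. You assume a component joining $\Sst_k$ arrives with $[\bwst_k]^2 \ge 1-\alpha$ and argue its weight $\delta_1^2$ is small. But if $\Sst_k$ was empty just before, the weighted average \emph{equals} the entrant's correlation, so it starts at $1-\alpha$, not $1-\alpha^2$, and (b) is violated immediately; no continuity argument can repair this instantaneously. The paper resolves this with a two-tier entry guarantee proved in the Phase~1 analysis (stated as Assumption~\ref{assumption: induction, oracle}(a) and established in Lemma~\ref{lem-phase1-normthreshold}): when $\hast_k < \alpha$ the entrant actually satisfies $[\bvst_k]^2 \ge 1-\alpha^2$; only when $\hast_k \ge \alpha$ may it have merely $1-\alpha$, and then the impulse to the average is at most $\frac{\delta_1^2}{\hast_k}\cdot O(\alpha) \le O(\delta_1^2)$, which is what produces the $4sm\delta_1^2$ budget. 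You flag exactly this issue as an ``obstacle'' at the end but do not supply the resolution. Relatedly, your explanation of why the averaged ODE has a smaller error than the individual one (``the square appears because both $\bwst_k{}^2$ and $\bwst_j{}^2$ enter the gain'') is not the mechanism the paper uses: the $O(\alpha^3)$ error in Lemma~\ref{lemma: average bound} comes from bootstrapping the individual and average bounds against each other, e.g.\ $\Est_{k,v}\bigl\{(1-[\bvst_k]^2)^2\bigr\} \le \alpha \cdot \Est_{k,v}\bigl\{1-[\bvst_k]^2\bigr\} \le \alpha\cdot\alpha^2$.

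There is also a slip in your transition step: you write that reinitialized components ``by (d) belong to $\Sst_\varnothing$, so no $\Sst_k$ loses a member.'' Clause~(d) is the converse implication. A component in $\Sst_k$ can have norm below $\delta_1$ (having once crossed upward with good correlation and later drifted down); reinitialization then removes it from $\Sst_k$, which is an impulse to both $\hast_k$ and the average in~(b). The paper explicitly accounts for this in its impulse bookkeeping (Section~\ref{sec:average} and Section~\ref{sec:residual}).
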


We choose $\delta_1^2$ small enough so that $sm\delta_1^2$ is negligible compared with $\alpha^2$ and $\lambda.$
Note that if Proposition~\ref{prop:main} is maintained throughout the algorithm, all the large components will be fitted, which directly implies Theorem~\ref{thm:main}. Detailed proof is deferred to Appendix~\ref{sec:proof_main_theorem}.

Condition (c) shows that for a ground truth component $k$ with large enough $a_k$, it will always be fitted after the corresponding epoch
(recall from Theorem~\ref{thm:main} that $\lambda = 
O(\eps/\sqrt{d})$).
Condition (d) shows that components that did not discover any ground truth components will always have small norm (hence negligible in most parts of the analysis). Conditions (a)(b) show that as long as a ground truth component $k$ has been discovered, all components that are in $S^{(s,t)}_k$ will have good correlation, while the {\em average} of all such components will have even better correlation. The separation between individual correlation and average correlation is important in the proof. With only individual bound, we cannot maintain the correlation no matter how small $\alpha$ is. Here is an example below:

\begin{restatable}{claim}{example}\label{clm:example}
Suppose $T^* = e_k^{\otimes 4}$ and $T=v^{\otimes 4}/\ns{v} + w^{\otimes 4}/\ns{w}$ with $\ns{w}+\ns{v}\in[2/3,1].$ Suppose $\bv_k^2 = 1-\alpha$ and $\bv_k = \bw_k, \bv_{-k}=-\bw_{-k}.$ Assuming $\ns{v}\leq c_1$ and $\alpha\leq c_2$ for small enough constants $c_1,c_2,$ we have 
$\frac{\rd}{\rd t}\bv_k^2<0.$
\end{restatable}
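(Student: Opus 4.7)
The plan is to compute $\frac{\rd}{\rd t}\bv_k^2$ in closed form and verify that the resulting expression is negative for small $c_1,c_2$. Starting from Eqn.~\eqref{eq:powerdynamics} with $R:=T^*-T$, we have $\dot v = 4\|v\|R(\bv^{\otimes 3},I)-2\|v\|R(\bv^{\otimes 4})\bv$. Projecting $\dot v/\|v\|$ onto the tangent space of the unit sphere at $\bv$ gives $\dot{\bv}=4[R(\bv^{\otimes 3},I)-R(\bv^{\otimes 4})\bv]$ (and any regularization term $-\lambda v$, were it present, is purely radial and drops out here), so
\[
\frac{\rd}{\rd t}\bv_k^2 \;=\; 2\bv_k\dot{\bv}_k \;=\; 8\bigl[\bv_k R(\bv^{\otimes 3},e_k)-\bv_k^2 R(\bv^{\otimes 4})\bigr].
\]

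Next I would substitute $R=e_k^{\otimes 4}-v^{\otimes 4}/\ns{v}-w^{\otimes 4}/\ns{w}$ and expand. The ``self'' contribution from $v^{\otimes 4}/\ns{v}$ equals $\|v\|^2\bv_k^2$ in each of the two terms and cancels in the difference; the $e_k^{\otimes 4}$ contribution collapses to $\bv_k^4(1-\bv_k^2)$; the $w^{\otimes 4}/\ns{w}$ contribution collapses to $-\|w\|^2(\bv\cdot\bw)^3\bv_k\bigl[\bw_k-\bv_k(\bv\cdot\bw)\bigr]$. Applying the symmetry $\bw_k=\bv_k$, $\bw_{-k}=-\bv_{-k}$ yields the two identities $\bv\cdot\bw=2\bv_k^2-1=1-2\alpha$ and $\bw_k-\bv_k(\bv\cdot\bw)=2\bv_k(1-\bv_k^2)=2\alpha\bv_k$. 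Plugging these in and collecting terms,
\[
\frac{\rd}{\rd t}\bv_k^2 \;=\; 8\alpha\bv_k^2\bigl[\bv_k^2 - 2\|w\|^2(1-2\alpha)^3\bigr].
\]

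Finally I would show the bracket is strictly negative. From $\ns{v}+\ns{w}\ge 2/3$ and $\ns{v}\le c_1$ we get $\ns{w}\ge 2/3-c_1$; combined with $(1-2\alpha)^3\ge(1-2c_2)^3$ and $\bv_k^2=1-\alpha\le 1$,
\[
\bv_k^2-2\ns{w}(1-2\alpha)^3 \;\le\; 1-2(2/3-c_1)(1-2c_2)^3,
\]
which is strictly negative as soon as $c_1,c_2$ are small enough that $2(2/3-c_1)(1-2c_2)^3>1$ (e.g.\ $c_1\le 1/12$ and any $c_2$ with $(1-2c_2)^3\ge 6/7$). Hence $\frac{\rd}{\rd t}\bv_k^2<0$. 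The only care required is algebraic: tracking the cancellation of the $v^{\otimes 4}/\ns{v}$ self-terms and cleanly extracting the factor $\alpha=1-\bv_k^2$, after which the sign is transparent. Conceptually, the example demonstrates that individual correlation bounds alone do not suffice for local stability---two components, each individually $(1-\alpha)$-aligned with $e_k$ but whose perpendicular tails are anti-aligned, can actively pull each other's correlation back down---motivating the stronger ``average'' bound in item~(b) of Proposition~\ref{prop:main}.
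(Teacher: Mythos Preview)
Your proof is correct and follows essentially the same approach as the paper: both compute $\frac{\rd}{\rd t}\bv_k^2$ directly from the tangent dynamics $8\bv_k\bigl[(T^*-T)(\bv^{\otimes 3},e_k)-\bv_k(T^*-T)(\bv^{\otimes 4})\bigr]$, exploit the symmetry $\bw_k=\bv_k$, $\bw_{-k}=-\bv_{-k}$ to simplify, and conclude negativity from $\ns{w}\ge 2/3-\ns{v}$. Your version is in fact a bit sharper---you carry the computation to the exact closed form $8\alpha\bv_k^2\bigl[\bv_k^2-2\ns{w}(1-2\alpha)^3\bigr]$, whereas the paper bounds $\inner{\bw}{\bv}^3,\inner{\bw}{\bv}^4$ by $1-O(\alpha)$ and works with the resulting inequality---but the underlying argument is the same.
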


In the above example, both $\bar{v}$ and $\bar{w}$ are close to $e_k$ but they are opposite in other directions ($\bv_{-k}=\bw_{-k}$). The norm of $v$ is very small compared with that of $w$. Intuitively, we can increase $v_{-k}$ so that the average of $v$ and $w$ is more aligned with $e_k$. See the rigorous analysis in Appendix~\ref{sec: induction, counterexample}. 

The induction hypothesis will be carefully maintained throughout the analysis. The following lemma guarantees that in the gradient flow steps the individual and average correlation will be maintained.

\begin{lemma}
  \label{lem:individual_average} In the setting of Theorem~\ref{thm:main},
  suppose Proposition~\ref{prop:main} holds in epoch $s$ at time $t$, we have 
  \begin{align*}
    \frac{\rd}{\rd t} [\bwst]^2
    &\ge 8 \left( a_k - \hat{a}^{(s, t)}_k \right) 
      \left( 1 - [\bwst_k]^2 \right) 
        - O\left( \alpha^{1.5} \right), \\
    \frac{\rd}{\rd t} \Est_{k, w}[\bwst_k]^2
    &\ge 8 \left( a_k - \hat{a}^{(s, t)}_k \right) 
     \left( 1-\Est_{k,w}[\bwst_k]^2 \right) - O(\alpha^3).
  \end{align*}
  In particular, when $a_k-\hat{a}^{(s, t)}_k\geq \Omega(\lambda) = \Omega(\sqrt{\alpha}),$ we have $\frac{\rd}{\rd t} [\bwst_k]^2 >0$ when $[\bwst_k]^2=1-\alpha$ and  $\frac{\rd}{\rd t} \Est_{k, w}[\bwst_k]^2 >0$ when $\Est_{k, w}[\bwst_k]^2=1-\alpha^2.$
\end{lemma}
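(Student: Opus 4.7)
My plan is to compute $\frac{d}{dt}[\bvst_k]^2$ and $\frac{d}{dt}\Est_{k,v}[\bvst_k]^2$ directly from the gradient-flow ODE on the regularized loss and bound the residual using Proposition~\ref{prop:main}. Starting from $\dot v=4R(\bar{v}^{\otimes 2},v,I)-2R(\bar{v}^{\otimes 4})v-\lambda v$ (with $R=T^*-T$), a short computation exploiting the $2$-homogeneity of $v^{\otimes 4}/\|v\|^2$ kills the $\lambda v$ term in the angular derivative and yields the unified starting identity
\[\frac{d[\bvst_k]^2}{dt}=8\bar{v}_k R(\bar{v}^{\otimes 3},Pe_k),\qquad P=I-\bar{v}\bar{v}^{\top}.\]

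For the individual bound I would partition $\colW$ into $\Sst_1,\dots,\Sst_d,\Sst_\varnothing$, collect terms in $R$, and identify $8\bar{v}_k^4(a_k-\hat{a}_k)(1-\bar{v}_k^2)$ as the main contribution (coming from $T^*$ together with the $w\in\Sst_k$ part of $T$, via $\langle\bar{w},\bar{v}\rangle\approx\bar{w}_k\bar{v}_k$ and $(P\bar{w})_k\approx\bar{w}_k\|\bar{v}_{-k}\|^2$). The factor $\bar{v}_k^4\geq(1-\alpha)^2$ accounts for an $O(\alpha^2)$ gap to the target $8(a_k-\hat{a}_k)(1-\bar{v}_k^2)$. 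Routine bounds dispose of the other pieces: $T^*$ cross-terms are $O(\|\bar{v}_{-k}\|^4)=O(\alpha^2)$; $w\in\Sst_{j\neq k}$ contributions are $O(\alpha^2)$ since $|\langle\bar{w},\bar{v}\rangle|=O(\sqrt\alpha)$; and $\Sst_\varnothing$ contributes $O(m\delta_1^2)$, negligible by the parameter choice. The main obstacle is the cross term $-8\bar{v}_k^5\langle\bar{v}_{-k},\mu_k\rangle$ with $\mu_k=\sum_{w\in\Sst_k}\|w\|^2\bar{w}_k^3\bar{w}_{-k}$: bare Cauchy--Schwarz yields only $O(\alpha)$, whereas the claimed $O(\alpha^{3/2})$ forces use of Proposition~\ref{prop:main}~(\ref{itm: Ist, average}). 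Jensen upgrades $\Est_{k,w}\|\bar{w}_{-k}\|^2\leq\alpha^2$ into $\Est_{k,w}\|\bar{w}_{-k}\|\leq\alpha$, so $\|\mu_k\|\leq\hat{a}_k\alpha$, and paired with $\|\bar{v}_{-k}\|\leq\sqrt\alpha$ this delivers the desired bound.

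For the averaged bound I would apply the quotient rule to $\Phi=A/B$ with $A=\sum_{v\in\Sst_k}v_k^2$ and $B=\hat{a}_k$, substitute the ODE, and observe that the $\lambda$-contribution $-2\lambda\Est_{k,v}[\bar{v}_k^2-\Phi]$ vanishes by definition of $\Phi$, giving
\[\frac{d\Phi}{dt}=\Est_{k,v}\!\left[8\bar{v}_k R(\bar{v}^{\otimes 3},e_k)-4R(\bar{v}^{\otimes 4})(\bar{v}_k^2+\Phi)\right].\]
Plugging in $R(\bar{v}^{\otimes 3},e_k)\approx(a_k-\hat{a}_k)\bar{v}_k^3$ and $R(\bar{v}^{\otimes 4})\approx(a_k-\hat{a}_k)\bar{v}_k^4$ produces the main expression $4(a_k-\hat{a}_k)\Est_{k,v}[\bar{v}_k^4(1-\bar{v}_k^2)+\bar{v}_k^4(1-\Phi)]\geq 8(1-\alpha)^2(a_k-\hat{a}_k)(1-\Phi)$, which is the target up to $O(\alpha(1-\Phi))=O(\alpha^3)$ since induction gives $1-\Phi\leq\alpha^2$. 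The hard part is closing the $\sqrt\alpha$ gap between the $O(\alpha^{3/2})$ residual one obtains by naively averaging the individual bound and the required $O(\alpha^3)$. The $T^*$ cross-terms improve automatically: $\Est_{k,v}[\sum_{j\neq k}a_j\bar{v}_j^4]\leq\alpha\Est_{k,v}\|\bar{v}_{-k}\|^2\leq\alpha^3$. For the $\mu_k$-cross term I would use the alternative decomposition
\[\frac{d\Phi}{dt}=\Est_{k,v}\!\left[\tfrac{d\bar{v}_k^2}{dt}\right]+4\,\mathrm{Cov}_{k,v}\!\left(R(\bar{v}^{\otimes 4}),\bar{v}_k^2\right).\]
The naive $O(\alpha^2)$ estimate $|\langle\Est_{k,v}[\bar{v}_k^5\bar{v}_{-k}],\mu_k\rangle|\leq\alpha\cdot\hat{a}_k\alpha$ for the first summand is matched by an aligned $O(\alpha^2)$ piece arising in the covariance, with the leftover controlled via Cauchy--Schwarz against $\mathrm{Var}_{k,v}(\bar{v}_k^2)\leq\alpha(1-\Phi)\leq\alpha^3$; this cancellation is the technical heart of the lemma.

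Finally, the ``in particular'' conclusions follow by comparing magnitudes: at $[\bvst_k]^2=1-\alpha$ with $a_k-\hat{a}_k\geq\Omega(\lambda)=\Omega(\sqrt\alpha)$, the main is $\Omega(\alpha^{3/2})$, beating the $O(\alpha^{3/2})$ error with an appropriate constant; at $\Est_{k,v}[\bvst_k]^2=1-\alpha^2$, the main is $\Omega(\alpha^{5/2})\gg O(\alpha^3)$.
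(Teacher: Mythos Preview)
Your treatment of the individual bound is correct and essentially matches the paper: both arguments isolate the main term $8\tat_k(1-\bvt_k^2)\bvt_k^4$, bound the cross-group interactions by $O(\alpha^2)$, and crucially upgrade the dangerous $\mu_k$-type cross term from $O(\alpha)$ to $O(\alpha^{1.5})$ via Jensen on $\Est_{k,w}\|\bw_{-k}\|\le\alpha$ (this is exactly the paper's Lemma~\ref{lemma: Ez4 approx vk4}).

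For the averaged bound, however, your proposed mechanism is not right. You claim that the $O(\alpha^2)$ error $\big|\langle\Est_{k,v}[\bvt_k^5\bvt_{-k}],\mu_k\rangle\big|$ in the tangent summand is ``matched by an aligned $O(\alpha^2)$ piece arising in the covariance''. But the $\mu_k$-contribution to the covariance is
\[
-16\,\Est_{k,v}\!\left[\bvt_k^3(\bvt_k^2-\Phi)\,\langle\bvt_{-k},\mu_k\rangle\right],
\]
and by Cauchy--Schwarz together with $\mathrm{Var}_{k,v}(\bvt_k^2)\le\alpha^3$ and $\|\mu_k\|\le\hat a_k\alpha$ this is already $O(\alpha^{3.5})$: there is no $O(\alpha^2)$ piece there to cancel against. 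As written, your plan leaves a genuine $O(\alpha^2)$ gap.

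What actually closes the gap is a positivity, not a cancellation. The paper observes (eq.~\eqref{eq: Ekvw wk3 vk3 x >= 0} and its use in the average tangent-speed lemma) that because $v$ and $w$ are drawn from the \emph{same} weighted distribution over $\Sst_k$,
\[
\Est_{k,v}\Est_{k,w}\!\left[\zt^{\,3}\,\langle\bw_{-k},\bv_{-k}\rangle\right]
=\sum_{i\ne k}\big(\Est_{k,w}[\bw_k^3\bw_i]\big)^2+\text{(other nonneg. terms)}\ \ge\ 0.
\]
In your notation this is precisely the statement that $\langle\Est_{k,v}[\bvt_k^3\bvt_{-k}],\mu_k\rangle\ge 0$, so the problematic term in $\Est_{k,v}G_1$ can simply be dropped. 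The covariance (norm-fluctuation) piece is then bounded separately by $-O(\alpha^3)$ using the same variance estimate you already stated; no matching is needed. Replace your cancellation step with this sum-of-squares observation and the rest of your outline goes through.
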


The detailed proof for the local stability result can be found in Appendix~\ref{sec: appendix, induction hypothesis}.
Of course, to fully prove the induction hypothesis one needs to talk about what happens when a component enters $S^{(s,t)}_i$, and what happens at the reinitialization steps. We discuss these details in later subsections.

\vspace{-1mm}
\subsection{Analysis of Phase 1}\label{sec:phase1_sketch}

In Phase 1 our main goal is to discover all the components that are large enough. We also need to maintain Proposition~\ref{prop:main}. Formally we prove the following:

\begin{restatable}[Main Lemma for Phase 1]{lemma}{lemphaseone}\label{lem:phase1} 
In the setting of Theorem~\ref{thm:main}, suppose Proposition~\ref{prop:main} holds at $(s,0).$ For $t_1^{(s)}:=t_1^{(s)\prime}+t_1^{(s)\dbprime}+t_1^{(s)\triprime}$ with $t_1^{(s)\prime} = \Theta(d/(\beta^{(s)} \log d))$, $t_1^{(s)\dbprime}=\Theta(d/(\beta^{(s)} \log^3 d))$, $t_1^{(s)\triprime}=\Theta(\log(d/\alpha)/\beta^{(s)})$, with probability $1-1/\poly(d)$ we have
\begin{enumerate}
    \item Proposition \ref{prop:main} holds at $(s,t)$ for any $0\leq t <  t_1^{(s)}$, and also for $t = t_1^{(s)}$ after reinitialization.
    \item If $a_k\geq \beta^{(s)}$ and $S^{(s,0)}_k=\varnothing$, we have $S_k^{(s,t_1^{(s)})}\neq \varnothing$ and $\hat{a}_k^{(s,t_1^{(s)})}\geq \delta_1^2.$
    \item If $S^{(s,0)}_k=\varnothing$ and $S^{(s, t_1^{(s)})}_k\neq \varnothing,$ we have $a_k\geq C\beta^{(s)}$ for universal constant $0<C<1$. 
\end{enumerate}
\end{restatable}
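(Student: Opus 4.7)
The plan is to partition Phase 1 into three consecutive sub-phases of lengths $t_1^{(s)\prime}, t_1^{(s)\dbprime}, t_1^{(s)\triprime}$, each matching a distinct regime of the dynamics for a reinitialized component. The central observation is that any component $w$ that starts the epoch with norm $\delta_0$ (either from this epoch's reinitialization or a previous one that was never placed into some $\Sst_k$) has $\|w\|\ll \delta_1$, so its gradient-flow equation is dominated by the tensor power term \eqref{eq:powerdynamics} applied to the residual $R=T^*-T$. By Proposition~\ref{prop:main}(a)--(c) at $(s,0)$, the large-norm components are nearly aligned with their respective $e_k$ and account for $\hat a_k^{(s,0)}=a_k-O(\lambda)$ of the ground truth, so $R\approx\sum_{k:\Sst_k^{(s,0)}=\varnothing} a_k e_k^{\otimes 4}$ up to $O(\lambda+sm\delta_1^2)$ perturbation. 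Thus each small $w$ evolves, to leading order, as a tensor power iterate against an effective orthogonal tensor whose large directions are exactly the undiscovered $e_k$'s with $a_k\ge \beta^{(s)}$.

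First I would carry out a sphere anti-concentration argument: for a single $w\sim \delta_0\mathrm{Unif}(\mathbb{S}^{d-1})$ and a fixed coordinate $k$, the probability that $\bar w_k^2$ exceeds every other $\bar w_j^2$ by a multiplicative factor $(1+c/\log d)$ is $\Omega(1/\polylog(d))$; with $m=\poly(d)$ fresh components and a union bound over at most $d$ targets, with probability $1-1/\poly(d)$ every undiscovered $k$ with $a_k\ge \beta^{(s)}$ is the "leader'' of some $w$. The first sub-phase then amplifies this gap. Using
\begin{equation*}
    \frac{\rd}{\rd t}\log\frac{w_k^2}{w_j^2} \;\ge\; 8\bigl((a_k-\hat a_k^{(s,t)})\bar w_k^2-(a_j-\hat a_j^{(s,t)})\bar w_j^2\bigr)-O(\lambda),
\end{equation*}
I would show that the length $t_1^{(s)\prime}=\Theta(d/(\beta^{(s)}\log d))$ is exactly the warm-up time needed to take the leading $\bar w_k^2$ from $\Theta(\log d/d)$ up to $\Omega(1/\log d)$.

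In Sub-phase~2, of length $\Theta(d/(\beta^{(s)}\log^3 d))$, the same dynamics (now with $\bar w_k^2\gg 1/d$) pushes $\bar w_k^2$ to $1-o(1)$. In Sub-phase~3, of length $\Theta(\log(d/\alpha)/\beta^{(s)})$, I would exploit the exponential contraction $\tfrac{\rd}{\rd t}(1-\bar w_k^2)\le -\Omega(\beta^{(s)})(1-\bar w_k^2)$ to drive $\bar w_k^2$ above $1-\alpha$, during which the norm $\|w\|$ crosses $\delta_1$ and $w$ joins $\Sst_k$ per Definition~\ref{def:sst} — giving part~(2). Maintenance of Proposition~\ref{prop:main} splits by type: parts (a)--(b) for \emph{already-placed} components follow directly from Lemma~\ref{lem:individual_average} since the strictly positive derivative at the boundary prevents escape, with the hypothesis $a_k-\hat a_k^{(s,t)}=\Omega(\sqrt{\alpha})$ checked via (c); part (a) for \emph{newly-entering} components is immediate from the threshold in Definition~\ref{def:sst}, and each new contribution to the $\Est_{k,v}$-weighted average in (b) is only $O(\delta_1^2/\hat a_k)$, absorbed into the slack $4sm\delta_1^2$; part (c) is preserved because $\hat a_k^{(s,t)}$ changes by at most $O(sm\delta_1^2)$ during Phase 1 for previously-fitted $k$; part (d) is restored by the reinitialization at time $t_1^{(s)}$. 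Part~(3) follows because for any fully-fitted direction the drive $a_k-\hat a_k^{(s,t)}$ is at most $\lambda+O(sm\delta_1^2)\ll \beta^{(s)}$, so only directions with $a_k=\Omega(\beta^{(s)})$ generate enough gradient in time $t_1^{(s)}$ to become newly discovered.

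The hard part will be Sub-phase~1. It demands uniform control of the $d$ ratios $\bar w_k^2/\bar w_j^2$ over the long window $\Theta(d/(\beta^{(s)}\log d))$ in the face of three compounding perturbations: (i) the $O(\lambda)$ discrepancy in $R$ from partially-fitted directions, (ii) the slow but nonzero drift of the pre-existing components in each $\Sst_j$, which makes the effective tensor non-stationary, and (iii) higher-order corrections to the idealized tensor-power approximation that matter precisely when $\bar w_k$ is still at the $O(1/\sqrt d)$ scale. Keeping the accumulated perturbation subdominant to the logarithmic initial gap — so that the leader is not overtaken — is the central technical challenge; everything else follows from standard tensor-power-method reasoning combined with the local stability of Lemma~\ref{lem:individual_average}.
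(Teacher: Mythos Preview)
Your three-sub-phase decomposition matches the paper, and you correctly flag Sub-phase~1 as the crux. But there are genuine gaps.

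First, a quantitative error: for $w$ uniform on the sphere, the probability that a \emph{fixed} coordinate $\bar w_k^2$ is the maximum is $1/d$ by symmetry, not $\Omega(1/\polylog d)$. The paper does not use a leader condition at all; it defines an absolute threshold $\Gamma_k=1/(8a_kt_1^{(s)\prime})=\Theta(\beta^{(s)}\log d/(a_k d))$, calibrated so that the comparison ODE $\dot x=8a_kx^2$ blows up before $t_1^{(s)\prime}$ iff $x(0)>\Gamma_k$, and declares a component ``good for $k$'' if $[\bar w_k^{(s,0)}]^2>\Gamma_k+\rho_k$ while every $[\bar w_j^{(s,0)}]^2<\Gamma_j-\rho_j$. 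This event has probability $\Theta(d^{-c})$, whence $m=\poly(d)$.

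The more serious gap is that you only argue the good components reach norm $\delta_1$ with correlation $\ge 1-\alpha$; you never rule out \emph{other} components reaching $\delta_1$ without good correlation. Yet maintaining Proposition~\ref{prop:main}(d) throughout Phase~1, and supplying the entry condition the induction hypothesis needs, requires the converse: \emph{every} component whose norm first hits $\delta_1$ must already be basis-like. The paper gets this by also showing (w.h.p.) that no ``bad'' component---one with two coordinates simultaneously above their thresholds---exists at initialization, and then proving (its Lemma~\ref{lem-phase1-normthreshold}) that a component with at most one large coordinate has $[\bar v_k]^2\ge 1-\alpha$ (indeed $\ge 1-\alpha^2$ when $\hat a_k\le\alpha$) at the first time $\|v\|=\delta_1$; the argument tracks the unnormalized off-coordinates $[v_i]^2$ and uses the calibration $\delta_1=\Theta(\delta_0\sqrt{\log d}/\alpha)$. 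Your Sub-phase-3 sketch does not establish this ordering. Finally, your Part~(3) argument addresses the wrong case: for an undiscovered $k$ the drive is $a_k$, not $\lambda$. The paper's proof of (3) is that when $a_k<c_a\beta^{(s)}$ the threshold $\Gamma_k-\rho_k$ exceeds the universal cap $\Gamma_{\max}=\Theta(\log d/d)$ on any initial coordinate, so the potential set $S_{k,pot}$ is empty and no component can grow toward $e_k$ at all.
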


Property 2 shows that large enough ground truth components are always discovered, while Property 3 guarantees that no small ground truth components can be discovered. Our proof relies on initial components being ``lucky'' and having higher than usual correlation with one of the large ground truth components. To make this clear we separate components into different sets (here we use $v$ to denote a component in $W$):

\begin{definition}[Partition of (re-)initialized components]\label{def-phase1-partition}
    For each direction $i\in[d]$, define the set of good components $\Ss_{i,good}$ and the set of potential components $\Ss_{i,pot}$ as follow, where $\Gamma^{(s)}_i:=1/(8a_i t_1^{(s)\prime})$ if $S_i^{(s,0)}=\varnothing$, and $\Gamma^{(s)}_i := 1/(8\lambda t_1^{(s)\prime})$ otherwise. Here $\rho^{(s)}_i:=c_\rho \Gamma^{(s)}_i$ and $c_\rho$ is a small enough absolute constant. 
    \begin{align*}
        \Ss_{i,good} &:= \{k \mid [\bv^{(s,0)}_i]^2\ge \Gamma^{(s)}_i+\rho^{(s)}_i,\  [\bv^{(s,0)}_j]^2 \le \Gamma^{(s)}_j-\rho^{(s)}_j,\forall j\ne i \text{ and } v^{(s,0)} = W^{(s,0)}[:,k]\},\\
        \Ss_{i,pot}  &:= \{k\mid [\bv^{(s,0)}_i]^2\ge \Gamma^{(s)}_i-\rho^{(s)}_i\text{ and } v^{(s,0)} = W^{(s,0)}[:,k]\}.
    \end{align*}
    Let $\Ss_{good}:= \cup_i \Ss_{i,good}$ and $\Ss_{pot}:= \cup_i \Ss_{i,pot}$.
    We also define the set of bad components $\Ss_{bad}$.
    \begin{align*}
        \Ss_{bad}    &:= \{k\mid \exists i\ne j \text{ s.t. } [\bv^{(s,0)}_i]^2\ge\Gamma^{(s)}_i-\rho^{(s)}_i,\  [\bv^{(s,0)}_j]^2\ge\Gamma^{(s)}_j-\rho^{(s)}_j\text{ and } v^{(s,0)} = W^{(s,0)}[:,k]\}.
    \end{align*}
\end{definition}

For convenience, we shorthand $\vst \in \{\Wst[:, j] | j\in S_{i,good}\}$ by $\vst \in S_{i,good}$ (same for $S_{i,pot}$ and $S_{bad}$).
Intuitively, the good components will grow very quickly and eventually pass the norm threshold. Since both good and potential components only have one large coordinate, they will become correlated with that ground truth component when their norm is large. The bad components are correlated with two ground truth components so they can potentially have a large norm while not having a very good correlation with either one of them. In the proof we will guarantee with probability at least $1-1/\poly(d)$ that good components exists for all large enough ground truth components and there are no bad components. The following lemma characterizes the trajectories of different type of components:

\begin{restatable}{lemma}{lemphaseonesummarytrajectory}\label{lem-phase1-summary-trajectory}
    In the setting of Lemma~\ref{lem:phase1}, for every $i \in[d]$
    \begin{enumerate}
        \item (\emph Only good/potential components can become large) If $\vst \not\in \Ss_{pot}$, $\n{\vst}=O(\delta_0)$ and $[\bvst_i]^2=O(\log(d)/d)$ for all $i\in[d]$ and $t\le t_1^{(s)}$.
    
        \item (\emph Good components discover ground truth components) If $\Ss_{i,good}\neq \varnothing$, there exists $v^{(s,t_1^{(s)})}$ such that $\n{v^{(s,t_1^{(s)}})}\ge \delta_1$ and $S_i^{(s,t_1^{(s)})}\neq\varnothing$. 
        
        
        \item (\emph Large components are correlated with ground truth components) If $\n{\vst}\geq \delta_1$ for some $t\leq t_1^{(s)}$, there exists $i\in [d]$ such that $\vst\in \Sst_i$. 
    \end{enumerate}
\end{restatable}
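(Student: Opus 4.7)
The plan is to track, for each component $\vst$ in Phase 1, the coordinate dynamics induced by \eqref{eq:powerdynamics}, and to argue that its fate is fully determined by which of $\Ss_{good}, \Ss_{pot}, \Ss_{bad}$ it lies in at (re-)initialization. The preliminary reduction is that while $\n{\vst} \le \delta_1$, the contribution of $\vst$ itself to $T^{(s,t)}$ is negligible, and the induction hypothesis (Proposition~\ref{prop:main}) lets us approximate the residual $R^{(s,t)} := T^* - T^{(s,t)}$ by the diagonal tensor $\sum_k b_k^{(s,t)} e_k^{\otimes 4}$, where $b_k^{(s,t)} := a_k - \hat{a}_k^{(s,t)}$. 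Indeed, part~(b) of the induction gives that the blocks $\Sst_k$ are $(1 - O(\alpha^2))$-aligned with $e_k$ on average, and part~(d) gives that every $\vst \in \Sst_\varnothing$ has norm at most $\delta_1$; together these bound the off-diagonal error by $O(\alpha^2 + sm\delta_1^2)$, well below any relevant threshold.

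Under this effective diagonal residual, the coordinate dynamics from \eqref{eq:powerdynamics} give $\frac{\rd v_i^2/\rd t}{v_i^2} = 8 b_i^{(s,t)} [\bvst_i]^2 - 4\sum_j b_j^{(s,t)} [\bvst_j]^4 + \mathrm{err}$, and the log-ratio $L_{ij}(t) := \log\bigl([\bvst_i]^2/[\bvst_j]^2\bigr)$ obeys $\frac{\rd L_{ij}}{\rd t} = 8 \bigl( b_i^{(s,t)} [\bvst_i]^2 - b_j^{(s,t)} [\bvst_j]^2 \bigr) + \mathrm{err}$, so whichever coordinate has the largest initial $b_i [\bvst_i]^2$ becomes a strict, exponentially-amplified winner. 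At (re-)initialization a uniform point on $\delta_0 \mathbb{S}^{d-1}$ has each $[\bvst_i]^2$ concentrated around $1/d$ with sub-Gaussian tails, and the thresholds $\Gamma^{(s)}_i \sim 1/(a_i t_1^{(s)\prime}) \sim \log d/d$ in Definition~\ref{def-phase1-partition} are calibrated to separate likely winners from likely losers. A union bound over $m = \poly(d)$ samples then gives, with probability $1 - 1/\poly(d)$, that $\Ss_{bad} = \varnothing$ (two coordinates simultaneously above their thresholds costs $O((\log d/d)^2)$ per sample) and that $\Ss_{k,good} \ne \varnothing$ whenever $a_k \ge \beta^{(s)}$ (the single-coordinate event has probability $\Omega(1/d)$ per sample).

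With these in place, Part~1 follows because a component outside $\Ss_{pot}$ has every $b_i^{(s,0)} [\bvst_i]^2 \lesssim 1/t_1^{(s)\prime}$, and the log-ratio inequalities propagate this bound, so $\log \n{\vst}^2$ grows by at most a constant over the entire Phase~1 (since $t_1^{(s)\dbprime}$ and $t_1^{(s)\triprime}$ are each $\le t_1^{(s)\prime}$), keeping $\n{\vst} = O(\delta_0)$ and $[\bvst_i]^2 = O(\log d /d)$. Part~2 uses that a good component has a unique winning coordinate $i$ with margin $\rho^{(s)}_i$ over every other $b_j [\bvst_j]^2$, so $L_{ij}$ grows at rate $\Omega(a_i \rho^{(s)}_i) = \Omega(1/t_1^{(s)\prime})$ and over time $t_1^{(s)\prime}$ the winner overtakes all losers while $\log \n{\vst}^2$ increases by $\Omega(\log(\delta_1/\delta_0))$, pushing $\n{\vst}$ past $\delta_1$; the shorter sub-phases $t_1^{(s)\dbprime}$ and $t_1^{(s)\triprime}$ then apply Lemma~\ref{lem:individual_average} to drive $[\bvst_i]^2$ from $\Omega(1)$ past $1 - \alpha$, so the first crossing of $\delta_1$ deposits $\vst$ into $\Sst_i$. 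Part~3 is a consequence of Parts~1 and 2: any component reaching $\delta_1$ must lie in $\Ss_{pot}$, and because $\Ss_{bad}$ is empty its initial winning coordinate is unique, so the same log-ratio argument forces the alignment $[\bvst_i]^2 \ge 1 - \alpha$ by the time $\n{\vst} = \delta_1$.

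The main obstacle is preserving the diagonal-residual approximation during the "crossover" windows when $\hat{a}_k^{(s,t)}$ for some freshly discovered $k$ rises rapidly from $0$ toward $a_k$: on that timescale $b_k^{(s,t)}$ changes by $\Theta(a_k)$ and could in principle scramble the log-ratio race for unrelated small components. The remedy is Proposition~\ref{prop:main}(b): newly admitted members of $\Sst_k$ are aligned to $e_k$ with average squared correlation $1 - O(\alpha^2)$, so off-diagonal perturbations to $R^{(s,t)}$ remain $O(\alpha^2)$, which sits well inside the partition slack $\rho^{(s)}_j$. A secondary subtlety is that a direction $k$ with $a_k$ just above $\beta^{(s)}$ might be fitted partway through the phase, shrinking $b_k^{(s,t)}$ below the discovery threshold for latecomers; but the monotonicity of $\hat{a}_k^{(s,t)}$ ensures that such a $k$ was already counted in the good-set existence bound, so Parts~2 and 3 still account for every discovered direction.
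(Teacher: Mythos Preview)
Your high-level plan matches the paper's, but the proof of Part~3 has a genuine gap. The log-ratio argument shows that the winning coordinate eventually dominates each loser \emph{individually}, but it does not bound $\sum_{j\ne k}[\bvst_j]^2$: knowing $[\bvst_j]^2=O(\log d/d)$ for every $j\ne k$ only gives $1-[\bvst_k]^2\le (d-1)\cdot O(\log d/d)$, which is useless. The paper's key move (Lemma~\ref{lem-phase1-normthreshold}) is to abandon the normalized dynamics and instead track the \emph{unnormalized} off-coordinates $[\vst_j]^2$, showing they remain $O(\delta_0^2\log d/d)$ (Case~1) or $O(\delta_1^2\alpha/d)$ (Case~2) while the total norm $\|\vst\|$ climbs to $\delta_1$; the ratio $\delta_1/\delta_0=\Theta(\sqrt{\log d}/\alpha)$ is then what converts this into $[\bvst_k]^2\ge 1-\alpha^2$ (resp.\ $1-\alpha$). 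Making this work requires a sharper error analysis than the crude Frobenius bound: in the dynamics of a small coordinate the error enters as $\fn{\Delta^{(s,t)}}\|\vst\|/|\vst_j|$, and with $|\vst_j|$ as small as $\delta_0\sqrt{\log d/d}$ the blanket $\fn{\Delta^{(s,t)}}=O(\alpha)$ is not enough. The paper opens up $\Delta^{(s,t)}$ case-by-case (Lemma~\ref{lem-phase1-calculation}) and shows that the $k$-block contributes only $O(\alpha^2)$ when $\hat a_k^{(s,t)}\le\alpha$ and that the $j\ne i,k$ blocks contribute $O(\alpha(1-[\bvst_k]^2)^{1.5})$, both of which are absorbable. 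Your claim that Proposition~\ref{prop:main}(b) gives an $O(\alpha^2)$ off-diagonal residual is a misreading: the $1-O(\alpha^2)$ is an \emph{average correlation} bound, and the resulting Frobenius deviation of $\Tst_k$ from $\hat a_k e_k^{\otimes 4}$ is still $O(\hat a_k\alpha)$, not $O(\alpha^2)$.

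Two smaller points. First, your timeline for Part~2 is off: during $t_1^{(s)\prime}$ the norm growth rate is $\sum_i b_i[\bvst_i]^4=O(\beta^{(s)}\log d/d)$ so $\|\vst\|$ stays $O(\delta_0)$; the paper uses a comparison with the blow-up ODE $\dot x=8a_k x^2$ (Lemma~\ref{lem-phase1-polyloggap}) to push $[\bvst_k]^2$ to $\log^4 d/d$ by time $t_1^{(s)\prime}$, then $t_1^{(s)\dbprime}$ to reach a constant (Lemma~\ref{lem-phase1-constantgap}), and only in $t_1^{(s)\triprime}$ does the norm grow to $\delta_1$ (Lemma~\ref{lem-phase1-normgrow}). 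Your linear rate estimate $\dot L_{ij}=\Omega(1/t_1^{(s)\prime})$ gives $L_{ij}$ changing by $O(1)$ over $t_1^{(s)\prime}$, which is not the $\polylog(d)$ gap you need. Second, you never address the two-tier entrance condition in Assumption~\ref{assumption: induction, oracle}(a): the first component to enter $\Sst_k$ (when $\hat a_k^{(s,t)}<\alpha$) must satisfy $[\bvst_k]^2\ge 1-\alpha^2$, not just $1-\alpha$, and this stronger requirement is exactly what Case~1 of Lemma~\ref{lem-phase1-normthreshold} is designed to deliver; without it the average bound in Proposition~\ref{prop:main}(b) cannot be maintained.
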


The proof of Lemma~\ref{lem-phase1-summary-trajectory} is difficult as one cannot guarantee that all the ground truth components that we are hoping to fit in the epoch will be fitted simultaneously. 
However we are able to show that $T-T^*$ remains near-orthogonal and control the effect of changing $T-T^*$ within this epoch. The details are in Appendix~\ref{sec:proof_init_phase1}.

\vspace{-1mm}
\subsection{Analysis of Phase 2}\label{sec:phase2_sketch}

In Phase 2 we will show that every ground truth component that's discovered in Phase 1 will become fitted, and the reinitialized components will preserve the desired initialization conditions.
\begin{restatable}[Main Lemma for Phase 2]{lemma}{phasetwomain}\label{lem:phase2}
In the setting of Theorem~\ref{thm:main}, suppose Proposition~\ref{prop:main} holds at $(s,t_1^{(s)}),$ we have for $t_2^{(s)}-t_1^{(s)}:=O(\frac{\log(1/\delta_1)+\log(1/\lambda)}{\beta^{(s)}})$
\begin{enumerate}
    \item Proposition~\ref{prop:main} holds at $(s,t)$ for any $t_1^{(s)}\leq t\leq t_2^{(s)}.$
    \item If $S_k^{(s, t_1^{(s)})}\neq \varnothing,$ we have $a_k-\hat{a}^{(s,t_2^{(s)})}_k\leq 2\lambda.$
    \item For any component $v$ that was reinitialized at $t_1^{(s)}$, we have $\ns{v^{(s,t_2^{(s)}}} = \Theta(\delta_0^2)$ and $\br{\bv_i^{(s,t_2^{(s)})}}^2 = \br{\bv_i^{(s,t_1^{(s)})}}^2 \pm o\pr{\frac{\log d }{d}}$ for every $i\in [d].$
\end{enumerate}
\end{restatable}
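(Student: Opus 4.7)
The plan is to reduce Phase~2 to an approximate scalar logistic ODE for the aggregate squared-norm $\hat a_k^{(s,t)}$ of each discovered direction, use Lemma~\ref{lem:individual_average} to preserve the directional parts of Proposition~\ref{prop:main}, and bootstrap the small reinitialized components with spherical concentration.

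First, I would write out the per-component flow $dw/dt=4\|w\|R(\bar w^{\otimes 3},I)-2R(\bar w^{\otimes 4})w-\lambda w$ with $R=T^*-T$, which yields the norm equation $d\|w\|^2/dt=4\|w\|^2 R(\bar w^{\otimes 4})-2\lambda\|w\|^2$. For $w\in \Sst_k$, Proposition~\ref{prop:main}(a)(b)(d) gives $\bar w\approx e_k$ and makes $T$ near-diagonal in the $e_k$ basis, so $R(\bar w^{\otimes 4})=(a_k-\hat a_k^{(s,t)})\pm O(\alpha)\pm O(sm\delta_1^2)$. Summing over $\Sst_k$ yields the scalar ODE
\[
\frac{d\hat a_k^{(s,t)}}{dt}=4\hat a_k^{(s,t)}\bigl(a_k-\hat a_k^{(s,t)}\bigr)-2\lambda\hat a_k^{(s,t)}\pm\mathrm{err},
\]
a logistic equation in $u=\hat a_k^{(s,t)}$ with stable equilibrium $u^*=a_k-\lambda/2$. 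Analyzing this ODE in two regimes—(i) from $u\asymp\delta_1^2$ up to $u=\Theta(a_k)$, logistic growth taking time $O(\log(1/\delta_1)/\beta^{(s)})$, and (ii) exponential contraction to $|u-u^*|=O(\lambda)$ taking additional time $O(\log(1/\lambda)/\beta^{(s)})$—delivers $a_k-\hat a_k^{(s,t_2^{(s)})}\le 2\lambda$, which is part 2 of the lemma and the upper bound in Proposition~\ref{prop:main}(c). Monotone convergence toward $\lambda/2$ combined with the induction hypothesis at $t_1^{(s)}$ gives the lower bound $a_k-\hat a_k^{(s,t)}\ge\lambda/6-sm\delta_1^2$ for all $t\in[t_1^{(s)},t_2^{(s)}]$.

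For the directional bounds in (a) and (b), I would invoke Lemma~\ref{lem:individual_average}: whenever a component $w\in \Sst_k$ hits $[\bar w_k]^2=1-\alpha$, the derivative is $\ge 8(a_k-\hat a_k)(1-[\bar w_k]^2)-O(\alpha^{1.5})>0$ because $a_k-\hat a_k\ge\lambda/6\gg\alpha$; the parameter choice $\alpha\le O(\lambda^2)$ is exactly what balances the stabilizing drift against the error, and the same argument works for the average at the boundary $1-\alpha^2$. For part 3 and the preservation of (d), consider a component $w$ freshly reinitialized at $t_1^{(s)}$ with $\|w\|=\delta_0$ uniform on the sphere: spherical concentration gives $[\bar w_i]^2=O(\log d/d)$ uniformly in $i$ w.p.\ $1-1/\poly(d)$, and near-diagonality of $R$ yields $R(\bar w^{\otimes 4})=O(\log^2 d/d)$. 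A Gr\"onwall bound on $d\log\|w\|^2/dt\le 4R(\bar w^{\otimes 4})-2\lambda$ over the Phase~2 window, together with a bootstrap on the directional dynamics where $d\bar w_i/dt=O((\log d/d)^{3/2})$, gives $\|w^{(s,t_2^{(s)})}\|^2=\Theta(\delta_0^2)\ll\delta_1^2$ and $[\bar w_i^{(s,t_2^{(s)})}]^2=[\bar w_i^{(s,t_1^{(s)})}]^2\pm o(\log d/d)$. In particular no such $w$ reaches $\delta_1$, so $\Sst_k^{(s,t)}$ does not gain new members during Phase~2, which is needed to close the induction for (a), (b).

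The main obstacle is the scalar-ODE reduction for $\hat a_k$. One must show that the per-component error $O(\alpha)$ in $R(\bar w^{\otimes 4})=(a_k-\hat a_k)\pm O(\alpha)$, the contribution of $\Sst_\varnothing$ to $R_{kkkk}$, and the cross-terms from residuals in other directions remain negligible against the driving term $4\hat a_k(a_k-\hat a_k)$ throughout both regimes of the logistic dynamics; this is especially delicate near the transition $u\sim a_k/2$ where both $\hat a_k$ and $a_k-\hat a_k$ are of order $a_k$. The parameter choices $\alpha\le O(\lambda^2)$ and $\delta_1\le O(\alpha^{3/2}/\sqrt m)$ are tuned exactly so that these errors are dominated by the drift in every regime, but carrying out this bookkeeping cleanly—and propagating the small $O(\alpha^{1.5})$ and $O(\alpha^3)$ slack in Lemma~\ref{lem:individual_average} through the full Phase~2 window—is the heart of the proof.
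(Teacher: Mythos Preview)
Your overall strategy matches the paper's: the logistic-type ODE for $\hat a_k$ gives Part~2, Lemma~\ref{lem:individual_average} together with the two-sided bound on $\tilde a_k$ maintains Proposition~\ref{prop:main}, and a bootstrap on the freshly reinitialized components handles Part~3 and Proposition~\ref{prop:main}(d). The scalar-ODE reduction you describe is exactly what the paper does (its Lemma~\ref{lemma: d hattk, lower bound} reads $\frac{1}{\hat a_k}\frac{d}{dt}\hat a_k \ge 2\tilde a_k - \lambda - O(\alpha^2)$ and is analyzed in the same two regimes you name).

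There is, however, a genuine gap in your Part~3 argument. You write $R(\bar w^{\otimes 4})=O(\log^2 d/d)$ and $d\bar w_i/dt=O((\log d/d)^{3/2})$; neither bound carries a factor of $\beta^{(s)}$. But the Phase~2 window has length $T=O\bigl((\log(1/\delta_1)+\log(1/\lambda))/\beta^{(s)}\bigr)=\tfrac{o(d/\log d)}{\beta^{(s)}}$, so integrating your bounds over $T$ produces a change of order $o(\cdot)/\beta^{(s)}$, which blows up once $\beta^{(s)}$ is small (and it can be as small as $\Theta(\epsilon/\sqrt d)$). The missing ingredient is the consequence of Proposition~\ref{prop:main}(c) that $\max_{i}\tilde a_i^{(s,t)}=O(\beta^{(s)})$ throughout epoch~$s$: directions with $a_i\ge\beta^{(s)}/(1-\gamma)$ are already fitted to residual $O(\lambda)\le O(\beta^{(s)})$, and undiscovered directions have $a_i<\beta^{(s)}/(1-\gamma)$. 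With this, the paper obtains
\[
\frac{1}{\|v\|^2}\frac{d}{dt}\|v\|^2 \le O\!\left(\beta^{(s)}\,\frac{\log d}{d}\right),\qquad
\frac{d}{dt}[\bar v_k]^2 \le O\!\left(\beta^{(s)}\,\frac{\log^2 d}{d^2}\right),
\]
and the $\beta^{(s)}$ cancels against $T$ to give $\|v^{(s,t_2^{(s)})}\|^2=\Theta(\delta_0^2)$ and $[\bar v_k^{(s,t_2^{(s)})}]^2=[\bar v_k^{(s,t_1^{(s)})}]^2\pm o(\log d/d)$. To get these sharper per-step bounds the paper also replaces the crude $\|\Delta\|_F=O(\alpha)$ estimate by the refinement $\E_{i,w}[z^{(t)}]^4=[\bar v_i^{(t)}]^4\pm O(\alpha\log d/d)$ valid for a flat $\bar v$, so that the off-diagonal contribution of $T$ stays inside the $O(\beta^{(s)}\log d/d)$ envelope. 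Without tracking the $\beta^{(s)}$ dependence, your Gr\"onwall/bootstrap step does not close.
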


The main idea is that as long as a direction has been discovered, the norm of the corresponding components will increase very fast. The rate of that is characterized by the following lemma.

\begin{lemma}[informal]In the setting of Theorem~\ref{lem:phase2},
  for any $t_1^{(s)}\leq t\leq t_2^{(s)},$
  \[
    \frac{\rd}{\rd t} \hat{a}^{(s,t)}_k
    \ge \pr{2 (a_k-\hat{a}^{(s,t)}_k) - \lambda - O\left(\alpha^2 \right)}\hat{a}^{(s,t)}_k.
  \]
  In particular, after $O(\frac{\log(1/\delta_1)+\log(1/\lambda)}{a_k})$ time, we have $a_k-\hat{a}^{(s,t)}_k\leq \lambda.$
\end{lemma}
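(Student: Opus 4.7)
The plan is to start from the per-component dynamics of gradient flow on the regularized loss and then aggregate across $\Sst_k$. A direct computation on $L = \tfrac{1}{2}\fns{T-T^*} + \tfrac{\lambda}{2}\fns{W}$ with $T = \sum_{w' \in \colW} w'^{\otimes 4}/\ns{w'}$ gives, for each column $w$, the same vector field as Eqn.~\eqref{eq:powerdynamics} plus a regularizer: $\frac{\rd w}{\rd t} = 4\n{w}R(\bw^{\otimes 3}, I) - 2R(\bw^{\otimes 4})w - \lambda w$ with $R := T^* - T$. Taking the inner product with $w$ and doubling gives $\frac{\rd \ns{w}}{\rd t} = (4 R(\bw^{\otimes 4}) - 2\lambda)\ns{w}$. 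Summing over $w \in \Sst_k$ and using the $\Est_{k, w}$ notation yields
\begin{equation*}
  \frac{\rd \hat{a}^{(s,t)}_k}{\rd t}
  = \bigl(4\,\Est_{k,w} R(\bw^{\otimes 4}) - 2\lambda\bigr)\,\hat{a}^{(s,t)}_k,
\end{equation*}
reducing the first claim to the lower bound $\Est_{k,w} R(\bw^{\otimes 4}) \ge (a_k - \hat{a}^{(s,t)}_k) - O(\alpha^2)$.

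Next I would estimate $\Est_{k,w} R(\bw^{\otimes 4})$ by splitting $R = T^* - T$. For $T^*(\bw^{\otimes 4}) = \sum_i a_i \bw_i^4$, the off-coordinate part is controlled pointwise by $\sum_{i\neq k}\bw_i^4 \le (\sum_{i\neq k}\bw_i^2)^2 \le \alpha^2$ from Proposition~\ref{prop:main}(\ref{itm: Ist, individual}); for the diagonal term $a_k \bw_k^4$, I would use the elementary inequality $\bw_k^4 \ge 2\bw_k^2 - 1$ together with Proposition~\ref{prop:main}(\ref{itm: Ist, average}) to obtain $\Est_{k,w}\bw_k^4 \ge 1 - 2\alpha^2 - O(sm\delta_1^2)$, which upgrades the naive $O(\alpha)$ correction to $O(\alpha^2)$. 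This averaging step is where the individual-vs-average distinction in Proposition~\ref{prop:main} is indispensable; without it, the error would exceed $\lambda \asymp \sqrt{\alpha}$ and the resulting ODE would give no useful growth rate.

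For $T(\bw^{\otimes 4}) = \sum_{w'}\ns{w'}\langle\bw,\bw'\rangle^4$ I would partition $\colW$ into three groups. Components in $\Sst_k$ contribute at most $\hat{a}^{(s,t)}_k$ via the trivial bound $\langle\bw,\bw'\rangle^4 \le 1$. For $w'\in \Sst_j$ with $j\neq k$, expanding $\langle\bw,\bw'\rangle = \bw_k\bw'_k + \bw_j\bw'_j + \langle\bw_{-\{j,k\}},\bw'_{-\{j,k\}}\rangle$ and using $|\bw_j|,|\bw'_k|\le\sqrt{\alpha}$ together with Cauchy-Schwarz yields $|\langle\bw,\bw'\rangle| \le 3\sqrt{\alpha}$, so this block contributes at most $O(\alpha^2)\sum_j \hat{a}_j = O(\alpha^2)$. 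The residual group $\Sst_\varnothing$ contributes at most $m\delta_1^2 = O(\alpha^2)$ by Proposition~\ref{prop:main}(d) together with the choice of $\delta_1$ in Theorem~\ref{thm:main}. Combining these three pieces gives $T(\bw^{\otimes 4}) \le \hat{a}^{(s,t)}_k + O(\alpha^2)$ pointwise, and subtracting from the lower bound on $T^*(\bw^{\otimes 4})$ proves the required estimate on $\Est_{k,w} R(\bw^{\otimes 4})$.

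For the time estimate I would compare against the ODE in two regimes. While $\hat{a}^{(s,t)}_k \le a_k/2$, the bracket is $\Omega(a_k)$, so $\hat{a}^{(s,t)}_k$ grows at exponential rate $\Omega(a_k)$ and, starting from $\hat{a}^{(s, t_1^{(s)})}_k \ge \delta_1^2$ (guaranteed by Lemma~\ref{lem:phase1}), reaches $a_k/2$ within $O(\log(1/\delta_1)/a_k)$ time. Once $\hat{a}^{(s,t)}_k > a_k/2$, setting $y = a_k - \hat{a}^{(s,t)}_k$ turns the inequality into $\frac{\rd y}{\rd t} \le -\Omega(a_k)\bigl(y - O(\lambda)\bigr)$, so $y$ contracts to $\lambda$ in a further $O(\log(1/\lambda)/a_k)$ time, summing to the advertised budget. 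The principal obstacle throughout is preserving the $O(\alpha^2)$ sharpness: any cruder accounting—pointwise bounds on $\bw_k^2$, or ignoring the near-orthogonality between distinct $\Sst_j$'s—leaves an $O(\alpha)$ error that overwhelms the $\lambda$ term and renders the whole growth inequality vacuous.
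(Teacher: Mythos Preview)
Your argument is correct and mirrors the paper's: derive the per-component norm growth, sum over $S^{(s,t)}_k$ to get $\tfrac{\rd}{\rd t}\hat a_k$ in terms of $\Est_{k,w}R(\bw^{\otimes 4})$, estimate the latter by splitting along the groups $S_i$ and $S_\varnothing$ (this is Lemma~\ref{lemma: d hattk} and Lemma~\ref{lemma: d hattk, lower bound}), and then run the two-regime ODE comparison exactly as in Appendix~\ref{sec:proof_phase2}. The only notable deviation is your treatment of the cross block $w\in S_k$, $w'\in S_j$, $j\ne k$: you use the pointwise bound $|\langle\bw,\bw'\rangle|\le 3\sqrt{\alpha}$ from Proposition~\ref{prop:main}(\ref{itm: Ist, individual}) alone to get an $O(\alpha^2)$ contribution, whereas the paper appeals to the double-average estimate $\Et_{k,v}\Et_{j,w}[\zt]^4\le O(\alpha^3)$ of Lemma~\ref{lemma: cross interaction, average}; both are sufficient for the stated $O(\alpha^2)$ error, and your route is a bit more elementary.
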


By the choice of $\delta_1$ and $\lambda$, the length of Phase 2 is much smaller than the amount of time needed for the reinitialized components to move far, allowing us to prove the third property in Lemma~\ref{lem:phase2}. Detailed analysis is deferred to Appendix~\ref{sec:proof_phase2}.

\section{Conclusion}\label{sec:conclude}

In this paper we analyzed the dynamics of gradient flow for over-parametrized orthogonal tensor decomposition. With very mild modification to the algorithm (a small regularizer and some re-initializations), we showed that the trajectory is similar to a tensor deflation process and the greedy low-rank procedure in~\citet{li2020towards}. These modifications allowed us to prove strong guarantees for orthogonal tensors of any rank, while not changing the empirical behavior of the algorithm. We believe such techniques would be useful in later analysis for the implicit bias of tensor problems.

A major limitation of our work is that it only applies to orthogonal tensors. Going beyond this would require significantly new ideas---we observed that for general tensors, overparametrized gradient flow may have a very different behavior compared to the greedy low-rank procedure, as it is possible for two large component in the same direction to split into two different directions (see more details in Appendix~\ref{sec:experiment}). We leave that as an interesting open problem.

\section*{Acknowledgements}
Rong Ge, Xiang Wang and Mo Zhou are supported in part by NSF Award CCF-1704656, CCF-1845171 (CAREER), CCF-1934964 (Tripods), a Sloan Research Fellowship, and a Google Faculty Research Award.

\bibliography{ref}
\bibliographystyle{apalike}

\newpage
\appendix

\section*{Overview of Supplementary Materials}

In the supplementary material we will give detailed proof for Theorem~\ref{thm:main}. We will first highlight a few technical ideas that goes into the proof, and then give details for each part of the proof.

\paragraph{Continuity Argument}

Continuity argument is the main tool we use to 
prove Proposition~\ref{prop:main}.
Intuitively, the continuity argument says that if whenever a property is about
to be violated, there exists a positive speed that pulls it back, then that 
property will never be violated. In some sense, this is the continuous version
of the mathematical induction or, equivalently, the minimal counterexample 
method. See Section 1.3 of \cite{tao_nonlinear_2006} for a short discussion on this 
method. 

However, since our algorithm is not just gradient flow, and in particular involves reinitialization steps that are not continuous, we need to generalize continuity argument to handle impulses. We give detailed lemmas in Section~\ref{sec: continuity argument} as the continuity argument is mostly used to prove Proposition~\ref{prop:main}.

\paragraph{Approximating residual} In many parts of the proof, we approximate the residual $T^* - T$ as:
\[
T^* - T = \sum_{i=1}^d \tilde{a}_i e_i^{\otimes 4} + \Delta,
\]
where $\tilde{a}_i = a_i -\hat{a}_i.$
That is, we think of $T^* - T$ as an orthogonal tensor with some perturbations. The norm of the perturbation $\|\Delta\|_F$ is going to be bounded by $O(\alpha + m\delta_1^2)$, which is sufficient in several parts of the proof that only requires crude estimates. However, in several key steps of our proof (including conditions (a) and (b) of Proposition~\ref{prop:main} and the analysis of the first phase), it is important to use extra properties of $\Delta$. In particular we will expand $\Delta$ to show that for a basis vector $e_i$ we always have $\Delta(e_i^{\otimes 4}) = o(\alpha)$, which gives us tighter bounds when we need them.

\paragraph{Radial and tangent movement} Throughout the proof, we often need to track the movement of a particular component $w$ (a column in $W$). It is beneficial to separate the movement of $w$ into radial and tangent movement, where radial movement is defined as $\inner{\frac{dw}{dt}}{w}$ and tangent movement is defined as $P_{w^\perp} \frac{dw}{dt}$ (where $P_{w^\perp}$ is the projection to the orthogonal subspace of $w$). Intuitively, the radial movement controls the norm of the component $w$, and the tangent movement controls the direction of $w$. When the component $w$ has small norm, it will not significantly change the residual $T^*-T$, therefore we mostly focus on the tangent movement; on the other hand when norm of $w$ becomes large in our proof we show that it must already be correlated with one of the ground truth components, which allow us to better control its norm growth.

\paragraph{Overall structure of the proof} The entire proof is a large induction/continuity argument which maintains Proposition~\ref{prop:main} as well as properties of the two phases (summarized later in Assumption~\ref{assumption: induction, oracle}). In each part of the proof, we show that if we assume these conditions hold for the previous time, then they will continue to hold during the phase/after reinitialization. 

In Section~\ref{sec: appendix, induction hypothesis} we prove Proposition~\ref{prop:main} assuming Assumption~\ref{assumption: induction, oracle} holds before. In Section~\ref{sec:proof_phase1} we prove guarantees of Phase 1 and reinitialization assuming Proposition~\ref{prop:main}. In Seciton~\ref{sec:proof_phase2} we prove guarantees for Phase 2 assuming Proposition~\ref{prop:main}. Finally in Section~\ref{sec:proof_main_theorem} we give the proof of the main theorem.

\paragraph{Experiments} Finally in Section~\ref{sec:exp_detail} we give details about experiments that illustrate the deflation process, and show why such a process may not happen for non-orthgonal tensors.

\section{Proofs for Proposition~\ref{prop:main}}
\label{sec: appendix, induction hypothesis}

The goal of this section is to prove Proposition~\ref{prop:main} under 
Assumption~\ref{assumption: induction, oracle}. We also prove 
Claim~\ref{clm:example} in Section~\ref{sec: induction, counterexample}.

\paragraph{Notations} Recall we defined \[\Est_{i,w} f(\wst):=\frac{1}{\hast_i}\sum_{\wst\in S^{(s,t)}_i}\ns{\wst} f(\wst).\]
We will use this notation extensively in this section. For simplicity, we shall drop the superscript of epoch $s$. Further, we sometimes consider expectation with two variables $v$ and $w$:
\[\Est_{i,v,w} f(\wst):=\frac{1}{\left[\hast_i\right]^2}\sum_{\vst,\wst\in S^{(s,t)}_i}\ns{\vst}\ns{\wst} f(\wst,\vst).\]
We will also use $z_t$ to denote $\zt := \inner{\bvt}{\bwt}$ and $\tat_k := a_k - \hatt_k$. Note that $v$ and $w$ in this section (and later in the proof) just serve as arbitrary components in columns of $W$.

\begin{assumption}
  \label{assumption: induction, oracle}
  Throughout this section, we assume the following.
  \begin{enumerate}[(a)]
      
    \item For any $k \in [d]$, in phase 1, when $\|\vt\|$ enters $\St_k$, 
      that is, $\|\vt\| = \delta_1$, we have $[\bvt_k]^2 \ge 1 - \alpha^2$ if 
      $\hatt_k < \alpha$ and $[\bvt_k]^2 \ge 1 - \alpha$ if $\hatt_k \ge \alpha$. 
    
    \item There exists a small constant $c > 0$ s.t.~for any $k \in [d]$ with 
      $a_k < c\beta^{(s)}$, in phase 1, no components will enter $\St_k$.
    
    \item For any $k \in [d]$, in phase 2, no components will enter $\St_k$. 
    
    \item For the parameters, we assume $m\delta_1^2 \le \alpha^3$ and 
      $\Omega\left( \sqrt{\alpha} \right) \le \lambda 
      \le O\left( \min_{s} \beta^{(s)} \right) = O(\eps / \sqrt{d})$. 
  \end{enumerate}
\end{assumption}
\begin{remark}
  As we mentioned, the entire proof is an induction and we only need the assumption up to the point that we are analyzing. The assumption will be proved later 
  in Appendix~\ref{sec:proof_init_phase1} and \ref{sec:proof_phase2} to finish the induction/continuity argument. 
 The reason 
  we state this assumption here, and state it as an assumption, is to make
  the dependencies more transparent. 
\end{remark}

\begin{remark}[Remark on the choice of $\lambda$]
  The lower bound $\lambda = \Omega(\sqrt{\alpha})$ comes from 
  Lemma~\ref{lemma: individual bound}. For the upper bound, first
  note that when $\lambda$ is larger than $a_k$, actually the norm of components
  in $\St_k$ can decrease (cf.~Lemma \ref{lemma: d |v|2}). Hence, we require
  $\lambda < c \min_s \beta^{(s)} / 10$ where $c$ is the constant in (c).
  This makes sure in phase 2 the growth rate of $\hatt_k$ is not too small. 
\end{remark}

\proposition*

Before we move on to the proof, we collect some further remarks on 
Proposition~\ref{prop:main} and the proof overview here. 

\begin{remark}[Remark on the epoch correction term]
  Note that conditions (\ref{itm: Ist, average}) and (\ref{itm: Ist, residual}) 
  have an additional term with form $O(s m\delta_1^2)$. 
  This is because these average bounds may deteriorate a little when the content 
  of $\St_k$ changes, which will happen when new components enter $\St_k$ or the 
  reinitialization throw some components out of $\St_k$. 
  The norm of the components involved in these fluctuations is upper bounded by 
  $\delta_1$ and the number by $m$. Thus the $O(m\delta_1^2)$ factor. 
  The factor $s$ accounts for the accumulation across epochs. We need
  this to guarantee at the beginning of each epoch, the conditions hold
  with some slackness (cf.~Lemma~\ref{lemma: continuity argument with impulses}). 
  Though this issue can be fixed by a slightly sharper estimations for the 
  ending state of each epoch, adding one epoch correction term is simpler and,
  since we only have $\log (d/\epsilon)$ 
  epochs, it does not change the bounds too much and, in fact, we can always 
  absorb them into the coefficients of $\lambda$ and $\alpha^2$, respectively. 
\end{remark}

\begin{remark}[Remark on condition~(\ref{itm: Ist, individual})]
  Note that Assumption~\ref{assumption: induction, oracle} makes sure that when 
  a component enters $\St_k$, we always have $[\bvt_k]^2 \ge 1 - \alpha$. 
  Hence, essentially this condition says that it will remain basis-like.
  Following the spirit of the continuity argument, to maintain this condition,
  it suffices to prove Lemma~\ref{lemma: individual bound}, the proof of 
  which is deferred to Section~\ref{sec:individual}.
  Also note that by Assumption~\ref{assumption: induction, oracle} and the 
  definition of $\Sst_k$, neither the entrance of new components nor the 
  reinitialization will break this condition. 
\end{remark}

\begin{restatable}{lemma}{lemmaindividualbound}
  \label{lemma: individual bound}
  Suppose that at time $t$, Proposition~\ref{prop:main} is true. Assuming $\delta_1^2 = O(\alpha^{1.5}/m),$ then for any $\vt \in \St_k$, we have
  \[
    \frac{\rd}{\rd t} [\bvt]^2
    \ge 8 \tat \left( 1 - [\bvt_k]^2 \right) [\bvt_k]^4
      - O\left( \alpha^{1.5}\right),
  \]
  In particular, if $\lambda = \Omega\left(\sqrt{\alpha}\right)$, then 
  $\frac{\rd}{\rd t} [\bvt]^2 > 0$ whenevner $[\bvt_k]^2 = 1 - \alpha$.
\end{restatable}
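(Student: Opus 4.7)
The plan is to derive an explicit ODE for $[\bvst_k]^2$, decompose the residual $R := T^* - T$ into a diagonal part plus a perturbation, and carefully control each error term using Proposition~\ref{prop:main}.

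First, from the gradient-flow equation $\frac{\rd \vst}{\rd t} = 4\|\vst\|\,g_v - 2\rho_v\, \vst - \lambda \vst$ with $g_v := R(\bvst^{\otimes 3}, I)$ and $\rho_v := R(\bvst^{\otimes 4})$, a direct computation of $\frac{\rd}{\rd t}(v_k^2/\|\vst\|^2)$ gives the clean identity
\[
  \frac{\rd}{\rd t}[\bvst_k]^2
  = 8\bvst_k\bigl((g_v)_k - \rho_v \bvst_k\bigr)
  = 8\bvst_k\, R\!\left(\bvst^{\otimes 3},\, P_{\bvst^\perp} e_k\right),
\]
in which both the radial piece $-2\rho_v \vst$ and the regularizer $-\lambda \vst$ cancel because they are parallel to $\bvst$. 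Next I would decompose $T = \sum_i T_i + T_\varnothing$ with $T_i := \sum_{\wst\in \Sst_i}\wst^{\otimes 4}/\|\wst\|^2 = \hast_i\, e_i^{\otimes 4} + D_i$, so $R = \sum_i \tat_i\, e_i^{\otimes 4} - \sum_i D_i - T_\varnothing$. Contracting the diagonal part against $\bvst^{\otimes 3}$ and $P_{\bvst^\perp}e_k$ produces the main term $8\tat_k[\bvst_k]^4(1-[\bvst_k]^2) - 8[\bvst_k]^2\sum_{i\ne k}\tat_i[\bvst_i]^4$, whose second summand is bounded in absolute value by $O(\alpha^2)$ via $\sum_{i\ne k}[\bvst_i]^4 \le (\sum_{i\ne k}[\bvst_i]^2)^2 \le \alpha^2$ (from Proposition~\ref{prop:main}(a)).

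The bulk of the work is bounding the three error sources. The $\Sst_\varnothing$ piece is straightforward: $\|T_\varnothing\|_F \le \sum_{\wst\in\Sst_\varnothing}\|\wst\|^2 \le m\delta_1^2 = O(\alpha^{1.5})$ by the hypothesis $\delta_1^2 = O(\alpha^{1.5}/m)$. For $\Sst_i$ with $i\ne k$, each $\wst\in \Sst_i$ has $|\bwst_k|\le\sqrt\alpha$ and $|\langle\bwst,\bvst\rangle|\le|\bwst_i\bvst_i|+\|\bwst_{-i}\|\|\bvst_{-i}\|\le 2\sqrt\alpha$ (using $|\bvst_i|\le\sqrt\alpha$), so each contribution is $O(\hast_i\alpha^2)$ and the total over $i\ne k$ is $O(\alpha^2)$ since $\sum_i\hast_i\le 1$. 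The subtle case is the $\Sst_k$ off-diagonal error $\sum_{\wst\in\Sst_k}\|\wst\|^2[\langle\bwst,\bvst\rangle^3\bwst_k - \bvst_k^3]$: writing $\langle\bwst,\bvst\rangle = \bwst_k\bvst_k + \epsilon$ with $\epsilon := \langle\bwst_{-k},\bvst_{-k}\rangle$ and expanding in $\epsilon$, the zeroth-order piece $(\bwst_k^4-1)\bvst_k^3$ is $O(\alpha^2)$ after using $\bwst_k^4\ge 2\bwst_k^2-1$ and the \emph{average} bound $\Est_{k,w}\bwst_k^2 \ge 1 - \alpha^2 - O(sm\delta_1^2)$ of Proposition~\ref{prop:main}(b), while the linear-in-$\epsilon$ piece is handled by Cauchy--Schwarz combined with Jensen:
\[
  \Bigl|\sum_{j\ne k}\bvst_j\,\Est_{k,w}[\bwst_k^3\bwst_j]\Bigr|
  \le \Bigl(\sum_{j\ne k}\bvst_j^2\Bigr)^{1/2}
      \Bigl(\Est_{k,w}\bigl[\bwst_k^6\textstyle\sum_{j\ne k}\bwst_j^2\bigr]\Bigr)^{1/2}
  \le \sqrt\alpha\,\sqrt{1-\Est_{k,w}\bwst_k^2}
  = O(\alpha^{1.5}).
\]
The quadratic and cubic pieces in $\epsilon$ are $O(\alpha^3)$ and $O(\alpha^4)$ respectively via $|\epsilon|^2 \le \alpha(1-\bwst_k^2)$. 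An analogous expansion handles the error in $\rho_v\bvst_k$.

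Combining everything yields $\frac{\rd}{\rd t}[\bvst_k]^2 \ge 8\tat_k[\bvst_k]^4(1-[\bvst_k]^2) - O(\alpha^{1.5})$. For the ``in particular'' claim, at $[\bvst_k]^2=1-\alpha$ the main term is at least $4\tat_k\alpha$; Proposition~\ref{prop:main}(c) together with Assumption~\ref{assumption: induction, oracle}(d) (so $sm\delta_1^2\ll\lambda$) gives $\tat_k\ge \lambda/6 - sm\delta_1^2 = \Omega(\lambda) = \Omega(\sqrt\alpha)$, so the main term dominates provided the constant in $\lambda=\Omega(\sqrt\alpha)$ is large enough. The main obstacle is the linear-in-$\epsilon$ term: a naive bound using only the individual condition $[\bwst_k]^2\ge 1-\alpha$ gives only $O(\alpha)$, off by a factor of $\sqrt\alpha$. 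The Cauchy--Schwarz rearrangement above is precisely where the average bound (b) of Proposition~\ref{prop:main} becomes indispensable, trading one $\sqrt\alpha$ from $\bvst_{-k}$ and one $\alpha$ from $1-\Est_{k,w}\bwst_k^2$ for the required $\alpha^{1.5}$.
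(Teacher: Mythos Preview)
Your proposal is correct and follows essentially the same approach as the paper's proof. The paper organizes the computation via the decomposition $\frac{\rd}{\rd t}[\bvt_k]^2 = G_1 - G_2 - G_3 \pm O(m\delta_1^2)$ of Lemma~\ref{lemma: d vtk2} and then invokes Lemma~\ref{lemma: Ez4 approx vk4} and Lemma~\ref{lemma: cross interaction, individual}, but the content is identical to yours: the cross terms ($i\ne k$) are $O(\alpha^2)$, the $\Sst_\varnothing$ piece is $O(m\delta_1^2)$, and the within-$\Sst_k$ error is controlled by expanding $z_t = \bwt_k\bvt_k + \langle\bwt_{-k},\bvt_{-k}\rangle$ and using Cauchy--Schwarz plus Jensen with the \emph{average} bound $\Et_{k,w}[1-\bwt_k^2]\le O(\alpha^2)$ to obtain $O(\alpha^{1.5})$. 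Your closing remark about why the average bound (b) is indispensable to upgrade $O(\alpha)$ to $O(\alpha^{1.5})$ matches exactly the reason the paper introduced condition~(b) in the first place.
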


\begin{remark}[Remark on condition~(\ref{itm: Ist, average})]
  The proof idea of condition~(\ref{itm: Ist, average}) is similar to 
  condition~(\ref{itm: Ist, individual}) and we prove Lemma~\ref{lemma: average 
  bound} in Section~\ref{sec:average}. In Section~\ref{sec:average}, we also 
  handle the impulses caused by the entrance of new components and the 
  reinitialization. 
\end{remark}

\begin{restatable}{lemma}{lemmaaveragebound}
  \label{lemma: average bound}
  Suppose that at time $t$, Proposition~\ref{prop:main} is true and 
  $\St_k \ne \varnothing$. Assuming $\delta_1^2 = O(\alpha^3/m)$, we have
  \[  
    \frac{\rd}{\rd t} \Et_{k, v}[\bvt_k]^2
    \geq 8\tat_k (1-\Et_{k,v}[\bvt_k]^2) - O(\alpha^3).
  \]
  In particular, if $\lambda = \Omega(\alpha)$, then 
  $\frac{\rd}{\rd t} \Et_{k, v}[\bvt_k]^2 > 0$ when 
  $\Et_{k, v}[\bvt_k]^2 < 1 - \alpha^2/2$.   
\end{restatable}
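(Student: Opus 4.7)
The plan is to differentiate $E_k := \Et_{k,v}[\bvt_k^2]$ directly using the gradient flow dynamics, decompose the residual $R := T^* - T = \tilde R + \Delta$ with $\tilde R := \sum_i \tat_i e_i^{\otimes 4}$, and show that each piece contributes only the target $O(\alpha^3)$ slack once one averages over $\St_k$. The $\tilde R$ piece will be a straightforward calculation, while the non-orthogonal remainder $\Delta$ is the main technical obstacle.

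First I would apply the quotient rule to $E_k = \frac{1}{\hatt_k}\sum_{v \in \St_k} v_k^2$ using $\frac{\rd v}{\rd t} = 4R(\bvt^{\otimes 2}, v, I) - 2R(\bvt^{\otimes 4})v - \lambda v$. The regularization term drops out because $\sum_{v \in \St_k}(v_k^2 - E_k\|v\|^2) = 0$. Introducing $u_v := \bvt - \bvt_k e_k$ (so $u_v \perp e_k$ with $\|u_v\|^2 = 1 - \bvt_k^2 =: g^v \in [0,\alpha]$ by Proposition~\ref{prop:main}(a)) and using the identity $\bvt_k R(\bvt^{\otimes 3}, I)_k = R(\bvt^{\otimes 4}) - R(\bvt^{\otimes 3}, u_v)$, one obtains
\[
  \frac{\rd E_k}{\rd t} \;=\; 4\,\Et_{k,v}\bigl[R(\bvt^{\otimes 4})(g^v + \Et_{k,v}[g^v])\bigr] - 8\,\Et_{k,v}\bigl[R(\bvt^{\otimes 3}, u_v)\bigr].
\]
Proposition~\ref{prop:main}(b) gives the crucial a priori bound $\Et_{k,v}[g^v] = 1 - E_k \le \alpha^2 + 4sm\delta_1^2 = O(\alpha^2)$. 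For the $\tilde R$ piece, $\tilde R(\bvt^{\otimes 4}) = \sum_i \tat_i \bvt_i^4$ and $\tilde R(\bvt^{\otimes 3}, u_v) = \sum_{i \ne k}\tat_i \bvt_i^4$ (the $i=k$ term vanishes since $(u_v)_k = 0$). Separating $i=k$ and $i \ne k$, using $\bvt_k^4 \ge 1 - 2g^v$, and bounding $\Et_{k,v}[(g^v)^2] \le \alpha \Et_{k,v}[g^v] = O(\alpha^3)$ and $(\Et_{k,v}[g^v])^2 = O(\alpha^4)$ gives $8\tat_k(1-E_k) - O(\alpha^3)$ for the $\tilde R$ contribution.

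The main obstacle is $\Delta = -\sum_j \hatt_j Q_j - T_\varnothing$ with $Q_j := \Et_{j,w}[\bwt^{\otimes 4}] - e_j^{\otimes 4}$ and $T_\varnothing := \sum_{w \in \St_\varnothing}\|w\|^2 \bwt^{\otimes 4}$. A naive bound $\|\Delta\|_F = O(\sqrt{\alpha})$ would only yield $O(\sqrt{\alpha})$ slack. Instead, writing $\bwt = \beta_j^w e_j + u_w$ for $w \in \St_j$ and using the second-moment bound $\Et_{j,w}[\|u_w\|^2] \le \alpha^2$ from Proposition~\ref{prop:main}(b), multinomial expansion of $Q_j$ gives $\|Q_j\|_F = O(\alpha)$ and hence $\|\Delta\|_F = O(\alpha)$ (using $m\delta_1^2 = O(\alpha^3)$). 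The term $4\,\Et_{k,v}[\Delta(\bvt^{\otimes 4})(g^v + \Et_{k,v}[g^v])]$ is then $O(\alpha)\cdot O(\alpha^2) = O(\alpha^3)$. For the more delicate $-8\,\Et_{k,v}[\Delta(\bvt^{\otimes 3}, u_v)]$, I would expand $\bvt = \bvt_k e_k + u_v$ inside: the leading contribution $-8\Delta(e_k^{\otimes 3}, \Et_{k,v}[\bvt_k^3 u_v])$ actually has favorable sign, since $-\hatt_k Q_k(e_k^{\otimes 3}, \cdot)$ reduces via multinomial expansion to $-\hatt_k\langle \Et_{k,w}[\beta_k^3 u_w], \cdot\rangle$, giving $+8\hatt_k\|\Et_{k,v}[\bvt_k^3 u_v]\|^2 \ge 0$. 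All other cross terms pair extra factors of $u_v$ (individual norm $O(\sqrt{\alpha})$) against slices of $\Delta$; averaging with Cauchy--Schwarz together with $\Et_{k,v}[\|u_v\|^2] = O(\alpha^2)$ makes each contribution $O(\alpha^3)$ or smaller.

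Combining the pieces gives $\frac{\rd E_k}{\rd t} \ge 8\tat_k(1-E_k) - O(\alpha^3)$. For the ``in particular'' claim, if $E_k < 1 - \alpha^2/2$ then $1 - E_k > \alpha^2/2$; Proposition~\ref{prop:main}(c) together with Assumption~\ref{assumption: induction, oracle}(d) gives $\tat_k \ge \lambda/6 - sm\delta_1^2 = \Omega(\lambda) = \Omega(\alpha)$, so $8\tat_k(1-E_k) = \Omega(\alpha^3)$ dominates the error and $\frac{\rd E_k}{\rd t} > 0$. Impulses caused by new components entering $\St_k$ or by reinitialization are handled separately in the continuity argument and contribute at most $O(\delta_1^2/\hatt_k)$ each; summed across $O(sm)$ events, these yield the $4sm\delta_1^2$ slack already built into Proposition~\ref{prop:main}(b).
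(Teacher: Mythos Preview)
Your proof is correct and proceeds along a genuinely different route from the paper. The paper first splits
\[
\frac{\rd}{\rd t}\Et_{k,v}[\bvt_k]^2
= \Et_{k,v}\Bigl[\tfrac{\rd}{\rd t}[\bvt_k]^2\Bigr]
+ 4\Et_{k,v}\bigl[R(\bvt^{\otimes 4})[\bvt_k]^2\bigr]
- 4\Et_{k,v}\bigl[R(\bvt^{\otimes 4})\bigr]\,\Et_{k,v}[\bvt_k]^2
\]
into a tangent-speed piece and a norm-fluctuation piece, then bounds the tangent piece via the $G_1,G_2,G_3$ decomposition of Lemma~\ref{lemma: d vtk2} together with the cross-interaction estimates of Lemma~\ref{lemma: cross interaction, average}, and finally shows the norm-fluctuation covariance is $\ge -O(\alpha^3)$ by a separate four-term expansion. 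You instead derive the single identity
$\frac{\rd E_k}{\rd t}=4\,\Et_{k,v}[R(\bvt^{\otimes 4})(g^v+\bar g)]-8\,\Et_{k,v}[R(\bvt^{\otimes 3},u_v)]$
and split $R=\tilde R+\Delta$ into its exactly orthogonal part and a perturbation. The crucial positivity is the \emph{same} in both arguments: your $+8\hatt_k\|\Et_{k,v}[\bvt_k^3 u_v]\|^2\ge 0$ is precisely the paper's observation \eqref{eq: Ekvw wk3 vk3 x >= 0} that $\Et_{k,v,w}[\bwt_k^3\bvt_k^3\langle\bw_{-k},\bv_{-k}\rangle]=\sum_{i\ne k}(\Et_{k,v}[\bvt_k^3\bvt_i])^2\ge 0$. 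What your organization buys is that the norm-fluctuation analysis disappears (it is absorbed into the $(g^v+\bar g)$ weight), and the perturbation $\Delta$ is isolated with the clean Frobenius bound $\|\Delta\|_F=O(\alpha)$ coming directly from $\Et_{j,w}\|u_w\|^2\le\alpha^2$; the paper's route instead reuses the $G_1,G_2,G_3$ machinery already set up for the individual bound. Both give the same $8\tat_k(1-E_k)-O(\alpha^3)$.
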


\begin{remark}[Remark on condition~(\ref{itm: Ist, residual})]
  This condition says that the residual along direction $k$ is always
  $\Omega(\lambda)$. This guarantees the existence of a small attraction 
  region around $e_k$, which will keep basis-like components basis-like. 
  We rely on the regularizer to maintain this condition. 
  The second part of condition~(\ref{itm: Ist, residual}) means fitted directions 
  will remain fitted. We prove Lemma~\ref{lemma: d hattk, upper and lower bound} 
  and handle the impulses in Section~\ref{sec:residual}. 
\end{remark}

\begin{lemma}[Lemma~\ref{lemma: d hattk, upper bound} and Lemma~\ref{lemma: 
  d hattk, lower bound}]
  \label{lemma: d hattk, upper and lower bound}
  Suppose that at time $t$, Proposition~\ref{prop:main} is true. and no impulses 
  happen at time $t$. Then at time $t$, we have
  \[
    \frac{1}{\hatt_k} \frac{\rd}{\rd t} \hatt_k
    = 2 \tat_k - \lambda \pm O\left(\alpha^2 \right).
  \]
  In particular, $\frac{\rd}{\rd t} \hatt_k$ is negative (resp.~positive)
  when $\hatt_k > a_k - \lambda / 6$ (resp.~$\hatt_k < a_k - \lambda$).
\end{lemma}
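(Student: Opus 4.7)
The plan is to differentiate $\hat a_k^{(t)}$ along the regularized gradient flow and reduce the lemma to a single clean tensor estimate. Starting from the dynamics
\[
  \frac{\rd w}{\rd t} = 4R(\bar w^{\otimes 2}, w, I) - 2R(\bar w^{\otimes 4})\,w - \lambda w,
\]
with $R := T^* - T$, taking the inner product with $w$ and doubling gives the per-component relation $\|w\|^{-2}\,\rd\|w\|^2/\rd t = 4R(\bar w^{\otimes 4}) - 2\lambda$. Since no impulse occurs at time $t$, the set $S_k^{(t)}$ is momentarily constant, so summing over $w \in S_k^{(t)}$ yields
\[
  \frac{1}{\hat a_k^{(t)}}\frac{\rd}{\rd t}\hat a_k^{(t)} = 4\,\E_{k,w}\!\bigl[R(\bar w^{\otimes 4})\bigr] - 2\lambda.
\]
The lemma therefore reduces to proving $\E_{k,w}[R(\bar w^{\otimes 4})] = \tilde a_k^{(t)} \pm O(\alpha^2)$ (matching the paper's $2\tilde a_k - \lambda$ form up to convention). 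Once this is in hand, the sign statements are immediate: Assumption~(d) gives $\lambda = \Omega(\sqrt\alpha) \gg \alpha^2$, so $\hat a_k > a_k - \lambda/6$ (i.e.~$\tilde a_k < \lambda/6$) forces the bracket to be negative, and $\hat a_k < a_k - \lambda$ forces it to be positive.

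For the estimate, I decompose $R = \sum_i \tilde a_i e_i^{\otimes 4} + \Delta$ and treat the two pieces separately. The orthogonal piece is handled by Proposition~\ref{prop:main}. For $w \in S_k^{(t)}$, condition~(\ref{itm: Ist, individual}) gives $\sum_{j\ne k}\bar w_j^4 \le \bigl(\sum_{j\ne k}\bar w_j^2\bigr)^2 \le \alpha^2$, so pointwise $\sum_{j\ne k}\tilde a_j\bar w_j^4 \le O(\alpha^2)$. Combining the elementary inequality $\bar w_k^4 \ge 2\bar w_k^2 - 1$ with condition~(\ref{itm: Ist, average})--$\E_{k,w}[\bar w_k^2]\ge 1 - \alpha^2 - 4sm\delta_1^2$--and $m\delta_1^2 \le \alpha^3$ from Assumption~(d), one obtains $\E_{k,w}[\bar w_k^4]\ge 1 - O(\alpha^2)$. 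Thus $\E_{k,w}\bigl[\sum_i\tilde a_i\bar w_i^4\bigr] = \tilde a_k \pm O(\alpha^2)$.

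The delicate step is showing $|\E_{k,w}[\Delta(\bar w^{\otimes 4})]| = O(\alpha^2)$; the naive bound $|\Delta(\bar w^{\otimes 4})|\le \|\Delta\|_F = O(\alpha)$ is too weak. I plan to expand $\bar w^{\otimes 4} = (\bar w_k e_k + \bar w_{-k})^{\otimes 4}$ multinomially. The purely diagonal piece $\bar w_k^4\Delta(e_k^{\otimes 4})$ is already $O(\alpha^2)$, because $\Delta(e_k^{\otimes 4}) = T(e_k^{\otimes 4}) - \hat a_k$ decomposes into $\hat a_k(\E_{k,w}[\bar w_k^4] - 1)$, a cross-group term $\sum_{j\ne k}\hat a_j \E_{j,w}[\bar w_k^4]$, and an $S_\varnothing$-contribution, each of which is $O(\alpha^2)$ by the preceding computation, condition~(\ref{itm: Ist, individual}), and the bound $\|v\|\le \delta_1$ for $v\in S_\varnothing$. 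The remaining terms carry a factor $\bar w_{-k}^{\otimes p}$ with $p\ge 1$ (norm $O(\alpha^{p/2})$) and pair against slices of $T$ of the form $\sum_v\|v\|^2\bar v_k^{4-p}\bar v_{-k}^{\otimes p}$; for $v\in S_k$, Cauchy--Schwarz against $\bar v_{-k}$ supplies another $O(\alpha^{p/2})$ factor, giving $O(\alpha^p)$ per term pointwise. After averaging over $w\in S_k$, the analogous Cauchy--Schwarz using condition~(\ref{itm: Ist, average}) extracts an extra $\sqrt{\alpha}$, bringing the total to $O(\alpha^2)$.

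The main technical obstacle is precisely this averaging bound. Without exploiting both the orthogonality of $T^*$ and the crucial separation between the individual bound (\ref{itm: Ist, individual}) and the sharper average bound (\ref{itm: Ist, average}), the perturbation $\Delta$ would contribute at order $\alpha$ and swamp the $\lambda$-scale sign of the bracket. Once the $O(\alpha^2)$ estimate on $\E_{k,w}[\Delta(\bar w^{\otimes 4})]$ is established, assembling all the pieces yields the claimed identity, and the ``in particular'' conclusions follow immediately from the lower bound on $\lambda$ in Assumption~(d).
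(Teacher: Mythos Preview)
Your approach is essentially the paper's: both reduce to $\E_{k,w}[R(\bar w^{\otimes 4})] = \tilde a_k \pm O(\alpha^2)$ and both extract this from the group decomposition of $T$ together with the individual/average bounds of Proposition~\ref{prop:main}. The paper organizes the computation by writing $R(\bar w^{\otimes 4}) = \sum_i a_i \bar w_i^4 - \sum_i \hat a_i \E_{i,v}[z^4] - T_\varnothing(\bar w^{\otimes 4})$ (its Lemma~\ref{lemma: d hattk}) and then applying the cross-interaction estimates $\E_{k,v,w}[z^4]\ge 1-O(\alpha^2)$ and $\E_{k,v}\E_{i,w}[z^4]\le O(\alpha^3)$ directly; your ``orthogonal part plus $\Delta$'' followed by a multinomial expansion of $\bar w^{\otimes 4}$ is a different bookkeeping of exactly the same terms.

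There is one arithmetic slip to flag. In your $p=1$ cross term you use the individual bound on $\bar v_{-k}$ (contributing $\sqrt\alpha$) and $\|\bar w_{-k}\|\le\sqrt\alpha$, getting $O(\alpha)$ pointwise, and then say averaging over $w$ extracts one more $\sqrt\alpha$; that yields $O(\alpha^{1.5})$, not $O(\alpha^2)$. The fix is that the $v$-slice is itself an average over $v\in S_k$, so you should invoke condition~(\ref{itm: Ist, average}) there as well: $\bigl\|\hat a_k\,\E_{k,v}[\bar v_k^{3}\bar v_{-k}]\bigr\|\le \hat a_k\,\E_{k,v}\|\bar v_{-k}\|\le \hat a_k\,O(\alpha)$ rather than $\hat a_k\,O(\sqrt\alpha)$. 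With both averages you get $O(\alpha)\cdot O(\alpha)=O(\alpha^2)$ as required. Also, your plan only treats $v\in S_k$ explicitly in the cross terms; the contributions from $v\in S_j$ with $j\ne k$ (and from $T_\varnothing$) must be checked too, though these are easier since $|\bar v_k|\le\sqrt\alpha$ already supplies extra powers of $\alpha$ for $j\ne k$, and $\|T_\varnothing\|_F\le m\delta_1^2\le\alpha^3$.
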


\subsection{Continuity argument}\label{sec: continuity argument} 

{
\newcommand{\ps}[1]{^{(#1)}}
\newcommand{\It}{\mathbf{I}}
\newcommand{\indi}[1]{\mathbbm{1}_{#1}}


We mostly use the following version of continuity argument, which is adapted from Proposition~1.21 of 
\cite{tao_nonlinear_2006}. 

\begin{lemma}
  \label{lemma: continuity argument}
  Let $\It\ps{t}$ be a statement about the structure of some object. 
  $\It\ps{t}$ is true for all $t \ge 0$ as long as the following hold.
  \begin{enumerate}[(a)]
    \item $\It\ps{0}$ is true.
    \item $\It$ is closed in the sense that for any sequence $t_n \to t$, if
      $\It\ps{t_n}$ is true for all $n$, then $\It\ps{t}$ is also true.
    \item If $\It\ps{t}$ is true, then there exists some $\delta > 0$ s.t. 
      $\It\ps{s}$ is true for $s \in [t, t+\delta)$. 
  \end{enumerate}
  In particular, if $\It\ps{t}$ has form $\bigwedge_{i=1}^N \bigvee_{j=1}^N 
  p\ps{t}_{i, j} \le q_{i, j}$. Then, we can replace (b) and (c) by the
  following.
  \begin{enumerate}
    \item[(b')] $p\ps{t}_{i, j}$ is $C^1$ for all $i,j$. 
    \item[(c')] Suppose at time $t$, $\It\ps{t}$ is true but some clause
    $\bigvee_{j=1}^N p\ps{t}_{i,j} \le q_{i,j}$ is tight, in the sense that
    $p\ps{t}_{i,j} \ge q_{i,j}$ for all $j$ with at least one equality. Then
    there exists some $k$ s.t. $p_{i,k}\ps{t} = q_{i,k}$ and
    $\dot{p}\ps{t}_{i, k} < 0$. 
  \end{enumerate}
\end{lemma}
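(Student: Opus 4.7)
The plan is to derive the general version by a standard supremum/continuity argument, and then obtain the ``in particular'' version by verifying that (b') implies (b) and (c') implies (c).

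For the general version, I would introduce $T^\star := \sup\{T \ge 0 : \It\ps{s} \text{ holds for all } s \in [0, T]\}$ and aim to show $T^\star = +\infty$. Condition (a) gives $T^\star \ge 0$; applying (c) at $t = 0$ upgrades this to $T^\star > 0$. Suppose for contradiction that $T^\star < \infty$. Pick any sequence $t_n \uparrow T^\star$ with $t_n < T^\star$; by the definition of $T^\star$ we have $\It\ps{t_n}$ for all $n$, so the closedness hypothesis (b) forces $\It\ps{T^\star}$ to hold. Then one more application of (c) at $t = T^\star$ extends $\It$ into $[T^\star, T^\star + \delta)$ for some $\delta > 0$, contradicting the supremum. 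Hence $T^\star = +\infty$.

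For the specialization, (b') clearly implies (b): each $p\ps{t}_{i,j}$ is $C^1$ and therefore continuous, so every set $\{t : p\ps{t}_{i,j} \le q_{i,j}\}$ is closed, and a finite conjunction of finite disjunctions of closed sets is still closed. The substantive step is (c') implies (c). Fix $t$ with $\It\ps{t}$ true, and analyse each conjunct $\bigvee_{j=1}^N p\ps{t}_{i,j} \le q_{i,j}$ separately. If this clause is ``loose'', meaning some $j$ satisfies $p\ps{t}_{i,j} < q_{i,j}$ strictly, then continuity of $p_{i,j}$ preserves the strict inequality on a small forward interval $[t, t + \delta_i)$. Otherwise the clause is ``tight'' in exactly the sense of (c'), and (c') hands me an index $k$ with $p\ps{t}_{i,k} = q_{i,k}$ and $\dot p\ps{t}_{i,k} < 0$; continuity of $\dot p_{i,k}$ (from (b')) keeps $\dot p\ps{s}_{i,k}$ strictly negative on some $[t, t + \delta_i)$, so by the fundamental theorem of calculus $p\ps{s}_{i,k} < q_{i,k}$ strictly on $(t, t + \delta_i)$. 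Taking $\delta := \min_i \delta_i > 0$ over the finitely many $i$'s yields a single interval on which $\It$ remains true, proving (c).

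The only subtle point worth double-checking is the combinatorics of the ``tight'' case in (c'): the assumption $p\ps{t}_{i,j} \ge q_{i,j}$ for all $j$ together with $\It\ps{t}$ being true forces at least one equality $p\ps{t}_{i,j} = q_{i,j}$ among the $j$'s, and (c') ensures a negative-derivative witness among those equality-witnesses. All other pieces, namely closedness under finite set-theoretic operations and the standard supremum trick, are routine bookkeeping, so I do not expect any real obstacle in writing out the full argument.
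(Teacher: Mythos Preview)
Your proposal is correct and follows essentially the same approach as the paper: a supremum/contradiction argument for the general statement, then verifying that (b') implies closedness and (c') implies the forward-extension property via the loose/tight clause dichotomy and the $C^1$ hypothesis. If anything, your choice of $T^\star := \sup\{T : \It\ps{s} \text{ holds for all } s \in [0,T]\}$ is slightly cleaner than the paper's $\sup\{t : \It\ps{t} \text{ is true}\}$, since it makes the conclusion $\It\ps{t}$ for all $t$ immediate once $T^\star = \infty$.
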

\begin{proof}
  Define $t' := \sup\{t\ge 0 \;:\; \It\ps{t} \text{ is true}\}$. Since 
  $\It\ps{0}$ is true, $t' \ge 0$. Assume, to obtain a contradiction, that 
  $t' < \infty$. Since $\It$ is closed, $\It\ps{t'}$ is true, whence there 
  exists a small $\delta > 0$ s.t. $\It\ps{t}$ is true in $[t', t'+\delta)$. 
  Contradiction. 
  
  For the second set of conditions, first note that the continuity of 
  $p_{i,j}\ps{t}$ and the non-strict inequalities imply that $\It$ is closed. 
  Now we show that (b') and (c') imply (c). If none of the clause is
  tight at time $t$, by the continuity of $p_{i,j}\ps{t}$, $\It$ holds in a small
  neighborhood of $t$. If some constraint is tight, by (c') and the $C^1$
  condition, we have $p\ps{t}_{i, k} < q_{i, k}$ in a right small neighborhood
  of $t$. 
\end{proof}

\begin{remark}
  Despite the name ``continuity argument'', it is possible to generalize it
  to certain classes of discontinuous functions. In particular, we consider
  impulsive differential equations here, that is, for almost every $t$, 
  $p\ps{t}$ behaves like a usual differential equation, but at some 
  $t_i$, it will jump from $p\ps{t_i-}$ to $p\ps{t_i} = p\ps{t_i-} + \delta_i$. 
  See, for example, \cite{lakshmikantham_theory_1989} for a systematic treatment 
  on this topic. Suppose that we still want to maintain the property 
  $p\ps{t} \le 0$. If the total amount of impulses is small and we have some 
  cushion in the sense that $\dot{p}\ps{t} < 0$ whenever 
  $p\ps{t} \in [-\eps, 0]$ , then we can still hope $p\ps{t} \le 0$ to hold for 
  all $t$, since, intuitively, only the jumps can lead $p\ps{t}$ into 
  $[-\eps, 0]$, and the normal $\dot{p}\ps{t}$ will try to take it back
  to $(-\infty, -\eps)$. As long as the amount of impulses is smaller than
  the size $\eps$ of the cushion, then the impulses will never break things. We
  formalize this idea in the next lemma. 
\end{remark}

\begin{lemma}[Continuity argument with impulses]
  \label{lemma: continuity argument with impulses}
  Let $0 < t_1 < \cdots < t_N < \infty$ be the moments at which the
  impulse happens and $\delta_1, \dots, \delta_N \in \R$ the size of the
  impulses at each $t_i$. Let $p: [0, \infty) \to \R$ be a function
  that is $C^1$ on $[0, t_1)$, every $(t_i, t_{i+1})$ and $(t_N, \infty)$, and 
  $p\ps{t_i} = p\ps{t_i-} + \delta_i$. Write $\Delta = \sum_{i=1}^N 
  \max\{0, \delta_i\}$. If (a) $p\ps{0} \le -\Delta$ and (b) for every 
  $t \notin \{t_i\}_{i=1}^N$ with $p\ps{t} \in [-\Delta, 0]$, we have 
  $\dot{p}\ps{t} < 0$, then $p\ps{t} \le 0$ always holds. 
\end{lemma}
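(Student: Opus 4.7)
The plan is to prove the stronger claim $p(t) \le -B(t)$ for all $t \ge 0$, where $B(t) := \sum_{i\,:\, t_i > t}\max\{0,\delta_i\}$ is the remaining positive-impulse budget. Since $B(t) \ge 0$, this immediately yields $p(t) \le 0$. The initial condition (a) exactly says $p(0) \le -\Delta = -B(0)$, and $B$ is a non-increasing step function that drops by $\max\{0,\delta_i\}$ at each $t_i$ and is constant elsewhere. Intuitively, $-B(t)$ is an adversarially chosen upper envelope that already reserves enough room for every future positive jump, so the cushion provided by hypothesis (b) on the interval $[-\Delta, 0]$ is always wide enough to keep $p$ underneath this envelope.

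The core of the argument is an induction over the impulse segments $[0,t_1)$, $[t_1,t_2), \dots, [t_N,\infty)$. Fix a segment on which $B(t) \equiv b$ with $0 \le b \le \Delta$, and suppose $p \le -b$ holds at its left endpoint. I will apply Lemma~\ref{lemma: continuity argument} to the statement $p(t) + b \le 0$: continuity in time makes it closed, and whenever $p(t) + b = 0$ we have $p(t) = -b \in [-\Delta, 0]$, so hypothesis (b) gives $\dot{p}(t) < 0$, which provides the strict downward push needed to verify condition (c) of the continuity argument and propagate the bound throughout the segment. Note we only need condition (b) on $[-\Delta,0]$, not on $[-b,0]$, which is exactly what is assumed.

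At each impulse time $t_i$, I need to transfer the inductive bound across the jump. Assuming $p(t_i-) \le -B(t_i-)$, the jump relation gives
\[
  p(t_i) + B(t_i)
  = p(t_i-) + \delta_i + B(t_i-) - \max\{0,\delta_i\}
  \le \delta_i - \max\{0,\delta_i\}
  = \min\{0,\delta_i\} \le 0,
\]
which feeds the bound into the next segment and closes the induction. The only subtlety is bookkeeping at the endpoints: $p$ is $C^1$ only on the open subintervals, so one must use the right-continuity implicit in $p(t_i) = p(t_i-) + \delta_i$ to initiate the continuity argument on the half-open segment $[t_i, t_{i+1})$ (and symmetrically use the left-limit $p(t_{i+1}-)$ when transferring the bound into the next impulse). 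This is purely technical and does not affect the main logic; the genuine conceptual step is the choice of $-B(t)$ as the envelope, which converts the impulsive problem into a sequence of standard continuity arguments on the smooth pieces.
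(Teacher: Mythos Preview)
Your proof is correct and takes essentially the same approach as the paper: both prove the stronger inequality $p(t) \le -B(t)$ with the identical envelope $-B(t) = -\Delta + \sum_{i:\,t_i \le t}\max\{0,\delta_i\}$, which is exactly the paper's $q(t)$. The only cosmetic difference is that you organize the argument as an induction over the smooth segments, invoking Lemma~\ref{lemma: continuity argument} on each one, whereas the paper runs a single supremum/minimal-counterexample argument directly on $\{t : p(t) \le q(t)\}$; the jump computation you give is line-for-line the same as the paper's.
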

\begin{remark}
  Note that if there is no impulses, then $p\ps{t}$ is a usual $C^1$ function and
  we recover conditions (b') and (c') of Lemma~\ref{lemma: continuity argument}.
  Also, though the statement here only concerns one $a_t$, one can incorporate
  it into Lemma~\ref{lemma: continuity argument} by replacing (b') and (c')
  with the hypotheses of this lemma and modify (a) to be $p\ps{0}_{i, j} \le 
  p_{i, j} - \Delta_{i, j}$. 
\end{remark}
\begin{proof}
  We claim that $p\ps{t} \le -\Delta + \sum_{i=1}^N \indi{t \le t_k} 
  \max\{0, \delta_i\} =: q\ps{t}$. 
  Define $t' = \sup\{t \ge 0 \;:\; p\ps{t} \le q\ps{t}\}$. Since $p\ps{t} \le 
  -\Delta$ and $t_1 > 0$, $t' \ge 0$. Assume, to obtain a contradiction, that $t'
  < \infty$ and consider $p\ps{t'}$. If $t' = t_k$ for some $k$, then, by the 
  definition of $t'$, $p\ps{t'-} \le -\Delta + \sum_{i=1}^{k-1}
  \max\{0, \delta_i\}$, whence, $p\ps{t'} = p\ps{t'-} + \delta_k \le -\Delta 
  + \sum_{i=1}^{k} \max\{0, \delta_i\}$.
  Contradiction. If $t' \notin \{t_i\}_{i=1}^N$, then by the continuity of 
  $p$, we have $p\ps{t'} = q\ps{t'}$. Then, since $\dot{p}\ps{t'} < 0$ and $p$ 
  is $C^1$, we have $p\ps{t} < p\ps{t'} = q\ps{t'} = q\ps{t}$ in $[t', t'+\tau]$
  for some small $\tau > 0$, which contradicts the maximality of $t'$. 
  Thus, $p\ps{t} \le 0$ holds for all $t \ge 0$. 
\end{proof}

}

\subsection{Preliminaries}
\label{sec: induction, preliminaries}

The next two lemmas give formulas for the norm growth rate and tangent speed
of each component. 

\begin{lemma}[Norm growth rate]
  \label{lemma: d |v|2}
  For any $\vt$, we have
  \begin{equation*}
    \frac{1}{2 \ns{\vt}} \frac{\rd}{\rd t} \ns{\vt}
    = \sum_{i=1}^d a_i [\bvt_i]^4
      - \sum_{i=1}^d \hatt_i \Et_{i, w} \left\{ [\zt]^4 \right\}
      - \Tt_\varnothing \left( [\bvt]^{\otimes 4} \right)
      - \frac{\lambda}{2}. 
  \end{equation*}
\end{lemma}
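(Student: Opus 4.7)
The plan is a direct computation starting from the gradient flow ODE plus bookkeeping for the partition of components. I expect essentially no obstacle beyond carefully handling the 2-homogeneous parametrization.

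\textbf{Step 1: Derive the ODE for a single component.} Writing $L(W) = \tfrac12\fns{T-T^*} + \tfrac{\lambda}{2}\fns{W}$ and $R := T^* - T$, I would differentiate $v^{\otimes 4}/\|v\|^2$ with respect to $v$ and use the symmetry of $R$ (and the fact that $\nabla_v \|v\|^{-2} = -2v/\|v\|^4$) to recover, for every column $v = \vt$,
\[
  \dot{v} \;=\; 4 R(\bar{v}^{\otimes 2}, v, I) \;-\; 2 R(\bar{v}^{\otimes 4})\,v \;-\; \lambda v ,
\]
i.e.~exactly Eq.~\eqref{eq:powerdynamics} with an additional regularization drag. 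Alternatively (and more cleanly) one may note that $v \mapsto v^{\otimes 4}/\|v\|^2$ is $2$-homogeneous, so Euler's identity gives $\langle v, \nabla_v (w^{\otimes 4}/\|w\|^2)\rangle = 2\, v^{\otimes 4}/\|v\|^2$, which is what we actually need in the next step.

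\textbf{Step 2: Contract with $v$.} Taking the inner product with $v$, and using that $\langle v, R(\bar{v}^{\otimes 2}, v, I)\rangle = R(\bar{v}^{\otimes 2}, v, v) = \|v\|^2 R(\bar{v}^{\otimes 4})$ by symmetry of $R$, the first two terms of $\dot{v}$ collapse into a single scalar multiple of $\|v\|^2 R(\bar v^{\otimes 4})$. Equivalently, by Euler's identity, $\langle v, \nabla_v L_{\text{data}}\rangle = 2\langle T - T^*, v^{\otimes 4}/\|v\|^2\rangle = -2\|v\|^2 R(\bar v^{\otimes 4})$, and $\langle v, \nabla_v L_{\text{reg}}\rangle = \lambda \|v\|^2$. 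Rearranging then yields
\[
  \frac{1}{2\ns{\vt}}\frac{\rd}{\rd t}\ns{\vt} \;=\; R(\bar{v}^{\otimes 4}) \;-\; \frac{\lambda}{2},
\]
up to the conventional normalization in the statement.

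\textbf{Step 3: Decompose $R$ along the partition.} I would partition $\mathrm{col}(W)$ into $\{\Sst_i\}_{i\in[d]} \cup \{\Sst_\varnothing\}$, so that $T = \sum_{i\in[d]} T^{(s,t)}_i + T^{(s,t)}_\varnothing$ with $T^{(s,t)}_i := \sum_{w \in \Sst_i} w^{\otimes 4}/\|w\|^2$. Then:
\[
  T^*(\bar v^{\otimes 4}) = \sum_{i=1}^d a_i [\bar v_i]^4,
  \qquad
  T^{(s,t)}_i(\bar v^{\otimes 4}) = \sum_{w \in \Sst_i} \|w\|^2 \langle \bar w, \bar v\rangle^4 = \hatt_i \,\Et_{i,w}\{[\zt]^4\},
\]
where the second identity is merely the definitions of $\hatt_i$ and $\Et_{i,w}$. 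The piece $T^{(s,t)}_\varnothing(\bar v^{\otimes 4})$ is kept as-is since the components in $\Sst_\varnothing$ have no structural relation to any $e_k$. Plugging these into $R(\bar v^{\otimes 4}) = T^*(\bar v^{\otimes 4}) - \sum_i T^{(s,t)}_i(\bar v^{\otimes 4}) - T^{(s,t)}_\varnothing(\bar v^{\otimes 4})$ yields the claimed expression.

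\textbf{Expected difficulties.} There is no real obstacle here; the computation is mechanical. The only subtlety worth flagging is the use of the $2$-homogeneity of $v \mapsto v^{\otimes 4}/\|v\|^2$, which ensures that the two ``data'' terms of $\dot v$ combine into a pure $R(\bar v^{\otimes 4})$ contribution when contracted with $v$, and avoids any residual $R(\bar v^{\otimes 3}, v^\perp)$-type leftover. Everything else is bookkeeping to match the tensor contractions to the average notation $\Et_{i,w}$ and the definition of $\hatt_i$.
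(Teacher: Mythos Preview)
Your proposal is correct and follows essentially the same approach as the paper: invoke $2$-homogeneity (Euler's identity) to obtain $\frac{1}{2\ns{\vt}}\frac{\rd}{\rd t}\ns{\vt} = (T^*-\Tt)([\bvt]^{\otimes 4}) - \lambda/2$, then expand $T^*$ and decompose $\Tt$ along the partition $\{\St_i\}\cup\{\St_\varnothing\}$ to match the stated expression. Your write-up is in fact more detailed than the paper's, which simply asserts the first identity ``due to the $2$-homogeneity'' and then performs the same two-line decomposition.
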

\begin{proof}
  Due to the $2$-homogeneity, we have\footnote{In the mean-field terminologies,
  the RHS is just the first variation (or functional derivative) of the 
  loss at $\bvt$.}
  \begin{align*}
    \frac{1}{2 \ns{\vt}} \frac{\rd}{\rd t} \ns{\vt}
    &=  \left( T^* - \Tt \right) \left( [\bvt]^{\otimes 4} \right)
      - \frac{\lambda}{2}. 
  \end{align*}
  The ground truth terms can be rewritten as
  \[
    T^*\left( [\bvt]^{\otimes 4} \right) 
    = \sum_{i=1}^d a_i [\bvt_i]^4.
  \]
  Decompose the $\Tt$ term accordingly and we get
  \[
    \Tt\left( [\bvt]^{\otimes 4} \right) 
    = \sum_{i=1}^d \hatt \Et_{i, w} \left\{ [\zt]^4 \right\}
      + \Tt_\varnothing \left( [\bvt]^{\otimes 4} \right).
  \]
\end{proof}

\begin{lemma}[Tangent speed]
  \label{lemma: d vtk2}
  Suppose that at time $t$, Proposition~\ref{prop:main} is true.
  Then at time $t$, for any $\vt \in \Wt$ and any $k \in [d]$, we have
  \begin{align*}
    \frac{\rd}{\rd t} [\bvt]^2
    = G_1 - G_2 - G_3 \pm O(m\delta_1^2),
  \end{align*}
  where
  \begin{align*}
    G_1 
    &:= 8 a_k \left( 1 - [\bvt_k]^2 \right) [\bvt_k]^4
      - 8 \hatt_k \left( 1 - [\bvt_k]^2 \right) 
        \Et_{k, w}\left\{ [\zt]^4 \right\} \\
      &\qquad + 8 \hatt_k \Et_{k, w}\left\{ [\zt]^3 
        \inner{\bw_{-k}}{\bv_{-k}} \right\}, \\
    G_2
    &= 8 \sum_{i \ne k} \hatt_i \Et_{i, w} \left\{ 
        [\zt]^3 \vt_k \wt_k
      \right\}, \\
    G_3
    &= 8 [\bvt_k]^2 \sum_{i \ne k} \left(
        a_i [\bvt_i]^4 
        - \hatt_i \Et_{i, w} \left\{ [\zt]^4 \right\}
      \right).
  \end{align*}
\end{lemma}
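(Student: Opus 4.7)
The plan is a direct computation from the gradient flow, starting from the quotient‐rule identity
\[
\frac{\rd}{\rd t}[\bvt_k]^2
= \frac{2\bvt_k}{\n{\vt}}\bigl[\dot{v}_k - \bvt_k \inner{\bvt}{\dot{v}}\bigr].
\]
For the first step I would record the gradient flow equation for a single column. Using the $2$-homogeneity of $v^{\otimes 4}/\ns{v}$ and the regularizer $\tfrac{\lambda}{2}\fns{W}$, this unpacks to
\[
\dot{v} = 4\n{\vt}R(\bvt^{\otimes 3}, I) - 2\n{\vt}R(\bvt^{\otimes 4})\bvt - \lambda\vt,\qquad R:= T^* - \Tt.
\]
The crucial observation is that the regularizer contributes purely in the radial direction, so its contribution $-\lambda v_k$ to $\dot{v}_k$ exactly cancels its contribution $-\lambda\bvt_k \n{\vt}$ to $\bvt_k\inner{\bvt}{\dot{v}}$. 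After this cancellation the identity collapses to the clean form
\[
\frac{\rd}{\rd t}[\bvt_k]^2 = 8\bvt_k\bigl[R(\bvt^{\otimes 3}, e_k) - \bvt_k R(\bvt^{\otimes 4})\bigr].
\]

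The second step is to expand $R$. From $T^* = \sum_i a_i e_i^{\otimes 4}$ the ground truth contributions are $T^*(\bvt^{\otimes 3}, e_k) = a_k \bvt_k^3$ and $T^*(\bvt^{\otimes 4}) = \sum_i a_i \bvt_i^4$. For $\Tt$ I would partition the columns of $\Wt$ using $\cur{\St_i}_{i\in[d]}\cup\cur{\St_\varnothing}$ and use $w^{\otimes 4}/\ns{w} = \ns{w}\bw^{\otimes 4}$ to write
\[
\Tt(\bvt^{\otimes 3}, e_k) = \sum_i \hatt_i\Et_{i,w}\br{\zt^3\bw_k} + \Tt_\varnothing(\bvt^{\otimes 3}, e_k),
\]
\[
\Tt(\bvt^{\otimes 4}) = \sum_i \hatt_i \Et_{i,w}\br{\zt^4} + \Tt_\varnothing(\bvt^{\otimes 4}).
\]
Substituting into the displayed identity and separating the $i=k$ piece (which combines with the $a_k$ contribution into $G_1$) from the $i\neq k$ pieces (which give $-G_2$ from the cubic contraction and $-G_3$ from the quartic contraction) produces the stated formula up to the $\Tt_\varnothing$ remainder. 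The only algebraic massaging needed is the identity
\[
8\bvt_k^2\Et_{k,w}\br{\zt^4} - 8\bvt_k\Et_{k,w}\br{\zt^3\bw_k} = -8(1-\bvt_k^2)\Et_{k,w}\br{\zt^4} + 8\Et_{k,w}\br{\zt^3\inner{\bw_{-k}}{\bv_{-k}}},
\]
which follows by substituting $\bvt_k\bw_k = \zt - \inner{\bv_{-k}}{\bw_{-k}}$ into the second term on the left and regrouping; this is what casts the diagonal $\hatt_k$ contribution into the form convenient for the later individual and average stability arguments.

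The final step is to absorb the $\Tt_\varnothing$ contribution into the $\pm O(m\delta_1^2)$ slack. By Proposition~\ref{prop:main}(d), every $\wt \in \St_\varnothing$ satisfies $\n{\wt}\le \delta_1$, and there are at most $m$ such columns, so both $|\Tt_\varnothing(\bvt^{\otimes 4})|$ and $|\Tt_\varnothing(\bvt^{\otimes 3}, e_k)|$ are bounded by $m\delta_1^2$; combined with the bounded prefactor this yields the claimed error term. There is no real obstacle: the proof is bookkeeping of tensor contractions, with two small conceptual points, namely the radial-only action of the regularizer that kills the $\lambda$ term, and the $\zt = \bvt_k\bw_k + \inner{\bv_{-k}}{\bw_{-k}}$ split that rewrites $G_1$ in the stated form.
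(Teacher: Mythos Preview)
Your proposal is correct and follows essentially the same route as the paper: both start from the projection identity $\frac{\rd}{\rd t}[\bvt_k]^2 = 8\bvt_k\bigl[(T^*-\Tt)(\bvt^{\otimes 3},I) - (T^*-\Tt)(\bvt^{\otimes 4})\bvt\bigr]_k$ after observing the regularizer is purely radial, then decompose $\Tt$ along the partition $\{\St_i\}\cup\{\St_\varnothing\}$, bound the $\Tt_\varnothing$ piece by $O(m\delta_1^2)$ via Proposition~\ref{prop:main}(d), and use $\zt = \bvt_k\bw_k + \inner{\bv_{-k}}{\bw_{-k}}$ to recast the diagonal $\hatt_k$ contribution into the stated form of $G_1$. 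The only cosmetic difference is that the paper phrases the first step as projecting $\nabla L$ onto $(I-\bvt[\bvt]^\top)$, whereas you write out $\dot v_k - \bvt_k\inner{\bvt}{\dot v}$ explicitly; these are the same computation.
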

\begin{remark}
  Intuitively, $G_1$ captures the local dynamics around $e_k$ and 
  $G_2$ characterize the cross interaction between different ground truth 
  directions. 
\end{remark}
\begin{proof}
  Let's compute the derivative of $[\bvt_k]^2$ in terms of time $t$:
  \begin{align*}
    \frac{\rd [\bvt_k]^2}{\rd t} 
    &= 2\bvt_k\cdot \frac{d}{dt}\frac{\vt_k}{\n{\vt}}\\
    &= 2\bvt_k\cdot \frac{1}{\n{\vt}}\frac{d}{dt}\vt_k 
      + 2[\bvt_k]^2\cdot \frac{d}{dt}\frac{1}{\n{\vt}}\\
    &= 2\bvt_k\cdot \frac{1}{\n{\vt}} 
      [-\nabla L(\vt)]_k - 2[\bvt_k]^2
      \cdot \frac{\inner{\bvt}{-\nabla L(\vt)}}{\n{\vt}}\\
    &= 2\bvt_k\cdot \frac{1}{\n{\vt}} [-(I-\bvt[\bvt]^\top)\nabla L(\vt)]_k.
  \end{align*}
  Note that 
  \[
    \nabla f(\vt) 
    = 4(\Tt-T^*)([\bvt]^{\otimes 2}, \bvt, I) 
      - 2(\Tt-T^*)([\bvt]^{\otimes 4})\bvt + \lambda \bvt,
  \]
  where the last two terms left multiplied by $(I-\bvt[\bvt]^\top)$ equals 
  to zero. Therefore,
  \[
    \frac{\rd [\bvt_k]^2}{\rd t} 
    = 8\bvt_k\br{(T^*-\Tt)([\bvt]^{\otimes 3)},I) 
      - (T^*-\Tt)([\bvt]^{\otimes 4)})\bvt }_k
  \]
  
  We can write $T^*$ as $\sum_{i\in[d]}a_i e_i^{\otimes 4}$ and write 
  $\Tt$ as $\sum_{i\in [d]} \Tt_i +\Tt_\varnothing$. Since Proposition~\ref{prop:main} 
  is true at time $t$, we know any $\wt$ in $\Wt_\varnothing$ has norm upper 
  bounded by $\delta_1$, which implies $\fn{\Tt_\varnothing}\leq m\delta_1^2$. 
  Therefore, we have 
  \[
    \absr{8\bvt_k\br{-\Tt_\varnothing([\bvt]^{\otimes 3)},I) +\Tt_\varnothing([\bvt]^{\otimes 4)})\bvt }_k } 
    \leq O(m\delta_1^2).
  \]
  
  For any $i\in [d],$ we have 
  \begin{align*}
    \br{\Tt_i([\bvt]^{\otimes 3}, I)}_k 
    =& \sum_{\wt\in \St_i}\ns{\wt} \inner{\bwt}{\bvt}^3 \bwt_k \\
    =& \hatt_k \Et_{k,w} \inner{\bwt}{\bvt}^3 \bwt_k,
  \end{align*}
  and 
  \begin{align*}
    \br{\Tt_i([\bvt]^{\otimes 4})\bvt}_k 
    =& \sum_{\wt\in \St_i}\ns{\wt} \inner{\bwt}{\bvt}^4 \bvt_k \\
    =& \hatt_k \Et_{k,w} \inner{\bwt}{\bvt}^4 \bvt_k.
  \end{align*}
  
  For any $i\in [d],$ we have 
  \begin{align*}
    \br{T^*([\bvt]^{\otimes 3}, I)}_k = [\bvt_k]^3 \indic{i=k}
  \end{align*}
  and 
  \begin{align*}
    \br{T^*([\bvt]^{\otimes 4})\bvt}_k = [\bvt_i]^4 \bvt_k
  \end{align*}
  
  Based on the above calculations, we can see that 
  \begin{align*}
    G_1 
    &= 8\bvt_k\br{(T^*_k-\Tt_k)([\bvt]^{\otimes 3)},I) 
      - (T^*_k-\Tt_k)([\bvt]^{\otimes 4)})\bvt }_k\\
    G_2 &= 8\bvt_k\br{\sum_{i\neq k}\Tt_i([\bvt]^{\otimes 3)},I)}_k\\
    G_3 &= 8[\bvt_k]^2 \sum_{i\neq k}(T^*_i-\Tt_i)([\bvt]^{\otimes 4)}) ,
  \end{align*}
  and the error term $O(m\delta_1^2)$ comes from $\Tt_\varnothing$. 
  To complete the proof, use the identity $\inner{\bw}{\bv} = \bw_k \bv_k 
  + \inner{\bw_{-k}}{\bv_{-k}}$ to rewrite $G_1$. 
\end{proof}

One may wish to skip all following estimations and come back to them when 
needed. 

\begin{lemma}
  \label{lemma: inner, bvk>=, any w}
  For any $\bv$ with $\bv_k^2 \ge 1 - \alpha$ and any $\bw \in 
  \mathbb{S}^{d-1}$, we have $|\inner{\bv}{\bw}| = |\bw_k| \pm \sqrt{\alpha}$.
\end{lemma}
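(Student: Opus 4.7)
The plan is to prove this by a direct decomposition into the $k$-th coordinate and its orthogonal complement, followed by Cauchy--Schwarz. The hypothesis $\bv_k^2 \ge 1 - \alpha$ means $\bv$ is nearly aligned with $\pm e_k$, so $\|\bv_{-k}\|^2 = 1 - \bv_k^2 \le \alpha$. Since the quantity $|\inner{\bv}{\bw}|$ is invariant under $\bv \mapsto -\bv$, I would first reduce to the case $\bv_k \ge 0$, in which case $\bv_k \ge \sqrt{1 - \alpha}$.

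Next, I would write $\inner{\bv}{\bw} = \bv_k \bw_k + \inner{\bv_{-k}}{\bw_{-k}}$ and bound each term. For the cross term, Cauchy--Schwarz and $\|\bw_{-k}\| \le 1$ give
\[
|\inner{\bv_{-k}}{\bw_{-k}}| \le \|\bv_{-k}\|\, \|\bw_{-k}\| \le \sqrt{\alpha}.
\]
For the main term, since $\bv_k \ge 0$ we have $|\bv_k \bw_k| = \bv_k |\bw_k|$, and
\[
\bigl||\bv_k \bw_k| - |\bw_k|\bigr| = |\bw_k|\,(1 - \bv_k) \le 1 - \sqrt{1 - \alpha} \le \alpha \le \sqrt{\alpha},
\]
where the penultimate inequality uses $1 - \sqrt{1-\alpha} = \alpha/(1 + \sqrt{1-\alpha}) \le \alpha$ for $\alpha \in [0,1]$.

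Finally, combining these two estimates with the triangle inequality $\bigl||\inner{\bv}{\bw}| - |\bv_k \bw_k|\bigr| \le |\inner{\bv_{-k}}{\bw_{-k}}|$ yields
\[
\bigl||\inner{\bv}{\bw}| - |\bw_k|\bigr| \le \sqrt{\alpha} + \alpha = O(\sqrt{\alpha}),
\]
which is the conclusion (absorbing the constant into the $\pm \sqrt{\alpha}$ convention used in the paper). There is no real obstacle here: this is a routine geometric estimate that just quantifies how close $\bv$ is to being an axis vector. The only mild subtlety is the sign of $\bv_k$, which is handled by the absolute value on the left-hand side.
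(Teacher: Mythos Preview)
Your proof is correct. The approach differs from the paper's: the paper first uses rotational invariance in the coordinates orthogonal to $e_k$ to reduce to a two-dimensional situation (taking $\bw = \bw_k e_k + \sqrt{1-\bw_k^2}\,e_2$), and then carries out an explicit algebraic manipulation of $|\bw_k|\sqrt{1-\alpha} - \sqrt{\alpha}\sqrt{1-\bw_k^2}$ to obtain exactly $|\bw_k| - \sqrt{\alpha}$ for the lower bound. Your direct Cauchy--Schwarz argument skips the dimension reduction and is more elementary, at the price of a slightly looser constant ($\sqrt{\alpha} + \alpha$ rather than $\sqrt{\alpha}$). Since every downstream use of this lemma in the paper sits inside a big-$O$ estimate (e.g.\ Lemma~\ref{lemma: cross interaction, individual}), your version is entirely adequate; the sharp constant the paper obtains is not actually needed anywhere.
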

\begin{proof}
  Assume w.o.l.g.~that $k=1$. Note that the set $\{\bv \in 
  \mathbb{S}^{d-1} \;:\; \bv_k^2 \ge 1 - \alpha \}$ is invariant under
  rotation of other coordinates, whence we may further assume w.o.l.g. that
  $\bw = \bw_1 e_1 + \sqrt{1 - \bw_1^2} e_2$. Then, 
  \begin{align*}
    |\inner{\bw}{\bv}|
    &= \left|\bw_1 \bv_1 
      + \sqrt{1 - \bv_1^2}\sqrt{1 - \bw_1^2} \right| \\
    &\ge |\bw_1| \sqrt{1 - \alpha} 
      - \sqrt{\alpha} \sqrt{1 - \bw_1^2} \\
    &= \frac{\bw_1^2 (1 - \alpha) - \alpha (1 - \bw_1^2)}
      {|\bw_1| \sqrt{1 - \alpha} 
        + \sqrt{\alpha} \sqrt{1 - \bw_1^2}} \\
    &= \frac{\bw_1^2 - \alpha}
      {|\bw_1| \sqrt{1 - \alpha} 
        + \sqrt{\alpha} \sqrt{1 - \bw_1^2}} 
    \ge \frac{\bw_1^2 - \alpha}{|\bw_1| + \sqrt{\alpha} } 
    = |\bw_1| - \sqrt{\alpha}. 
  \end{align*}
  The other direction follows immediately from
  \[
    |\inner{\bw}{\bv}|
    \le |\bw_1| |\bv_1| 
      + \left|\sqrt{1 - \bv_1^2}\sqrt{1 - \bw_1^2} \right| 
    \le |\bw_1| + \sqrt{\alpha}. 
  \]
\end{proof}

The next two lemmas bound the cross interaction between different 
$\St_k$. 

\begin{lemma}
  \label{lemma: cross interaction, individual}
  Suppose that at time $t$, Proposition~\ref{prop:main} is true.
  Then for any $\vt \in \St_k$ and $l \ne k$, the following hold.
  \begin{enumerate}[(a)]
    \item $[\bvt_l]^4 \le \alpha^2$.
    \item $\Et_{l, w}\left\{ [z_t]^4 \right\} \le O(\alpha^2)$.
    \item $\Et_{l, w}\left\{ [z_t]^3 \bv_l \bw_l \right\} \le O(\alpha^2)$.
  \end{enumerate}
\end{lemma}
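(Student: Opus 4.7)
The plan is to exploit the fact that both $\vt$ and every $\wt$ appearing in the expectation $\Et_{l, w}$ are near-basis vectors (concentrated on different coordinates), by combining Proposition~\ref{prop:main}(\ref{itm: Ist, individual}) with the sharp inner-product estimate of Lemma~\ref{lemma: inner, bvk>=, any w}. Part (a) is essentially a normalization remark: since $\|\bvt\|^2 = 1$ and $[\bvt_k]^2 \ge 1-\alpha$, every other coordinate satisfies $[\bvt_j]^2 \le \alpha$, which gives $[\bvt_l]^4 \le \alpha^2$ at once.

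For (b) and (c), I will first observe that each $\wt$ inside $\Et_{l, w}$ lies in $\St_l$, so Proposition~\ref{prop:main}(\ref{itm: Ist, individual}) forces $[\bwt_l]^2 \ge 1-\alpha$. I then apply Lemma~\ref{lemma: inner, bvk>=, any w} to $\bwt$ and $\bvt$, taking $l$ (not $k$) as the distinguished coordinate. This yields $|\zt| \le |\bvt_l| + \sqrt{\alpha}$, and by part (a) the right-hand side is $O(\sqrt{\alpha})$. Part (b) is then immediate upon raising to the fourth power, and part (c) follows by multiplying the pointwise estimates $|\zt|^3 = O(\alpha^{3/2})$, $|\bvt_l| \le \sqrt{\alpha}$, and $|\bwt_l| \le 1$ to get an $O(\alpha^2)$ pointwise bound on $[\zt]^3 \bvt_l \bwt_l$. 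Since all bounds are pointwise on $\St_l$, passing to $\Et_{l, w}$ is free.

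There is no genuine obstacle here; the only subtlety is bookkeeping: one must invoke Lemma~\ref{lemma: inner, bvk>=, any w} with $l$, rather than $k$, as the concentrated coordinate, because $\bvt$ and $\bwt$ are concentrated on \emph{different} basis directions ($e_k$ and $e_l$, respectively). This is also the reason the bounds in (b) and (c) are $O(\alpha^2)$ rather than $O(1)$: we pay one factor of $\sqrt{\alpha}$ for each of the two ``wrong-coordinate'' projections $\bvt_l$ and $\bwt_k$ that secretly appear inside $\zt$.
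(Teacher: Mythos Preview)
Your argument is correct. Both you and the paper invoke Lemma~\ref{lemma: inner, bvk>=, any w}, but with the roles of $v$ and $w$ swapped. The paper applies the lemma with $k$ as the distinguished coordinate (since $[\bvt_k]^2 \ge 1-\alpha$), obtaining $|\zt| \le |\bwt_k| + \sqrt{\alpha}$; it then expands $(|\bwt_k|+\sqrt{\alpha})^4$ and bounds the resulting terms using the \emph{average} bound $\Et_{l,w}[\bwt_k]^2 \le O(\alpha^2)$ from Proposition~\ref{prop:main}(\ref{itm: Ist, average}), together with Jensen for the odd-power term. You instead apply the lemma with $l$ as the distinguished coordinate (since $[\bwt_l]^2 \ge 1-\alpha$), obtaining $|\zt| \le |\bvt_l| + \sqrt{\alpha} \le 2\sqrt{\alpha}$, which is a pointwise bound and needs only the \emph{individual} condition~(\ref{itm: Ist, individual}) on $\bvt$. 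Your route is more elementary: it avoids the average bound entirely, the binomial expansion, and the appeal to Jensen, and it actually yields the stronger pointwise statement $[\zt]^4 \le O(\alpha^2)$ for every $\wt \in \St_l$, not just in expectation. The paper's route, by contrast, shows how the average bound~(\ref{itm: Ist, average}) can substitute for a missing pointwise control, which is a theme it reuses in Lemma~\ref{lemma: cross interaction, average}.
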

\begin{proof}
  (a) follows immediately from $[\vt_l]^4 \le (1 - [\vt_l]^2) \le \alpha^2$.
  For (b), apply Lemma~\ref{lemma: inner, bvk>=, any w} and we get
  \[
    \Et_{l, w}\left\{ [z_t]^4 \right\}
    \le \Et_{l, w}\left\{ \left( |\bw_k| + \sqrt{\alpha}\right)^4 \right\}
    \le \Et_{l, w}\left\{ 
      [\bw_k]^4 
      + 4 |\bw_k|^3 \sqrt{\alpha} 
      + 6 [\bw_k]^2 \alpha
      + 4 |\bw_k| \alpha^{1.5} 
      + \alpha^2
      \right\}.
  \]
  For the first three terms, it suffices to note that 
  $\Et_{l, w}\left\{ [\bw_k]^2 \right\} \le \alpha^2$. For the fourth term,
  it suffices to additionally recall Jensen's inequality. Combine these together
  and we get $\Et_{l, w}\left\{ [z_t]^4 \right\} = O(\alpha^2)$. The proof
  of (b), \textit{mutatis mutandis}, yields (c). 
\end{proof}

\begin{lemma}
  \label{lemma: cross interaction, average}
  Suppose that at time $t$, Proposition~\ref{prop:main} is true. Then
  for any $k \ne l$, the following hold.
  \begin{enumerate}[(a)]
    \item $\Et_{k, v} [\bvt_l]^4 \le O(\alpha^3)$. 
    \item $\Et_{k, v} \Et_{l, w} [\zt]^4 \le O(\alpha^3)$.
    \item $\Et_{k, v} \Et_{l, w} \left\{ [\zt]^3 \bv_k \bw_k \right\} 
      \le O(\alpha^3)$.
  \end{enumerate}
\end{lemma}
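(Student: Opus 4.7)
The plan is to derive all three bounds from a single pair of pointwise/average estimates on the perpendicular components, using conditions (\ref{itm: Ist, individual}) and (\ref{itm: Ist, average}) of Proposition~\ref{prop:main}. Concretely, for any $\vt \in \St_k$, condition (\ref{itm: Ist, individual}) gives $[\bvt_k]^2 \ge 1-\alpha$, so the pointwise bounds
\[
  [\bvt_l]^2 \le \|\bvt_{-k}\|^2 \le \alpha
\]
hold for every $l \ne k$. Condition (\ref{itm: Ist, average}), combined with Assumption~\ref{assumption: induction, oracle}(d) and the fact that the algorithm only runs for $s = O(\log(d/\epsilon))$ epochs (so $sm\delta_1^2 = O(\alpha^2)$), yields the averaged improvement
\[
  \Et_{k,v}[\bvt_l]^2 \;\le\; \Et_{k,v}\|\bvt_{-k}\|^2 \;=\; 1-\Et_{k,v}[\bvt_k]^2 \;\le\; \alpha^2 + O(sm\delta_1^2) \;=\; O(\alpha^2).
\]
Symmetric bounds for $\wt \in \St_l$ with the roles of $k$ and $l$ swapped will be used throughout.

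Part (a) is then immediate: interpolating the pointwise $[\bvt_l]^2 \le \alpha$ against the averaged $\Et_{k,v}[\bvt_l]^2 \le O(\alpha^2)$,
\[
  \Et_{k,v}[\bvt_l]^4 \;\le\; \alpha \cdot \Et_{k,v}[\bvt_l]^2 \;\le\; O(\alpha^3).
\]
For parts (b) and (c), the plan is to decompose
\[
  \zt \;=\; \bvt_k\bwt_k \;+\; \bvt_l\bwt_l \;+\; \inner{\bvt_{-k,-l}}{\bwt_{-k,-l}},
\]
and bound the third term by Cauchy--Schwarz using $\|\bvt_{-k,-l}\| \le \sqrt{\alpha}$ and $\|\bwt_{-k,-l}\| \le \sqrt{\alpha}$, so that $|\inner{\bvt_{-k,-l}}{\bwt_{-k,-l}}| \le \alpha$. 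This gives the uniform envelope $|\zt| \le |\bwt_k| + |\bvt_l| + \alpha$.

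For (b), the envelope gives $[\zt]^4 \le O([\bwt_k]^4 + [\bvt_l]^4 + \alpha^4)$; taking $\Et_{k,v}\Et_{l,w}$ and applying part (a) of the present lemma (once for $\Et_{l,w}[\bwt_k]^4 \le O(\alpha^3)$, once for $\Et_{k,v}[\bvt_l]^4 \le O(\alpha^3)$) produces the desired $O(\alpha^3)$. For (c), the same envelope gives
\[
  |\zt|^3 |\bvt_k\bwt_k| \;\le\; |\bwt_k|\bigl(|\bwt_k|+|\bvt_l|+\alpha\bigr)^3 \;\le\; O\!\left([\bwt_k]^4 + |\bwt_k|\,|\bvt_l|^3 + \alpha^3 |\bwt_k|\right).
\]
The first and third terms are $O(\alpha^3)$ after averaging, using part (a) and Jensen's inequality applied to $\Et_{l,w}|\bwt_k| \le \sqrt{\Et_{l,w}[\bwt_k]^2} = O(\alpha)$. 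The cross term factors and is bounded by $\Et_{l,w}|\bwt_k| \cdot \Et_{k,v}|\bvt_l|^3 \le O(\alpha) \cdot \sqrt{\alpha}\cdot \Et_{k,v}[\bvt_l]^2 = O(\alpha^{3.5})$, interpolating the pointwise bound $|\bvt_l| \le \sqrt{\alpha}$ against the averaged $\Et_{k,v}[\bvt_l]^2 = O(\alpha^2)$.

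There is no real obstacle here; the only point that needs care is keeping the epoch-correction term $sm\delta_1^2$ strictly negligible compared to $\alpha^2$ so that the averaged bound on $\Et_{k,v}[\bvt_l]^2$ really improves on the pointwise $\alpha$. This is guaranteed by the parameter choices in Assumption~\ref{assumption: induction, oracle}(d), and it is precisely this improvement in the $v$-average (combined with the symmetric improvement in the $w$-average for the $\bwt_k$ factors) that upgrades the $O(\alpha^2)$ of Lemma~\ref{lemma: cross interaction, individual} to the $O(\alpha^3)$ claimed here.
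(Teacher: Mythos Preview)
Your proof is correct. Part~(a) is essentially identical to the paper's argument. For (b) and (c) you take a slightly different route: you split $\zt = \bvt_k\bwt_k + \bvt_l\bwt_l + \inner{\bvt_{-k,-l}}{\bwt_{-k,-l}}$ symmetrically in $k$ and $l$, obtaining the pointwise envelope $|\zt| \le |\bwt_k| + |\bvt_l| + \alpha$, whereas the paper uses the asymmetric two-term split $\zt = \bwt_l\bvt_l + \xt$ with $\xt = \inner{\bwt_{-l}}{\bvt_{-l}}$ and the weaker pointwise bound $|\xt| \le \sqrt{1-[\bwt_l]^2}$, compensating by using $\Et_{l,w}\sqrt{1-[\bwt_l]^2} \le O(\alpha)$ via Jensen and then handling all five binomial terms of $(\bwt_l\bvt_l + \xt)^4$ one by one. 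Your symmetric split is arguably tidier: because the truly off-diagonal piece is already $\le \alpha$ pointwise, the $(a+b+c)^4 \lesssim a^4+b^4+c^4$ reduction immediately lands on quantities controlled by part~(a), sidestepping the term-by-term bookkeeping. The paper's route, on the other hand, would generalize more readily if one only had the single average bound in the $w$-variable rather than both.
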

\begin{proof}
  For (a), we compute
  \[
    \Et_{k, v} [\bvt_l]^4
    \le \Et_{k, v} \left\{ \left( 1 - [\bvt_k]^2 \right)^2 \right\}
    \le \alpha \Et_{k, v} \left\{1 - [\bvt_k]^2 \right\}
    \le O(\alpha^3),
  \]
  where the second inequality comes from the condition (\ref{itm: Ist, 
  individual}) of Proposition~\ref{prop:main} and the third from condition 
  (\ref{itm: Ist, average}) of Proposition~\ref{prop:main}. 
  Now we prove (b). (c) can be proved in a similar fashion. For simplicity,
  write $\xt = \inner{\bwt_{-l}}{\bvt_{-l}}$. Clear that $|\xt| \le 
  \sqrt{1 - [\bwt_l]^2}$ and by Jensen's inequality and condition 
  (\ref{itm: Ist, average}) of Proposition~\ref{prop:main}, 
  $\Et_{l, w} \sqrt{1 - [\bwt_l]^2} \le O(\alpha)$. 
   We compute
  \begin{align*}
    \Et_{k, v} \Et_{l, w} [\zt]^4
    = \Et_{k, v} \Et_{l, w} \bigg\{
      [\bwt_l]^4 [\bvt_l]^4
      & + 4 [\bwt_l]^3 [\bvt_l]^3 \xt
        + 6 [\bwt_l]^2 [\bvt_l]^2 [\xt]^2 \\
      & + 4 \bwt_l \bvt_l [\xt]^3
        + [\xt]^4
    \bigg\}.
  \end{align*}
  We bound each of these five terms as follows.
  \begin{align*}
    \Et_{k, v} \Et_{l, w} \left\{ [\bwt_l]^4 [\bvt_l]^4 \right\}
    &\le \Et_{k, v} [\bvt_l]^4 \le O(\alpha^3), \\
    \Et_{k, v} \Et_{l, w} \left\{ [\bwt_l]^3 [\bvt_l]^3 \xt \right\}
    &\le \Et_{k, v} [\bvt_l]^3 
      \Et_{l, w} \left\{ \sqrt{1 - [\bwt_l]^2} \right\}
      \le O(\alpha^3),  \\
    \Et_{k, v} \Et_{l, w} \left\{ [\bwt_l]^2 [\bvt_l]^2 [\xt]^2 \right\}
    &\le \Et_{k, v} [\bvt_l]^2 
      \Et_{l, w} \left\{ 1 - [\bwt_l]^2 \right\}
      \le O(\alpha^3), \\
    \Et_{k, v} \Et_{l, w} \left\{ \bwt_l \bvt_l [\xt]^3 \right\}
    &\le \Et_{k, v} \bvt_l 
      \Et_{l, w} \left\{ \left(1 - [\bwt_l]^2\right)^{1.5} \right\}
      \le O(\alpha^3), \\
    \Et_{k, v} \Et_{l, w} [\xt]^4
    &\le \Et_{l, w} \left\{ \left(1 - [\bwt_l]^2\right)^2 \right\}
     \le O(\alpha^3). 
  \end{align*}
  Combine these together and we complete the proof. 
\end{proof}

\begin{lemma}
  \label{lemma: Ez4 approx vk4}
  Suppose that at time $t$, Proposition~\ref{prop:main} is true. Then, for any 
  $\vt \in \St_k$, we have $\Et_{k, w} \left\{ [\zt]^4 \right\} = [\bvt_k]^4 \pm 
  O(\alpha^{1.5})$.
\end{lemma}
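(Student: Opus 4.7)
Fix $\vt \in \St_k$ and expand $\zt = \bvt_k \bwt_k + \inner{\bvt_{-k}}{\bwt_{-k}}$. Writing $u := \bvt_k \bwt_k$ and $\xi := \inner{\bvt_{-k}}{\bwt_{-k}}$, the binomial theorem gives $[\zt]^4 = u^4 + 4u^3\xi + 6 u^2 \xi^2 + 4 u \xi^3 + \xi^4$. The idea is that the dominant contribution comes from $u^4 = [\bvt_k]^4 [\bwt_k]^4$, and condition~(\ref{itm: Ist, average}) of Proposition~\ref{prop:main} forces $\Et_{k,w}[\bwt_k]^4$ close to $1$, while all cross terms involving $\xi$ are small because $|\xi| \le \sqrt{1-[\bvt_k]^2}\sqrt{1-[\bwt_k]^2} \le \sqrt{\alpha}\sqrt{1-[\bwt_k]^2}$ by Cauchy-Schwarz and condition~(\ref{itm: Ist, individual}).

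For the leading term, since $1-[\bwt_k]^4 \le 2(1-[\bwt_k]^2)$, condition~(\ref{itm: Ist, average}) of Proposition~\ref{prop:main} yields $\Et_{k,w}[\bwt_k]^4 = 1 \pm O(\alpha^2)$ (after absorbing the $sm\delta_1^2$ term into $\alpha^2$), so $\Et_{k,w} u^4 = [\bvt_k]^4 \pm O(\alpha^2)$. For the other four terms, I would bound them individually: using $|u| \le 1$,
\[
\Et_{k,w}|4 u^3 \xi| \le 4\sqrt{\alpha}\, \Et_{k,w}\sqrt{1-[\bwt_k]^2} \le 4\sqrt{\alpha}\,\sqrt{\Et_{k,w}(1-[\bwt_k]^2)} = O(\alpha^{1.5})
\]
by Jensen's inequality, and similarly $\Et_{k,w}|6u^2\xi^2| \le 6\alpha\, \Et_{k,w}(1-[\bwt_k]^2) = O(\alpha^3)$, $\Et_{k,w}|4u\xi^3| = O(\alpha^{3.5})$, and $\Et_{k,w} \xi^4 = O(\alpha^4)$, using $(1-[\bwt_k]^2)^p \le 1-[\bwt_k]^2$ for $p \ge 1$. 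Summing, the total error is $O(\alpha^{1.5})$, dominated by the $4u^3\xi$ term.

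The main (mild) obstacle is the $4u^3 \xi$ term: a naive bound $\Et_{k,w}|\xi| \le \Et_{k,w}\sqrt{1-[\bwt_k]^2}$ without Jensen would only give $\sqrt{\alpha^2} = \alpha$, producing an $O(\alpha^{1.5})$ error that matches the claim but only if one correctly takes the square root outside the expectation. This is exactly why Jensen's inequality is needed here (as opposed to a term-by-term moment bound), and it is the only step that uses the average condition~(\ref{itm: Ist, average}) in a nontrivial way; the remaining cross terms are slack by higher powers of $\alpha$.
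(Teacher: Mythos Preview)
Your proof is correct and is essentially the same as the paper's: both decompose $\zt = \bvt_k\bwt_k + \inner{\bvt_{-k}}{\bwt_{-k}}$, use Cauchy--Schwarz to bound the cross term by $\sqrt{\alpha}\sqrt{1-[\bwt_k]^2}$, apply Jensen together with condition~(\ref{itm: Ist, average}) to get $\Et_{k,w}\sqrt{1-[\bwt_k]^2}\le O(\alpha)$, and use condition~(\ref{itm: Ist, average}) again for $\Et_{k,w}[\bwt_k]^4 = 1 \pm O(\alpha^2)$. The only cosmetic difference is that you expand all five binomial terms explicitly while the paper lumps the cross terms into a single $\pm O(1)\,\Et_{k,w}|\xt|$.
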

\begin{proof}
  For simplicity, put $\xt = \inner{\bwt_{-k}}{\bvt_{-k}}$. Note that
  $|\xt| \le \sqrt{1 - [\bvt_k]^2} \sqrt{1 - [\bwt_k]^2}
  \le \sqrt{\alpha} \sqrt{1 - [\bwt_k]^2}$. Then
  \begin{align*}
    \Et_{k, w} \left\{ [\zt]^4 \right\}
    = \Et_{k, w} \left\{ \left[\bwt_k \bvt_k +  \xt\right]^4 \right\} 
    = [\bvt_k]^4 \Et_{k, w} \left\{ [\bwt_k]^4 \right\}
      \pm O(1) \Et_{k, w} \xt.
  \end{align*}
  For the first term, note that 
  \begin{align*}
    \Et_{k, w} \left\{ [\bwt_k]^4 \right\}
    = 1 - \Et_{k, w} \left\{ (1 - [\bwt_k]^2)(1 + [\bwt_k]^2) \right\} 
    \ge 1 - 2 \alpha^2.
  \end{align*}
  For the second term, by Jensen's inequality, we have
  \[
    \left|\Et_{k, w} \xt\right| 
    \le \sqrt{\alpha \Et_{k, w}[1 - [\bwt_k]^2]} \le \alpha^{1.5}.
  \]
  Thus,
  \[
    \Et_{k, w} \left\{ [\zt]^4 \right\}
    = [\bvt_k]^4 \left(1 \pm 2\alpha^2 \right)
      \pm O(\alpha^{1.5})
    = [\bvt_k]^4 \pm O(\alpha^{1.5}).
  \]
\end{proof}

\begin{lemma}
  \label{lemma: Evw z4}
  Suppose that at time $t$, Proposition~\ref{prop:main} is true. Then 
  we have $\Et_{k, v, w}\left\{ [\zt]^4 \right\} \ge 1 - O(\alpha^2)$.
\end{lemma}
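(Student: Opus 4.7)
The plan is to go through $[\zt]^2$ first and then use Jensen's inequality to pass back to $[\zt]^4$. The key observation is that the double average $\Et_{k,v,w}$ is with respect to a product measure (components $v, w \in \St_k$ weighted by $\ns{v}/\hatt_k$ and $\ns{w}/\hatt_k$ respectively and chosen independently), so we can factor expectations whenever the integrand splits as a product of a function of $v$ and a function of $w$.

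First I would expand
\[
  [\zt]^2 = \inner{\bvt}{\bwt}^2
  = \sum_{i, j} \bvt_i \bvt_j \bwt_i \bwt_j,
\]
so that by independence of $v$ and $w$ under the product average,
\[
  \Et_{k, v, w}[\zt]^2
  = \sum_{i, j} \Et_{k, v}\left[\bvt_i \bvt_j\right]
    \Et_{k, w}\left[\bwt_i \bwt_j\right]
  = \sum_{i, j} \left( \Et_{k, v}\left[\bvt_i \bvt_j\right] \right)^2.
\]
Since every term in this sum is nonnegative, I can drop all but the $(i, j) = (k, k)$ term to obtain $\Et_{k, v, w}[\zt]^2 \ge \left( \Et_{k, v}[\bvt_k]^2 \right)^2$. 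Now applying condition (\ref{itm: Ist, average}) of Proposition~\ref{prop:main} together with the parameter bound $m \delta_1^2 \le \alpha^3$ from Assumption~\ref{assumption: induction, oracle}(d), the inner expectation satisfies $\Et_{k, v}[\bvt_k]^2 \ge 1 - \alpha^2 - 4 s m \delta_1^2 \ge 1 - O(\alpha^2)$, and therefore $\Et_{k, v, w}[\zt]^2 \ge 1 - O(\alpha^2)$.

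Finally, since $x \mapsto x^2$ is convex, Jensen's inequality applied to the same product measure gives $\Et_{k, v, w}[\zt]^4 \ge \left( \Et_{k, v, w}[\zt]^2 \right)^2 \ge (1 - O(\alpha^2))^2 \ge 1 - O(\alpha^2)$, which is exactly the conclusion. I do not expect any real obstacle here: the proof is a short, direct manipulation, and the only place where we lose anything is when we discard the off-diagonal $(i, j) \ne (k, k)$ terms and when we apply Jensen, both of which cost only lower order factors that are absorbed into $O(\alpha^2)$. The fact that we work with the product average (rather than the diagonal $v = w$ case) is what makes this bound strictly tighter than the individual bound from Lemma~\ref{lemma: Ez4 approx vk4}, mirroring the gap between conditions (\ref{itm: Ist, individual}) and (\ref{itm: Ist, average}) of Proposition~\ref{prop:main}.
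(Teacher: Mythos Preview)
Your proof is correct and complete. It is, however, a genuinely different route from the paper's argument, so a brief comparison is in order.

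The paper expands $[\zt]^4$ directly by writing $\zt = \bwt_k\bvt_k + \xt$ with $\xt = \inner{\bwt_{-k}}{\bvt_{-k}}$, and then argues that the odd cross terms $\Et_{k,v,w}\{[\bwt_k]^3[\bvt_k]^3\xt\}$ and $\Et_{k,v,w}\{\bwt_k\bvt_k[\xt]^3\}$ are nonnegative (each becomes a sum of squares under the product average), drops the even cross terms by positivity, and is left with $(\Et_{k,w}[\bwt_k]^4)^2$; Jensen then converts this to $(\Et_{k,w}[\bwt_k]^2)^4 \ge (1-\alpha^2)^4$. You instead go through $[\zt]^2$: its full coordinate expansion under the product measure is already a sum of squares $\sum_{i,j}(\Et_{k,v}[\bvt_i\bvt_j])^2$, so keeping only the $(k,k)$ term and invoking condition~(\ref{itm: Ist, average}) gives $\Et_{k,v,w}[\zt]^2 \ge 1 - O(\alpha^2)$, and a single application of Jensen at the end lifts this to $[\zt]^4$. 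Your argument is shorter and avoids the case analysis on the binomial cross terms; the paper's version has the minor advantage of isolating the mechanism (the nonnegativity of the specific cross terms) that will be reused, e.g., in the proof of the average tangent-speed bound. Both rely on exactly the same two ingredients: the product structure of $\Et_{k,v,w}$ and the $1-O(\alpha^2)$ average bound from Proposition~\ref{prop:main}.
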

\begin{proof}
  For simplicity, put $\xt = \inner{\bwt_{-k}}{\bvt_{-k}}$. We have
  \begin{align*}
    \Et_{k, v, w} \left\{ [\zt]^4 \right\}
    &= \Et_{k, v, w} \left\{ \left(\bwt_k \bvt_k + \xt \right)^4 \right\} \\
    &\ge \Et_{k, v, w} \left\{ 
      [\bwt_k]^4 [\bvt_k]^4
      + [\bwt_k]^3 [\bvt_k]^3 x
      + \bwt_k \bvt_k x^3
    \right\}.
  \end{align*}
  Note that 
  \begin{equation}
    \label{eq: Ekvw wk3 vk3 x >= 0}
    \begin{split}
      \Et_{k, v, w} \left\{ [\bwt_k]^3 [\bvt_k]^3 x \right\}
      &= \sum_{i\ne k} \Et_{k, v, w} \left\{ [\bwt_k]^3 [\bvt_k]^3 
        \bwt_i \bvt_i \right\} \\
      &= \sum_{i\ne k} \left( \Et_{k, v, w} \left\{ [\bwt_k]^3 
        \bwt_i \right\} \right)^2 \ge 0.
    \end{split}
  \end{equation}
  Similarly, $\Et_{k, v, w} \left\{ \bwt_k \bvt_k x^3 \right\} \ge 0$ also
  holds. Finally, by Jensen's inequality, we have
  \begin{align*}
    \Et_{k, v, w} \left\{ [\zt]^4 \right\}
    &\ge \Et_{k, v, w} \left\{ 
      [\bwt_k]^4 [\bvt_k]^4 \right\} \\
    &= \left( \Et_{k, w} \left\{ [\bwt_k]^4 \right\} \right)^2 
    \ge \left( \Et_{k, w} \left\{ [\bwt_k]^2 \right\} \right)^4 
    \ge \left( 1 - \alpha^2 \right)^4 
    = 1 - O(\alpha^2).
  \end{align*}
\end{proof}

\subsection{Condition (\ref{itm: Ist, individual}): the individual bound}\label{sec:individual}

In this section, we show Lemma~\ref{lemma: individual bound}, which implies 
condition (~\ref{itm: Ist, individual}) of Proposition~\ref{prop:main} always holds.

\lemmaindividualbound*

\begin{proof}
  Recall the definition of $G_1$, $G_2$ and $G_3$ from Lemma~\ref{lemma: d vtk2}. 
  Now we estimate each of these three terms. 
  By Lemma~\ref{lemma: Ez4 approx vk4}, the first two terms of $G_1$ can 
  be lower bounded by $8 \tat \left( 1 - [\bvt_k]^2 \right) [\bvt_k]^4
  - O(\hatt_k \alpha^{1.5})$ and, for the third term, replace $|\zt|$ with $1$,
  and then, by the Cauchy-Schwarz inequality and Jensen's inequality, it is 
  bounded $O(\hatt_k \alpha^{1.5})$. By Lemma~\ref{lemma: cross interaction, 
  individual}, $G_2$ and $G_3$ can be bounded by $O(1) \sum_{i \ne k} 
  \hatt_i \alpha^2$. Thus,
  \begin{align*}
    \frac{\rd}{\rd t} [\bvt]^2
    &\ge 8 \tat \left( 1 - [\bvt_k]^2 \right) [\bvt_k]^4
      - O(1) \sum_{i=1}^d \hatt_k \alpha^{1.5}
      - O(m \delta_1^2) \\
    &\ge 8 \tat \left( 1 - [\bvt_k]^2 \right) [\bvt_k]^4
      - O\left( \alpha^{1.5}\right).
  \end{align*}
  
  Now suppose that $[\bvt_k]^2 = 1 - \alpha$. By Proposition~\ref{prop:main}, we have
  $\tat \ge \lambda/6$. Hence, 
  \begin{align*}
    \frac{\rd}{\rd t} [\bvt]^2
    \ge \lambda \alpha (1 - \alpha)^2 
      - O\left( \alpha^{1.5}\right)
    \ge \lambda \alpha 
      - O\left( \alpha^{1.5}\right).
  \end{align*}
\end{proof}

\subsection{Condition (\ref{itm: Ist, average}): the average bound}\label{sec:average}
\subsubsection*{Bounding the total amount of impulses}
Note that there are two sources of impulses. First, when $\hatt_k$ is larger, 
the correlation of the newly-entered components is $1 - \alpha$ instead of 
$1 - \alpha^2$ and, second, the reinitialization may throw some components 
out of $\St_k$. 

First we consider the first type of impulses. 
Suppose that at time $t$, $\hatt_k \ge \alpha$, $\Et_{k, w}\left\{ [\bwt_k]^2
\right\} = B$, and one particle $\vt$ enters $\St_k$. The deterioration of the 
average bound can be bounded as
\begin{align*}
  B - 
  \left( \frac{\hatt_k}{\hatt_k + \ns{\vt}} B 
    + \frac{\ns{\vt}}{\hatt_k + \ns{\vt}} (1 - \alpha) \right)
  &= \frac{\ns{\vt}}{\hatt_k + \ns{\vt}}\left( B - (1 - \alpha)\right) \\
  &\le \frac{\ns{\vt}}{\alpha} 2 \alpha \\
  &= 2 \ns{\vt}.
\end{align*}
Hence, the total amount of impulses caused by the entrance of new components
can be bounded by $2m \delta_1^2$.

Now we consider the reinitialization. Again, it suffices to consider the case
where $\hatt_k \ge \alpha$. Suppose that at time $t$, $\hatt_k \ge \alpha$,
$\Et_{k, w}\left\{ [\bwt_k]^2\right\} = B$ and one particle $\vt \in \St_k$
is reinitialized. By the definition of the algorithm, its norm is at most
$\delta_1$. Hence, The deterioration of the average bound can be bounded 
as\footnote{The second term is obtained by solving the equation 
$B = \frac{\hatt_k - \ns{\vt}}{\hatt_k} B' + \frac{\ns{\vt}}{\hatt_k}
[\bvt_k]^2$ for $B'$.}
\begin{align*}
  B - \frac{\hatt_k}{\hatt_k - \ns{\vt}} 
    \left( B - \frac{\ns{\vt}}{\hatt_k} [\bvt_k]^2 \right)
  &= \frac{\ns{\vt}}{\hatt_k - \ns{\vt}} \left( [\bvt_k]^2 - B\right) \\
  &\le \frac{\ns{\vt}}{\hatt_k} 2\alpha \\
  &\le 2 \ns{\vt}. 
\end{align*}
Since there are at most $m$ components, the amount of impulses caused by
reinitialization is bounded by $2m\delta_1^2$. 

Combine these two estimations together and we know that the total amount of
impulses is bounded by $4m\delta_1^2$. This gives the epoch correction 
term of condition (c). 

\subsubsection*{The average bound}
First we derive a formula for the evolution of $\Et_{k, w}
\left\{ [\bvt_k]^2\right\}$. 

\begin{lemma}
  \label{lemma: average bound, formula}
  For any $k$ with $\St_k \ne \varnothing$, we have
  \begin{align*}
    \frac{\rd}{\rd t} \Et_{k, v} [\bvt_k]^2 
    =& \Et_{k, v} \br{\frac{d}{dt}[\bvt_k]^2} \\
      +& 4 \Et_{k,v} \br{\pr{(T^*-\Tt)([\bvt]^{\otimes 4})}\pr{[\bvt_k]^2}}
      - 4\pr{\Et_{k,v}(T^*-\Tt)([\bvt]^{\otimes 4})}\pr{\Et_{k,v}[\bvt_k]^2}.
  \end{align*}
\end{lemma}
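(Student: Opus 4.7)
The plan is to apply the quotient rule to $\Et_{k, v}[\bvt_k]^2 = \pr{\sum_{\vt \in \St_k}\ns{\vt}[\bvt_k]^2}/\hatt_k$, restricted to times $t$ at which no component enters or leaves $\St_k$ so that the indexing set is locally constant and ordinary calculus applies. Impulsive contributions from reinitialization and from new components crossing the norm threshold into $\St_k$ are handled separately in the preceding bookkeeping on the total amount of impulses; the present lemma is a smooth-evolution identity.

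The derivative decomposes via the product and quotient rules into three groups: (i) the part where $\rd/\rd t$ hits the factor $[\bvt_k]^2$ in the numerator, which immediately assembles into $\Et_{k, v}\br{\rd [\bvt_k]^2/\rd t}$; (ii) the part where it hits $\ns{\vt}$ in the numerator, namely $\hatt_k^{-1}\sum_{\vt}(\rd\ns{\vt}/\rd t)[\bvt_k]^2$; and (iii) the quotient-rule correction $-\hatt_k^{-1}(\Et_{k,v}[\bvt_k]^2)\sum_{\vt}\rd\ns{\vt}/\rd t$ obtained from differentiating $\hatt_k$ in the denominator. Next I would substitute Lemma~\ref{lemma: d |v|2} to rewrite each $\rd\ns{\vt}/\rd t$ as a constant multiple of $\ns{\vt}\br{(T^*-\Tt)([\bvt]^{\otimes 4}) - \lambda/2}$. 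After substitution, (ii) contributes a $-\lambda\Et_{k,v}[\bvt_k]^2$ piece from the regularizer, while (iii) contributes $+\lambda\Et_{k,v}[\bvt_k]^2$; these cancel exactly. The cancellation is not accidental, because the regularizer acts multiplicatively on each $\ns{\vt}$ at the same rate and so leaves the relative weights $\ns{\vt}/\hatt_k$ defining $\Et_{k, v}$ invariant.

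What survives from (ii) is $4\Et_{k, v}\br{\pr{(T^*-\Tt)([\bvt]^{\otimes 4})}\pr{[\bvt_k]^2}}$, and from (iii) is $-4\pr{\Et_{k, v}(T^*-\Tt)([\bvt]^{\otimes 4})}\pr{\Et_{k, v}[\bvt_k]^2}$, where the overall coefficient $4$ arises from combining the factor $2$ in $\rd\ns{\vt}/\rd t = 2\inner{\vt}{\rd \vt/\rd t}$ with a further factor $2$ coming from the $2$-homogeneity of the parametrization $\vt \mapsto \ns{\vt}\bvt^{\otimes 4}$. Adding (i), (ii), and (iii) yields exactly the claimed identity.

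The main obstacle is purely bookkeeping: one must track the homogeneity constants so that the two factors of $2$ combine into the $4$, and one must verify that the $\lambda$ terms in (ii) and (iii) cancel with the correct signs. As a sanity check, I would observe that the last two terms together form the covariance $4\,\mathrm{Cov}_{\Et_{k, v}}\pr{(T^*-\Tt)([\bvt]^{\otimes 4}),\,[\bvt_k]^2}$, which is the structure one expects for the evolution of a weighted mean whose weights themselves evolve according to an exponential/multiplicative dynamics, and which will be exploited in the downstream proof of Lemma~\ref{lemma: average bound} to show that correlated directions reinforce each other.
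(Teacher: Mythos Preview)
Your proposal is correct and follows essentially the same approach as the paper's proof: both differentiate the quotient $\Et_{k,v}[\bvt_k]^2=\hatt_k^{-1}\sum_{\vt\in\St_k}\ns{\vt}[\bvt_k]^2$ into the same three pieces, substitute the norm-growth formula, observe the exact cancellation of the $\lambda$ contributions, and read off the coefficient $4$. Your additional remark that the last two terms are a covariance is a nice structural observation not made explicit in the paper.
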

\begin{remark}
  The first term corresponds to the tangent movement and the two terms in
  the second line correspond to the norm change of the components.
\end{remark}
\begin{proof}
  Recall that 
  \[
    \Et_{k,v}[\bvt_k]^2 = \frac{1}{\hatt_k}\sum_{\vt\in \St_k}\ns{\vt}[\bvt_k]^2.
  \]
  Taking the derivative, we have 
  \begin{align*}
    \frac{d}{dt}\Et_{k,v}[\bvt_k]^2 
    =& \frac{1}{\hatt_k}\sum_{\vt\in \St_k}\ns{\vt}\pr{\frac{d}{dt}[\bvt_k]^2} 
      + \frac{1}{\hatt_k}\sum_{\vt\in \St_k}\pr{\frac{d}{dt}\ns{\vt}}[\bvt_k]^2\\
    &+ \pr{\frac{d}{dt}\frac{1}{\hatt_k}}\sum_{\vt\in \St_k}\ns{\vt}[\bvt_k]^2.
  \end{align*}
  The first term is just $\Et_{k, v}\frac{d}{dt}[\bvt_k]^2$. Denote $R(\bvt) 
  = 2(T^*-\Tt)([\bvt]^{\otimes 4})-\lambda.$ We can write the second term as 
  follows:
  \begin{align*}
    \frac{1}{\hatt_k}\sum_{\vt\in \St_k}\pr{\frac{d}{dt}\ns{\vt}}[\bvt_k]^2 
    =& \frac{1}{\hatt_k}\sum_{\vt\in \St_k}2R(\bvt)\ns{\vt} [\bvt_k]^2 \\
    =& 2\Et_{k,v}\br{R(\bvt)[\bvt_k]^2}
  \end{align*}
  Finally, let's consider $\frac{d}{dt}\frac{1}{\hatt_k}$ in the third term,
  \begin{align*}
    \frac{d}{dt}\frac{1}{\hatt_k} =& -\frac{1}{[\hatt_k]^2}\frac{d}{dt}\hatt_k\\
    =& -\frac{1}{[\hatt_k]^2}\frac{d}{dt}\sum_{\vt\in\St_k}\ns{\vt}\\
    =& -\frac{2}{[\hatt_k]^2}\sum_{\vt\in\St_k}R(\bvt)\ns{\vt}\\
    =& -\frac{2}{\hatt_k}\Et_{k,v}R(\bvt).
  \end{align*}
  Overall, we have 
  \begin{align*}
    \frac{d}{dt}\Et_{k,v}[\bvt_k]^2 =& \Et_{k,v}\br{\frac{d}{dt}[\bvt_k]^2}\\
    &+ 4\Et_{k,v}\br{\pr{(T^*-\Tt)([\bvt]^{\otimes 4})}\pr{[\bvt_k]^2}}
    - 4\pr{\Et_{k,v}(T^*-\Tt)([\bvt]^{\otimes 4})}\pr{\Et_{k,v}[\bvt_k]^2}
  \end{align*}
\end{proof}

\begin{lemma}[Bound for the average tangent speed]
  Suppose that $m\delta_1^2 = O(\alpha^3)$ and, at time $t$, 
  Proposition~\ref{prop:main} is true and $\St_k \ne \varnothing$. Then we have
  \[
    \Et_{k,v} \br{\frac{d}{dt}[\bvt_k]^2}
    \geq 8(a_k-\hatt_k)(1-\Et_{k,v}[\bvt_k]^2) - O(\alpha^3).
  \]
\end{lemma}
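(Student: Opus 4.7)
The plan is to start from the formula in Lemma~\ref{lemma: d vtk2}, namely $\frac{\rd}{\rd t}[\bvt]^2 = G_1 - G_2 - G_3 \pm O(m\delta_1^2)$, apply $\Et_{k,v}$ to both sides, and argue that only the ``local'' part $\Et_{k,v} G_1$ matters at the $\Omega(\alpha^2)$ scale, while $\Et_{k,v} G_2$, $\Et_{k,v} G_3$, and the $O(m\delta_1^2)$ Frobenius error are all $O(\alpha^3)$. The latter is immediate: the $m\delta_1^2$ error is absorbed into $O(\alpha^3)$ by the standing assumption, and $|\Et_{k,v} G_2|, |\Et_{k,v} G_3| \le O(\alpha^3)$ follow from Lemma~\ref{lemma: cross interaction, average} after summing the per-index bounds against $\sum_{i \ne k} a_i \le 1$ and $\sum_{i \ne k} \hatt_i \le 1$.

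For the first summand of $G_1$, I plan to use the individual bound $[\bvt_k]^2 \ge 1-\alpha$ from condition~(\ref{itm: Ist, individual}) to write $[\bvt_k]^4 \ge (1-\alpha)^2 \ge 1 - 2\alpha$, which gives $8 a_k \Et_{k,v}\br{(1 - [\bvt_k]^2)[\bvt_k]^4} \ge 8 a_k (1 - \Et_{k,v}[\bvt_k]^2)(1 - 2\alpha)$; the correction $2\alpha\bigl(1 - \Et_{k,v}[\bvt_k]^2\bigr)$ is $O(\alpha^3)$ because $1 - \Et_{k,v}[\bvt_k]^2 \le O(\alpha^2)$ by condition~(\ref{itm: Ist, average}). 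For the second summand, the crude bound $[\zt]^4 \le 1$ together with $1 - [\bvt_k]^2 \ge 0$ already yields $\Et_{k,v,w}\br{(1 - [\bvt_k]^2)[\zt]^4} \le \Et_{k,v}(1 - [\bvt_k]^2)$, so this summand is bounded below by $-8\hatt_k (1 - \Et_{k,v}[\bvt_k]^2)$. Adding the first two summands produces exactly the desired main term $8(a_k - \hatt_k)(1 - \Et_{k,v}[\bvt_k]^2) = 8\tat_k (1 - \Et_{k,v}[\bvt_k]^2)$ up to an $O(\alpha^3)$ slack.

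The main obstacle is the third summand of $G_1$, namely $8\hatt_k \Et_{k,v,w}\br{[\zt]^3 \inner{\bwt_{-k}}{\bvt_{-k}}}$, which has indefinite sign and for which direct Cauchy--Schwarz only yields $O(\alpha^{1.5})$, far too crude. My approach is to expand $\zt = \bvt_k \bwt_k + x$ with $x := \inner{\bwt_{-k}}{\bvt_{-k}}$, giving
\[
  \Et_{k,v,w}[\zt^3 x]
  = \Et_{k,v,w}\br{\bvt_k^3 \bwt_k^3 x}
    + 3\Et_{k,v,w}\br{\bvt_k^2 \bwt_k^2 x^2}
    + 3\Et_{k,v,w}\br{\bvt_k \bwt_k x^3}
    + \Et_{k,v,w}[x^4].
\]
The second and fourth terms have non-negative integrands. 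For the first and third terms I will exploit that $v$ and $w$ are drawn independently from the same weighted empirical distribution on $\St_k$: expanding $x = \sum_{i \ne k}\bvt_i \bwt_i$ and $x^3$ as a triple sum, the joint average factorizes across $v$ and $w$ into identical marginal averages, turning each term into a sum of squares of the form $\sum_{i \ne k}(\Et_{k,v}[\bvt_k^3 \bvt_i])^2 \ge 0$ and $\sum_{i,j,l \ne k}(\Et_{k,v}[\bvt_k \bvt_i \bvt_j \bvt_l])^2 \ge 0$, exactly the trick already used in~\eqref{eq: Ekvw wk3 vk3 x >= 0} inside the proof of Lemma~\ref{lemma: Evw z4}. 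Hence the third summand is non-negative and can simply be discarded in the lower bound, yielding the claimed estimate $\Et_{k,v}\br{\frac{\rd}{\rd t}[\bvt_k]^2} \ge 8\tat_k\bigl(1 - \Et_{k,v}[\bvt_k]^2\bigr) - O(\alpha^3)$.
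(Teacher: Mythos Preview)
Your proposal is correct and follows essentially the same route as the paper: start from the $G_1 - G_2 - G_3 \pm O(m\delta_1^2)$ decomposition, dispose of $\Et_{k,v}G_2$, $\Et_{k,v}G_3$, and the Frobenius error via Lemma~\ref{lemma: cross interaction, average} and the assumption $m\delta_1^2 = O(\alpha^3)$, drop the third summand of $G_1$ using the sum-of-squares symmetry trick from~\eqref{eq: Ekvw wk3 vk3 x >= 0}, and bound the second summand by $[\zt]^4 \le 1$. The only cosmetic difference is in the first summand of $G_1$: the paper writes $[\bvt_k]^4 = 1 - (1-[\bvt_k]^2)(1+[\bvt_k]^2)$ and bounds the correction $\Et_{k,v}\{(1-[\bvt_k]^2)^2(1+[\bvt_k]^2)\}$ by $2\alpha\,\Et_{k,v}(1-[\bvt_k]^2) = O(\alpha^3)$, whereas you use $[\bvt_k]^4 \ge 1 - 2\alpha$ directly; both arrive at the same $O(\alpha^3)$ slack by combining conditions~(\ref{itm: Ist, individual}) and~(\ref{itm: Ist, average}).
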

\begin{proof}
  Recall the definition of $G_1$, $G_2$ and $G_3$ from Lemma~\ref{lemma: d 
  vtk2}.
  \begin{itemize}
    \item \textbf{Lower bound for $\Et_{k, v} G_1$.}
    By \eqref{eq: Ekvw wk3 vk3 x >= 0}, we have 
    $\Et_{k, v, w}\left\{ [\zt]^3 \inner{\bw_{-k}}{\bv_{-k}} \right\} \ge 0$,
    whence can be ignored. Meanwhile, note that $\Et_{k, w}
    \left\{ [\zt]^4 \right\}  \le 1$.  Therefore, 
    \begin{align*}
      \Et_{k, v} G_1
      &\ge 8 a_k \Et_{k, v}\left\{
          \left( 1 - [\bvt_k]^2 \right) [\bvt_k]^4 \right\}
        - 8 \hatt_k \Et_{k, v}\left\{ 1 - [\bvt_k]^2 \right\}.
    \end{align*}
    For the first term, we compute
    \begin{align*}
      \Et_{k, v}\left\{
        \left( 1 - [\bvt_k]^2 \right) [\bvt_k]^4 \right\}
      &= \Et_{k, v}\left\{
          \left( 1 - [\bvt_k]^2 \right) 
          \left( 1 - \left(1 + [\bvt_k]^4\right) \right) \right\} \\
      &= \Et_{k, v}\left\{ 1 - [\bvt_k]^2 \right\}
        - \Et_{k, v}\left\{
          \left( 1 - [\bvt_k]^2 \right)^2 
          \left( 1 + [\bvt_k]^2 \right)\right\} \\
      &\ge \Et_{k, v}\left\{ 1 - [\bvt_k]^2 \right\}
        - 2 \Et_{k, v}\left\{
          \left( 1 - [\bvt_k]^2 \right)^2 \right\} \\
      &\ge \Et_{k, v}\left\{ 1 - [\bvt_k]^2 \right\} - O(\alpha^3). 
    \end{align*}
    Thus,
    \[
      \Et_{k, v} G_1
      \ge 8 \tat_k \Et_{k, v}\left\{ 1 - [\bvt_k]^2 \right\} 
        - O\left( \hatt_k \alpha^3 \right).
    \]
    
    \item \textbf{Upper bound for $\Et_{k, v} |G_2|$ and $\Et_{k, v} |G_2|$.}
    It follows from Lemma~\ref{lemma: cross interaction, average} 
    that both terms are $O(1)\sum_{i\ne k} \hatt_i \alpha^3$. 
  \end{itemize}
  Combine these two bounds together, absorb $m\delta_1^2$ into $O(\alpha^3)$,
  and we complete the proof.
\end{proof}

\begin{lemma}[Bound for the norm fluctuation]
  Suppose that at time $t$, Proposition~\ref{prop:main} is true and $\St_k \ne 
  \varnothing$. Then at time $t$, we have
  \[
    4\Et_{k,v} \br{\pr{(T^*-\Tt)([\bvt]^{\otimes 4})}\pr{[\bvt_k]^2}}
    - 4\pr{\Et_{k,v}(T^*-\Tt)([\bvt]^{\otimes 4})}\pr{\Et_{k,v}[\bvt_k]^2} 
    \geq -O(\alpha^3)
  \]
\end{lemma}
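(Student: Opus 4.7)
The plan is to recognize the left-hand side as $4$ times a covariance:
\[
  \mathrm{LHS} = 4 \Et_{k, v}\!\left[ \left( R(\bvt) - \Et_{k, v} R \right)
  \left( [\bvt_k]^2 - \Et_{k, v}[\bvt_k]^2 \right) \right],
\]
where $R(\bvt) := (T^* - \Tt)([\bvt]^{\otimes 4})$. Since $[\bvt_k]^2 = 1 - (1 - [\bvt_k]^2)$ and the covariance is unchanged by additive constants, this equals $4\Et_{k,v}\!\left[(\Et_{k,v} R - R(\bvt))(1-[\bvt_k]^2)\right]$. I will then bound this using an $L^\infty$--$L^1$ estimate
\[
  |\mathrm{LHS}| \le 4 \|R - \Et_{k, v} R\|_\infty \cdot \Et_{k,v}[1 - [\bvt_k]^2].
\]
The second factor is at most $\alpha^2 + O(sm\delta_1^2) = O(\alpha^2)$ directly from condition~(\ref{itm: Ist, average}) of Proposition~\ref{prop:main} together with Assumption~\ref{assumption: induction, oracle}(d).

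The bulk of the work is to show $\|R - \Et_{k,v} R\|_\infty \le O(\alpha)$ on $\St_k$. First I will expand $R(\bvt)$ using Lemma~\ref{lemma: d |v|2}:
\[
  R(\bvt) = \sum_{i} a_i [\bvt_i]^4 - \sum_i \hatt_i \Et_{i, w}\!\left\{ [\zt]^4 \right\} - \Tt_\varnothing([\bvt]^{\otimes 4}).
\]
For the diagonal term I would apply Lemma~\ref{lemma: Ez4 approx vk4} to get $a_k [\bvt_k]^4 - \hatt_k \Et_{k,w}[\zt]^4 = \tat_k [\bvt_k]^4 \pm O(\hatt_k \alpha^{1.5})$. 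The off-diagonal terms $a_i [\bvt_i]^4$ and $\hatt_i \Et_{i,w}[\zt]^4$ for $i \ne k$ are each $O(\alpha^2)$ by Lemma~\ref{lemma: cross interaction, individual}. The residual $\Tt_\varnothing$ term is bounded by $O(m\delta_1^2) = O(\alpha^3)$ since components in $\Sst_\varnothing$ have norm $\le \delta_1$. Collecting all of this yields, for every $\vt \in \St_k$,
\[
  R(\bvt) = \tat_k [\bvt_k]^4 \pm O(\alpha^{1.5}).
\]
Condition~(\ref{itm: Ist, individual}) of Proposition~\ref{prop:main} gives $[\bvt_k]^2 \in [1-\alpha, 1]$ and hence $[\bvt_k]^4 \in [1 - 2\alpha, 1]$; combined with $\tat_k \le a_k \le 1$, the image of $R$ on $\St_k$ is contained in an interval of length $O(\tat_k \alpha + \alpha^{1.5}) = O(\alpha)$. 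Therefore $\|R - \Et_{k,v} R\|_\infty \le O(\alpha)$, and the two factors combine to give $|\mathrm{LHS}| \le O(\alpha) \cdot O(\alpha^2) = O(\alpha^3)$, which is stronger than the claimed lower bound.

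The main obstacle is the sharp decomposition $R(\bvt) = \tat_k [\bvt_k]^4 \pm O(\alpha^{1.5})$; a naive triangle-inequality bound giving only $\|R\|_\infty = O(1)$ would force an $O(\alpha^{1.5})$ bound via Cauchy--Schwarz with $\sqrt{\mathrm{Var}(1-[\bvt_k]^2)} = O(\alpha^{1.5})$, which is not enough. The trick is to exploit that on $\St_k$ the tensor $\Tt$ looks nearly orthogonal along direction $e_k$, so that $R$ is nearly constant there—then the covariance bound gains the extra factor of $\alpha$ needed to reach $O(\alpha^3)$.
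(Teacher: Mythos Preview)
Your argument is correct and genuinely different from the paper's. The paper decomposes
\[
  R(\bvt) = (a_k - \hatt_k)[\bvt_k]^4 + \hatt_k\bigl([\bvt_k]^4 - \Et_{k,w}\inner{\bwt}{\bvt}^4\bigr) + \sum_{i\ne k} a_i[\bvt_i]^4 - \sum_{i\ne k}\hatt_i\Et_{i,w}\inner{\bwt}{\bvt}^4 \pm O(m\delta_1^2)
\]
and treats each piece separately: for the leading term $(a_k - \hatt_k)[\bvt_k]^4$ it invokes the positive-correlation inequality $\Et_{k,v}[\bvt_k]^6 \ge \Et_{k,v}[\bvt_k]^4\,\Et_{k,v}[\bvt_k]^2$, and for the remaining terms it uses the pointwise bound $|[\bvt_k]^2 - \Et_{k,v}[\bvt_k]^2| = O(\alpha)$ together with the average cross-interaction estimates to get $O(\alpha^3)$. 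By contrast, you treat the whole expression at once as a covariance and bound it by $\|R - \Et_{k,v}R\|_\infty \cdot \Et_{k,v}[1 - [\bvt_k]^2]$, then show $R$ oscillates by only $O(\alpha)$ on $\St_k$. Your route is cleaner and yields the two-sided bound $|\mathrm{LHS}| = O(\alpha^3)$ without needing the positivity trick; the paper's route, on the other hand, exposes that the dominant covariance term is actually nonnegative (not merely small), which is structurally interesting even if not needed for the stated bound.
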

\begin{proof}
  We can express $(T^*-\Tt)([\bvt]^{\otimes 4})$ as follows:
  \begin{align*}
      &(T^*-\Tt)([\bvt]^{\otimes 4}) \\
      =& (a_k-\hatt_k)[\bvt_k]^4 + \hatt_k\pr{[\bvt_k]^4-\Et_{k,w}
      \inner{\bwt}{\bvt}^4} + \sum_{i\neq k}a_i[\bvt_i]^4 
      - \sum_{i\neq k}\hatt_i \Et_{i,w}\inner{\bwt}{\bvt}^4 \pm O(m\delta_1^2)
  \end{align*}
  It's clear that $\Et_{k,v} \sum_{i\neq k}a_i[\bvt_i]^4 = O(\alpha^3)$ and 
  $\Et_{k,v} \sum_{i\neq k}\hatt_i \Et_{i,w}\inner{\bwt}{\bvt}^4 = O(\alpha^3)$,
  so their influence can be bounded by $O(\alpha^3)$. Let's then focus on the
  first two terms in $(T^*-\Tt)([\bvt]^{\otimes 4})$.
  
  For the first term, we have
  \begin{align*}
    &4\Et_{k,v}(a_k-\hatt_k)[\bvt_k]^4 [\bvt_k]^2 
    - 4\Et_{k,v}(a_k-\hatt_k)[\bvt_k]^4 \Et_{k,v}[\bvt_k]^2\\
    =& 4(a_k-\hatt_k)\pr{ \Et_{k,v}[\bvt_k]^6
    - \Et_{k,v}[\bvt_k]^4 \Et_{k,v}[\bvt_k]^2} \geq 0.
  \end{align*}
  
  Let's now turn our focus to the second term. Denote $x =
  \inner{\bwt_{-k}}{\bvt_{-k}}$ and write $\inner{\bwt}{\bvt}^4 = 
  [\bwt_{k}]^4[\bvt_k]^4 + 4[\bwt]_{k}^3[\bvt_k]^3 x + O(x^2)$. Suppose 
  $m=\Et_{k,v}[\bvt_k]^2$, we know $m\in [1-O(\alpha^2), 1]$. We also know that 
  $[\bvt_k]^2\in[1-\alpha, 1]$ for every $\bvt \in \St_i,$ so we have 
  $|[\bvt_k]^2-m|=O(\alpha)$. We have 
  \begin{align*}
      \absr{\Et_{k,v}\Et_{k,w} ([\bvt_k]^2 -m)[\bvt_k]^4(1-[\bwt_k]^4) } 
      = O(\alpha^3)\\
      \absr{\Et_{k,v}\Et_{k,w} ([\bvt_k]^2 -m)(\bwt_k\bvt_k)^3 x} 
      = O(\alpha^3) \\
      \Et_{k,v}\Et_{k,w} x^2 = O(\alpha^4)\\
  \end{align*}
  Therefore,
  \begin{align*}
    &4\Et_{k,v}\br{\hatt_k\pr{[\bvt_k]^4-\Et_{k,w}
    \inner{\bwt}{\bvt}^4}[\bvt_k]^2}
    - 4\Et_{k,v}\hatt_k\pr{[\bvt_k]^4-\Et_{k,w}\inner{\bwt}{\bvt}^4}
    \Et_{k,v}[\bvt_k]^2\\
    \geq& -O(\hatt_k \alpha^3).
  \end{align*}
  
  Combining the bounds for all four terms, we conclude that 
  \begin{align*}
      4\Et_{k,v}\br{(T^*-\Tt)([\bvt]^{\otimes 4})[\bvt_k]^2}
      - 4\Et_{k,v}(T^*-\Tt)([\bvt]^{\otimes 4})\Et_{k,v}[\bvt_k]^2
      \geq -O(\alpha^3).
  \end{align*}
\end{proof}

\lemmaaveragebound*

\begin{proof}
  It suffices to combine the previous three lemmas together. 
\end{proof}

\subsection{Condition (\ref{itm: Ist, residual}): bounds for the residual}
\label{sec:residual}

In this section, we consider condition~(\ref{itm: Ist, residual}) of
Proposition~\ref{prop:main}.
Again, we need to estimate the derivative of $\tat_k$ when $\tat_k$ touches 
the boundary. 

\paragraph{On the impulses}
Similar to the average bound in condition (\ref{itm: Ist, average}), we need
to take into consideration the impulses. For the lower bound on $\tat_k$,
we only need to consider the impulses caused by the entrance of new components
since the reinitialization will only increase $\tat_k$. By Proposition~\ref{prop:main}
and Assumption~\ref{assumption: induction, oracle}, the total amount of
impulses is upper bounded by $m\delta_1^2$. At the beginning of epoch $s$,
we have $\tat_k \ge \lambda/6 - (s-1) m \delta_1^2$, which is guaranteed by
the induction hypothesis from the last epoch. (At the beginning of the
first epoch, we have $\tat_k = a_k$). Thus, following Lemma~\ref{lemma: 
continuity argument with impulses}, it suffices to show that $\frac{\rd}{\rd t} 
\tat_k > 0$ when $\tat_k \le \lambda/6$. The upper bound on $\tat_k$ can be proved in a similar 
fashion. The only difference is that now the impulses that matter are caused
by the reinitialization, the total amount of which can again be bounded by
$m \delta_1^2$. 
  
\begin{lemma}
  \label{lemma: d hattk}
  Suppose that at time $t$, Proposition~\ref{prop:main} is true and no impulses 
  happen at time $t$. Then we have
  \[
    \frac{1}{\hatt_k} \frac{\rd}{\rd t} \hatt_k
    = 2 \sum_{i=1}^d a_i \Et_{k, v} [\bvt_i]^4
      - 2 \sum_{i=1}^d \hatt_i \Et_{k, v} \Et_{i, w} [\zt]^4
      - \lambda 
      - O(m \delta_1^2).
  \]
\end{lemma}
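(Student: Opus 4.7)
The plan is to differentiate $\hatt_k = \sum_{\vt \in \St_k} \ns{\vt}$ directly. Since the hypothesis assumes no impulses occur at time $t$, the membership of $\St_k$ is locally constant, so we can commute the time derivative with the finite sum and get
\[
\frac{\rd}{\rd t} \hatt_k = \sum_{\vt \in \St_k} \frac{\rd}{\rd t} \ns{\vt}.
\]
Then I would substitute the norm growth formula from Lemma~\ref{lemma: d |v|2} for each $\vt \in \St_k$. This introduces, for every $\vt \in \St_k$, the expression $2\ns{\vt}$ multiplied by the sum of the ground truth piece $\sum_i a_i [\bvt_i]^4$, the fitted piece $-\sum_i \hatt_i \Et_{i,w}[\zt]^4$, the leftover-bucket piece $-\Tt_\varnothing([\bvt]^{\otimes 4})$, and the regularizer $-\lambda/2$.

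Next I would divide through by $\hatt_k$. The key bookkeeping point is that by the very definition of $\Et_{k,v}$, for any function $f$ we have $\hatt_k^{-1} \sum_{\vt \in \St_k} \ns{\vt}\, f(\bvt) = \Et_{k,v} f(\bvt)$. Applying this identity termwise converts $\sum_{\vt \in \St_k} \ns{\vt} a_i [\bvt_i]^4$ into $\hatt_k \, a_i \Et_{k,v}[\bvt_i]^4$ and similarly the fitted piece into $\hatt_k \, \hatt_i \Et_{k,v} \Et_{i,w}[\zt]^4$, while the constant $-\lambda/2$ simply survives as $-\lambda/2$. Doubling from the $2\ns{\vt}$ prefactor yields the three leading terms in the claimed identity.

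It remains to control the stray term $-2\,\Et_{k,v} \Tt_\varnothing([\bvt]^{\otimes 4})$. Here I would invoke condition~(d) of Proposition~\ref{prop:main}, which says every $\wt \in \Sst_\varnothing$ has $\|\wt\| \le \delta_1$. Since $\Tt_\varnothing = \sum_{\wt \in \Sst_\varnothing} \wt^{\otimes 4}/\ns{\wt}$ is a sum of at most $m$ rank-one unit-Frobenius tensors each scaled by $\ns{\wt} \le \delta_1^2$, we obtain $\|\Tt_\varnothing\|_F \le m \delta_1^2$. Since $\|[\bvt]^{\otimes 4}\|_F = 1$, Cauchy--Schwarz gives $|\Tt_\varnothing([\bvt]^{\otimes 4})| \le m\delta_1^2$ uniformly in $\vt$, so the $\Et_{k,v}$ of it is also $O(m\delta_1^2)$, producing exactly the advertised error.

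There is really no serious obstacle here; the lemma is essentially a direct consequence of Lemma~\ref{lemma: d |v|2} combined with the definition of $\Et_{k,v}$. The only point that requires care is noting that the no-impulse assumption is what lets us avoid any boundary contributions from components entering or leaving $\St_k$, so that the derivative of $\hatt_k$ is purely the sum of the individual $\rd \ns{\vt}/\rd t$ contributions.
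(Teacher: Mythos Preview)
Your proposal is correct and follows essentially the same approach as the paper: differentiate $\hatt_k = \sum_{\vt \in \St_k}\ns{\vt}$, plug in Lemma~\ref{lemma: d |v|2}, sum, divide by $\hatt_k$ to recognize the $\Et_{k,v}$ averages, and bound the $\Tt_\varnothing$ contribution by $O(m\delta_1^2)$ via condition~(d). The paper's version is terser (it absorbs the $\Tt_\varnothing$ bound directly into the $O(m\delta_1^2)$ without spelling out the Cauchy--Schwarz step), but the argument is the same.
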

\begin{proof}
  Recall that $\hatt_k = \sum_{\vt \in \St_k} \ns{\vt}$ and 
  Lemma~\ref{lemma: d |v|2} implies that
  \begin{align*}
    \frac{\rd}{\rd t} \ns{\vt}
    &= 2 \sum_{i=1}^d a_i \ns{\vt} [\bvt_i]^4
      - 2 \sum_{i=1}^d \hatt_i \ns{\vt} \Et_{i, w} \left\{ [\zt]^4 \right\} \\
      &\quad - \lambda \ns{\vt} - \ns{\vt} O(m \delta_1^2).
  \end{align*}
  Sum both sides and we complete the proof.
\end{proof}

\begin{lemma}
  \label{lemma: d hattk, upper bound}
  Suppose that at time $t$, Proposition~\ref{prop:main} is true and no impulses 
  happen at time $t$. Assume $\delta_1^2 = O(\alpha^2/m).$ Then we have
  \[
    \frac{1}{\hatt_k} \frac{\rd}{\rd t} \hatt_k
    \le 2\tat_k - \lambda + O(\alpha^2). 
  \]
  In particular, when $\tat_k \le \lambda/6$, we have $\frac{\rd}{\rd t} \hatt_k
  < 0$.
\end{lemma}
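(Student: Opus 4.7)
The plan is to begin from the exact identity in Lemma~\ref{lemma: d hattk},
\[
\frac{1}{\hatt_k}\frac{\rd}{\rd t}\hatt_k
= 2\sum_{i=1}^d a_i \Et_{k,v}[\bvt_i]^4
- 2\sum_{i=1}^d \hatt_i \Et_{k,v}\Et_{i,w}[\zt]^4
- \lambda - O(m\delta_1^2),
\]
and estimate the two sums separately, keeping careful track of signs: for the ground-truth sum, which enters positively, I want an upper bound; for the model sum, which enters negatively, I want a lower bound.

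For the ground-truth sum, I would split off the diagonal term $i=k$, bounding it crudely by $a_k$ using $[\bvt_k]^4 \le 1$. For every $i \ne k$, Lemma~\ref{lemma: cross interaction, average}(a) yields $\Et_{k,v}[\bvt_i]^4 = O(\alpha^3)$, and summing against $a_i$ with $\sum_i a_i \le 1$ contributes only $O(\alpha^3)$. For the model sum, I would use the diagonal contribution as the essential one: Lemma~\ref{lemma: Evw z4} gives $\Et_{k,v,w}[\zt]^4 \ge 1 - O(\alpha^2)$, so $\hatt_k \Et_{k,v}\Et_{k,w}[\zt]^4 \ge \hatt_k - O(\alpha^2)$ after using $\hatt_k \le 1$. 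The remaining $i \ne k$ terms in the model sum are all nonnegative, hence can simply be dropped in a lower bound. Combining these two estimates, and absorbing $O(m\delta_1^2)$ into $O(\alpha^2)$ via the hypothesis $\delta_1^2 = O(\alpha^2/m)$, gives exactly
\[
\frac{1}{\hatt_k}\frac{\rd}{\rd t}\hatt_k \le 2(a_k - \hatt_k) - \lambda + O(\alpha^2) = 2\tat_k - \lambda + O(\alpha^2).
\]

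For the ``in particular'' statement, substituting $\tat_k \le \lambda/6$ yields the upper bound $-2\lambda/3 + O(\alpha^2)$, which is strictly negative because Assumption~\ref{assumption: induction, oracle}(d) gives $\lambda = \Omega(\sqrt{\alpha})$ and therefore dominates $\alpha^2$ for small $\alpha$; multiplying by $\hatt_k > 0$ gives $\frac{\rd}{\rd t}\hatt_k < 0$. The only subtle point is making sure that the off-diagonal model terms are dropped in the \emph{correct} direction (they are nonnegative and we want a lower bound on the sum, so dropping them is legal) and that the two cross-interaction estimates are applied in the appropriate asymmetric roles; the rest is routine bookkeeping.
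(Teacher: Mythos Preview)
Your proposal is correct and follows essentially the same approach as the paper: drop the nonnegative off-diagonal model terms, use Lemma~\ref{lemma: Evw z4} on the diagonal model term, and bound the off-diagonal ground-truth terms by $O(\alpha^3)$. The only cosmetic difference is that the paper re-derives the bound $\Et_{k,v}[\bvt_i]^4 \le \alpha^3$ inline rather than citing Lemma~\ref{lemma: cross interaction, average}(a), and tracks the constant as $O(a_k\alpha^2)$ rather than $O(\alpha^2)$ before absorbing it.
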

\begin{proof}
  By Lemma~\ref{lemma: d hattk}, we have
  \begin{align*}
    \frac{1}{\hatt_k} \frac{\rd}{\rd t} \hatt_k
    \le 2 a_k - 2 \hatt_k \Et_{k, v} \Et_{k, w} [\zt]^4
      + 2 \sum_{i \ne k} a_i \Et_{k, v} [\bvt_i]^4
      - \lambda.
  \end{align*}
  By Lemma~\ref{lemma: Evw z4}, we have
  \[
    2 a_k - 2 \hatt_k \Et_{k, v} \Et_{k, w} [\zt]^4
    \le 2 \tat_k + O( a_k \alpha^2 )
  \]
  For each term in the summation, we have 
  \[
    \Et_{k, v} [\bvt_i]^4
    \le \Et_{k, v} \left\{ \left(1 - [\bvt_k]^2\right)^2 \right\}
    \le \alpha \Et_{k, v} \left\{1 - [\bvt_k]^2 \right\}
    \le \alpha^3. 
  \]
  Thus,
  \begin{align*}
    \frac{1}{\hatt_k} \frac{\rd}{\rd t} \hatt_k
    &\le 2 \tat_k + O( a_k \alpha^2 )
      + 2 \sum_{i \ne k} a_i^2 \alpha^3 
      - \lambda \\
    &\le 2\tat_k - \lambda + O(\alpha^2). 
  \end{align*}
\end{proof}

\begin{lemma}
  \label{lemma: d hattk, lower bound}
  Suppose that at time $t$, Proposition~\ref{prop:main} is true. and no impulses 
  happen at time $t$. Then at time $t$, we have
  \[
    \frac{1}{\hatt_k} \frac{\rd}{\rd t} \hatt_k
    \ge 2 \tat_k - \lambda - O\left(\alpha^2 \right).
  \]
  In particular, when $\tat_k \ge \lambda$, we have $\frac{\rd}{\rd t} \hatt_k
  > 0$. 
\end{lemma}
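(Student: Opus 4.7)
The plan is to start from the exact formula for $\frac{1}{\hatt_k} \frac{\rd}{\rd t} \hatt_k$ given by Lemma~\ref{lemma: d hattk} and then bound each term. The structure mirrors the proof of the upper bound (Lemma~\ref{lemma: d hattk, upper bound}), but with the inequalities reversed: we need a lower bound on the ``diagonal'' term $a_k \Et_{k,v}[\bvt_k]^4 - \hatt_k \Et_{k,v}\Et_{k,w}[\zt]^4$ of size roughly $\tat_k$, and we need the off-diagonal contributions to be small.

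For the diagonal term, I would trivially upper bound $\Et_{k,v}\Et_{k,w}[\zt]^4 \le 1$ (much crisper than the matching $1 - O(\alpha^2)$ lower bound used for the other direction), so the negative contribution is at most $2\hatt_k$. What requires slightly more care is the positive piece $2 a_k \Et_{k,v}[\bvt_k]^4$: I need to convert the average control on $[\bvt_k]^2$ from condition~(\ref{itm: Ist, average}) of Proposition~\ref{prop:main} into a lower bound on $\Et_{k,v}[\bvt_k]^4$. The clean way is to use condition~(\ref{itm: Ist, individual}), namely $[\bvt_k]^2 \ge 1 - \alpha$, to write $[\bvt_k]^4 \ge 2[\bvt_k]^2 - 1$, then take expectation to get $\Et_{k,v}[\bvt_k]^4 \ge 2\Et_{k,v}[\bvt_k]^2 - 1 \ge 1 - 2\alpha^2 - O(sm\delta_1^2)$. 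Combining these,
\[
  2 a_k \Et_{k,v}[\bvt_k]^4 - 2\hatt_k \Et_{k,v}\Et_{k,w}[\zt]^4
  \ge 2\tat_k - O(a_k \alpha^2),
\]
which, since $a_k \le 1$, is the main $2\tat_k - O(\alpha^2)$ contribution.

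For the off-diagonal terms ($i \ne k$), the positive summand $2 a_i \Et_{k,v}[\bvt_i]^4 \ge 0$ is simply dropped, and the negative summand is handled by Lemma~\ref{lemma: cross interaction, average}(b), which gives $\Et_{k,v}\Et_{i,w}[\zt]^4 \le O(\alpha^3)$; summing over $i$ and using $\sum_i \hatt_i \le 1$ yields an $O(\alpha^3)$ loss. Together with the $-\lambda$ from the regularizer and the $O(m\delta_1^2) = O(\alpha^3)$ error term (by Assumption~\ref{assumption: induction, oracle}(d)), the total becomes $2\tat_k - \lambda - O(\alpha^2)$, as claimed.

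For the ``in particular'' part, when $\tat_k \ge \lambda$ the bound reads $\frac{1}{\hatt_k}\frac{\rd}{\rd t}\hatt_k \ge \lambda - O(\alpha^2)$, and since Assumption~\ref{assumption: induction, oracle}(d) guarantees $\lambda = \Omega(\sqrt{\alpha}) \gg \alpha^2$, this is strictly positive, so $\frac{\rd}{\rd t} \hatt_k > 0$. The only obstacle worth flagging is the tightness on the diagonal: dropping $\Et_{k,v}\Et_{k,w}[\zt]^4$ all the way to $1$ is lossless enough only because the conversion from average-$[\bvt_k]^2$ to average-$[\bvt_k]^4$ costs just $O(\alpha^2)$ rather than $O(\alpha)$; using Jensen would go the wrong way, which is why the pointwise inequality $[\bvt_k]^4 \ge 2[\bvt_k]^2 - 1$ (valid in the regime $[\bvt_k]^2 \ge 1/2$ guaranteed by (\ref{itm: Ist, individual})) is the right trick.
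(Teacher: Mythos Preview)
Your proof is correct and follows essentially the same route as the paper's: start from the formula in Lemma~\ref{lemma: d hattk}, drop the positive off-diagonal $a_i$ terms, bound $\Et_{k,v}\Et_{k,w}[\zt]^4 \le 1$, control the negative off-diagonal terms via Lemma~\ref{lemma: cross interaction, average}(b), and use the $O(\alpha^2)$ lower bound on $\Et_{k,v}[\bvt_k]^4$.

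One small correction to your closing commentary (which does not affect the proof): Jensen's inequality actually goes the \emph{right} way here. With $X = [\bvt_k]^2$ and $x \mapsto x^2$ convex, Jensen gives $\Et_{k,v}[\bvt_k]^4 = \E[X^2] \ge (\E X)^2 \ge (1-\alpha^2 - O(sm\delta_1^2))^2 = 1 - O(\alpha^2)$, which is exactly what is needed. Your pointwise inequality $[\bvt_k]^4 \ge 2[\bvt_k]^2 - 1$ is of course also fine (and in fact holds for all $[\bvt_k]^2$, since it is just $(y-1)^2 \ge 0$), but it is not forced by any failure of Jensen.
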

\begin{proof}
  By Lemma~\ref{lemma: d hattk} (and the fact $\hatt_i \le a_i$), we have
  \begin{align*}
    \frac{1}{\hatt_k} \frac{\rd}{\rd t} \hatt_k
    \ge 2 a_k \Et_{k, v} [\bvt_k]^4 - 2 \hatt_k 
      - 2\sum_{i \ne k} a_i \Et_{k, v} \Et_{i, w} [\zt]^4
      - \lambda
      - O(m\delta_1^2).
  \end{align*}
  Note that $\Et_{k, v} [\bvt_k]^4 \ge 1 - O(\alpha^2)$, whence
  \[
    2 a_k \Et_{k, v} [\bvt_k]^4 - 2 \hatt_k 
    \ge 2 \tat_k - O\left( a_k \alpha^2 \right).
  \]
  For each term in the summation, by Lemma~\ref{lemma: cross interaction, 
  average}, we have $\Et_{k, v} \Et_{i, w} [\zt]^4 \le O(\alpha^3)$. Thus,
  \begin{align*}
    \frac{1}{\hatt_k} \frac{\rd}{\rd t} \hatt_k
    \ge 2 \tat_k - \lambda - O\left(\alpha^2 \right).
  \end{align*}
\end{proof}

\subsection{Counterexample}
\label{sec: induction, counterexample}
We prove Claim~\ref{clm:example} as follows.
\example*

\begin{proof}
Similar as in Lemma~\ref{lemma: d vtk2}, we can compute $\frac{\rd}{\rd t}\bv_k^2$ as follows,
\begin{align*}
    \frac{\rd}{\rd t}\bv_k^2 =& 8(1-\bv_k^2)\bv_k^4\\ 
    &- 8(1-\bv_k^2)\pr{\ns{v}\inner{\bv}{\bv}^4 + \ns{w}\inner{\bw}{\bv}^4} \\
    &+ 8\pr{\ns{w}\inner{\bw}{\bv}^3\inner{\bw_{-k}}{\bv_{-k}} + \ns{v}\inner{\bv}{\bv}^3\inner{\bv_{-k}}{\bv_{-k}}}.
\end{align*}
Since $\bv_k^2 = 1-\alpha, \bv_k = \bw_k$ and $\bv_{-k} = - \bw_{-k},$ we have $\inner{\bw}{\bv}^4, \inner{\bw}{\bv}^3 \geq 1-O(\alpha)$ and $\inner{\bw_{-k}}{\bv_{-k}}=-\alpha.$ Therefore, we have
\begin{align*}
    \frac{\rd}{\rd t}\bv_k^2 \leq 8\alpha - 8\alpha(\ns{v}+\ns{w}(1-O(\alpha)))-8\ns{w}(1-O(\alpha))\alpha + 8\ns{v}\alpha
\end{align*}
We have \begin{align*}
    \frac{\rd}{\rd t}\bv_k^2 \leq 8\alpha\pr{(1-\ns{w}-\ns{v})-\ns{w}(1-O(\alpha)) + \ns{v}}
    <0,
\end{align*}
where the last inequality assumes $\ns{w}+\ns{v}\in [2/3, 1]$ and $\ns{v},\alpha$ smaller than certain constant. 
\end{proof}

\section{Proofs for (Re)-initialization and Phase 1}\label{sec:proof_init_phase1}

We specify the constants that will be used in the proof of initialization (Section~\ref{sec:proof_init}) and Phase 1 (Section~\ref{sec:proof_phase1}). We will assume it always hold in the proof of Section~\ref{sec:proof_init} and Section~\ref{sec:proof_phase1}. We omit superscript $s$ for simplicity.
\begin{prop}[Choice of parameters]\label{assumption-phase1-param}
     The following hold with proper choices of constants $\gamma,c_e,c_\rho, c_{max}, c_t$
    
    \begin{enumerate}
        
        
        \item $t_1^\prime := \frac{c_t d}{8\beta \log d} \le t_1\le \frac{(1-\gamma)}{8\beta c_e}\cdot \frac{d}{\log d}$ ,

        \item $\Gamma_i=\frac{1}{8a_i t_1^\prime}$ if $S_i^{(s,0)}=\varnothing$, and $\Gamma_i = \frac{1}{8\lambda t_1^\prime}$ otherwise. $\rho_i=c_\rho \Gamma_i$. $\Gamma_{max}=c_{max}\log d/d$.
        
        \item $c_e < \frac{c_\rho c_{max}}{2(1-c_\rho)}$,  $c_\rho/c_t > 4c_e$, $c_tc_{max}\ge 4$.
        
        \item $c_a =(1-c_\rho)/(c_t c_{max})$
    \end{enumerate}
\end{prop}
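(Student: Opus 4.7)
The plan is to read this proposition simply as a feasibility statement about the system of inequalities in item (3), together with the nonemptiness of the interval defining $t_1$ in item (1); items (2) and (4) only give explicit formulas for $\Gamma_i,\rho_i,\Gamma_{max},c_a$ in terms of the chosen constants and therefore impose no additional constraint once a feasible point has been produced. The task reduces to exhibiting one explicit quintuple $(\gamma,c_e,c_\rho,c_{max},c_t)$ meeting every constraint, and checking that the resulting $t_1$-interval is nonempty.

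First I would dispose of the constraint $c_t c_{max}\ge 4$ by taking, say, $c_t=1$ and $c_{max}=4$. Next I would fix $c_\rho\in(0,1)$ bounded away from both endpoints; a clean choice is $c_\rho=1/2$, which gives $\tfrac{c_\rho c_{max}}{2(1-c_\rho)}=c_{max}=4$ and $\tfrac{c_\rho}{4 c_t}=\tfrac{1}{8}$. Then I would pick $c_e$ strictly below both of these upper bounds, e.g.\ $c_e=\tfrac{1}{16}$, which validates the two inequalities $c_e<\tfrac{c_\rho c_{max}}{2(1-c_\rho)}$ and $c_\rho/c_t>4c_e$ simultaneously. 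Finally I would choose $\gamma\in(0,\,1-c_e c_t)$, for instance $\gamma=\tfrac{1}{2}$; since $c_e c_t=\tfrac{1}{16}$, the inequality $c_t\le (1-\gamma)/c_e$ holds comfortably, so the interval $\bigl[\tfrac{c_t d}{8\beta\log d},\,\tfrac{(1-\gamma)d}{8\beta c_e\log d}\bigr]$ is nonempty and a valid $t_1$ can be selected inside it.

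With these constants chosen, the remaining items of the proposition are immediate substitutions: $\Gamma_i$ is defined case-by-case from $t_1'$, $\rho_i=c_\rho\Gamma_i$, $\Gamma_{max}=c_{max}\log d/d$, and $c_a=(1-c_\rho)/(c_t c_{max})=2/(c_t c_{max})$ when $c_\rho=1/2$. The only conceptual point worth flagging is that the two upper bounds on $c_e$ become tight in opposite regimes, small $c_\rho$ for the first and large $c_t$ for the second, so one must select $c_\rho$ bounded away from $0$ and $c_t$ not too large; both are respected by the explicit choice above. I expect no technical obstacle, and the rest of Section~\ref{sec:proof_phase1} can then consistently refer to these fixed constants.
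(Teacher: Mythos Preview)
Your approach is the same as the paper's: treat the proposition as a feasibility statement and exhibit one admissible choice of constants. The paper's own proof is one line, stating that one may take $\gamma,c_e,c_\rho,c_t$ small enough and $c_{max}$ large enough, with the example $c_e<c_\rho/4<0.01$, $c_t,\gamma<0.01$, $c_{max}>10/c_t$.

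Two remarks. First, a harmless arithmetic slip: with $c_\rho=1/2$ and $c_{max}=4$ one has $\tfrac{c_\rho c_{max}}{2(1-c_\rho)}=\tfrac{c_{max}}{2}=2$, not $4$; your inequality $c_e=1/16<2$ still holds.

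Second, and more substantively, your closing sentence that ``the rest of Section~\ref{sec:proof_phase1} can then consistently refer to these fixed constants'' is not safe with your particular values. The paper deliberately puts $c_\rho$ and $c_t$ in the small regime (and $c_{max}$ large) because the initialization analysis in Lemma~\ref{lem-init-calculation} is invoked with $\eta=1/c_t$ and explicitly requires ``$\gamma,c_\rho$ small enough'' and ``$c_{max},\eta$ large enough'' for the $\poly(d)$ probability bounds to close. Your choice $c_\rho=1/2$, $c_t=1$ (so $\eta=1$) does not sit in that regime. For the proposition as stated your constants are fine, but if you intend to carry them through the section you should instead mirror the paper's regime: pick $c_t$ and $c_\rho$ small, then $c_e$ smaller still, and $c_{max}$ large enough that $c_t c_{max}\ge 4$.
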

\begin{proof}
    The results hold if let $\gamma,c_e,c_\rho,c_t$ be small enough constant and $c_{max}$ be large enough constant. For example, we can choose $c_e < c_\rho/4 < 0.01$, $c_t,\gamma<0.01$ and $c_{max}>10/c_t$.
\end{proof}

\subsection{Initialization}\label{sec:proof_init}

We give a more detailed version of initialization with specified constants to fit the definition of $S_{good}$, $S_{pot}$ and $S_{bad}$. We show that at the beginning of any epoch $s$, the following conditions hold with high probability. Intuitively, it suggests all directions that we will discover satisfy $a_i=\Omega(\beta)$ as $S_{i,pot}\neq \varnothing$. 
\begin{lemma}[(Re-)Initialization space]\label{assumption-phase1-init}
    In the setting of Theorem~\ref{thm:main}, the following hold at the beginning of current epoch with probability $1-1/\poly(d)$.
    \begin{enumerate}
        \item For all $a_i-\ha^{(0)}_i\ge \beta$, we have $S_{i,good}\ne\varnothing$.
        \item For all $a_i-\ha^{(0)}_i<\beta c_a$, we have $S_{i,pot}=\varnothing$.
        \item $S_{bad}=\varnothing$
        \item $\n{v^{(0)}}_2=\Theta(\delta_0)$, $[\bvO_i]^2\le \Gamma_{max}=c_{max}\log d/d$
        \item For every $v$, there are at most $O(\log d)$ many $i\in[d]$ such that $[\bvO_i]^2\ge c_e\log(d)/(10d)$.
        \item $|\{v|v \text{ was reinitialized in epoch $s$}\}|=(1-O(1/\log^2 d))m$.
    \end{enumerate}
\end{lemma}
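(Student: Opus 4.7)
The plan is to reduce every condition to standard Gaussian tail bounds via the representation of each reinitialized component as $v^{(0)} = \delta_0 g/\|g\|$ with $g \sim N(0, I_d)$, so that $\|g\|^2 \in [d/2, 2d]$ with probability $1 - e^{-\Omega(d)}$ by $\chi^2$ concentration and hence $[\bvO_i]^2 \asymp g_i^2/d$.  The argument is by induction on the epoch index $s$: at time $(s,0)$ the components split into the set $A$ of those freshly reinitialized at $t_1^{(s-1)}$ (for $s=0$, the entire initial draw at $(0,0)$) and the set $B$ of those whose norm exceeded $\delta_1$ at that step and so survived intact.  Coordinates of fresh components in $A$ are independent Gaussians up to normalization, while each $v\in B$ is constrained by Proposition~\ref{prop:main} to lie in some $\Sst_k$ with $[\bar v_k]^2 \ge 1-\alpha$ and off-home coordinates of size $O(\alpha)$.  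All statements about $S_{i,good}, S_{i,pot}, S_{bad}$ below are interpreted on the natural restriction: probabilistic claims are applied to $A$, while surviving components in $B$ are tracked deterministically via Proposition~\ref{prop:main}.

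Conditions~(4) and~(5) are then concentration claims for components in $A$: using $\Pr[g_i^2 \ge c_{\max}\log d/2] \le d^{-c_{\max}/4}$ with $c_{\max}$ as in Proposition~\ref{assumption-phase1-param}, together with a union bound over $d$ coordinates and $m = \poly(d)$ components, gives~(4), and~(5) follows from a Chernoff bound on $\sum_{i\in [d]} \mathbbm{1}\{g_i^2 \ge c_e\log d/5\}$, each indicator having small success probability.  For (1)--(3), only~(1) in the case $S_i^{(s,0)} = \varnothing$ is genuinely probabilistic: then $\Gamma_i = 1/(8 a_i t_1^{\prime}) \le \log d/(c_t d)$, so $v \in S_{i,good}$ requires $g_i^2 \gtrsim \log d/c_t$ together with $g_j^2 \lesssim \log d$ for every $j\ne i$, an event of probability at least $d^{-C_1}$ for a constant $C_1 = C_1(c_t, c_\rho)$; taking $m$ a large enough polynomial and union-bounding over the at most $d$ relevant $i$ yields $S_{i,good} \neq \varnothing$ for every such $i$, while directions with $S_i^{(s,0)} \neq \varnothing$ inherit $S_{i,good}$-membership from survivors in $B$.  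Condition~(2) is then deterministic given~(4): the choice $c_a = (1-c_\rho)/(c_t c_{\max})$ forces $\Gamma_i - \rho_i > c_{\max}\log d/d = \Gamma_{\max}$ whenever $a_i - \ha^{(0)}_i < \beta c_a$ and $S_i^{(s,0)} = \varnothing$, so components in $A$ are excluded by~(4) and components in $B$ with home $k \ne i$ by Proposition~\ref{prop:main}.  Condition~(3) is a union bound over $\binom{d}{2}$ coordinate pairs and $m$ components, each pair having joint tail probability $d^{-\Omega(1)}$ from the independence of distinct Gaussian coordinates.

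The main obstacle is~(6), which requires a careful induction across epochs.  Let $R_s$ denote the number of non-reinitialized components at the start of epoch $s$; by Lemma~\ref{lem-phase1-summary-trajectory} only components in $S_{pot}^{(s-1,0)}$ can reach norm $\ge \delta_1$ during phase 1 of epoch $s-1$, so $R_s \le |S_{pot}^{(s-1,0)}|$.  For a single fresh component at time $(s-1,0)$, being in $S_{pot}$ is the union over $d$ possible home directions of the Gaussian-tail event from~(1), so the per-component membership probability is $O(d \cdot d^{-\Omega(1)}) = O(d^{-c''})$ for a constant $c'' > 0$.  A Chernoff bound then gives $|S_{pot}^{(s-1,0)} \cap A_{s-1}| \le m/\log^3 d$ with probability $1 - 1/\poly(d)$, where $A_{s-1}$ is the fresh batch at the start of epoch $s-1$.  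Combined with the inductive bound $|B_{s-1}| \le m/\log^2 d$, this yields $R_s = O(m/\log^2 d)$ and closes the induction; summing the $1/\poly(d)$ failure probabilities over the $O(\log(d/\eps))$ epochs keeps the overall bound at $1/\poly(d)$.  The slack in choosing $m$ large polynomial in $d$ is precisely what allows the $m/\log^3 d$ fresh contributions to absorb the $m/\log^2 d$ legacy from the previous epoch without degrading the induction.
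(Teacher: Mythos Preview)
Your overall approach matches the paper's: reduce the probabilistic claims to Gaussian tail bounds for freshly sampled components, and use Proposition~\ref{prop:main} to control survivors. The paper packages the Gaussian calculations into a separate lemma (Lemma~\ref{lem-init-calculation}) and then observes that these properties transfer from the reinitialization moment $t_1^{(s-1)}$ to the epoch start $(s,0)$ via Lemma~\ref{lem:phase2}, which guarantees that during Phase~2 each reinitialized $v$ satisfies $\|v\| = \Theta(\delta_0)$ and each $[\bar v_i]^2$ moves by only $o(\log d/d)$.

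That transfer step is the genuine gap in your sketch. You write that ``coordinates of fresh components in $A$ are independent Gaussians up to normalization,'' but this is only true at the reinitialization moment $t_1^{(s-1)}$; by time $(s,0)$ those components have already run through Phase~2 of epoch $s-1$ and are no longer uniform on the sphere. The thresholds $\Gamma_i \pm \rho_i$ that define $S_{i,good}$, $S_{i,pot}$, $S_{bad}$ are evaluated at $(s,0)$, so you must argue that the $o(\log d/d)$ Phase~2 drift cannot cross them. The paper handles this by proving the Gaussian tail bounds at $t_1^{(s-1)}$ with slightly stronger constants (e.g.\ $2c_\rho$ in place of $c_\rho$ in Lemma~\ref{lem-init-calculation}) and then absorbing the drift into the slack $\rho_i = \Theta(\log d/d)$; your proposal never mentions this, and without it the argument for every epoch $s \ge 1$ does not go through.

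A smaller point: your treatment of condition~(1) in the case $S_i^{(s,0)} \ne \varnothing$ is both incorrect and unnecessary. When $S_i^{(s,0)} \ne \varnothing$ the threshold is $\Gamma_i = 1/(8\lambda t_1^{\prime}) \gg 1$, so no component, survivor or otherwise, can lie in $S_{i,good}$; survivors do not ``inherit'' membership. This does not matter, because the hypothesis $a_i - \hat a_i^{(0)} \ge \beta$ already forces $S_i^{(s,0)} = \varnothing$ (apply Lemma~\ref{lem:phase2} to epoch $s-1$ together with the Phase~2 no-entry property), so only the case you handle probabilistically is relevant. Your inductive bookkeeping for condition~(6) is, on the other hand, more explicit than the paper's and is fine once the Phase~2 drift is accounted for.
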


\begin{proof}
    Let the constants in Lemma~\ref{lem-init-calculation} be $\eta=1/c_t$, $c_i = \Gamma_i d/\log d$ and satisfy Proposition~\ref{assumption-phase1-param}, then we know at the time of (re-)initialization, all statements hold. Since we further know from Lemma~\ref{lem:phase2} that $\n{v}=\Theta(\delta_0)$ and $\bv_i^2$ will only change $o(\log d/d)$, we have at the beginning of every epoch, all statements hold.
\end{proof}

\begin{lemma}\label{lem-init-calculation}
    There exist $m_0=\poly(d)$ and $m_1=\poly(d)$ such that if $m\in[m_0,m_1]$ and we random sample $m$ vectors $v$ from Unif$(\mathbb{S}^{d-1})$, with probability $1-1/\poly(d)$ the following hold with proper absolute constant $\eta$, $\gamma$, $c_\rho$, $c_i$, $c_e$, $c_{max}$ satisfying $\eta(1-\gamma)\le c_i$, $c_{max}\ge 4 \eta$, $\gamma,c_\rho$ are small enough and $c_{max}, \eta$ are large enough
    \begin{enumerate}
        \item For every $i\in[d]$ such that $c_i \le \eta$, there exists $v$ such that $[\bvO_i]^2\ge c_i(1+2c_\rho)\log d /d$ and $[\bvt_j]^2\le c_j(1-2c_\rho)\log d/d$ for $j\neq i$. 
        
        \item For every $v$, there does not exist $i\neq j$ such that $[\bvO_i]^2\ge c_i(1-2c_\rho)\log d /d$ and $[\bvO_j]^2\ge c_j(1-2c_\rho)\log d /d$.
        
        \item For every $v$ and $i\in[d]$, $[\bvO_i]^2\le c_{max}\log d /2d$.
        
        \item For every $v$, there are at most $O(\log d)$ many $i\in[d]$ such that $[\bvO_i]^2\ge c_e\log(d)/11d$.
        
        \item $|\{v|\text{there exists } i\in[d] \text{ such that } [\bvO_i]^2\ge c_i(1-2c_\rho)\log d/d\}| \le m/\log^2 (d).$
    \end{enumerate}
\end{lemma}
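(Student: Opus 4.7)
The plan is to reduce to i.i.d.\ Gaussian coordinates via the standard representation $\bvO = g/\|g\|$ with $g\sim N(0,I_d)$. By $\chi^2$-concentration, $\|g\|^2 = d(1\pm o(1))$ with probability $1-\exp(-\Omega(d))$ uniformly over the $m$ samples, so up to a negligible multiplicative error we can analyze every event in terms of $g_i^2$. The single workhorse estimate is the Gaussian tail bound, which gives $\Prob[g_i^2\ge c\log d] = \Theta(d^{-c/2}/\sqrt{\log d})$ for any constant $c>0$. Combined with independence of the coordinates of $g$, this reduces every clause in the lemma to a product/sum of $d^{-\Theta(c_i)}$ factors.

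For the upper-bound-type statements (Points 2, 3, 4), I will use a union bound plus Chernoff. Point 3 follows by union-bounding the event $[\bvO_i]^2 \ge c_{\max}\log d/(2d)$ over $md$ pairs $(\ell,i)$: each has probability $d^{-c_{\max}/4+o(1)}$, so choosing $c_{\max}$ large enough (as in Proposition~\ref{assumption-phase1-param}) makes the total $o(1/\poly(d))$. Point 2 follows by bounding the probability that two coordinates $i\ne j$ simultaneously exceed their lower thresholds by $d^{-(c_i+c_j)(1-2c_\rho)/2+o(1)}$ (using independence) and union-bounding over pairs and vectors; the choice of $\eta$ via $c_t$ controls $c_{\min}$ so that this beats the $m\binom{d}{2}$ factor. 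Point 4 is a Chernoff bound on the Binomial count of coordinates above a fixed threshold $\Theta(\log d/d)$, whose expectation is $\tilde{O}(1)$ by the same Gaussian tail estimate.

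For the existence statement (Point 1), by independence of the Gaussian coordinates,
\[
p_i \;:=\; \Prob[v^{(1)}\in S_{i,good}] \;=\; \Prob[g_i^2\ge c_i(1+2c_\rho)\log d]\cdot\prod_{j\ne i}\Prob[g_j^2\le c_j(1-2c_\rho)\log d].
\]
The first factor is $\Theta(d^{-c_i(1+2c_\rho)/2}/\sqrt{\log d})$; the second factor is $1-o(1)$ by the union bound already used for Point 2. Since $c_i\le \eta$, we have $p_i \ge d^{-\eta(1+2c_\rho)/2-o(1)}$, so taking $m\ge m_0$ a sufficiently large polynomial in $d$ gives $\E|S_{i,good}|\ge d^{\Omega(1)}$, and a Chernoff bound (followed by a union bound over $i\in[d]$) yields $S_{i,good}\ne \varnothing$ for every relevant $i$ with probability $1-1/\poly(d)$. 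For Point 5, the probability that a single $v$ has \emph{any} coordinate above its lower threshold is at most $\sum_j d^{-c_j(1-2c_\rho)/2+o(1)} = O(1/\log^4 d)$ under the constant choices, so a Chernoff bound gives the count $\le m/\log^2 d$.

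The main obstacle is the joint calibration of the constants: Point 1 wants $m$ polynomially \emph{large} while Points 2 and 5 want $m$ not too large; the exponent $\eta$ must be chosen so that the direction-wise union bound in Points 2 and 5 beats the $d$ factor (forcing $\eta > 2/(1-2c_\rho)$), while also keeping $p_i$ large enough to make Point 1 go through with $m$ still polynomial. All of these constraints are compatible with the relations imposed in Proposition~\ref{assumption-phase1-param}, which is where the seemingly ad~hoc inequalities among $\eta,c_t,c_\rho,c_{\max},c_e$ come from. A minor technical issue is that normalizing by $\|g\|$ introduces weak dependence among the $\bvO_i^2$'s, but conditioning on the high-probability event $\|g\|^2 = d(1\pm o(1))$ reduces everything to the independent Gaussian case up to $o(1)$ multiplicative corrections, which are absorbed into the constants.
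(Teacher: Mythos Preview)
Your proposal is correct and follows essentially the same approach as the paper: both represent $\bvO$ as a normalized standard Gaussian, invoke $\chi^2$-concentration for $\|g\|^2$, use the Gaussian tail estimate $\Prob[g_i^2\ge c\log d]=\Theta(d^{-c/2}/\sqrt{\log d})$ as the single workhorse, and then dispatch Parts 1--5 by the same combination of union bounds and Chernoff, with the same calibration of $m_0,m_1$ and the constants $\eta,c_\rho,c_{\max}$. The only cosmetic difference is that for Part 1 the paper subtracts off the bad event via inclusion--exclusion whereas you factor the probability using independence of the Gaussian coordinates; both arguments are equivalent here.
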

\begin{proof}
    It is equivalent to consider sample $v$ from $\mathcal{N}(0,I)$. Let $x\in \R$ be a standard Gaussian variable, according to Proposition 2.1.2 in \cite{vershynin2018high}, we have for any $t>0$
    \[\pr{\frac{2}{t}-\frac{2}{t^3}}\cdot \frac{1}{\sqrt{2\pi}}e^{-t^2/2}\leq \Pr\br{x^2 \geq t^2}\leq \frac{2}{t}\cdot \frac{1}{\sqrt{2\pi}}e^{-t^2/2}.\]

    Therefore, for any $i\in[d],$ we have for any constant $c>0$
    \[\Pr\br{v_i^2 \geq c\log(d)}=\Theta(d^{-c/2}\log^{-1/2} d).\]
    According to Theorem 3.1.1 in \cite{vershynin2018high}, we know with probability at least $1-2\exp(-\Omega(d)),$ $(1-r) d \leq \ns{v}\leq (1+r)d$ for any constant $0<r<1$. Hence, we have
    \[\Pr\br{\bv_i^2 \geq \frac{c\log(d)}{d}}\ge\Theta(d^{-c(1+r)/2}\log^{-1/2} d),\]
    \[\Pr\br{\bv_i^2 \geq \frac{c\log(d)}{d}}\le\Theta(d^{-c(1-r)/2}\log^{-1/2} d).\]
    \paragraph{Part 1.}
    For fixed $i\in[d]$ such that $\eta(1-\gamma)\le c_i \le \eta$, we have
    \[\Pr\br{\bv_i^2 \geq c_i(1+2c_\rho)\log(d)/d}\ge\Theta(d^{-c_i(1+2c_\rho)(1+r)/2}\log^{-1/2} d),\]
    
    For a given $j\neq i$, we have
    \begin{align*}
        &\Pr\br{\bv_i^2 \geq c_i(1+2c_\rho)\log(d)/d,\ \bv_j^2 \geq c_j(1-2c_\rho)\log(d)/d}\\ &\le\Theta(d^{-c_i(1+2c_\rho)(1-r)/2 -c_j(1-2c_\rho)(1-r)/2 })
        =O(d^{-\eta(1-\gamma)(1-r)}).
    \end{align*}
    Since $c_i\le\eta$, we know the desired event happens with probability $\Theta(d^{-\eta(1+2c_\rho)(1+r)/2}-d^{-\eta(1-\gamma)(1-r)+1})$. Since $\gamma,c_\rho$ are small enough constant, when $m_0\ge \Omega(d^{\eta(1+2c_\rho)(1+r)/2+1})$, with probability $1-O(e^{-d})$ there exists at least one $v$ such that $\bv_i^2 \geq c_i(1+2c_\rho)\log(d)$ and $[\bvt_j]^2\le c_j(1-2c_\rho)\log d/d$ for $j\neq i$. Take the union bound for all $i\in[d]$, we know when $m_0\ge \Omega(d^{\eta(1+2c_\rho)(1+r)/2+2})$, the desired statement holds with probability $1-O(de^{-d})$.

    \paragraph{Part 2.}
    For any given $i\neq j$, we have
    \[\Pr\br{[\bvO_i]^2\ge c_i(1-2c_\rho)\log d /d,\ [\bvO_j]^2\ge c_j(1-2c_\rho)\log d /d}
    \le O(d^{-(c_i+c_j)(1-2c_\rho)(1-r)/2}).\]
    Since $\eta(1-\gamma)\le c_i$, the probability that there exist $i\neq j$ such that the above happens is at most $O(d^{-\eta(1-\gamma)(1-2c_\rho)(1-r)+2})$. Thus, with $m_1\le O(d^{\eta(1-\gamma)(1-2c_\rho)(1-r)-2}/\poly(d))$, the desired statement holds with probability $1-1/\poly(d)$.
        
    \paragraph{Part 3.} We know
    \[\Pr\br{\text{for all } i\in[d],\ \bv_i^2\le c_{max}\log d/2d }\ge 1-O(d^{-c_{max}(1-r)/4+1}).\]
    With $m_1\le O(d^{c_{max}(1-r)/4-1}/\poly(d))$ the desired statement holds with probability $1-1/\poly(d)$.
    
    \paragraph{Part 4.} Since $m\le m_1=\poly(d)$, we know for any constant $c_e$, this statement holds with probability $1-O(e^{-\log^2 d})$.
    
    \paragraph{Part 5.}
    We have
    \[\Pr\br{\text{there exists } i\in[d] \text{ such that } [\bvO_i]^2\ge c_i(1-2c_\rho)\log d/d} \le O(d^{-c_i(1-2c_\rho)/2+1}).\]
    Let $p$ be the above probability and set $A$ as the $v$ satisfy above condition, by Chernoff's bound we have
    \[\Pr\br{|A|\ge m/\log^2 d}\le e^{-pm}\left(\frac{epm}{m/\log^2 d}\right)^{m/\log^2 d}=O(e^{-d}).\]
    
    Combine all parts above, we know as long as $r,\gamma,c_\rho$ are small enough, $c_{max}\ge 4\eta$ and $\eta$ is large enough, we have when $m_0\ge\Omega(d^{0.6\eta })$ and $m_1\le O(d^{0.9\eta})$, the results hold.
\end{proof}

\subsection{Proof of Phase 1}\label{sec:proof_phase1}
In this section, we first give a proof overview of Phase 1 and then give the detailed proof for each lemma in later subsections.

\subsubsection{Proof overview}\label{sec:phase1}


We give the proof overview in this subsection and present the proof of Lemma~\ref{lem-phase1-summary-trajectory} and Lemma~\ref{lem:phase1} at the end of this subsection. We remark that the proof idea in this phase is inspired by \citep{li2020learning}. 

We describe the high-level proof plan for phase 1. Recall that at the beginning of this epoch, we know $S_{bad}=\varnothing$ which implies there is at most one large coordinate for every component. Roughly speaking, we will show that for those small coordinate they will remain small in phase 1, and the only possibility for one component to have larger norm is to grow in the large direction. This intuitively suggests all components that have a relatively large norm in phase 1 are basis-like components.



We first show within $t_1^\prime = c_t d/(8\beta\log d))$ time, there are components that can improve their correlation with some ground truth component $e_i$ to a non-trivial $\polylog(d)/d$ correlation. This lemma suggests that there is at most one coordinate can grow above $O(\log d/d)$. 

Note that we should view the analysis in this section and the analysis in Appendix~\ref{sec: appendix, induction hypothesis} as a whole induction/continuity argument. It's easy to verify that at any time $0\leq t\leq t_1^{(s)}$, Assumption~\ref{assumption: induction, oracle} holds and Proposition~\ref{prop:main} holds.
        
        

\begin{lemma}\label{lem-phase1-lottery}
    In the setting of Lemma~\ref{lem:phase1}, suppose $\ninf{\bvO}^2\le \log^4(d)/d$. Then, for every $k\in[d]$
    \begin{enumerate}
        \item for $v\not\in S_{pot}$, $[\bvt_i]^2=O(\log(d)/d)$ for all $i\in[d]$ and $t\le t_1^\prime$.
        
        \item if $\St_k=\varnothing$ for $t\le t_1^\prime$, then for $v\in S_{k,good}$, there exists $t\le t_1^\prime$ such that $[\bvt_k]^2 \ge \log^4(d)/d$ and $[\bvt_i]^2= O(\log(d)/d)$ for all $i\ne k$.
        
        \item for $v\in S_{k,pot}\setminus (S_{good}\cup S_{bad})$, $[\bvt_i]^2= O(\log(d)/d)$ for all $i\ne k$ and $t\le t_1^\prime$.
    \end{enumerate}
\end{lemma}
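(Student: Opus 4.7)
My plan is to use a continuity/bootstrap argument on the squared normalized coordinates $y_i^{(t)} := [\bar v^{(t)}_i]^2$ of each component $v$. Combining Lemma~\ref{lemma: d vtk2} with the fact that Phase~1 components have small norms (so $T^{(t)}$ is well-controlled via Proposition~\ref{prop:main}) and that $S^{(t)}_k=\varnothing$ when required, the dominant dynamics reduces to the approximate ODE
\[
  \frac{\rd y_k}{\rd t} \;\simeq\; 8 a_k (1-y_k)\, y_k^2 \;-\; 8 y_k \sum_{i \ne k} a_i y_i^2,
\]
whose pure power part $\rd y_k/\rd t = 8 a_k y_k^2$ has solution $y_k(t) = y_k(0)/(1 - 8 a_k y_k(0) t)$, blowing up at time $1/(8 a_k y_k(0))$. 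This blow-up character is exactly what separates the thresholds $\Gamma_k \pm \rho_k$: setting $t = t_1^\prime$ and matching against $\Gamma_k = 1/(8 a_k t_1^\prime)$ converts the inequality $y_k(0) \lessgtr \Gamma_k \pm \rho_k$ into ``$y_k$ does/does not cross an arbitrary polylog threshold by time $t_1^\prime$''.

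First I would establish the universal upper bound. Dropping the negative terms gives $\rd y_k/\rd t \le 8 a_k y_k^2 + (\text{error})$, and for any coordinate with $y_k(0) \le \Gamma_k - \rho_k = (1-c_\rho)/(8 a_k t_1^\prime)$, one obtains $y_k(t_1^\prime) \le y_k(0)/c_\rho$. Combined with $\Gamma_{\max} = c_{\max}\log d/d$ from Lemma~\ref{assumption-phase1-init}, this yields $y_k(t)=O(\log d/d)$ for all $t \le t_1^\prime$. This immediately proves Part~1 (for $v \notin S_{pot}$, \emph{every} coordinate starts below $\Gamma_i - \rho_i$), the ``other coordinates stay $O(\log d/d)$'' clauses of Parts~2 and~3 (for $v \in S_{k,good}$ this is by definition; for $v \in S_{k,pot}\setminus(S_{good}\cup S_{bad})$ the complementary case would place $v$ in $S_{bad}$, contradicting $S_{bad}=\varnothing$).

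Next I would prove the lower bound needed in Part~2. For $v \in S_{k,good}$ the initial value $y_k(0)\ge \Gamma_k + \rho_k = (1+c_\rho)/(8 a_k t_1^\prime)$ pushes the blow-up time of the pure power ODE below $t_1^\prime/(1+c_\rho)$. Integrating the lower-bound form of the ODE up to a target $Y = \log^4(d)/d$ gives a hitting time at most $1/(8 a_k y_k(0)) - 1/(8 a_k Y) < t_1^\prime/(1+c_\rho) < t_1^\prime$. The cross term $\sum_{i\ne k} a_i y_i^2$ is controlled by the upper-bound step ($O((\log d/d)^2)$), and the factor $(1-y_k)$ is $1-o(1)$ all the way up to $\log^4(d)/d$, so the constant $c_\rho$ buffer absorbs all corrections. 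The hypothesis $S^{(t)}_k=\varnothing$ on $[0,t_1^\prime]$ keeps $\hat a_k^{(t)}=0$, ensuring the leading coefficient $\tilde a_k = a_k$ throughout; it also ensures the growing good components themselves have not yet entered $S^{(t)}_k$ and thus the analysis is uniform.

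The main obstacle is weaving these local ODE estimates into a genuine continuity/bootstrap argument that simultaneously maintains (i) the upper bounds on $y_i^{(t)}$ for all $v$ and $i$, (ii) the controlled growth of $\hat a_i^{(t)}$ and hence of $T^{(t)} - T^*$, and (iii) the smallness of the $G_2$, $G_3$, and $O(m\delta_1^2)$ corrections appearing in Lemma~\ref{lemma: d vtk2}. Since $G_3$ involves $\sum_{i\ne k} a_i y_i^2$ which is bootstrapped from the upper bound of the same argument, and since for components outside $\bigcup_i S^{(t)}_i$ we rely on their norms staying at the initialization scale $O(\delta_0)$ throughout Phase~1 (a statement that must be proved in tandem, cf.~Lemma~\ref{lem-phase1-summary-trajectory} Part~1), the bootstrap must close on all these quantities at once. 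The continuity argument handles this cleanly: at any time any one threshold is about to be violated, the corresponding leading-order derivative is strictly in the safe direction, closing the loop.
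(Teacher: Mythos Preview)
Your overall plan is the paper's own: reduce Lemma~\ref{lem-phase1-lottery} to a scalar ODE comparison for $y_k=[\bvt_k]^2$, use the blow-up time of $\dot y=8cy^2$ to translate the thresholds $\Gamma_k\pm\rho_k$ into ``does/does not cross $\log^4 d/d$ by $t_1'$'', and read off Parts~1--3 from the definitions of $S_{good},S_{pot},S_{bad}$. This matches Lemma~\ref{lem-phase1-polyloggap} and the one-line reduction the paper gives.

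There is, however, a real gap: you write the ODE with coefficients $a_i$, whereas the actual tangent dynamics \eqref{eq-dynamic-bvt} carries $\tilde a_i=a_i-\hat a_i$. For any coordinate $k$ that was \emph{already fitted} in a previous epoch, Proposition~\ref{prop:main}(c) gives $\tilde a_k=\Theta(\lambda)$ and the definition sets $\Gamma_k=1/(8\lambda t_1')$, not $1/(8a_k t_1')$; your displayed identity $\Gamma_k-\rho_k=(1-c_\rho)/(8a_k t_1')$ is therefore false for such $k$. Worse, with your comparison $\dot y_k\le 8a_k y_k^2$ and the initialization bound $y_k(0)\le\Gamma_{\max}=c_{\max}\log d/d$, one has $8a_k\,y_k(0)\,t_1'\approx a_k c_{\max}c_t/\beta$, which can be arbitrarily large when $a_k\gg\beta$; the comparison blows up long before $t_1'$. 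The paper handles fitted directions by a separate, trivial estimate (Lemma~\ref{lem-phase1-fitteddir}): there $\dot y_k\le O(\lambda+d\|\Delta^{(t)}\|_F)\,y_k$, which keeps $y_k=O(\log d/d)$ on all of $[0,t_1]$.

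A second, related issue is your cross-term estimate in Part~2. You bound $\sum_{i\ne k}a_i y_i^2=O((\log d/d)^2)$ via $\sum a_i\le 1$, but the buffer you must beat is $a_k\rho_k=c_\rho/(8t_1')=c_\rho\beta\log d/(c_t d)$, so your bound fails once $\beta\ll\log d/d$, which the theorem allows. The paper instead exploits $\tilde a_i\le O(\beta)$ for every $i$ (fitted or not) and the bound $\sum_i y_i^2\le c_e\log d/d$ of Lemma~\ref{lem-phase1-bv4} to get an error of order $\beta\log d/d$; this is absorbed by $\rho_k$ precisely through the constant relation $c_\rho/c_t>4c_e$ in Proposition~\ref{assumption-phase1-param}. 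Both fixes amount to the same correction: replace $a_i$ by $\tilde a_i$ throughout and use $\max_i\tilde a_i=O(\beta)$.
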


The above lemma is in fact a direct corollary from the following lemma when considering the definition of $S_{good}$ and $S_{pot}$. It says if a direction is below certain threshold, it will remain $O(\log d/d)$, while if a direction is above certain threshold and there are no basis-like components for this direction, it will grow to have a $\polylog(d)$ improvement.
\begin{restatable}{lemma}{lemphaseonepolyloggap}\label{lem-phase1-polyloggap}
    In the setting of Lemma~\ref{lem:phase1}, we have
    \begin{enumerate}
        \item if $[\bvO_k]^2\le \min\{\Gamma_k-\rho_k,\Gamma_{max}\}$, then $[\bvt_k]^2=O( \log(d)/d)$ for $t\le t_1^\prime$.
    
        \item if $\St_k=0$ for $t\le t_1^\prime$, $[\bvO_k]^2\ge \Gamma_k+\rho_k$, $[\bvO_i]^2\le \Gamma_i-\rho_i$ for all $i\ne k$ and $\ninf{\bvO}^2\le \log^4(d)/d$, then there exists $t\le t_1^\prime$ such that $[\bvt_k]^2 \ge \log^4(d)/d$.
    \end{enumerate}
\end{restatable}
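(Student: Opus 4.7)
The plan is to compare the ODE for $[\bar{v}^{(t)}_k]^2$ obtained from Lemma~\ref{lemma: d vtk2} to the scalar ODE $\dot{x} = 8 c\, x^2$ whose solution $x(t) = x(0)/(1-8c\,x(0)\,t)$ has blow-up time $1/(8c\,x(0))$. In Phase~1 the residual can be approximated as $T^{*} - T^{(t)} = \sum_i \tilde{a}_i e_i^{\otimes 4} + \Delta$ with $\|\Delta\|_F = O(\alpha+m\delta_1^2)$, and the $T^{(t)}_\varnothing$ part is negligible because non-basis-like components have norm $O(\delta_0)$. The threshold $\Gamma_k$ is chosen so that $8 c_k \Gamma_k t_1' = 1$, where $c_k = a_k$ when $S_k^{(s,0)}=\varnothing$ and $c_k = \lambda$ otherwise (in the latter case Proposition~\ref{prop:main}(c) gives $\tilde{a}_k^{(t)} \approx \lambda$, so $\lambda$ is the effective growth rate).

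\paragraph{Part 1 (upper bound).} First I would use Lemma~\ref{lemma: d vtk2} together with Lemma~\ref{lemma: cross interaction, individual}/\ref{lemma: cross interaction, average}-style estimates to establish
\[
    \frac{\rd}{\rd t} [\bar{v}^{(t)}_k]^2
    \;\le\; 8 c_k\, [\bar{v}^{(t)}_k]^4 + E(t),
\]
where $E(t)$ is an error coming from the cross-direction terms $G_2,G_3$ and the $O(m\delta_1^2)$ remainder; crucially $E(t)$ is dominated by the main term as long as $[\bar{v}^{(t)}_k]^2$ stays $O(\log d/d)$ and all other coordinates stay $O(\log d /d)$ too. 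The ODE comparison then gives
\[
    [\bar{v}^{(t)}_k]^2 \;\le\; \frac{[\bar{v}^{(0)}_k]^2}{1 - 8 c_k\, [\bar{v}^{(0)}_k]^2\, t\cdot(1+o(1))}.
\]
With $[\bar{v}^{(0)}_k]^2 \le \Gamma_k - \rho_k = (1-c_\rho)/(8 c_k t_1')$ and $t\le t_1'$, the denominator stays $\ge c_\rho/2$, so $[\bar{v}^{(t)}_k]^2 \le 2[\bar{v}^{(0)}_k]^2/c_\rho \le 2\Gamma_{\max}/c_\rho = O(\log d/d)$. This also closes the bootstrap assumption that $[\bar{v}^{(t)}_k]^2 = O(\log d/d)$, via a continuity argument (Lemma~\ref{lemma: continuity argument}).

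\paragraph{Part 2 (lower bound).} Symmetrically I would establish
\[
    \frac{\rd}{\rd t} [\bar{v}^{(t)}_k]^2
    \;\ge\; 8 a_k [\bar{v}^{(t)}_k]^4 \left(1 - [\bar{v}^{(t)}_k]^2\right) - E'(t),
\]
where $E'(t)$ is the cross-interaction term $8[\bar{v}^{(t)}_k]^2\sum_{i\ne k} a_i [\bar{v}^{(t)}_i]^4$ plus the $O(m\delta_1^2)$ remainder. The hypothesis $S_k^{(t)}=\varnothing$ on the whole interval ensures $\tilde{a}_k^{(t)} = a_k$, so the prefactor is really $a_k$ rather than $\lambda$. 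To bound $E'(t)$, apply Part~1 to every coordinate $i\ne k$ (legitimate because $[\bar{v}^{(0)}_i]^2 \le \Gamma_i - \rho_i$) to conclude $[\bar{v}^{(t)}_i]^2 = O(\log d/d)$ throughout $[0,t_1']$; then $E'(t) = O([\bar{v}^{(t)}_k]^2 \cdot \polylog(d)/d^2)$, which is subdominant to $8 a_k [\bar{v}^{(t)}_k]^4$ as long as $[\bar{v}^{(t)}_k]^2 \gg \polylog(d)/d$. Starting from $[\bar{v}^{(0)}_k]^2 \ge \Gamma_k + \rho_k = (1+c_\rho)/(8 a_k t_1')$ and comparing to $\dot{x}=8a_k x^2(1-o(1))$ gives blow-up time $\le t_1'/(1+c_\rho/2) < t_1'$. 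Hence $[\bar{v}^{(t)}_k]^2$ crosses the $\log^4 d/d$ threshold before $t_1'$; the hypothesis $\|\bar{v}^{(0)}\|_\infty^2 \le \log^4(d)/d$ ensures this crossing is a genuine increase.

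\paragraph{Main obstacle.} The chief difficulty is the simultaneous bootstrap needed in Part~2: the lower bound on the growth of coordinate $k$ relies on all other coordinates remaining $O(\log d/d)$, which itself requires Part~1 applied on the entire interval $[0, t_1']$. I would package both bounds into a single continuity argument for the joint statement ``every coordinate obeys its assigned bound up to time $t$'' and close it by showing that none of the constraints can become tight at an interior time. A secondary subtlety, easy to miss, is the case distinction on $S_k^{(s,0)}$ in the definition of $\Gamma_k$: when $S_k^{(s,0)}\ne\varnothing$ the residual $\tilde{a}_k^{(t)}$ is of order $\lambda$ rather than $a_k$, so the relevant power-method timescale is $1/\lambda$ and we must argue the ODE comparison with coefficient $\lambda$, which is consistent with the choice $\Gamma_k = 1/(8\lambda t_1')$ specified in Proposition~\ref{assumption-phase1-param}.
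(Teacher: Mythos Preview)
Your overall strategy---comparing the evolution of $[\bar v_k^{(t)}]^2$ to the scalar ODE $\dot x=8cx^2$, whose explicit solution has blow-up time $1/(8c\,x(0))$---is the same as the paper's, and Part~1 goes through essentially as you describe (in fact for the upper bound one can simply drop the nonpositive term $-\sum_i\tilde a_i^{(t)}[\bar v_i^{(t)}]^4$, so no joint bootstrap on the other coordinates is needed there).

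The gap is in Part~2. You estimate the cross-term as $\sum_{i\ne k}a_i[\bar v_i^{(t)}]^4\le(\sum_i a_i)\max_{i\ne k}[\bar v_i^{(t)}]^4=O(\log^2 d/d^2)$ and then fold it into a multiplicative factor $8a_k(1-o(1))$ on the comparison ODE. For that $o(1)$ to be legitimate you need $\log^2 d/d^2=o\bigl(a_k[\bar v_k^{(t)}]^2\bigr)$; since here $a_k=\Theta(\beta)$ and $[\bar v_k^{(t)}]^2=\Theta(\log d/d)$ initially, this amounts to $\beta\gg\log d/d$. But the theorem only assumes $\log(1/\epsilon)=o(d/\log d)$, so in the late epochs $\beta^{(s)}$ can be far below $\log d/d$, and your comparison fails there. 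The paper instead bounds the cross-term as $(\max_i\tilde a_i^{(t)})\sum_i[\bar v_i^{(t)}]^4\le O(\beta)\cdot c_e\log d/d$ (the second factor via Lemma~\ref{lem-phase1-bv4}) and absorbs it \emph{additively} by shifting the comparison ODE's initial value down by $\rho_k/2$. The resulting requirement is $a_k\rho_k/2\gtrsim\beta c_e\log d/d$, which, after unwinding $\rho_k=c_\rho/(8a_kt_1')$ and $t_1'=c_td/(8\beta\log d)$, reduces to the pure constant inequality $c_\rho/c_t>4c_e$ recorded in Proposition~\ref{assumption-phase1-param}; this is precisely why the thresholds are $\Gamma_k\pm\rho_k$ rather than just $\Gamma_k$. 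Your route through Part~1 for the off-coordinates can also yield the needed $O(\beta\log d/d)$ bound, provided you pair $\sum_{i\ne k}[\bar v_i^{(t)}]^4=O(\log d/d)$ with $\max_i\tilde a_i^{(t)}=O(\beta)$ rather than with $\sum_i a_i=1$; but you must then switch to the additive-shift comparison, not the multiplicative one.
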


The following lemma shows if $[\bvx{t_1^\prime}_i]^2=O(\log d/d)$ at $t_1^\prime$, it will remain $O(\log d/d)$ to the end of phase 1. 
This implies for components that are not in $S_{pot}$, they will not have large correlation with any ground truth component in phase 1.

\begin{restatable}{lemma}{lemphaseoneremainsmall}\label{lem-phase1-remainsmall}
    In the setting of Lemma~\ref{lem:phase1}, suppose  $[\bvx{t_1^\prime}_i]^2= O(\log(d)/d)$. Then we have $[\bvt_i]^2= O(\log(d)/d)$ for $t_1^\prime\le t\le t_1$.
\end{restatable}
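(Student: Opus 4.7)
The strategy is a continuity argument on the scalar $[\bvt_i]^2$ over $t \in [t_1^\prime, t_1]$, combined with a case split on whether the component $v$ currently sits in some $\St_j$. First I would handle the easy case where $v \in \St_j$ at time $t$ for some $j$: Proposition~\ref{prop:main}(a) yields $[\bvt_j]^2 \ge 1 - \alpha$, and necessarily $j \ne i$ (otherwise $[\bvt_i]^2 \ge 1 - \alpha$ would already violate the hypothesis at $t_1^\prime$), so $[\bvt_i]^2 \le 1 - [\bvt_j]^2 \le \alpha$, which is $\ll \log d/d$ with huge slack because the parameter choice gives $\alpha \le O(\eps^2/d^4)$. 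Thus the only non-trivial case is $v \notin \St_l$ for every $l$, where I would control $\frac{\rd}{\rd t}[\bvt_i]^2$ directly via the tangent speed formula of Lemma~\ref{lemma: d vtk2}.

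For this non-trivial case I would bound $G_1, G_2, G_3$ using the pointwise and \emph{average} basis-likeness conditions of Proposition~\ref{prop:main}(a)(b). Writing $z^{(t)} = \bvt_l \bwt_l + \inner{\bvt_{-l}}{\bwt_{-l}}$ for $\bw \in \St_l$ and noting that the average bound $\Et_{l, w}\{1 - [\bwt_l]^2\} \le \alpha^2$ is sharper than the pointwise $\alpha$, Cauchy--Schwarz gives $|\Et_{l, w}\{[z^{(t)}]^3 \bwt_i\}| \le \alpha$ for $l \ne i$, hence $|G_2| \le O(|\bvt_i|\alpha)$. An analogous expansion of $z^4$ in $G_1$ yields $G_1 = 8\tat_i [\bvt_i]^4 (1 - [\bvt_i]^2) \pm O([\bvt_i]^3 \alpha + \alpha^3)$, while the part of $G_3$ one subtracts is $8 [\bvt_i]^2 \sum_{l \ne i} \tat_l [\bvt_l]^4 \ge 0$ (which only helps) plus an error of $O([\bvt_i]^2 \alpha)$. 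Combined with the $O(m \delta_1^2) \le O(\alpha^3)$ term from $\Tt_\varnothing$, I expect
\[
  \frac{\rd}{\rd t}[\bvt_i]^2 \le 8 \tat_i [\bvt_i]^4 + \text{error},
\]
where the integrated error over $\Delta t = t_1 - t_1^\prime$ will turn out to be $o(\log d/d)$.

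The decisive ingredient is Proposition~\ref{prop:main}(c), which I would invoke to conclude $\tat_i = a_i - \hatt_i = O(\beta^{(s)})$ uniformly in $i$: if $a_i \ge \beta^{(s)}/(1 - \gamma)$ then $\tat_i \le \lambda + s m \delta_1^2 = O(\lambda) = O(\beta^{(s)})$ (since $\lambda \le O(\beta^{(s)})$ by parameter choice), and otherwise $a_i \le O(\beta^{(s)})$ directly. Solving the ODE $\dot{x} \le 8 \tat_i x^2$ starting from $x(t_1^\prime) = O(\log d/d)$ over $\Delta t = t_1 - t_1^\prime = O(d/(\beta^{(s)} \log^3 d))$, the key product $8 \tat_i x(t_1^\prime) \Delta t = O(1/\log^2 d) = o(1)$, so $x(t)$ stays within a $(1 + o(1))$ factor of $x(t_1^\prime)$. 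The integrated error is at most $O(\alpha) \cdot \Delta t \le (\eps^2/d^4)(d/(\beta^{(s)} \log^3 d)) = o(\log d/d)$ using $\beta^{(s)} \ge \eps/\sqrt{d}$, which closes the continuity argument for any target constant strictly larger than the implicit constant at $t_1^\prime$.

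The main obstacle I anticipate is avoiding $\sqrt{\alpha}$-scale leakage in the cross-interaction bounds for $G_2$ and $G_3$. A naive Cauchy--Schwarz using only the pointwise bound $[\bwt_k]^2 \le \alpha$ for $\bw \in \St_l$ yields cross-term estimates of order $\sqrt{\alpha}$, which when integrated against $\Delta t = O(d/(\beta^{(s)} \log^3 d))$ would exceed $\log d/d$ in the small-$\beta^{(s)}$ regime. The fix is to always use the \emph{average} bound $\Et_{l, w}[\bwt_k^2] \le \alpha^2$ from Proposition~\ref{prop:main}(b), which improves each cross-term by one power of $\sqrt{\alpha}$. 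Under the parameter choice $\alpha \le O(\eps^2/d^4)$, this single-$\sqrt{\alpha}$-factor improvement is exactly what makes every integrated error comfortably below $\log d/d$.
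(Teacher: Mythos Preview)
Your proposal is correct and rests on the same two ingredients the paper uses—$\tat_i \le O(\beta^{(s)})$ from Proposition~\ref{prop:main}(c) together with the short duration $t_1 - t_1^\prime = o(d/(\beta^{(s)}\log d))$—but it is considerably more elaborate than necessary. The paper works directly with the compact form \eqref{eq-dynamic-bvt},
\[
  \frac{\rd [\bvt_i]^2}{\rd t}
  = 8[\bvt_i]^2\Bigl(\tat_i [\bvt_i]^2 - \sum_{j} \tat_j [\bvt_j]^4 \pm \tfrac{\|\Delta^{(t)}\|_F}{|\bvt_i|}\Bigr),
\]
drops the nonpositive term $-\sum_j \tat_j [\bvt_j]^4$, and uses the single crude estimate $\|\Delta^{(t)}\|_F = O(\alpha + m\delta_1^2)$. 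Because that Frobenius bound already encodes the average basis-likeness of Proposition~\ref{prop:main}(b) (via $\|\Tt_l - \hatt_l e_l^{\otimes 4}\|_F \le O(\hatt_l\,\Et_{l,w}\sqrt{1-[\bwt_l]^2}) \le O(\hatt_l \alpha)$), there is no need to unpack the $G_1,G_2,G_3$ decomposition of Lemma~\ref{lemma: d vtk2} or to worry separately about $\sqrt{\alpha}$-leakage in the cross terms: all of that is absorbed into $\|\Delta^{(t)}\|_F$ in one stroke. Your case split on whether $v \in \St_j$ is likewise unnecessary, since \eqref{eq-dynamic-bvt} holds for every component regardless of membership. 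One minor slip: you write $\Delta t = O(d/(\beta^{(s)}\log^3 d))$, which forgets the contribution $t_1^{(s)\triprime} = \Theta(\log(d/\alpha)/\beta^{(s)})$; the correct (and sufficient) bound is $\Delta t = o(d/(\beta^{(s)}\log d))$, and this is what the paper invokes.
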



The following two lemmas show good components (those have $\polylog(d)/d$ correlation before $t_1^\prime$) will quickly grow to have constant correlation and $\delta_1$ norm. Note that the following condition $a_k=\Omega(\beta)$ holds in our setting because when $a_i<\beta c_a$, we have $S_{i,good}=S_{i,pot}=\varnothing$ (this means for those small directions there are no components that can have $\polylog(d)/d$ correlation as shown in Lemma~\ref{lem-phase1-lottery}).
\begin{restatable}[Good component, constant correlation]{lemma}{lemphaseoneconstantgap}\label{lem-phase1-constantgap}
    In the setting of Lemma~\ref{lem:phase1}, suppose $\St_k=\varnothing$ for $t\le t_1$, $a_k=\Omega(\beta)$. If there exists $\tau_0\le t_1$ such that $[\bvx{\tau_0}_k]^2> \log^4(d)/d$ and $[\bvx{\tau_0}_i]^2= O(\log(d)/d)$ for all $i\ne k$, then for any constant $c\in (0,1)$ we have $[\bvt_k]^2> c$ and $[\bvt_i]^2= O(\log(d)/d)$ for all $i\ne k$ when $\tau_0+t_1^\dbprime \le t\le t_1$ with $t_1^\dbprime=\Theta(d/(\beta \log^3 d))$. 
\end{restatable}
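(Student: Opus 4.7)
The plan is to run a continuity argument from time $\tau_0$, maintaining two invariants: (i) $[\bvt_k]^2$ is nondecreasing along a suitable lower envelope, and (ii) $[\bvt_i]^2 = O(\log d / d)$ for all $i \ne k$. I would split the analysis into a growth phase taking $[\bvt_k]^2$ from $\log^4 d / d$ up to $1/2$, and a contraction phase pushing it from $1/2$ to the target constant $c$. The hypothesis $\Sst_k = \varnothing$ on the whole interval gives $\tat_k = a_k = \Omega(\beta)$, which drives the growth.

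For the growth phase I would plug Proposition~\ref{prop:main} into the tangent formula of Lemma~\ref{lemma: d vtk2} to obtain
\[
\frac{\rd}{\rd t}[\bvt_k]^2
= 8 a_k (1-[\bvt_k]^2)[\bvt_k]^4
- 8[\bvt_k]^2 \sum_{i \ne k}\tat_i [\bvt_i]^4 \pm O\bigl(\sqrt{\alpha}\,|\bvt_k| + m\delta_1^2\bigr),
\]
where the $\sqrt{\alpha}$ error arises because any $\wt \in \Sst_i$ satisfies $|\bwt_k| \le \sqrt{\alpha}$ by Proposition~\ref{prop:main}, so the cross term $G_2$ in Lemma~\ref{lemma: d vtk2} is bounded by $O(\sqrt{\alpha}\,|\bvt_k|)$. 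Under invariant~(ii) the subtracted sum is $O(\log^2 d / d^2)$, and by the parameter choices both error terms are dominated by the leading $\Omega(\beta \log^8 d / d^2)$ term once $[\bvt_k]^2 \ge \log^4 d/d$. Consequently $\frac{\rd}{\rd t}[\bvt_k]^2 \ge 4 a_k [\bvt_k]^4$, and the comparison ODE $\dot y \ge 4 a_k y^2$ reaches $1/2$ in time $O(1/(a_k y_0)) = O(d/(\beta \log^4 d))$. In the contraction phase, setting $[\bvt_k]^2 = 1-\eps$ converts the leading term into an exponential contraction of $\eps$ at rate $\Omega(\beta)$, adding only $O(1/\beta)$ time. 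Both sub-budgets fit comfortably inside $t_1^\dbprime = \Theta(d/(\beta \log^3 d))$.

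For invariant~(ii) on $i \ne k$, the same tangent formula yields
\[
\frac{\rd}{\rd t}[\bvt_i]^2
\le 8[\bvt_i]^2\bigl(a_i [\bvt_i]^2 - \tat_k [\bvt_k]^4\bigr) + O\bigl(\sqrt{\alpha}\,|\bvt_i| + m\delta_1^2\bigr),
\]
so the drift is negative whenever $\tat_k [\bvt_k]^4$ dominates $a_i [\bvt_i]^4$. At $\tau_0$ the ratio $[\bvt_k]^4/[\bvt_i]^4$ is at least $\log^6 d$, and this domination is only strengthened as $[\bvt_k]^2$ grows, so a continuity argument prevents $[\bvt_i]^2$ from rising past its $O(\log d / d)$ ceiling. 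The main obstacle throughout is that the residual $T^* - \Tt$ is not static: other already-discovered directions $\ell \ne k$ may acquire new components in $\Sst_\ell$ or have their $\hatt_\ell$ evolve during $[\tau_0, \tau_0 + t_1^\dbprime]$. What rescues us is the near-orthogonal structure guaranteed by Proposition~\ref{prop:main} together with the cross-interaction estimates of Section~\ref{sec: induction, preliminaries}, which uniformly bound these contributions by $O(\sqrt{\alpha} + m\delta_1^2)$---negligible compared with the leading growth term.
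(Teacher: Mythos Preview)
Your growth phase for $[\bvt_k]^2$ is essentially the paper's argument and is correct: once $[\bvt_k]^2\ge\log^4 d/d$ and the off-directions stay $O(\log d/d)$, the tangent formula gives $\frac{\rd}{\rd t}[\bvt_k]^2 \ge \Omega(a_k)[\bvt_k]^4$, and either your comparison ODE or the paper's cruder exponential bound reaches any fixed constant well within $t_1^\dbprime$.

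The gap is in your invariant~(ii). From your own displayed inequality, the sign of the bracket is governed by $a_i[\bvt_i]^2$ versus $\tat_k[\bvt_k]^4$, not by $a_i[\bvt_i]^4$ versus $\tat_k[\bvt_k]^4$ as your text then asserts. At time $\tau_0$ with $[\bvt_i]^2\asymp \log d/d$ and $[\bvt_k]^2=\log^4 d/d$, one has
\[
a_i[\bvt_i]^2 \;\asymp\; \beta\,\frac{\log d}{d}
\quad\text{while}\quad
\tat_k[\bvt_k]^4 \;\asymp\; \beta\,\frac{\log^8 d}{d^2},
\]
so for $d>\log^7 d$ the drift at the $O(\log d/d)$ ceiling is \emph{positive}, not negative, and your continuity argument does not close. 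The domination you need only kicks in after $[\bvt_k]^2$ has already grown to order $\sqrt{\log d/d}$, and nothing in your scheme controls $[\bvt_i]^2$ during that initial stretch.

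The paper does not attempt a negative-drift argument here. Instead it invokes Lemma~\ref{lem-phase1-remainsmall}, a pure time-budget bound: over any window of length $o(d/(\beta\log d))$ one has $\frac{\rd}{\rd t}[\bvt_i]^2 \le [\bvt_i]^2\cdot O(\beta\log d/d)$ whenever $[\bvt_i]^2\ge\Omega(\log d/d)$, so the total multiplicative growth is $e^{o(1)}<2$. Since $t_1^\dbprime+t_1^\triprime=o(d/(\beta\log d))$, this directly gives $[\bvt_i]^2=O(\log d/d)$ throughout, after which the growth phase for $[\bvt_k]^2$ proceeds exactly as you describe. Replacing your drift argument for (ii) with this time-budget bound fixes the proof.
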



\begin{restatable}[Good component, norm growth]{lemma}{lemphaseonenormgrow}\label{lem-phase1-normgrow}
     In the setting of Lemma~\ref{lem:phase1}, suppose $\St_k=\varnothing$ for $t\le t_1$, $a_k=\Omega(\beta)$. If there exists $\tau_0^\prime\le t_1$ such that $[\bvx{\tau_0^\prime}_k]^2> c$ and $[\bvx{\tau_0^\prime}_i]^2= O(\log(d)/d)$ for all $i\ne k$, then we have $\n{\vt}_2\ge \delta_1$ for some  $\tau_0^\prime \le t\le \tau_0^\prime+t_1^\triprime$ with $t_1^\triprime=\Theta(\log(d/\alpha)/\beta)$. 
\end{restatable}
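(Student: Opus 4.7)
The strategy is to invoke the norm-growth identity of Lemma~\ref{lemma: d |v|2},
\[
\frac{1}{\|\vt\|^2}\frac{\rd}{\rd t}\|\vt\|^2
= 2\sum_i a_i [\bvt_i]^4 - 2\sum_i \hat{a}_i^{(t)} \E_{i,w}\{[\zt]^4\} - 2T^{(t)}_\varnothing([\bvt]^{\otimes 4}) - \lambda,
\]
and show its right-hand side stays $\ge \Omega(\beta)$ on $[\tau_0', \tau_0' + t_1^{\triprime}]$. Combined with a mild lower bound on $\|v^{(s,\tau_0')}\|$, the exponential growth factor $e^{\Omega(\beta)\cdot t_1^{\triprime}} = (d/\alpha)^{\Theta(1)}$ carries $\|\vt\|$ past $\delta_1$ before time $\tau_0' + t_1^{\triprime}$, since $\delta_0/\delta_1 = \Theta(\alpha/\log^{1/2}d)$ by parameter choice.

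To apply this growth bound I first propagate the correlation structure: throughout the interval in question, $[\bvt_k]^2 \ge c$ is preserved and $[\bvt_i]^2 = O(\log d/d)$ for all $i \ne k$. The latter follows from an argument parallel to Lemma~\ref{lem-phase1-remainsmall}. For the former, I use the tangent-speed formula of Lemma~\ref{lemma: d vtk2}: the hypothesis $\St_k = \varnothing$ forces $\hat{a}_k^{(t)} = 0$, so the local term $G_1$ reduces to $8 a_k (1 - [\bvt_k]^2) [\bvt_k]^4$, which is strictly positive and of order $\Omega(\beta)$ whenever $[\bvt_k]^2 \in [c, 1)$; the cross terms $G_2$ and $G_3$ are controlled via Lemma~\ref{lemma: inner, bvk>=, any w} using the smallness of $[\bvt_i]$ for $i \ne k$. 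Under these invariants and $\hat{a}_k^{(t)} = 0$, the four terms on the right-hand side of the growth equation are bounded as $\sum_i a_i [\bvt_i]^4 \ge a_k c^2 = \Omega(\beta)$ (using $a_k = \Omega(\beta)$), $\sum_{i \ne k} \hat{a}_i^{(t)} \E_{i,w}\{[\zt]^4\} = O(\alpha^2 + \log^2 d/d^2)$, $T^{(t)}_\varnothing([\bvt]^{\otimes 4}) = O(m\delta_1^2)$, and $\lambda = O(\beta/\sqrt{d}) = o(\beta)$. Hence the log-derivative is $\Omega(\beta)$ as desired.

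The main obstacle is a lower bound on the starting norm $\|v^{(s,\tau_0')}\|$. Although $\|v^{(s,0)}\| = \Theta(\delta_0)$ from (re)initialization, the regularizer $-\lambda$ can shrink $\|\vt\|$ during the early part of Phase~1 where $[\bvt_k]^2$ is still small and the positive contribution $2 a_k [\bvt_k]^4$ is dominated by $\lambda$. The key observation is that the tensor-power-like dynamics from Lemmas~\ref{lem-phase1-polyloggap} and~\ref{lem-phase1-constantgap} grow $[\bvt_k]^2$ from $\Gamma_k^{(s)}$ to a constant, and the antiderivative identity $2 a_k [\bvt_k]^4 \rd t \approx \tfrac{1}{4} \rd [\bvt_k]^2$ along this trajectory telescopes to an $\Omega(1)$ contribution, offsetting most of $-\lambda \tau_0'$. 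The remaining shrinkage is only by a polynomial factor in $d/\alpha$, which is absorbed by enlarging the hidden constant in $t_1^{\triprime} = \Theta(\log(d/\alpha)/\beta)$. Integrating the $\Omega(\beta)$ log-growth-rate then pushes $\|\vt\|$ past $\delta_1$ within the allotted window, completing the proof.
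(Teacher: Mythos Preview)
Your overall strategy---maintain $[\bvt_k]^2 \ge c$ and the off-diagonal smallness, then use the norm-growth formula to get a $\Omega(\beta)$ exponential rate---is exactly what the paper does. Two points of divergence are worth noting.

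First, the paper does not re-derive the tangent-speed bounds to propagate $[\bvt_k]^2 \ge c$; it simply cites Lemma~\ref{lem-phase1-constantgap}, whose conclusion already asserts that once $[\bvt_k]^2$ exceeds $c$ it stays above $c$ for the remainder of Phase~1. Your $G_1,G_2,G_3$ analysis is correct but redundant.

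Second, on the starting-norm issue, the paper's argument is much more blunt than yours. It writes the log-derivative of $\|\vt\|^2$ as $4\sum_i \tat_i[\bvt_i]^4 \pm \|\Deltat\|_F - 2\lambda$, observes that $\tat_i \ge 0$ (Proposition~\ref{prop:main}(c)) so the first sum is nonnegative, and then simply asserts that $\|\Deltat\|_F + 2\lambda$ is small enough that integrating over $[0,\tau_0']\subseteq[0,t_1]$ costs at most a factor of~$2$, yielding $\|v^{(\tau_0')}\|\ge \delta_0/2$. Your antiderivative idea---using $\tfrac{\rd}{\rd t}[\bvt_k]^2 \approx 8a_k[\bvt_k]^4$ to turn $\int 4a_k[\bvt_k]^4\,\rd t$ into a telescoping sum---is clever but does not actually improve on this: the telescoped integral is only $O(1)$, the same order as what the crude nonnegativity bound already gives, so it cannot ``offset most of $-\lambda\tau_0'$'' if that quantity were large. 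In short, the paper absorbs the entire shrinkage into ``$\lambda$ small enough'' rather than trying to cancel it trajectory-wise, and your extra step neither helps nor hurts. After this point the two proofs are identical: $\tat_k = a_k = \Omega(\beta)$ and $[\bvt_k]^4 \ge c^2$ give log-growth $\Omega(\beta)$, and since $\log(\delta_1/\delta_0)=\Theta(\log(1/\alpha))$ the norm crosses $\delta_1$ within $t_1^{\triprime}=\Theta(\log(d/\alpha)/\beta)$.
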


Recall from Lemma \ref{lem-phase1-polyloggap} we know there is at most one coordinate that can be large. Thus, intuitively we can expect if the norm is above certain threshold, the component will become basis-like, since this large direction will contribute most of the norm and other directions will remain small. In fact, we can show (1) norm of ``small and dense'' components (e.g., those are not in $S_{pot}$) is smaller than $\delta_1$; (2) once a component reaches norm $\delta_1$, it is a basis-like component.
        
        

        

\begin{restatable}{lemma}{lemphaseonenormthreshold}\label{lem-phase1-normthreshold}
    In the setting of Lemma~\ref{lem:phase1}, we have
    \begin{enumerate}
        \item if $\ninf{\bvt}^2\le\log^4 (d)/d$ for all $t\le t_1$, then $\n{\vt}_2=O(\delta_0)$ for all $t\le t_1$.
        
        \item Let $\tau_0=\inf\{t\in[0,t_1]|\ninf{\bvt}^2\ge \log^4 d/d\}$. Suppose $[\bvx{\tau_0}_k]^2\ge \log^4 d/d$ and $[\bvx{\tau_0}_i]^2=O( \log d/d)$ for $i\ne k$. If there exists $\tau_1$ such that $\tau_0 < \tau_1 \le t_1$ and $\n{\vx{\tau_1}}_2\ge \delta_1$ for the first time, then there exists $k\in[d]$ such that $[\bvx{\tau_1}_k]^2\ge 1-\alpha^2$ if $\hatt_k\le \alpha$ for $t\le \tau_1$ and $[\bvx{\tau_1}_k]^2\ge 1-\alpha$ otherwise.
    \end{enumerate}
\end{restatable}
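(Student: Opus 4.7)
For Part 1, the plan is to apply Lemma~\ref{lemma: d |v|2}, which gives the exact norm growth rate
\[
\frac{\rd \ln\ns{\vt}}{\rd t} = 2(T^*-\Tt)([\bvt]^{\otimes 4}) - \lambda - O(m\delta_1^2).
\]
Under the hypothesis $\ninf{\bvt}^2 \le \log^4(d)/d$, I would bound
$\sum_i [\bvt_i]^4 \le \ninf{\bvt}^2 \cdot \sum_i [\bvt_i]^2 \le \log^4(d)/d$, then use the near-diagonal decomposition $T^* - \Tt = \sum_i \tat_i e_i^{\otimes 4} + \Delta$ together with Lemma~\ref{lemma: Ez4 approx vk4} to control cross-terms, obtaining a rate bound of $O(\log^8(d)/d^2)$ for the positive contribution (the sizable negative $-\lambda$ and negative $\sum_i \hatt_i \Et_{i,w}\{[\zt]^4\}$ absorb most of the growth, while $\Tt_\varnothing$ contributes at most $O(m\delta_1^2)$ by Proposition~\ref{prop:main}(d)). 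Integrating the resulting bound on $\frac{\rd\ln\ns{\vt}}{\rd t}$ over $[0,t_1]$ with $t_1 = O(d/(\beta\log d))$ gives a constant log-growth, so $\n{\vt} = O(\delta_0)$.

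For Part 2, my strategy is to split the interval $[\tau_0,\tau_1]$ into two sub-phases. First, I would argue that for every $i \ne k$ the coordinate $[\bvt_i]^2$ stays at $O(\log(d)/d)$ throughout $[\tau_0,\tau_1]$: since $v \notin S_{bad}$ by the initialization guarantee (Lemma~\ref{assumption-phase1-init}), we have $[\bvO_i]^2 \le \Gamma_i - \rho_i$ for $i \ne k$, so Lemma~\ref{lem-phase1-polyloggap}(1) pins $[\bvt_i]^2 = O(\log d/d)$ for $t \le t_1'$, and Lemma~\ref{lem-phase1-remainsmall} extends this to all $t \le t_1$. This already gives $[\bvt_k]^2 \ge 1 - O(d\cdot \log d/d) \ne 1 - \alpha$ (too weak), so we need finer control near direction $k$.

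To obtain the tight correlation bound, I would analyze the deficit $\delta^{(t)} := 1 - [\bvt_k]^2 = \sum_{i\ne k}[\bvt_i]^2$ using the tangent-speed formula of Lemma~\ref{lemma: d vtk2} (or directly Lemma~\ref{lemma: individual bound} applied heuristically once $[\bvt_k]^2$ is large). Once $[\bvt_k]^2 \ge $ const (which holds by time $\tau_0 + t_1''$ via Lemma~\ref{lem-phase1-constantgap}), the cross-interaction terms $G_2, G_3$ can be controlled using the smallness of $[\bvt_i]^2$ for $i \ne k$, and the main term yields $\frac{\rd}{\rd t}\delta^{(t)} \le -8\tat_k\, \delta^{(t)} \cdot (1-o(1))$. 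Hence $\delta^{(t)}$ decays exponentially at rate $\Omega(\tat_k)$. Since $\tat_k \ge \lambda/6$ always, and if additionally $\hatt_k \le \alpha$ then $\tat_k \ge a_k - \alpha = \Omega(\beta)$ by Assumption~\ref{assumption: induction, oracle}(b), while by Lemma~\ref{lem-phase1-normgrow} the norm needs $\Omega(\log(\delta_1/\delta_0)/\beta) = \Omega(\log(d/\alpha)/\beta)$ time to reach $\delta_1$, the deficit has enough time to drop to $\alpha$ (resp.~$\alpha^2$) before $\tau_1$.

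The main obstacle is reconciling the two cases in the conclusion ($1-\alpha^2$ versus $1-\alpha$) with the exact time at which the norm first crosses $\delta_1$. The norm growth factor from $\Theta(\delta_0)$ to $\delta_1$ gives only $\log(\delta_1/\delta_0) = \Theta(\log(1/\alpha) + \log\log d)$ worth of exponential time, whereas reducing the deficit to $\alpha^2$ requires roughly $2\log(1/\alpha)/\tat_k$ time. The case $\hatt_k \le \alpha$ gives the larger $\tat_k = \Omega(a_k) = \Omega(\beta)$, which makes the deficit decay faster and matches the timescale for the tighter $1-\alpha^2$ bound; in the complementary case $\hatt_k > \alpha$ only $\tat_k \ge \lambda/6 = \Omega(\sqrt\alpha)$ is available, and I would verify that this still suffices for the weaker $1-\alpha$ bound using the choice $\delta_1 \le O(\alpha^{3/2}/\sqrt m)$. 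Carefully checking this matching of timescales, together with ruling out norm overshoot at an intermediate moment when the correlation is still only modestly large, is the subtle part of the argument.
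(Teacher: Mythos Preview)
Your bound $\sum_i[\bvt_i]^4 \le \ninf{\bvt}^2 \le \log^4(d)/d$ is too weak: even using $\sum_i\tat_i\le 1$, the rate $O(\log^8 d/d^2)$ integrated over $t_1=\Theta(d/(\beta\log d))$ gives $O(\log^7 d/(d\beta))$, which blows up when $\beta$ is small. The paper instead invokes Lemma~\ref{lem-phase1-bv4} (only $O(\log d)$ coordinates can exceed $c_e\log d/(2d)$, hence $\sum_i[\bvt_i]^4\le c_e\log d/d$) together with $\tat_i\le\beta/(1-\gamma)$ from the epoch structure; this yields rate $O(\beta\log d/d)$, which integrates to $O(1)$.

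\textbf{Part 2.} Your approach --- tracking the normalized deficit $\delta^{(t)}=1-[\bvt_k]^2$ and arguing exponential decay --- hits an additive error floor and cannot reach $1-\alpha^2$. The tangent-speed formula (\eqref{eq-dynamic-bvt} or Lemma~\ref{lemma: d vtk2}) has additive error $\Theta(\|\Deltat\|_F)=\Theta(\alpha)$; even the refined Lemma~\ref{lemma: individual bound} (valid only once $[\bvt_k]^2\ge 1-\alpha$) reduces it merely to $O(\alpha^{1.5})$. The deficit therefore cannot decay below $O(\alpha^{1.5}/\tat_k)$, so reaching $\alpha^2$ would need $\tat_k\gtrsim\alpha^{-1/2}$, which is impossible. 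Your timescale-matching heuristic ignores this floor.

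The paper takes a genuinely different route: it tracks the \emph{unnormalized} off-coordinates $[\vt_i]^2$ for $i\ne k$. At $\tau_0$, Part~1 gives $\|\vx{\tau_0}\|=O(\delta_0)$, so $[\vx{\tau_0}_i]^2=O(\delta_0^2\log d/d)$. A bootstrap then shows these stay bounded by a constant multiple of their initial value throughout $[\tau_0,\tau_1]$: in Case~1 this uses the refined Lemma~\ref{lem-phase1-calculation}, which exploits $\hatt_k\le\alpha$ to get error $O(\alpha^2+\alpha(1-[\bvt_k]^2)^{1.5})\|\vt\|/|\vt_i|$; in Case~2 the cruder $\|\Deltat\|_F\|\vt\|/|\vt_i|$ suffices to keep $[\vt_i]^2\le\delta_1^2\alpha/d$. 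At $\tau_1$ the norm is $\delta_1$, so $1-[\bvx{\tau_1}_k]^2=\sum_{i\ne k}[\vx{\tau_1}_i]^2/\delta_1^2\le O(\delta_0^2\log d/\delta_1^2)$, and the choice $\delta_1=\Theta(\delta_0\sqrt{\log d}/\alpha)$ makes this $O(\alpha^2)$ (resp.\ $\alpha$). The point is that smallness of the unnormalized off-coordinates is \emph{inherited from initialization and preserved}, while the norm growth to $\delta_1$ does all the work of shrinking the normalized deficit --- no decay-to-a-floor argument is needed.
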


One might worry that a component can first exceeds the $\delta_1$ threshold then drop below it and eventually gets re-initialized. Next, we show that re-initialization at the end of Phase 1 cannot remove all the components in $S^{(t_1)}_k.$

\begin{restatable}{lemma}{lemabovedelta}\label{lem:above_delta1}
If $S^{(0)}_k = \varnothing$ and $S^{(t')}_k \neq \varnothing$ for some $t'\in (0, t_1]$, we have $S^{(t_1)}_k\neq \varnothing$ and $\hat{a}_k^{(t_1)}\geq \delta_1^2.$
\end{restatable}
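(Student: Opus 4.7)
\textbf{Proof plan for Lemma~\ref{lem:above_delta1}.}

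The strategy is a continuity argument tracking the maximum squared norm among components in $S_k^{(t)}$. Define
\[
  f(t) := \max_{v \in S_k^{(t)}} \|v^{(t)}\|^2.
\]
I will show that $f(t) \geq \delta_1^2$ throughout $[t', t_1)$, so at least one component survives the reinitialization at $t_1$ with squared norm at least $\delta_1^2$, giving both $S_k^{(t_1)} \neq \varnothing$ and $\hat a_k^{(t_1)} \geq \delta_1^2$.

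First, by Assumption~\ref{assumption: induction, oracle}(b), the hypothesis $S_k^{(t')} \neq \varnothing$ for some $t' \in (0, t_1]$ forces $a_k \geq c\beta^{(s)}$ for some absolute constant $c > 0$ (otherwise no component could have entered $S_k$ in phase~1). Let $t'$ be the earliest such entry time; by the definition of $S_k$, at $t'$ some component lands on the norm threshold $\delta_1$ with $[\bar v_k^{(t')}]^2 \geq 1 - \alpha^2$. Crucially, during phase~1 there are no reinitializations, so once a component enters $S_k$ it remains in $S_k^{(t)}$ for all $t \in [t', t_1)$. Note that $f$ can only jump \emph{upward} (when a new component enters $S_k$, contributing norm exactly $\delta_1$); between jumps, $f$ evolves continuously as the maximum of smooth trajectories.

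Suppose for contradiction that $f$ ever drops strictly below $\delta_1^2$ on $(t', t_1)$. Since jumps are upward, there must be a first time $\tau \in (t', t_1)$ at which $f(\tau) = \delta_1^2$ and $f$ is about to decrease, i.e.~some maximizer $v^* \in S_k^{(\tau)}$ satisfies $\|v^*(\tau)\|^2 = \delta_1^2$ and $\frac{d^+}{dt}\|v^*\|^2\big|_\tau \leq 0$. At this moment every $v \in S_k^{(\tau)}$ has squared norm $\leq \delta_1^2$, so
\[
  \hat a_k^{(\tau)} \leq |S_k^{(\tau)}|\,\delta_1^2 \leq m\delta_1^2 \leq \alpha^3 \ll c\beta^{(s)},
\]
which yields $\tilde a_k^{(\tau)} = a_k - \hat a_k^{(\tau)} \geq c\beta^{(s)}/2$. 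By Proposition~\ref{prop:main}(a), $v^*$ is basis-like: $[\bar v^*_k]^2 \geq 1 - \alpha$. Applying Lemma~\ref{lemma: d |v|2} and bounding the cross-direction contributions via Lemma~\ref{lemma: cross interaction, individual} (which gives $\mathbb{E}_{l,w}[z^4] = O(\alpha^2)$ for $l \neq k$, so $\sum_{l \neq k} \hat a_l \mathbb{E}_{l,w}[z^4] \leq O(\alpha^2)$ using $\sum_l \hat a_l \leq 1$), together with $T_\varnothing(\bar v^{*\otimes 4}) \leq m\delta_1^2$, I get
\[
  \frac{1}{2\|v^*\|^2}\frac{d}{dt}\|v^*\|^2 \;\geq\; \tilde a_k^{(\tau)}(1-\alpha)^2 - O(\alpha^2) - m\delta_1^2 - \lambda/2 \;\geq\; c\beta^{(s)}/4,
\]
where the last step uses $\lambda = O(\beta^{(s)})$ with small enough constant, $\alpha \leq \lambda^2 \ll \lambda$, and $m\delta_1^2 \leq \alpha^3 \ll \lambda$. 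Therefore $\frac{d}{dt}\|v^*\|^2|_\tau > 0$, contradicting $\frac{d^+}{dt}\|v^*\|^2|_\tau \leq 0$.

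Hence $f(t) \geq \delta_1^2$ for all $t \in [t', t_1)$. Taking $t \uparrow t_1$, some component $v^*$ has $\|v^*(t_1^-)\|^2 \geq \delta_1^2$; since reinitialization only affects components with norm below $\delta_1$, $v^*$ survives and stays in $S_k^{(t_1)}$, giving $\hat a_k^{(t_1)} \geq \|v^*(t_1)\|^2 \geq \delta_1^2$. The main subtlety is the non-smoothness of the max function and the possible upward jumps of $f$ when new components enter $S_k$; both are handled cleanly because jumps only help and the right-derivative computation at a tight time $\tau$ suffices to derive the contradiction.
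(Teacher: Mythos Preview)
Your argument is correct, but it follows a different line than the paper's. The paper proves Lemma~\ref{lem:above_delta1} via a case split on the residual $\tilde a_k^{(t)}$: either $\tilde a_k^{(t)}=\Omega(\lambda)$ throughout Phase~1, in which case Lemma~\ref{lem:norm_individual} shows every $\|v^{(t)}\|^2$ with $v\in S_k^{(t)}$ is nondecreasing, so the first entrant stays above $\delta_1$; or $\tilde a_k^{(t)}$ drops to $O(\lambda)$ at some time, in which case Lemma~\ref{lemma: d hattk, lower bound} keeps it at $O(\lambda)$ thereafter, forcing $\hat a_k^{(t_1)} = a_k - O(\lambda) = \Omega(\epsilon/\sqrt{d})$, which survives the loss of at most $m\delta_1^2$ at reinitialization.

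Your route avoids the case split entirely by tracking $f(t)=\max_{v\in S_k^{(t)}}\|v\|^2$ directly: at any putative first crossing time $\tau$ with $f(\tau)=\delta_1^2$, the constraint $\hat a_k^{(\tau)}\le m\delta_1^2$ falls out automatically (since every element of $S_k^{(\tau)}$ then has norm at most $\delta_1$), which immediately yields $\tilde a_k^{(\tau)}\approx a_k\ge c\beta^{(s)}\gg\lambda$ and hence a strictly positive norm growth rate. This is cleaner in that it does not need to invoke Lemma~\ref{lemma: d hattk, lower bound} or argue about the global behavior of $\tilde a_k^{(t)}$; the contradiction is entirely local to the crossing time. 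The paper's approach, on the other hand, gives slightly more information in the second case (a quantitative lower bound $\hat a_k^{(t_1)}=\Omega(\epsilon/\sqrt d)$ rather than just $\delta_1^2$), though that stronger bound is not needed for the lemma as stated.
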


Given above lemma, we now are ready to prove Lemma~\ref{lem-phase1-summary-trajectory} and the main lemma for Phase 1.

\lemphaseonesummarytrajectory*

\begin{proof}
    We show statements one by one.
    \paragraph{Part 1.} The statement follows from Lemma~\ref{lem-phase1-lottery}, Lemma~\ref{lem-phase1-remainsmall} and Lemma~\ref{lem-phase1-normthreshold}.
    \paragraph{Part 2.}
    Suppose $\St_k=\varnothing$ for all $t\le t_1$. By Lemma~\ref{assumption-phase1-init} we know $S_{k,good}\neq \varnothing$. Then by Lemma~\ref{lem-phase1-lottery}, Lemma~\ref{lem-phase1-constantgap} and Lemma~\ref{lem-phase1-normgrow}, we know there exists $v$ such that $\n{\vt}_2\ge \delta_1$ within time $t_1=t_1^\prime + t_1^\dbprime + t_1^\triprime$. Then by Lemma~\ref{lem-phase1-normthreshold} we know $[\bvt_k]^2\ge 1-\alpha$. Therefore, we know there exists $t\le t_1$ such that $\St_k\neq\varnothing$. Finally we know it will keep until $t_1$ by Lemma~\ref{lem:above_delta1}.

    
    \paragraph{Part 3.} 
    The statement directly follows from Lemma~\ref{lem-phase1-normthreshold} and Lemma~\ref{lem:above_delta1}.
\end{proof}

\lemphaseone*

\begin{proof}

    By Lemma~\ref{assumption-phase1-init} we know the number of reinitialized components are always $\Theta(m)$ so Lemma~\ref{assumption-phase1-init} holds with probability $1-1/\poly(d)$ for every epoch. In the following assume Lemma~\ref{assumption-phase1-init} holds. The second and third statement directly follow from Lemma~\ref{assumption-phase1-init} and Lemma~\ref{lem-phase1-summary-trajectory} as $S_{k,pot}=\varnothing$ when $a_k\le \beta c_a$. For the first statement, combing the proof in Appendix~\ref{sec: appendix, induction hypothesis} and Lemma~\ref{lem-phase1-normthreshold}, we know the statement holds (see also the remark at the beginning of Appendix~\ref{sec: appendix, induction hypothesis}).
\end{proof}


        
        

\subsubsection{Preliminary}
To simplify the proof in this section, we introduce more notations and give the following lemma.
\begin{lemma}
    In the setting of Lemma~\ref{lem:phase1}, we have
        $T^* - \Tt =\sum_{i\in [d]} \tat_i e_i^{\otimes 4} + \Deltat$, where $\tat_i=a_i-\hatt_i$ and $\n{\Delta}_F =O(\alpha+m\delta_1^2)$. We know $\ta^{(0)}_i=a_i$ if $S_i^{(s,0)}= \varnothing$ and $\tat_i=\Theta(\lambda)$ if $S_i^{(s,0)}\neq \varnothing$. That is, the residual tensor is roughly the ground truth tensor $T^*$ with unfitted directions at the beginning of this epoch and plus a small perturbation $\Delta$.
        
\end{lemma}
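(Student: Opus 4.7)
The plan is to partition the columns of $W^{(t)}$ by their membership in the sets $S_i^{(t)}$ and $S_\varnothing^{(t)}$ and reduce the claim to two tensor-alignment estimates. Writing $T^{(t)} = \sum_{i=1}^d T_i^{(t)} + T_\varnothing^{(t)}$ with $T_i^{(t)} = \sum_{w \in S_i^{(t)}} w^{\otimes 4}/\ns{w}$, direct algebra yields
\[
\Delta^{(t)} = (T^* - T^{(t)}) - \sum_i \tilde{a}_i^{(t)} e_i^{\otimes 4} = \sum_{i=1}^d \bigl(\hat{a}_i^{(t)} e_i^{\otimes 4} - T_i^{(t)}\bigr) - T_\varnothing^{(t)},
\]
so it suffices to bound each block separately.

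The $T_\varnothing^{(t)}$ piece is the easy one: condition (d) of Proposition~\ref{prop:main} forces $\n{w} \le \delta_1$ for every $w \in S_\varnothing^{(t)}$, so $\fn{w^{\otimes 4}/\ns{w}} = \ns{w} \le \delta_1^2$, and the triangle inequality gives $\fn{T_\varnothing^{(t)}} \le m\delta_1^2$. For each $S_i^{(t)}$ block, the key elementary fact is that for any unit vector $\bar{w}$ with $\bar{w}_i^2 \ge 1 - \alpha$, flipping sign so that $\bar{w}_i \ge 0$ (which leaves $\bar{w}^{\otimes 4}$ unchanged) gives $\ns{\bar{w} - e_i} = 2(1-\bar{w}_i) \le 2(1-\bar{w}_i^2)$, hence $\fn{\bar{w}^{\otimes 4} - e_i^{\otimes 4}} \le 4\n{\bar{w} - e_i} = O(\sqrt{1-\bar{w}_i^2})$. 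Applying this inside a triangle inequality,
\[
\fn{T_i^{(t)} - \hat{a}_i^{(t)} e_i^{\otimes 4}} \le O(1) \cdot \hat{a}_i^{(t)} \cdot \E_{i,w}\sqrt{1 - [\bar{w}_i]^2}.
\]

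The main obstacle is sharpening the expected alignment error from the naive $O(\sqrt{\alpha})$ down to $O(\alpha)$; the individual bound in condition (a) alone would be off by a square root and break later parts of the paper. I would invoke the average bound in condition (b) of Proposition~\ref{prop:main} combined with Jensen's inequality:
\[
\E_{i,w}\sqrt{1 - [\bar{w}_i]^2} \le \sqrt{\E_{i,w}(1 - [\bar{w}_i]^2)} \le \sqrt{\alpha^2 + 4sm\delta_1^2} = O(\alpha),
\]
where the last step uses $m\delta_1^2 \le \alpha^3$ from Assumption~\ref{assumption: induction, oracle}(d) and $s = O(\log(d/\eps))$. Summing over $i$ and using $\sum_i \hat{a}_i^{(t)} \le 1$ yields $\fn{\sum_i (\hat{a}_i^{(t)} e_i^{\otimes 4} - T_i^{(t)})} = O(\alpha)$, which together with the $T_\varnothing^{(t)}$ bound produces $\fn{\Delta^{(t)}} = O(\alpha + m\delta_1^2)$. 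For the side statements: $S_i^{(s,0)} = \varnothing$ forces $\hat{a}_i^{(s,0)} = 0$ and hence $\tilde{a}_i^{(s,0)} = a_i$; otherwise direction $i$ was discovered in a prior epoch, so $a_i \ge \beta^{(s-1)} = \beta^{(s)}/(1-\gamma)$, and condition (c) of Proposition~\ref{prop:main} pins $\tilde{a}_i^{(s,t)} \in [\lambda/6 - sm\delta_1^2,\ \lambda + sm\delta_1^2] = \Theta(\lambda)$ by the same parameter regime.
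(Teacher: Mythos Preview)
Your argument for the main bound $\fn{\Delta^{(t)}} = O(\alpha + m\delta_1^2)$ is correct and follows the paper's decomposition exactly; you are simply more explicit about why the per-block error $\fn{T_i^{(t)} - \hat a_i^{(t)} e_i^{\otimes 4}}$ is $O(\hat a_i^{(t)}\alpha)$ rather than $O(\hat a_i^{(t)}\sqrt{\alpha})$, correctly invoking Jensen against the average bound in Proposition~\ref{prop:main}(b), whereas the paper just asserts the bound.

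One minor slip in the side claim: you write that $S_i^{(s,0)} \ne \varnothing$ forces $a_i \ge \beta^{(s-1)}$, but Lemma~\ref{lem:phase1}(3) only gives $a_i \ge C\beta^{(s')}$ with $C < 1$ at the discovery epoch $s'$, and when $s' = s-1$ this is not enough to trigger the upper-bound clause of Proposition~\ref{prop:main}(c). The correct route to $\tilde a_i^{(t)} = O(\lambda)$ is via Lemma~\ref{lem:phase2}(2) --- Phase~2 of the discovery epoch drives $\tilde a_i \le 2\lambda$ --- together with the flow estimate $\frac{1}{\hat a_k}\frac{\rd}{\rd t}\hat a_k = 2\tilde a_k - \lambda \pm O(\alpha^2)$ from Lemma~\ref{lemma: d hattk, upper and lower bound}, which then keeps $\tilde a_i$ pinned near $\lambda$ modulo $O(m\delta_1^2)$ impulses. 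The paper's own proof does not address this side claim at all, so you are not diverging from it here.
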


\begin{proof}
    We can decompose $\Tt$ as 
    \[\Tt = \sum_{i\in[d]}\Tt_i + \Tt_\varnothing= \sum_{i\in[d]}\left(\hatt_ie_i^{\otimes 4} + (\Tt_i-\hatt_ie_i^{\otimes 4})\right)+\Tt_\varnothing,\]
    where $\Tt_i=\sum_{w\in S_i^{(t)}} \n{w}^2\bar{w}^{\otimes 4}$ and $\Tt_\varnothing=\sum_{w\in S_\varnothing^{(t)}} \n{w}^2\bar{w}^{\otimes 4}$. Note that when $S_i^{(t)}=\varnothing$, $\hatt_i=0$ and when $S_i^{(t)}\neq\varnothing$ we have $\fn{(\Tt_i-\hatt_ie_i^{\otimes 4})}=O(\hatt_i\alpha)$ and $\fn{\Tt_\varnothing}\leq m\delta_1^2.$ This gives the desired form of $T^*-\Tt$.
    
\end{proof}

We give the dynamic of $[\bvt_k]^2$ and $[\vt_k]^2$ here, which will be frequently used in our analysis. 
\begin{equation}\label{eq-dynamic-bvt}
\begin{aligned}
    \frac{\rd [\bvt_k]^2}{\rd t} 
    &= 2\bvt_k\cdot \frac{d}{dt}\frac{\vt_k}{\n{\vt}}\\
    &= 2\bvt_k\cdot \frac{1}{\n{\vt}}\frac{d}{dt}\vt_k + 2[\bvt_k]^2\cdot \frac{d}{dt}\frac{1}{\n{v}}\\
    &= 2\bvt_k\cdot \frac{1}{\n{\vt}} [-\nabla L(\vt)]_k - 2[\bvt_k]^2\cdot \frac{\inner{\bvt}{-\nabla L(\vt)}}{\n{\vt}}\\
    &= 2\bvt_k\cdot \frac{1}{\n{\vt}} [-(I-\bvt[\bvt]^\top)\nabla L(\vt)]_k\\
    &= 8\bvt_k\br{(T^*-\Tt)([\bvt]^{\otimes 3)},I) - (T^*-\Tt)([\bvt]^{\otimes 4)})\bvt }_k\\
    &= 8[\bvt_k]^2\left(\tat_k[\bvt_k]^2 - \sum_{i\in [d]} \tat_i[\bvt_i]^4 \pm \frac{\n{\Deltat}_F}{|\bvt_k|}\right).
\end{aligned}    
\end{equation}

\begin{equation}\label{eq-dynamic-vt}
\begin{aligned}
    \frac{\rd [\vt_k]^2}{\rd t} 
    &= 2\vt_k\cdot \frac{\rd \vt_k}{\rd t}\\
    &= 2\vt_k\cdot [-\nabla L(\vt)]_k \\
    &= 4\vt_k\br{2(T^*-\Tt)([\bvt]^{\otimes 3)},I)\n{\vt}_2 - (T^*-\Tt)([\bvt]^{\otimes 4)})\vt }_k\\
    &= 4[\vt_k]^2 \left(2\tat_k [\bvt_k]^2 - \sum_{i\in[d]}\tat_i [\bvt_i]^4 \pm\frac{\n{\Deltat}_F\n{\vt}_2}{|\vt_k|}\right).
\end{aligned}
\end{equation}

The following lemma allows us to ignore these already fitted direction as they will remain as small as their (re-)initialization in phase 1. 
\begin{lemma}\label{lem-phase1-fitteddir}
    In the setting of Lemma~\ref{lem:phase1}, if direction $e_k$ has been fitted before current epoch (i.e., $S_k^{(s,0)}\ne \varnothing$), then for $v$ that was reinitialized in the previous epoch, we have $[\bvt_k]^2=O( \log(d)/d)$ for all $t\le t_1.$  
\end{lemma}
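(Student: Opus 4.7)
The strategy is a Grönwall-style continuity argument on $[\bvt_k]^2$, exploiting the fact that for a fitted direction $k$, condition (\ref{itm: Ist, residual}) of Proposition~\ref{prop:main} forces $\tat_k = a_k - \hatt_k \leq \lambda + sm\delta_1^2 = O(\lambda)$, so the self-growth rate for $[\bvt_k]^2$ is tiny. For the initial condition, $v$ was drawn from $\delta_0\mathrm{Unif}(\mathbb{S}^{d-1})$ at the reinitialization step of epoch $s-1$, so sphere concentration (as used in Lemma~\ref{lem-init-calculation}) gives $[\bar{v}_k]^2 \leq c_{\max}\log d/d$ w.h.p.\ right after that reinitialization. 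Lemma~\ref{lem:phase2}, item~3 then shifts this coordinate by at most $o(\log d/d)$ through phase~2 of epoch $s-1$, yielding $[\bvO_k]^2 = O(\log d/d)$ at the start of the current epoch.

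Next I would invoke the dynamic formula (\ref{eq-dynamic-bvt}) together with the decomposition $T^*-\Tt = \sum_i \tat_i e_i^{\otimes 4} + \Deltat$, where $\n{\Deltat}_F = O(\alpha + m\delta_1^2) = O(\alpha)$ by Proposition~\ref{prop:main} and Assumption~\ref{assumption: induction, oracle}(d). Dropping the (favorable) negative interaction term yields
\[
\frac{\rd}{\rd t}[\bvt_k]^2 \;\le\; 8\tat_k [\bvt_k]^4 + 8\,|\bvt_k|\,\n{\Deltat}_F.
\]
Define $\tau := \inf\{t \in [0, t_1^{(s)}] : [\bvt_k]^2 \ge C\log d/d\}$ for a large constant $C$ to be chosen. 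On $[0,\tau]$, substitute $\tat_k = O(\lambda)$, $[\bvt_k]^2 \le C\log d/d$, $|\bvt_k| \le \sqrt{C\log d/d}$ and integrate using $t_1^{(s)} = O(d/(\beta^{(s)}\log d))$:
\[
[\bar v^{(s,\tau)}_k]^2 - [\bvO_k]^2 \;\le\; O\!\left(\frac{\lambda \log d}{\beta^{(s)} d}\right) + O\!\left(\frac{\alpha\sqrt{d}}{\beta^{(s)}\sqrt{\log d}}\right).
\]
The first term is $O(\log d/d^{3/2})$ using $\lambda = O(\beta^{(s)}/\sqrt{d})$ (which follows from $\lambda = O(\eps/\sqrt d)$ and $\beta^{(s)} \ge \eps$), and the second term is $O(1/(d^{3/2}\sqrt{\log d}))$ using $\alpha = O(\lambda/d^{3/2})$; both are $o(\log d/d)$. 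So $[\bar v^{(s,\tau)}_k]^2 < C\log d/d$ once $C$ is chosen large enough relative to $c_{\max}$, contradicting the definition of $\tau$. Hence $\tau = \infty$ and $[\bvt_k]^2 = O(\log d/d)$ throughout phase~1.

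The main obstacle is controlling the off-diagonal leakage term $|\bvt_k|\,\n{\Deltat}_F$ integrated over the long phase-1 horizon $t_1^{(s)} = \Theta(d/(\beta^{(s)}\log d))$. This is precisely why the parameter settings in Theorem~\ref{thm:main} enforce the sharp smallness $\alpha = O(\lambda/d^{3/2})$: if $\alpha$ were only $O(\lambda)$, the perturbation integral would be comparable to $\log d/d$ and the continuity argument would break. Beyond that, the proof is essentially a one-variable Grönwall bound, with the tiny residual $\tat_k = O(\lambda)$ guaranteed by the already-proved condition~(\ref{itm: Ist, residual}) of Proposition~\ref{prop:main}.
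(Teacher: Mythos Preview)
Your approach is essentially identical to the paper's: invoke \eqref{eq-dynamic-bvt}, drop the favorable negative interaction term, use $\tat_k = O(\lambda)$ for the fitted direction $k$, bound $\|\Deltat\|_F = O(\alpha)$, and run a Gr\"onwall/continuity argument over the Phase~1 horizon $t_1^{(s)}$. The paper's write-up is terser---it simply records $\frac{\rd}{\rd t}[\bvt_k]^2 \le [\bvt_k]^2\,O(\lambda + d\|\Deltat\|_F)$ on the region $[\bvt_k]^2 \ge \log d/d$ and asserts ``$\lambda$ and $\|\Deltat\|_F$ are small enough''---whereas you spell out the integrated bounds.

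One quantitative step is off: the claim $\beta^{(s)} \ge \eps$ (whence $\lambda = O(\beta^{(s)}/\sqrt{d})$) is not justified. The termination test is $\|T^{(s,0)}-T^*\|_F \le \eps$, not $\beta^{(s)}\le\eps$, and since the residual can be as large as $\Theta(\sqrt{d}\,\beta^{(s)})$ (up to $d$ unfitted coordinates each of size just below $\beta^{(s)}$), the algorithm can keep running until $\beta^{(s)} = \Theta(\eps/\sqrt{d})$---this is exactly the lower bound established in Section~\ref{sec:proof_main_theorem}. So you only get $\lambda = O(\beta^{(s)})$, and your first integrated term is $O\big(C^2(\lambda/\beta^{(s)})\log d/d\big)$, not $O(\log d/d^{3/2})$. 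This does not break the argument: Assumption~\ref{assumption: induction, oracle} (and the remark after it) enforces $\lambda \le c\,\min_s \beta^{(s)}/10$ with $c$ a small absolute constant, so the prefactor $\lambda/\beta^{(s)}$ is a small constant and the continuity argument still closes for an appropriate choice of $C$. But the first term is only small-constant-times $\log d/d$, not asymptotically smaller, so your ``both are $o(\log d/d)$'' conclusion and the subsequent reasoning about the sharpness of $\alpha = O(\lambda/d^{3/2})$ for the \emph{first} term need adjustment.
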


\begin{proof}
    Since direction $e_k$ has been fitted before current epoch, we know $\tat_k = \Theta(\lambda)$. We only need to consider the time when $[\bvt_k]^2\ge \log d/d$. By \eqref{eq-dynamic-bvt} we have 
    \begin{align*}
    \frac{\rd [\bvt_k]^2}{\rd t} 
    &= 8[\bvt_k]^2\left(\tat_k[\bvt_k]^2 - \sum_{i\in [d]} \tat_i[\bvt_i]^4 \pm \frac{\n{\Deltat}_F}{|\bvt_k|}\right)
    \le [\bvt_k]^2 O\left(\lambda+ d\n{\Deltat}_F\right).
    \end{align*}
    Since $\lambda$ and $\n{\Deltat}_F=O(\alpha+m\delta_1^2)$ are small enough and $[\bvO_k]^2=O(\log d/d)$, we know $[\bvt_k]^2=O(\log d/d)$ for $t\le t_1$.
    
\end{proof}

\subsubsection{Proof of Lemma~\ref{lem-phase1-lottery} and Lemma~\ref{lem-phase1-polyloggap}}
Lemma~\ref{lem-phase1-lottery} directly follows from Lemma~\ref{lem-phase1-polyloggap} and the definition of $S_{good}$, $S_{pot}$ and $S_{bad}$ as in Definition~\ref{def-phase1-partition}. We focus on Lemma~\ref{lem-phase1-polyloggap} in the rest of this section. We need following lemma to give the proof of Lemma \ref{lem-phase1-polyloggap}.

\begin{lemma}\label{lem-phase1-bv4}
In the setting of Lemma~\ref{lem:phase1}, if $\ninf{\bvt}^2\le \log^4(d)/d$, we have $\sum_i [\bvt_i]^4\le c_e\log d/d$ for all $t\le t_1$. 
\end{lemma}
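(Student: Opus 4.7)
The plan is a Gronwall-type argument on $\Phi(t) := \sum_i [\bvt_i]^4$: first bound $\Phi(0)$ using the (re-)initialization, then bound $\dot\Phi$ under the hypothesis $\ninf{\bvt}^2 \le \log^4 d/d$, and finally integrate over $[0,t_1]$.

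For the initial value, I would split $[d]$ into the large set $L := \{i : [\bvO_i]^2 \ge c_e \log d/(10 d)\}$ and its complement. By items 4 and 5 of Lemma~\ref{assumption-phase1-init}, $|L| = O(\log d)$ and $\ninf{\bvO}^2 \le c_{\max}\log d/d$. The coordinates outside $L$ contribute at most $(c_e\log d/(10d))\sum_i [\bvO_i]^2 = c_e\log d/(10d)$, while those in $L$ contribute at most $|L|\cdot(c_{\max}\log d/d)^2 = O(\log^3 d/d^2) = o(\log d/d)$. Hence $\Phi(0) \le c_e\log d/(5d)$, comfortably below the target $c_e\log d/d$.

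To bound the derivative, differentiating $\Phi$ using \eqref{eq-dynamic-bvt} gives
\[
\dot\Phi
= 16\sum_k \tat_k[\bvt_k]^6
 - 16\,\Phi\sum_i \tat_i[\bvt_i]^4
 \pm 16\,\n{\Deltat}_F\sum_k[\bvt_k]^3.
\]
The middle (drag) term is nonpositive and may be discarded; the error is controlled by $\sum_k[\bvt_k]^3 \le \sqrt{\Phi}$ (Cauchy--Schwarz) together with $\n{\Deltat}_F = O(\alpha + m\delta_1^2) = o(1)$. For the main positive term, the hypothesis $\ninf{\bvt}^2 \le \log^4 d/d$ and $\sum_k \tat_k = 1 - \sum_k \hatt_k \le 1$ give
\[
\sum_k \tat_k[\bvt_k]^6
\le \ninf{\bvt}^4\sum_k \tat_k[\bvt_k]^2
\le \ninf{\bvt}^6 \sum_k \tat_k
= O\bigl(\log^{12}d / d^3\bigr).
\]
Integrating over $t\le t_1 = O\!\bigl(d/(\beta^{(s)}\log d)\bigr)$ yields $\Phi(t) - \Phi(0) = O\!\bigl(\log^{11}d/(\beta^{(s)}d^2)\bigr) + o(\log d/d)$. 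The parameter choices in Theorem~\ref{thm:main} make this increment $o(\log d/d)$, so $\Phi(t) \le c_e\log d/d$ as required.

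The main obstacle I expect is obtaining the cubic-in-$\ninf{\bvt}^2$ bound on the positive term. A naive argument that applies the hypothesis $\ninf{\bvt}^2\le \log^4 d/d$ only to one of the three factors of $[\bvt_k]^2$ in $[\bvt_k]^6$ (leaving $\Phi$ to absorb the rest) produces $O(\log^4 d/d)\cdot\Phi$, which gives only an exponential Gronwall bound and is too weak when integrated over $t_1$. Pushing all three powers of $[\bvt_k]^2$ down to $\ninf{\bvt}^2$, together with the trace-like constraint $\sum_k \tat_k \le 1$ coming from $\sum_k a_k = 1$ and $\hatt_k \ge 0$, is what makes the argument go through.
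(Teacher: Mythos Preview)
Your integration step does not close. You obtain $\Phi(t)-\Phi(0)=O\bigl(\log^{11} d/(\beta^{(s)} d^2)\bigr)$ and then assert that the parameter choices in Theorem~\ref{thm:main} make this $o(\log d/d)$; but that would require $\beta^{(s)}\gg \log^{10} d/d$. In later epochs $\beta^{(s)}$ shrinks to $\Omega(\eps/\sqrt{d})$, and since the theorem only assumes $\log(1/\eps)=o(d/\log d)$, $\beta^{(s)}$ can be smaller than any fixed $1/\poly(d)$. So the bound blows up. The underlying issue is that $\sum_k \tat_k\le 1$ is the wrong control: what is actually available (and what the paper uses) is $\tat_k\le \beta^{(s)}/(1-\gamma)$, a consequence of Proposition~\ref{prop:main}(c). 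Even with that correction, however, a direct Gronwall on $\Phi$ using only the hypothesis $\ninf{\bvt}^2\le \log^4 d/d$ gives $\dot\Phi\le O(\beta^{(s)}\log^8 d/d^2)$ and hence increment $O(\log^7 d/d)$ after integrating over $t_1$, still far above $c_e\log d/d$. The aggregate quantity $\Phi$ simply does not see that most coordinates are much smaller than $\log^4 d/d$.

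The paper's argument is genuinely different: it works coordinate by coordinate. It shows that any $k$ with $[\bvO_k]^2\le c_e\log d/(10d)$ satisfies $[\bvt_k]^2\le c_e\log d/(2d)$ for all $t\le t_1$, via $\frac{\rd}{\rd t}[\bvt_k]^2\le 16\,\tat_k[\bvt_k]^4\le O(\beta^{(s)}\log d/d)\cdot[\bvt_k]^2$ and the matching choice $t_1=O(d/(\beta^{(s)}\log d))$. By item~5 of Lemma~\ref{assumption-phase1-init} at most $O(\log d)$ coordinates start above the threshold, and those are handled by the hypothesis $\ninf{\bvt}^2\le\log^4 d/d$, contributing only $O(\log^9 d/d^2)$. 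Summing the two pieces gives $\sum_i[\bvt_i]^4\le c_e\log d/d$. In short, you need to prove and use the stronger per-coordinate fact that small entries stay at scale $\log d/d$; the global hypothesis $\log^4 d/d$ on the maximum is only invoked for the $O(\log d)$ exceptional coordinates.
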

\begin{proof}
    We claim that for all $t\le t_1$, there are at most $O(\log d)$ many $i\in[d]$ such that $[\bvt_i]^2\ge c_e\log(d)/2d$. Based on this claim, we know
    \begin{align*}
        \sum_{i\in[d]} [\bvt_i]^4 \le O(\log d) \frac{\log^8d}{d^2} + \sum_{i:[\bvt_i]^2<c_e\log(d)/2d } [\bvt_i]^4
        \le O\left(\frac{\log^9d}{d^2}\right) + \frac{c_e\log(d)}{2d}
        \le \frac{c_e\log(d)}{d},
    \end{align*}
    which gives the desired result.
    
    In the following, we prove the above claim. From Lemma~\ref{assumption-phase1-init}, we know when $t=0$, the claim is true. For any $[\bvO_k]^2\le c_e\log(d)/10d$, we will show $[\bvt_k]^2\le c_e\log(d)/2d$ for all $t\le t_1$. By \eqref{eq-dynamic-bvt} we have
    \begin{align*}
    \frac{\rd [\bvt_k]^2}{\rd t} 
    &= 8[\bvt_k]^2\left(\tat_k[\bvt_k]^2 - \sum_{i\in [d]} \tat_i[\bvt_i]^4 \pm \frac{\n{\Deltat}_F}{|\bvt_k|}\right).
    \end{align*}
    
    In fact, we only need to show that for any $\tau_0$ such that $[\bar{v}^{(\tau_0)}_k]^2= c_e\log(d)/10d$ and $[\bvt_k]^2\ge c_e\log(d)/10d$ when $\tau_0\le t \le \tau_0+t_1$, we have $[\bvt_k]^2\le c_e\log(d)/2d$. To show this, we have
    \begin{align*}
    \frac{\rd [\bvt_k]^2}{\rd t} 
    \le 8[\bvt_k]^2\left(\tat_k[\bvt_k]^2 + \frac{\n{\Deltat}_F}{|\bvt_k|}\right)
    \le [\bvt_k]^2 \cdot 16\tat_k[\bvt_k]^2
    \le [\bvt_k]^2 \cdot \frac{\beta}{1-\gamma}\cdot \frac{8c_e\log(d)}{d},
    \end{align*}
    where we use $\n{\Deltat}_F=O(\alpha+m\delta_1^2)$ 
    and $\tat_k\le \beta/(1-\gamma)$.
    Therefore, with our choice of $t_1$
    , we know $[\bvt_k]^2\le c_e\log(d)/2d$. This finish the proof.
    
\end{proof}

We now are ready to give the proof of Lemma \ref{lem-phase1-polyloggap}.

\lemphaseonepolyloggap*

\begin{proof}
    
    We focus on the dynamic of $[\bvt_k]^2$. For those already fitted direction $e_k$, we have $\Gamma_k=1/(8\lambda t_1^\prime)$, which means $\Gamma_{max}\le \Gamma_k-\rho_k$. From Lemma~\ref{lem-phase1-fitteddir} we know $[\bvt_k]^2=O(\log d/d)$ for $t\le t_1^\prime$. In the rest of proof, we focus on these unfitted direction $e_k$. By \eqref{eq-dynamic-bvt} we have
    \begin{align*}
    \frac{\rd [\bvt_k]^2}{\rd t} 
    &= 8[\bvt_k]^2\left(\tat_k[\bvt_k]^2 - \sum_{i\in [d]} \tat_i[\bvt_i]^4 \pm \frac{\n{\Deltat}_F}{|\bvt_k|}\right)
    \end{align*}

    \paragraph{Part 1.}
    Define the following dynamics $\pt$,
    \begin{align*}
        \frac{\rd \pt}{\rd t} &= 8\pt \left(a_k\pt + \frac{a_k c_e\log d}{d}\right),\quad \pO=[\bvO_k]^2
    \end{align*}
    
    Given that $\tat_i\le a_i$ and $\n{\Deltat}_F=O(\alpha+m\delta_1^2)$ is small enough
    , it is easy to see $[\bvt_k]^2\le\max\{\log(d)/d,\pt\}$. Then it suffices to bound $\pt$ to have a bound for $[\bvt_k]^2$. Consider the following dynamic $x^{(t)}$
    \begin{equation}\label{eq-xdynamic}
    \begin{aligned}
        \frac{\rd x^{(t)}}{\rd t} = \tau_1 [x^{(t)}]^2,\quad x^{(0)} = \tau_2.
    \end{aligned}
    \end{equation}
    We know $x^{(t)} = 1/(1/\tau_2 - \tau_1 t)$. Set $\tau_1 = 8a_k$ and $\tau_2=1/(\tau_1 t_1^\prime)=\Gamma_k$. Then, with our choice of $\rho_k=c_\rho\Gamma_k$, we know
    \begin{enumerate}
        \item $\pO=[\bvO_k]^2\le \Gamma_k-\rho_k\le \Gamma_{max}$. As long as $\rho_k\ge \frac{2c_e\log d}{d}$ and $x^{(0)}=\pO+\rho_k/2$, we have $\pt\le x^{(t)}-\rho_k/2$ for $t\le t_1^\prime$. Therefore, $p^{(t_1^\prime)}\le x^{(t_1^\prime)}\le 2\Gamma_k^2/\rho_k=O(\log d/d)$. 
        
        \item $\pO=[\bvO_k]^2\le \Gamma_{max} < \Gamma_k-\rho_k$. As long as $x^{(0)}=\pO+\frac{c_e\log d}{d}$, we have $\pt\le x^{(t)}-\frac{c_e\log d}{d}$ for $t\le t_1^\prime$. Therefore, $p^{(t_1^\prime)}\le x^{(t_1^\prime)}=O(\log d/d)$. 
    \end{enumerate} 
    Together we know $[\bvt_k]^2=O(\log d/d)$ for $t\le t_1^\prime$.

    \paragraph{Part 2.}
    Define the following dynamics $\qt$,
    \begin{align*}
        \frac{\rd \qt}{\rd t} &= 8\qt \left(a_k\qt - \frac{2\beta c_e\log d}{d}\right),\quad \qO=[\bvO_k]^2.
    \end{align*}
    
    Since $\St_k=\varnothing$, we know $\tat_k=a_k$. Given that $\n{\Deltat}_F=O(\alpha+m\delta_1^2)$ and Lemma \ref{lem-phase1-bv4}, it is easy to see as long as $\ninf{\bvt}^2\le \log^4 d/d$, if $\qO\ge[\bvO_k]^2\ge\Theta(\log d/d)$ and $a_k[\qO]^2 -\frac{2\beta c_e\log d}{d}>0$, we have $[\bvt_k]^2\ge\qt$. Then it suffices to bound $\qt$ to get a bound on $[\vt_k]^2$. Consider the same dynamic \eqref{eq-xdynamic} with same $\tau_1$ and $\tau_2$, as long as $\qO=[\bvO_k]^2\ge \Gamma_k+\rho_k$, $\rho_k\ge \frac{4\beta c_e\log d}{a_k d}$ and $x^{(0)}=\qO-\rho_k/2$, we have $\qt\ge x^{(t)}+\rho_k/2$ if $\ninf{\bvt}^2\le \log^4 d/d$ holds. We can verify that $x^{(T_1^\prime)} = +\infty$, which implies there exists $t\le t_1^\prime$ such that $\ninf{\bvt}^2 > \log^4 d/d$.

\end{proof}

\subsubsection{Proof of Lemma \ref{lem-phase1-remainsmall}}
\lemphaseoneremainsmall*

\begin{proof}
    Recall $t_1-t_1^\prime=t_1^\dbprime+t_1^\triprime=o(d/(\beta\log d))$, it suffices to show if $[\bvx{t_1^\prime}_i]^2= c_1\log(d)/d$, then $[\bvt_i]^2$ will be at most $2c_1\log(d)/d$ in $t_{max}^\prime=o(d/(\beta \log d))$ time. Suppose there exists time $\tau_1\le t_{max}^\prime$ such that $[\bvx{\tau_1}_i]^2\ge 2c_1\log(d)/d$ for the first time. We only need to show if $[\bvt_i]^2\ge c_1\log(d)/d$ for $t\le \tau_1$, we have $[\bvt_i]^2< 2c_1\log(d)/d$. We know the dynamic of $[\bvt_i]^2$
    \begin{align*}
    \frac{\rd [\bvt_i]^2}{\rd t} 
    &= 8[\bvt_i]^2\left(\tat_k[\bvt_i]^2 - \sum_{j\in [d]} \tat_j[\bvt_j]^4 \pm \frac{\n{\Deltat}_F}{|\bvt_i|}\right)
    \le [\bvt_i]^2O\left(\frac{\beta\log d}{d}\right),
    \end{align*}
    where we use $\n{\Deltat}_F=O(\alpha+m\delta_1^2)$ is small enough and $\tat_k\le 1$. 
    This implies $[\bvt_i]^2\le 2c_1\log d/d$ as $t_{max}^\prime=o(d/(\beta \log d))$.
\end{proof}

\subsubsection{Proof of Lemma \ref{lem-phase1-constantgap}}
\lemphaseoneconstantgap*

\begin{proof}
    By Lemma \ref{lem-phase1-remainsmall} we know $[\bvt_i]^2$ will remain $O(\log d /d)$ for those $[\bvx{\tau_0}_i]^2=O(\log d/d)$.

    We now show $[\bvt_k]^2$ will become constant within $t_1^\dbprime$ time. We know $\sum_{i\ne k} \tat_i[\bvt_i]^4\le \beta c_1\log d /d$ for some constant $c_1$. Hence, with the fact $\St_k=\varnothing$, $a_k=\Omega(\beta)$, $[\bvx{\tau_0}_k]^2> \log^4(d)/d$ and $\n{\Deltat}_F=O(\alpha+m\delta_1^2)$,
    \begin{align*}
    \frac{\rd [\bvt_k]^2}{\rd t} 
    &= 8[\bvt_k]^2\left(\tat_k[\bvt_k]^2(1-[\bvt_k]^2) - \sum_{i\ne k} \tat_i[\bvt_i]^4 \pm \frac{\n{\Deltat}_F}{|\bvt_k|}\right)\\
    &\ge 8(1-2c)[\bvt_k]^2 a_k[\bvt_k]^2
    = [\bvt_k]^2 \Omega\left(\frac{\beta \log^4 d}{d}\right).
    \end{align*}
    This implies that within $t_1^\dbprime$ time, we have $[\bvt_k]^2\ge c$. Since $[\bvt_i]^2$ will remain $O(\log d /d)$ for $i\neq k$ and $t\le t_1$, following the same argument above, it is easy to see $\frac{\rd [\bvt_k]^2}{\rd t} \ge 0$ after $[\bvt_k]^2$ reaches $c$. Therefore, $[\bvt_k]^2\ge c$ for $t\le t_1$.
    
\end{proof}

    
    

\subsubsection{Proof of Lemma~\ref{lem-phase1-normgrow}}

\lemphaseonenormgrow*
\begin{proof}
    For $\n{\vt}_2^2$, we have
    \begin{align*}
        \frac{\rd \n{\vt}_2^2}{\rd t} = \n{\vt}^2\left(4\sum_{i\in[d]}\tat_i[\bvt_i]^4\pm \n{\Deltat}_F - 2\lambda\right).
    \end{align*}
    Given the fact $\n{\Deltat}_F=O(\alpha+m\delta_1^2)$ and $\lambda$ are small enough 
    , it is easy to see $\n{\vx{\tau_0^\prime}}_2\ge \delta_0/2$ as $\tau_0^\prime\le t_1$. We now show that there exist time $\tau_1\le t_1^\prime+t_1^\dbprime+t_1^\triprime= t_1$ such that $\n{\vx{\tau_1}}_2\ge\delta_1$. By Lemma \ref{lem-phase1-constantgap} we know $[\bvt_k]^2\ge c$ after time $\tau_0+t_1^\prime\le t_1^\prime+t_1^\dbprime$. And since $\St_k= \varnothing$, we know $\tat_k=a_k=\Omega(\beta)$. Then with the fact that $\n{\Deltat}_F=O(\alpha+m\delta_1^2)$ and $\lambda$ are small enough, we have
    \begin{align*}
        \frac{\rd \n{\vt}^2}{\rd t} \ge \n{\vt}^2 \Omega(\beta).
    \end{align*}
    This implies that $\n{\vx{\tau_1}}_2^2\ge \delta_1^2$ as $t_1^\triprime=\Theta( \log (d/\alpha)/\beta)$. 
\end{proof}


\subsubsection{Proof of Lemma \ref{lem-phase1-normthreshold}}

\lemphaseonenormthreshold*

\begin{proof}
    For $\n{\vt}_2^2$, we have
    \begin{align*}
        \frac{\rd \n{\vt}_2^2}{\rd t} = \n{\vt}^2\left(4\sum_{i\in[d]}\tat_i[\bvt_i]^4\pm \n{\Deltat}_F - 2\lambda\right)
    \end{align*}

    \paragraph{Part 1.}
    By Lemma \ref{lem-phase1-bv4} and $\n{\Deltat}_F=O(\alpha+m\delta_1^2)$, we know
    \begin{align*}
        \frac{\rd \n{\vt}^2}{\rd t} \le \n{\vt}^2 \frac{5\beta c_e \log d}{d}.
    \end{align*}
    This implies $\n{\vt}_2^2=O(\delta_0) $ as $t_1= O(\frac{d}{\beta\log d })$. 
    
    \paragraph{Part 2.}
    By Part 1, we know $\n{\vx{\tau_0}}_2=O( \delta_0)$ and $[\vx{\tau_0}_i]^2=O(\delta_0^2\log d /d)$ for $i\ne k$. For $[\bvx{\tau_0}_i]^2=O(\log d/d)$, we know $[\bvt_i]^2=O(\log d/d)$ for $\tau_0\le t\le \tau_1$ by Lemma \ref{lem-phase1-remainsmall}. We consider following cases separately. 
    \begin{enumerate}
        
        
        
        \item Case 1: Suppose $\hatt_k\le \alpha$ for $t\le \tau_1$. In the following we show there exists some constant $C$ such that for all $i\ne k$ $[\vt_i]^2\le C\delta_0^2 \log d/d$ for $\tau_0\le t \le \tau_1$. Let $\tau_2$ be the first time that the above claim is false, which means for all $i\ne k$ $[\vt_i]^2\le C\delta_0^2 \log d/d$ when $t\le \tau_2$.
    
        For any $i\ne k$, we only need to consider the time period $t\le \tau_2$ whenever $[\vt_i]^2\ge \delta_0^2 \log d/d$. By Lemma~\ref{lem-phase1-calculation}, we have
        \begin{align*}
            \frac{\rd}{\rd t}[\vt_i]^2 
            =&4[\vt_i]^2 \left(2\tat_i [\bvt_i]^2 - \sum_{i\in[d]}\tat_i [\bvt_i]^4 \pm O(\alpha+m\delta_1^2)\right.\\
            &\pm \left.O\left(\frac{(\alpha^2+ d\alpha^3 + d\alpha (1-[\bvt_k]^2)^{1.5}+m\delta_1^2)\n{\vt}}{|\vt_i|}\right)\right)\\
            \le& [\vt_i]^2 \left(O\left(\frac{\beta\log d}{d}\right) + O\left(\frac{(\alpha^2+ \alpha (1-[\bvt_k]^2)^{1.5}+m\delta_1^2)\n{\vt}}{|\vt_i|}\right)\right).
        \end{align*}
        Since for all $i\ne k$ $[\vt_i]^2\le C\delta_0^2 \log d/d$, we know $\sum_{i\ne k}[\vt_i]^2=\n{\vt}^2(1-[\bvt_k]^2)=O(\delta_0^2\log d)$. Together with the fact $[\vt_i]^2\ge \delta_0^2 \log d/d$, we have
        \begin{align*}
            \frac{\rd}{\rd t}[\vt_i]^2 \le [\vt_i]^2 O\left(\frac{\beta\log d}{d}\right).
        \end{align*}
        Since $t_1 = O(d/(\beta\log d))$, we know if we choose large enough $C$, it must be $\tau_2\ge \tau_1$. Therefore, we know for all $i\ne k$ $[\vt_i]^2\le C\delta_0^2 \log d/d$ for $\tau_0\le t \le \tau_1$. Then at time $\tau_1$ when $\n{\vx{\tau_1}}_2\ge\delta_1$, it must be $[\bvt_k]^2\ge 1-\alpha^2$ since $\delta_1=\Theta(\delta_0\log^{1/2} (d)/\alpha)$. 
        
        \item Case 2: We do not make assumption on $\hatt_k$. In the following we show there exists some constant $C$ such that for all $i\ne k$ $[\vt_i]^2\le \delta_1^2\alpha/d$ for $\tau_0\le t \le \tau_1$. Let $\tau_2$ be the first time that the above claim is false, which means for all $i\ne k$ $[\vt_i]^2\le \delta_1^2\alpha/d$ when $t\le \tau_2$.
    
        For any $i\ne k$, we only need to consider the time period $t\le \tau_2$ whenever $[\vt_i]^2\ge \delta_1^2 \alpha/2d$. We have
        \begin{align*}
            \frac{\rd [\vt_i]^2}{\rd t} 
            &= 4[\vt_i]^2 \left(2\tat_i [\bvt_i]^2 - \sum_{i\in[d]}\tat_i [\bvt_i]^4 \pm\frac{\n{\Deltat}_F\n{\vt}_2}{|\vt_i|}\right)\\
            &\le [\vt_i]^2 \left(O\left(\frac{\beta\log d}{d}\right) + O\left(\frac{\alpha+m\delta_1^2}{\alpha^{1/2}d^{-1/2}}\right)\right).
        \end{align*}
        Since $m\delta_1^2=O(\alpha)$ and $t_1 = O(d/(\beta\log d))$, we know it must be $\tau_2\ge \tau_1$. Therefore, we know for all $i\ne k$ $[\vt_i]^2\le \delta_1^2 \alpha/d$ for $\tau_0\le t \le \tau_1$. Then at time $\tau_1$ when $\n{\vx{\tau_1}}_2\ge\delta_1$, it must be $[\bvt_k]^2\ge 1-\alpha$.
        
    \end{enumerate}

\end{proof}

\subsubsection{Proof of Lemma~\ref{lem:above_delta1}}
To prove Lemma~\ref{lem:above_delta1}, we need the following calculation on $\frac{d}{dt} \ns{\vt}.$
\begin{lemma}\label{lem:norm_individual}
Suppose $\vt \in \St_k,$ we have 
\[\frac{d}{dt} \ns{\vt}  = \pr{4\tat_k -2\lambda \pm O(\alpha+m\delta_1^2) }\ns{\vt}.\]
\end{lemma}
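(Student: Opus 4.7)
The plan is to start from Lemma~\ref{lemma: d |v|2}, which expresses $\frac{\rd}{\rd t}\ns{\vt}$ in terms of $(T^*-\Tt)(\bvt^{\otimes 4})$ and the regularizer, and then to simplify the tensor contraction using the fact that $\vt \in \St_k$ forces $[\bvt_k]^2 \ge 1-\alpha$ by condition~(\ref{itm: Ist, individual}) of Proposition~\ref{prop:main}. The entire argument therefore reduces to verifying that
\[
(T^*-\Tt)(\bvt^{\otimes 4}) \;=\; \tat_k \;\pm\; O\!\left(\alpha + m\delta_1^2\right),
\]
after which substitution into Lemma~\ref{lemma: d |v|2} yields the claimed formula.

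The $i=k$ piece $a_k[\bvt_k]^4 - \hatt_k\Et_{k,w}\{[\zt]^4\}$ is handled by noting $[\bvt_k]^4 = 1 \pm O(\alpha)$ and applying Lemma~\ref{lemma: Ez4 approx vk4} to replace $\Et_{k,w}[\zt]^4$ by $[\bvt_k]^4 \pm O(\alpha^{1.5})$; combining with $a_k, \hatt_k \le 1$ collapses this contribution to $\tat_k \pm O(\alpha)$. For the off-diagonal pieces, I would use $\sum_{i\ne k}[\bvt_i]^2 = 1 - [\bvt_k]^2 \le \alpha$ together with $\sum_i a_i = 1$ to get $\sum_{i\ne k}a_i[\bvt_i]^4 \le \bigl(\sum_{i\ne k}[\bvt_i]^2\bigr)^2 \le \alpha^2$, and then invoke Lemma~\ref{lemma: cross interaction, individual} to bound $\hatt_i \Et_{i,w}[\zt]^4 = O(\hatt_i \alpha^2)$ for each $i \ne k$, which sums to $O(\alpha^2)$ since $\sum_i \hatt_i \le 1$. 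Finally, condition~(d) of Proposition~\ref{prop:main} gives $|\Tt_\varnothing(\bvt^{\otimes 4})| \le \|\Tt_\varnothing\|_F \le m\delta_1^2$. Adding the four contributions produces the displayed estimate.

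The computation is essentially mechanical, since all of the delicate approximations have already been packaged in the preliminary lemmas of Section~\ref{sec: induction, preliminaries}. I do not anticipate a real obstacle; the only care needed is bookkeeping the error hierarchy $O(\alpha^{1.5}), O(\alpha^2), O(m\delta_1^2)$ so that everything collapses cleanly into the single bound $O(\alpha + m\delta_1^2)$, and tracking the overall scaling factor between $\frac{\rd}{\rd t}\ns{\vt}$ and the loss expression from Lemma~\ref{lemma: d |v|2} to land on $4\tat_k - 2\lambda$ rather than a differently-normalized variant.
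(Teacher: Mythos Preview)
Your proposal is correct and follows essentially the same idea as the paper. The only minor difference is that the paper takes a slightly more direct route: rather than invoking Lemma~\ref{lemma: Ez4 approx vk4} and Lemma~\ref{lemma: cross interaction, individual} term by term, it uses the cruder decomposition $T^*-\Tt=\sum_i \tat_i e_i^{\otimes 4}+\Deltat$ with $\fn{\Deltat}=O(\alpha+m\delta_1^2)$, so that $(T^*-\Tt)(\bvt^{\otimes 4})=\sum_i\tat_i[\bvt_i]^4\pm\fn{\Deltat}$, and then simplifies using $[\bvt_k]^2\ge 1-\alpha$ and $[\bvt_i]^2\le\alpha$ for $i\ne k$ to reach $\tat_k\pm O(\alpha+m\delta_1^2)$ in one step. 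Your sharper intermediate bounds ($O(\alpha^{1.5})$, $O(\alpha^2)$) are perfectly valid but unnecessary here, since everything collapses into $O(\alpha)$ anyway; otherwise the two arguments are the same.
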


\begin{proof}
We can write down $\frac{d}{dt}\ns{\vt}$ as follows:
\begin{align*}
    \frac{d}{dt} \ns{\vt}
    =& \pr{ 4(T^*-\Tt)([\bvt]^{\otimes 4}) -2\lambda} \ns{\vt}\\
    =& \left(4\sum_{i\in[d]}\tat_i[\bvt_i]^4\pm \n{\Deltat}_F - 2\lambda\right)\n{\vt}^2
\end{align*}

Since $[\bvt_k]^2\ge 1-\alpha$, $[\bvt_i]^{2}\leq \alpha$ for any $i\neq k$ and $\n{\Deltat}_F=O(\alpha+m\delta_1^2)$, we have
\begin{align*}
    \frac{d}{dt} \ns{\vt}
    =& \left(4\tat_k - 2\lambda \pm O(\alpha+m\delta_1^2)\right)\n{\vt}^2.
\end{align*}





\end{proof}

Now we are ready to prove Lemma~\ref{lem:above_delta1}.

\lemabovedelta*

\begin{proof}
If $\tat_k = \Omega(\lambda)$ through Phase 1, according to Lemma~\ref{lem:norm_individual}, we know $\ns{\vt}$ will never decrease for any $\vt \in \St_k.$ So, we have $S^{(t_1)}_k\neq \varnothing$ and $\hat{a}_k^{(t_1)}\geq \delta_1^2.$

If $\tat_k = O(\lambda)$ at some time in Phase 1, according to Lemma~\ref{lemma: d hattk, lower bound}, it's not hard to show at the end of Phase 1 we still have $a_k-\hat{a}_k^{(t_1)} = O(\lambda).$ This then implies $\hat{a}_k^{(t_1)} = \Omega(\frac{\epsilon}{\sqrt{d}}).$ Note that we only re-initialize the components that have norm less than $\delta_1.$ As long as $\delta_1^2 = O(\frac{\epsilon}{m\sqrt{d}}),$ we ensure that after the re-initialization, we still have $\hat{a}_k^{(t_1)} = \Omega(\frac{\epsilon}{\sqrt{d}}),$ which of course means $S^{(t_1)}_k\neq \varnothing$.
\end{proof}

\subsubsection{Technical Lemma}

\begin{lemma}\label{lem-phase1-calculation}
    In the setting of Lemma~\ref{lem-phase1-normthreshold}, suppose $\hatt_k\le \alpha$. We have for $i\neq k$
    \begin{align*}
    \frac{\rd}{\rd t}[\vt_i]^2 
    =&4[\vt_i]^2 \left(2\tat_i [\bvt_i]^2 - \sum_{i\in[d]}\tat_i [\bvt_i]^4 \pm O(\alpha+m\delta_1^2)\right.\\ &\pm \left.O\left(\frac{(\alpha^2 + \alpha (1-[\bvt_k]^2)^{1.5}+m\delta_1^2)\n{\vt}}{|\vt_i|}\right)\right).
\end{align*}
\end{lemma}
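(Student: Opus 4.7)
The plan is to refine the standard derivation behind \eqref{eq-dynamic-vt}. Writing $T^*-\Tt = \sum_j \tat_j e_j^{\otimes 4} + \Deltat$ and applying the gradient flow identity coordinate-wise gives
\[
  \frac{\rd}{\rd t}[\vt_i]^2
  = 4[\vt_i]^2\pr{2\tat_i[\bvt_i]^2 - \sum_j \tat_j[\bvt_j]^4}
  + 8\vt_i\n{\vt}\br{\Deltat(\bvt^{\otimes 3},I)}_i
  - 4\Deltat(\bvt^{\otimes 4})[\vt_i]^2,
\]
with the $-2\lambda[\vt_i]^2$ term absorbed as in \eqref{eq-dynamic-vt}. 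The $\Deltat(\bvt^{\otimes 4})$ piece is at most $\n{\Deltat}_F = O(\alpha+m\delta_1^2)$, matching the first error term in the lemma. The whole task is to sharpen the bound on $\br{\Deltat(\bvt^{\otimes 3},I)}_i$ from the crude Frobenius bound used in \eqref{eq-dynamic-vt} to
\[
  \absr{\br{\Deltat(\bvt^{\otimes 3},I)}_i}
  = O\pr{\alpha^2 + \alpha(1-[\bvt_k]^2)^{1.5} + m\delta_1^2},
\]
after which the ratio $8\vt_i\n{\vt}/(4[\vt_i]^2) = 2\n{\vt}/\vt_i$ produces the $\n{\vt}/|\vt_i|$ factor in the statement.

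To bound this projection I would decompose $\Deltat = \sum_l(\hatt_l e_l^{\otimes 4}-\Tt_l) - \Tt_\varnothing$ and control each summand separately. The $\Tt_\varnothing$ part contributes at most $\n{\Tt_\varnothing}_F = O(m\delta_1^2)$. For $l=i$, I would combine the $\hatt_i\bvt_i^3$ and $\hatt_i\Et_{i,w}[(\bar{w}\cdot\bvt)^3\bar{w}_i]$ pieces into a single difference and expand $\bar{w}\cdot\bvt = \bar{w}_i\bvt_i + y$ with $y=\sum_{j\ne i}\bar{w}_j\bvt_j$: the leading coefficient is $1-\Et_{i,w}[\bar{w}_i^4] = O(\alpha^2)$ by the average correlation bound of Proposition~\ref{prop:main}(b), and the three remaining binomial pieces are controlled using $y^2\le 1-\bar{w}_i^2\le\alpha$ pointwise together with $\Et_{i,w}[y^2]\le\alpha^2$ in expectation, giving a total of $O(\alpha^2)$. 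For $l=k$ the hypothesis $\hatt_k\le\alpha$ is crucial: since $|(\bar{w}\cdot\bvt)^3|\le 1$, Jensen together with $\Et_{k,w}[\bar{w}_i^2]\le\alpha^2$ gives $|\Et_{k,w}[(\bar{w}\cdot\bvt)^3\bar{w}_i]|\le\alpha$, so $|\Tt_k(\bvt^{\otimes 3},I)_i|\le\hatt_k\alpha=O(\alpha^2)$.

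The delicate remaining case is $l\ne i,k$, which is what produces the $\alpha(1-[\bvt_k]^2)^{1.5}$ contribution. For such $l$ I would split $\bar{w}\cdot\bvt = \bar{w}_l\bvt_l + y_l$ with $y_l=\sum_{j\ne l}\bar{w}_j\bvt_j$ and expand $(\bar{w}\cdot\bvt)^3\bar{w}_i$ into its four binomial pieces. The only slow-decaying piece is the leading $\bvt_l^3\Et_{l,w}[\bar{w}_l^3\bar{w}_i]$: Cauchy--Schwarz with $\Et_{l,w}[\bar{w}_l^6]\le 1$ and $\Et_{l,w}[\bar{w}_i^2]\le\alpha^2$ gives $|\Et_{l,w}[\bar{w}_l^3\bar{w}_i]|\le\alpha$, and together with $|\bvt_l|^3\le(1-[\bvt_k]^2)^{1.5}$ this contributes $O(\hatt_l\,\alpha(1-[\bvt_k]^2)^{1.5})$. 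The other three pieces each carry an extra factor of $y_l$, for which $\Et_{l,w}[y_l^2]\le\alpha^2$ and $y_l^2\le\alpha$ pointwise, so they are each $O(\hatt_l\alpha^2)$ by H\"older together with $1-[\bvt_k]^2\le\alpha$. Summing over $l\ne i,k$ and using $\sum_l\hatt_l\le 1$ finishes the estimate.

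The main obstacle is choosing the right mixture of pointwise and in-expectation bounds at each step. A naive argument using only the pointwise individual bound $\bar{w}_i^2\le\alpha$ (Proposition~\ref{prop:main}(a), for $w\in\St_l$ with $l\ne i$) yields at best $O(\sqrt{\alpha}(1-[\bvt_k]^2)^{1.5}+\alpha^{1.5})$, which is strictly weaker than the target. The correct bound requires consistently replacing each factor of $\bar{w}_i$ inside an expectation by its $L^2$ norm $\Et_{l,w}[\bar{w}_i^2]^{1/2}\le\alpha$, which is exactly the improvement going from the individual to the average correlation bound in Proposition~\ref{prop:main}(b); applying this substitution across all four binomial pieces is what pins down the target $O(\alpha^2+\alpha(1-[\bvt_k]^2)^{1.5})$ rate.
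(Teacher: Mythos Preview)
Your approach is the same decomposition the paper uses, and the $l=k$ and $l\ne i,k$ cases are handled essentially as in the paper. But there is a genuine gap in the $l=i$ case that also contaminates your bookkeeping for the first error term.

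For $l=i$ you claim the difference $\hatt_i\bvt_i^3 - \hatt_i\Et_{i,w}[(\bar w\cdot\bvt)^3\bar w_i]$ is $O(\alpha^2)$. The leading piece $\hatt_i\bvt_i^3(1-\Et_{i,w}[\bar w_i^4])=O(\alpha^2)$ is fine, but the first cross piece $3\hatt_i\bvt_i^2\,\Et_{i,w}[\bar w_i^3 y]$ is not: here $w\in\St_i$, so $\bar w_i$ is close to $1$ and you only get $|\Et_{i,w}[\bar w_i^3 y]|\le(\Et_{i,w}[y^2])^{1/2}\le\alpha$, yielding $O(\hatt_i|\bvt_i|^2\alpha)$. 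There is no second small factor available (unlike the $l\ne i,k$ case, where both $\bar w_i$ and $y_l$ are small in $L^2$). Since $|\bvt_i|$ is only $O(\sqrt{\log d/d})$ in this setting and $\alpha\le O(\log d/d)$, this term is genuinely larger than $\alpha^2$, so your displayed bound $\absr{[\Deltat(\bvt^{\otimes 3},I)]_i}=O(\alpha^2+\alpha(1-[\bvt_k]^2)^{1.5}+m\delta_1^2)$ is false as stated.

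The fix is exactly what the paper does: keep this piece separate, note that after multiplying by $8\vt_i\|\vt\|$ it becomes $O([\vt_i]^2\hatt_i\alpha)$ (because the extra $|\bvt_i|$ cancels the $\|\vt\|/|\vt_i|$), and absorb it into the \emph{first} error term $\pm O(\alpha+m\delta_1^2)$ rather than the second. So that first error term is not solely the $\Deltat(\bvt^{\otimes 4})$ contribution, as you claimed; it also carries this $l=i$ cross piece. With that rerouting your argument goes through.

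One smaller point: in the $l\ne i,k$ case you invoke ``$1-[\bvt_k]^2\le\alpha$''. That inequality is not available in the setting of Lemma~\ref{lem-phase1-normthreshold} (the whole point there is that $[\bvt_k]^2$ is climbing up from $\log^4 d/d$). The $O(\hatt_l\alpha^2)$ bound you want for the three cross pieces is still correct, but for the right reason: use Cauchy--Schwarz with both $\Et_{l,w}[\bar w_i^2]\le\alpha^2$ and $\Et_{l,w}[y_l^2]\le\alpha^2$ (and the pointwise $y_l^2\le\alpha$ for the higher powers), not the unavailable bound on $1-[\bvt_k]^2$.
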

\begin{proof}
    In order to prove this lemma, we need a more careful analysis on $\frac{d}{dt}[\vt_i]^2$. Recall we can decompose $\Tt$ as $\sum_{i\in[d]}\Tt_i + \Tt_\varnothing$ and further write each $\Tt_i$ as $\hatt_ie_i^{\otimes 4} + (\Tt_i-\hatt_ie_i^{\otimes 4}).$ Note that $\fn{(\Tt_i-\hatt_ie_i^{\otimes 4})}=O(\hatt_i\alpha)$ and $\fn{\Tt_\varnothing}\leq m\delta_1^2.$ We can write down $\frac{\rd}{\rd t}[\vt_i]^2$ in the following form:
    \begin{align*}
    \frac{\rd}{\rd t}[\vt_i]^2 
    =& 4[\vt_i]^2 \left(2a_i [\bvt_i]^2 - \sum_{i\in[d]}a_i [\bvt_i]^4 \right)\\
    &-8\vt_i\n{\vt}\sum_{j\in [d]}\br{\Tt_j([\bvt]^{\otimes 3}, I)}_i
    -8\vt_i\n{\vt}\br{\Tt_\varnothing([\bvt]^{\otimes 3}, I)}_i\\
    &+4\vt_i\sum_{j\in[d]}\br{\Tt_j([\bvt]^{\otimes 4})\vt }_i
    +4\vt_i\br{(\Tt_\varnothing([\bvt]^{\otimes 4})\vt }_i
    \\
    &=4[\vt_i]^2 \left(2a_i [\bvt_i]^2 - \sum_{i\in[d]}a_i [\bvt_i]^4 \right)\\
    &-8\vt_i\n{\vt}\sum_{j\in [d]}\br{\Tt_j([\bvt]^{\otimes 3}, I)}_i
    \pm \vt_i\n{\vt} O(m\delta_1^2)\\
    &+4[\vt_i]^2\sum_{j\in[d]}\Tt_j([\bvt]^{\otimes 4})
    \pm [\vt_i]^2 O(m\delta_1^2)\\
    &=4[\vt_i]^2 \left(2a_i [\bvt_i]^2 - \sum_{i\in[d]}(a_i-\ha_i) [\bvt_i]^4 \pm O(\alpha+m\delta_1^2)\right)\\
    &-8\vt_i\n{\vt}\sum_{j\in [d]}\br{\Tt_j([\bvt]^{\otimes 3}, I)}_i
    \pm \vt_i\n{\vt} O(m\delta_1^2).
    \end{align*}

We now bound the term $\br{\Tt_j([\bvt]^{\otimes 3}, I)}_i$. 
\begin{enumerate}
    \item Case 1: $j=i$. If $\hatt_i= 0$, we know $\Tt_i=0$. Otherwise, denote $x=\inner{\bar{w}_{-i}}{\bvt_{-i}}$, we have 
    \begin{align*}
        &\br{\Tt_i([\bvt]^{\otimes 3}, I)}_i\\
        &= \hatt_i \Et_{i,w} \bar{w}_i\inner{\bar{w}}{\bvt}^3\\
        &= \hatt_i \Et_{i,w} \bar{w}_i\left((\bar{w}_i \bvt_i)^3 +  (\bar{w}_i \bvt_i)^2x + (\bar{w}_i \bvt_i)x^2 + x^3\right)\\
        &\le \hatt_i [\bvt_i]^3 + \hatt_i |\bvt_i| \Et_{i,w}|x| + \hatt_i |\bvt_i| \Et_{i,w}x^2 + \hatt_i \Et_{i,w} x^3.
    \end{align*}
    Since $|x|\le\n{\bar{w}_{-1}}$ and $\Et_{i,w}\n{\bar{w}_{-i}}\le (\Et_{i,w}\n{\bar{w}_{-i}}^2)^{1/2}=O(\alpha)$, we have $\br{\Tt_i([\bvt]^{\otimes 3}, I)}_i = \hatt_i [\bvt_i]^3 + \hatt_i |\bvt_i| O(\alpha) + \hatt_i O(\alpha^{2.5})$.

    \item Case 2: $j=k$. We have $\br{\Tt_k([\bvt]^{\otimes 3}, I)}_i = \hatt_k \Et_{k,w} \bar{w}_i\inner{\bar{w}}{\bvt}^3\le \hatt_k \Et_{k,w} |\bar{w}_i| = O(\alpha^2)$, since $\hatt_k\le \alpha$ and $\Et_{k,w} |\bar{w}_i|\le (\Et_{k,w} |\bar{w}_i|^2)^{1/2}=O(\alpha)$.
    
    \item Case 3: $j\ne i,k$. $j\ne i,k$. If $\hatt_j= 0$, we know $\Tt_j=0$. Otherwise, we can write $\Tt_j$ as $\hatt_j\Et_{j,w}\bar{w}^{\otimes 4}.$ So we just need to bound $\Et_{j,w} \bar{w}_i\inner{\bar{w}}{\bvt}^3.$ We know $\absr{\inner{\bar{w}}{\bvt}} = \absr{\inner{\bar{w}_{-j}}{\bvt_{-j}}+\bar{w}_j\bvt_j }\leq \n{\bar{w}_{-j}}+\sqrt{1-[\bvt_k]^2}.$ So we have 
    \begin{align*}
    \Et_{j,w} \bar{w}_i\inner{\bar{w}}{\bvt}^3 
    =&  \Et_{j,w} \bar{w}_i O\pr{\n{\bw_{-j}}^3+(1-[\bvt_k]^2)^{1.5}}\\
    \leq& O\pr{\alpha^3+\alpha(1-[\bvt_k]^2)^{1.5}},
    \end{align*}
    where in the lase line we use $\Et_{j,w} \bar{w}_i \le (\Et_{j,w} \bar{w}_i^2)^{1/2}=O(\alpha)$.
\end{enumerate} 

Recall that $\tat_i=a_i-\hatt_i$. We now have
\begin{align*}
    \frac{\rd}{\rd t}[\vt_i]^2 
    =& 4[\vt_i]^2 \left(2\tat_i [\bvt_i]^2 - \sum_{i\in[d]}\tat_i [\bvt_i]^4 \pm O(\alpha+m\delta_1^2) \right.\\ &\pm \left.O\left(\frac{(\alpha^2 + \alpha (1-[\bvt_k]^2)^{1.5}+m\delta_1^2)\n{\vt}}{|\vt_i|}\right)\right).
\end{align*}

\end{proof}

\section{Proofs for Phase 2}\label{sec:proof_phase2}
The goal of this section is to show that all discovered directions can be
fitted within time $t_2^{(s)} - t_1^{(s)}$ and the reinitialized components 
will not move significantly. Namely, we prove the following lemma. 
\phasetwomain*

Note that since 
$\delta_1^2 = \poly(\eps) / \poly(d)$ and $\log(d/\eps) = o(d / \log d)$, 
we have $t_2^{(s)} - t_1^{(s)} = \frac{o(d/\log d)}{\beta^{(s)}}$.
\paragraph{Notations}
As in Sec.~\ref{sec: appendix, induction hypothesis}, to simplify the
notations, we shall drop the superscript of epoch $s$, and write 
$\zt := \inner{\bvt}{\bwt}$ and $\tat_k := a_k - \hatt_k$. Within this section,
we write $T := t_2^{(s)} - t_1^{(s)}$.

\paragraph{Proof overview}
The first part is proved using the analysis in Appedix~\ref{sec: appendix, induction hypothesis}. Note that we should view the analysis in this section and the analysis in Appendix~\ref{sec: appendix, induction hypothesis} as a whole induction/continuity argument. It's easy to verify that at any time $t_1^{(s)}\leq t\leq t_2^{(s)}$, Assumption~\ref{assumption: induction, oracle} holds and Proposition~\ref{prop:main} holds.

The second part is a simple corollary of Lemma~\ref{lemma: d hattk, 
lower bound} that gives a lower bound for the increasing speed of $\hatt_k.$

For the third part, we proceed as follows. At the beginning of phase 2, for any 
reinitialized component $\vt$, we know there exists some universal constant
$C > 0$ s.t.~ $[\bvt_k]^2 \le C \log d / d$ for all $k \in [d]$.
Let $T'$ be the minimum time needed for some $[\bvt_k]^2$ to reach 
$2C \log d / d$. For any $t \le T' + t_1^{(s)}$, we have $[\bvt_k]^2 \le 
2C \log d /d$ and then we can derive an upper bound on the movement speed of 
$\vt$, with which we show the change of $[\bvt_k]^2$ is $o(\log d / d)$ within 
time $T$. (Also note this automatically implies that $T' > T$.) To bound the
change of the norm, we proceed in a similar way but with $T'$ being the
minimum time needed for some $\|\vt\|$ to reach $2\delta_0$. (Strictly 
speaking, the actual $T'$ is the smaller one between them.)

\begin{lemma}
  If $S^{(s, t_1^{(s)})}_k 
  \ne \varnothing$, then after at most $\frac{4}{a_k} \log
  \left(\frac{a_k}{2 \delta_1^2}\right)$ 
  time, we have $\tat_k \le \lambda$. 
\end{lemma}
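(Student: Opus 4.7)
\medskip

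\noindent\textbf{Proof proposal.} The plan is to apply Lemma~\ref{lemma: d hattk, lower bound} directly: during phase~2 Assumption~\ref{assumption: induction, oracle}(c) rules out any new component entering $\St_k$, and reinitialization happens only at the boundary $t_1^{(s)}$, so there are no impulses on $\hatt_k$ in $(t_1^{(s)}, t_2^{(s)})$. Hence for all $t \in (t_1^{(s)}, t_2^{(s)})$,
\[
  \frac{1}{\hatt_k}\frac{\rd}{\rd t}\hatt_k \;\ge\; 2\tat_k - \lambda - O(\alpha^2).
\]
The initial condition $\hat{a}_k^{(s, t_1^{(s)})} \ge \delta_1^2$ is given by Lemma~\ref{lem:above_delta1}. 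Throughout, I will freely use Assumption~\ref{assumption: induction, oracle}(d), which implies $\lambda + O(\alpha^2) \le a_k/2$ (since $\lambda = O(\min_s \beta^{(s)}) \le O(a_k)$ and $\alpha^2 \ll \lambda$) as well as $\lambda \ge \delta_1^2$ (since $\lambda = \Omega(\sqrt{\alpha})$ and $\delta_1^2 \le \alpha^3/m \ll \sqrt{\alpha}$).

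I would then split the argument into two sub-phases. \emph{First sub-phase:} as long as $\hatt_k \le a_k/2$, we have $\tat_k \ge a_k/2$, and the differential inequality above gives $\dot{\hatt_k}/\hatt_k \ge a_k - \lambda - O(\alpha^2) \ge a_k/2$. Gr\"{o}nwall yields $\hatt_k \ge \delta_1^2 \exp(a_k(t - t_1^{(s)})/2)$, so $\hatt_k$ crosses $a_k/2$ within time $\frac{2}{a_k}\log(a_k/(2\delta_1^2))$. \emph{Second sub-phase:} once $\hatt_k \ge a_k/2$, the lower bound stays positive whenever $\tat_k > \lambda/2 + O(\alpha^2)$, so $\hatt_k$ never drops back below $a_k/2$; in this regime, $-\dot{\tat_k} = \dot{\hatt_k} \ge (a_k/2)(2\tat_k - \lambda - O(\alpha^2))$, and when $\tat_k \ge \lambda$ the right-hand side is at least $(a_k/2)\tat_k$ (absorbing $O(\alpha^2)$ into a constant fraction of $\lambda$). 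Thus $\tat_k$ decays exponentially at rate $a_k/2$ starting from $\tat_k \le a_k/2$, and reaches $\lambda$ within additional time at most $\frac{2}{a_k}\log(a_k/(2\lambda))$.

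Summing the two sub-phases and using $\lambda \ge \delta_1^2$ to bound $\log(a_k/(2\lambda)) \le \log(a_k/(2\delta_1^2))$, the total time is at most
\[
  \frac{2}{a_k}\log\!\left(\frac{a_k}{2\delta_1^2}\right) + \frac{2}{a_k}\log\!\left(\frac{a_k}{2\lambda}\right) \;\le\; \frac{4}{a_k}\log\!\left(\frac{a_k}{2\delta_1^2}\right),
\]
as claimed. The main (mild) obstacle is bookkeeping of the $\lambda$ and $O(\alpha^2)$ error terms at the cross-over point $\hatt_k = a_k/2$ and at $\tat_k \approx \lambda$; these are routine given the parameter hierarchy $\alpha^2 \ll \lambda \ll a_k$ supplied by Assumption~\ref{assumption: induction, oracle}(d). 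Everything else is a direct logistic/Gr\"{o}nwall estimate.
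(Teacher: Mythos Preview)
Your proposal is correct and follows essentially the same approach as the paper's proof: invoke Lemma~\ref{lemma: d hattk, lower bound}, split into the two sub-phases $\hatt_k \le a_k/2$ and $\hatt_k \ge a_k/2$, obtain the exponential growth/decay rates $a_k/2$ in each, and combine the two time bounds using $\lambda \ge \delta_1^2$. If anything, you are slightly more careful than the paper in explicitly noting the absence of impulses in phase~2 and invoking Lemma~\ref{lem:above_delta1} for the initial condition $\hat{a}_k^{(s,t_1^{(s)})} \ge \delta_1^2$.
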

\begin{proof}
  Recall that Lemma~\ref{lemma: d hattk, lower bound} says
  \footnote{$\alpha^2 = o(\lambda)$.}
  \[  
    \frac{1}{\hatt_k} \frac{\rd}{\rd t} \hatt_k
    \ge 2 \tat_k - \lambda - O\left(\alpha^2 \right).
  \]
  As a result, when $\tat_k < 2\lambda / 3$, we have
  $\frac{\rd}{\rd t} \hatt_k \ge \tat_k \hatt_k$ or, equivalently, 
  $\frac{\rd}{\rd t} \tat_k \le - \tat_k \hatt_k$. When $\hatt_k \le 
  a_k / 2$, we have $\frac{\rd}{\rd t} \hatt_k \ge a_k \hatt_k / 2$, whence 
  it takes at most $\frac{2}{a_k}\log\left( \frac{a_k}{2\delta_1^2}\right)$
  time for $\hatt_k$ to grow from $\delta_1^2$ to $a_k / 2$. When 
  $\hatt_k \ge a_k / 2$, we have $\frac{\rd}{\rd t} \tat_k \le - a_k
  \tat_k / 2$, whence it takes at most $\frac{2}{a_k}\log\left(
  \frac{a_k}{2 \lambda}\right)$. Hence, the total amount of time is upper
  bounded by $\frac{2}{a_k}\left( \log\left(\frac{a_k}{2 \delta_1^2}\right) +
  \log\left(\frac{a_k}{2 \lambda}\right) \right)$. Finally, use the fact
  $\lambda > \delta_1^2$ to complete the proof. 
\end{proof}

\begin{lemma}
  \label{lemma: phase 2, bounds for zt}
  For any $k \in [d]$ and $\bvt$ with $\|\bvt\|_\infty^2 \le O(\log d / d)$,
  we have $\Et_{k, w} [\zt]^4 = [\bvt_k]^4 \pm O\left( \frac{\log d}{d} \alpha
  \right)$. Meanwhile, for each $\bwt \in \St_k$, we have
  $\absr{\zt} \le O\left(\sqrt{\frac{\log d}{d}}\right)$. 
\end{lemma}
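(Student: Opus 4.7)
The plan is to decompose $\zt = \bvt_k \bwt_k + x$ where $x := \inner{\bvt_{-k}}{\bwt_{-k}}$, and control the ``off-diagonal'' part $x$ using condition~(\ref{itm: Ist, average}) of Proposition~\ref{prop:main}, which gives $\Et_{k,w}[1-[\bwt_k]^2] \le \alpha^2 + O(sm\delta_1^2)$, together with the hypothesis $\|\bvt\|_\infty^2 \le O(\log d/d)$.

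First I would establish the second claim, on $|\zt|$, since it is the simpler of the two. Since $\bwt \in \St_k$, we have $[\bwt_k]^2 \ge 1-\alpha$, so $|\bwt_k|\le 1$ and $\|\bwt_{-k}\|\le \sqrt{\alpha}$. By Cauchy--Schwarz $|x| \le \|\bvt_{-k}\|\|\bwt_{-k}\| \le \sqrt{\alpha}$, while $|\bvt_k\bwt_k| \le \|\bvt\|_\infty = O(\sqrt{\log d/d})$. Under the parameter choices in Theorem~\ref{thm:main} (in particular $\alpha \le O(\lambda^2) \le O((\log d/d)^2)$), $\sqrt{\alpha}$ is dominated by $\sqrt{\log d/d}$, giving $|\zt| \le O(\sqrt{\log d /d})$.

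Next, for the first claim I would binomially expand
\[
\Et_{k,w}[\zt]^4 = \sum_{j=0}^{4}\binom{4}{j} [\bvt_k]^{4-j}\, \Et_{k,w}\!\left\{[\bwt_k]^{4-j} x^{j}\right\},
\]
and show that the $j=0$ term gives $[\bvt_k]^4$ up to acceptable error, while the $j\ge 1$ terms are absorbed into $O(\log d/d \cdot \alpha)$. For $j=0$: $\Et_{k,w}[\bwt_k]^4 = 1 - \Et_{k,w}(1-[\bwt_k]^2)(1+[\bwt_k]^2) = 1 \pm O(\alpha^2)$, so this term equals $[\bvt_k]^4 \pm O((\log d/d)^2 \alpha^2)$. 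For the cross terms I will use the key $x$-moment bound $x^2 \le \|\bvt\|_\infty^2 \cdot (d-1)\sum_{i\ne k} \bwt_i^2 \le O(\log d)(1-[\bwt_k]^2)$ (Cauchy--Schwarz on $\ell_1\cdot\ell_\infty$), yielding $\Et_{k,w} x^2 \le O(\log d)\,\alpha^2$ and, by Jensen, $\Et_{k,w}|x| \le O(\sqrt{\log d})\,\alpha$.

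With these estimates the $j=1$ term is bounded by $O((\log d/d)^{1.5})\cdot O(\sqrt{\log d})\,\alpha = O((\log d)^2/d^{1.5})\,\alpha$, the $j=2$ term by $O(\log d/d)\cdot O(\log d)\,\alpha^2$, and the $j=3,4$ terms are even smaller (using $\Et_{k,w}(1-[\bwt_k]^2)^{1.5} \le \alpha^{0.5}\Et_{k,w}(1-[\bwt_k]^2) \le \alpha^{2.5}$ and similarly for the fourth moment). All these are $o(\log d/d \cdot \alpha)$, so the total error is $O(\log d/d \cdot \alpha)$ as claimed. The only mild obstacle is being careful enough with the $\|\bvt\|_\infty$ bound so that the single-$\bvt_k$ factor in the $j=1$ cross term combines correctly with the $\sqrt{\alpha}$-size expected value of $x$; everything else is a routine combination of Cauchy--Schwarz and the average correlation bound from Proposition~\ref{prop:main}.
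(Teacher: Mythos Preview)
Your proposal is correct and follows essentially the same line as the paper: decompose $\zt = \bvt_k\bwt_k + x$, expand binomially, and control each cross term using the individual and average correlation bounds from Proposition~\ref{prop:main} together with $\|\bvt\|_\infty^2 \le O(\log d/d)$. The paper bounds $|x|$ more simply via $\ell_2$--$\ell_2$ Cauchy--Schwarz as $|x|\le \sqrt{1-[\bwt_k]^2}$ (without using $\|\bvt\|_\infty$ on $x$), whereas you use $\ell_\infty$--$\ell_1$ H\"older, picking up an extra $\sqrt{\log d}$ factor; this looser intermediate bound is still easily absorbed into $O(\frac{\log d}{d}\alpha)$, so the conclusion is unaffected.
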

\begin{proof}
  For simplicity, put $\xt = \inner{\bwt_{-k}}{\bvt_{-k}}$. Then we have
  \begin{align*}
    \Et_{k, w} [\zt]^4
    = \Et_{k, w} \bigg\{ 
        [\bwt_k]^4 [\bvt_k]^4
        & + 4 [\bwt_k]^3 [\bvt_k]^3 \xt
        + 6 [\bwt_k]^2 [\bvt_k]^2 [\xt]^2 \\
      & + 4 \bwt_k \bvt_k [\xt]^3
        + [\xt]^4
      \bigg\}.
  \end{align*}
  For the first term, we have $[\bvt_k]^4 \Et_{k, w} [\bwt_k]^4 
  = [\bvt_k]^4 \left( 1 \pm O( \alpha^2 )\right)$. To bound the rest terms,
  we compute
  \begin{align*}
    \Et_{k, w} \left\{ [\bwt_k]^3 [\bvt_k]^3 \xt \right\}
    &\le O(1) \left( \frac{\log d}{d} \right)^{1.5}
      \Et_{k, w} \sqrt{1 - [\bwt_k]^2}
    \le O(1) \left( \frac{\log d}{d} \right)^{1.5} \alpha, \\
    \Et_{k, w} \left\{ [\bwt_k]^2 [\bvt_k]^2 [\xt]^2 \right\} 
    &\le O(1)\frac{\log d}{d} \alpha^2 \\
    \Et_{k, w} \left\{ \bvt_k [\xt]^3 \right\}
    &\le O(1)\sqrt{\frac{\log d}{d}} \alpha^{2.5} \\
    \Et_{k, w} \left\{ [\xt]^4 \right\}
    &\le O(1)\alpha^3. 
  \end{align*}
  Use the fact $\alpha \le \log d / d$ and we get
  \begin{align*}
    \Et_{k, w} [\zt]^4
    = [\bvt_k]^4 \left( 1 \pm O( \alpha^2 )\right) 
      \pm O(1)\frac{\log d}{d} \alpha
    = [\bvt_k]^4 \pm O\left( \frac{\log d}{d} \alpha \right).
  \end{align*}
  
  For the individual bound, it suffices to note that 
  \[
    \absr{\zt}
    \le \absr{\bvt_k} + \sqrt{1 - [\bwt_k]^2}
    \le O\left(\sqrt{\frac{\log d}{d}}\right)
      + \sqrt{\alpha}
    = O\left(\sqrt{\frac{\log d}{d}}\right).
  \]
\end{proof}

\begin{lemma}[Bound on the tangent movement] 
  In Phase 2, for any reinitialized 
  component $\vt$ and $k \in [d]$, we have $[\bv^{(t_2)}_k]^2 = 
  [\bv^{(t_1)}_k]^2 + o(\log d / d)$. 
\end{lemma}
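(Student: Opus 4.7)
The proof proceeds by a continuity argument that is essentially sketched in the paragraph immediately preceding the lemma statement. Let $C > 0$ be the universal constant such that for any reinitialized component at the beginning of phase 2, $[\bv^{(t_1)}_k]^2 \le C\log d/d$ for all $k\in[d]$ (this follows from concentration of the uniform distribution on $\mathbb{S}^{d-1}$, already used throughout Section~\ref{sec:proof_init_phase1}). Define
\[
  T' := \inf\left\{ t \in [t_1, t_2] \;:\; \exists k\in[d], \; [\bvt_k]^2 \ge 2C\log d/d \right\},
\]
with $T' = t_2$ if no such $t$ exists. The plan is to show $T' = t_2$ and that the change of $[\bvt_k]^2$ over $[t_1, t_2]$ is $o(\log d/d)$, so that in fact $T' > t_2$, closing the argument.

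For $t\in[t_1, T']$, by definition $\|\bvt\|_\infty^2 \le 2C\log d/d$, and we plug this into Lemma~\ref{lemma: d vtk2} to decompose $\frac{d}{dt}[\bvt_k]^2 = G_1 - G_2 - G_3 \pm O(m\delta_1^2)$. For $G_1$, I would use Lemma~\ref{lemma: phase 2, bounds for zt} to replace $\Et_{k,w}[\zt]^4$ by $[\bvt_k]^4 \pm O(\alpha\log d/d)$, and combine with $[\bvt_k]^4 \le O(\log^2 d/d^2)$ to get $|G_1| \le O(\log^2 d/d^2) + O(\hatt_k\alpha\log d/d)$, with the cross term $\Et_{k,w}[\zt]^3\inner{\bw_{-k}}{\bv_{-k}}$ bounded via the Cauchy–Schwarz-type estimate in Lemma~\ref{lemma: phase 2, bounds for zt} together with $|\inner{\bw_{-k}}{\bv_{-k}}| \le \sqrt{\alpha}$. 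For $G_2$ and $G_3$, each summand carries a factor that involves both the $O(\sqrt{\log d/d})$ size of the small coordinate $\bvt_k$ and the $O(\sqrt\alpha)$ concentration of components in $\St_i$ around $e_i$; unpacking these via the same calculations as in Lemma~\ref{lemma: cross interaction, individual} gives $|G_2|, |G_3| = O(\log d / d) \cdot o(1)$.

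Combining these bounds, one obtains $\left|\frac{d}{dt}[\bvt_k]^2\right| \le R$ for some rate $R$ that, thanks to the extra factor of $\log d/d$ (or $\alpha$) picked up from $\|\bvt\|_\infty^2 \le 2C\log d/d$, satisfies $R \cdot T = o(\log d/d)$, where $T = t_2 - t_1 = O((\log(1/\delta_1)+\log(1/\lambda))/\beta^{(s)})$. Integrating across $[t_1, T']$ therefore gives $[\bv_k^{(T')}]^2 \le [\bv_k^{(t_1)}]^2 + o(\log d/d) < 2C\log d/d$, strictly. By the continuity of $[\bvt_k]^2$, this forces $T' = t_2$, and the displayed inequality at $t = t_2$ is exactly the desired conclusion.

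The main technical obstacle is the bookkeeping needed to show that every contribution to $\frac{d}{dt}[\bvt_k]^2$ carries an extra small factor (either $\log d/d$ or $\sqrt{\alpha}$) beyond the naive bound, so that the total change over the relatively long phase-2 window $T$ remains $o(\log d/d)$ rather than $O(\log d/d)$. The saving comes from two places: (i) $[\bvt_k]^2 \le O(\log d/d)$ means $G_1$ is dominated by $[\bvt_k]^4$-type quantities rather than $[\bvt_k]^2$-type ones, and (ii) the cross terms $G_2, G_3$ benefit from the concentration of existing basis-like components in $\St_i$ around $e_i$ (the $\alpha$ factor). The fact that phase 2 is shorter than phase 1 by a factor of roughly $\log d$ (i.e.\ $T = O(\log d/\beta^{(s)})$ vs.\ $t_1^{(s)} = O(d/(\beta^{(s)}\log d))$) is precisely what makes these savings sufficient.
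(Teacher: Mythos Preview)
Your overall architecture matches the paper's: set up the continuity argument, decompose via Lemma~\ref{lemma: d vtk2} into $G_1 - G_2 - G_3$, and exploit $\|\bvt\|_\infty^2 \le O(\log d/d)$ together with Lemma~\ref{lemma: phase 2, bounds for zt}. However, there is a genuine gap in the rate you obtain.

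The issue is that you lose the $\beta^{(s)}$ factor in $R$. Concretely, the leading contributions that remain after the $\Et_{k,w}[\zt]^4 \approx [\bvt_k]^4$ cancellation are $8\tat_k[\bvt_k]^4$ (from $G_1$) and $8[\bvt_k]^2\sum_{i\ne k}\tat_i[\bvt_i]^4$ (from $G_3$). These are \emph{not} controlled by $\alpha$ or by the ``concentration of components in $\St_i$ around $e_i$''; they are controlled by the residual bound $\max_i \tat_i \le O(\beta^{(s)})$, which follows from condition~(\ref{itm: Ist, residual}) of Proposition~\ref{prop:main} (large $a_i$'s are already fitted so $\tat_i \le O(\lambda) \le O(\beta^{(s)})$; small $a_i$'s satisfy $\tat_i \le a_i < \beta^{(s)}/(1-\gamma)$). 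With this, the paper gets $R = O(\beta^{(s)}\log^2 d/d^2)$, and since $T = o(d/\log d)/\beta^{(s)}$, the $\beta^{(s)}$ cancels and $R\cdot T = o(\log d/d)$. Your stated bound $|G_1| \le O(\log^2 d/d^2)$ (bounding $\tat_k \le 1$) and $|G_3| = O(\log d/d)\cdot o(1)$ via $\alpha$-type estimates do not carry this $\beta^{(s)}$; you would end up with $R\cdot T = O(\log^2 d/(d^2\beta^{(s)}))\cdot \log(1/\delta_1)$, which is not $o(\log d/d)$ in late epochs where $\beta^{(s)}$ is as small as $\Omega(\epsilon/\sqrt{d})$.

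A secondary point: your appeal to Lemma~\ref{lemma: cross interaction, individual} for $G_2,G_3$ is misplaced. That lemma assumes $\vt \in \St_k$ (so $[\bvt_k]^2 \ge 1-\alpha$), whereas here $\vt$ is a reinitialized component with all coordinates $O(\log d/d)$. The correct tool is Lemma~\ref{lemma: phase 2, bounds for zt}, which you already invoke for $G_1$; applying it to $G_3$ yields exactly $\tat_i[\bvt_i]^4 \pm O(a_i \alpha\log d/d)$ per summand, and then one needs $\tat_i \le O(\beta^{(s)})$ as above.
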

\begin{proof}
  Recall the definition of $G_1$, $G_2$ and $G_3$ from Lemma~\ref{lemma:
  d vtk2}. By Lemma~\ref{lemma: phase 2, bounds for zt}, we have
  \begin{align*}
    G_1 
    &\le 8 \tat_k \left( 1 - [\bvt_k]^2 \right) [\bvt_k]^4
      + O(1) a_k \frac{\log d}{d} \alpha 
      + 8 \hatt_k \Et_{k, w}\left\{ [\zt]^3 
        \inner{\bw_{-k}}{\bv_{-k}} \right\} \\
    &\le 8 \tat_k \left( 1 - [\bvt_k]^2 \right) [\bvt_k]^4
      + O\left( a_k \frac{\log d}{d} \alpha  \right),
  \end{align*}
  where the second line comes from
  \begin{align*}
    \Et_{k, w}\left\{ [\zt]^3 \inner{\bw_{-k}}{\bv_{-k}} \right\}
    \le O(1) \frac{\log d}{d}
      \Et_{k, w} \sqrt{1 - [\bwt_k]^2}
    \le O\left( \frac{\log d}{d} \alpha \right).
  \end{align*}
  Similarly, we have $|G_2| \le O(1) \sum_{i \ne k} a_i \frac{\log d}{d}\alpha$.
  For $G_3$, by Lemma~\ref{lemma: phase 2, bounds for zt}, we have
  \begin{align*}
    a_i [\bvt_i]^4 - \hatt_i \Et_{i, w} \left\{ [\zt]^4 \right\}
    = \tat_i [\bvt_i]^4 \pm O\left( a_i \frac{\log d}{d} \alpha \right).
  \end{align*}
  Therefore
  \begin{align*}
    |G_3| 
    & \le 8 [\bvt_k]^2 \sum_{i \ne k} \left( 
        \tat_i [\bvt_i]^4 \pm O\left( a_i \frac{\log d}{d} \alpha \right)
      \right) \\
    &\le 8 [\bvt_k]^2 \left( 
        \left(\max_{i \ne k} \tat_i \right) O\left(\frac{\log d}{d}\right)
        + O\left(\frac{\log d}{d} \alpha \right)
      \right) \\
    &\le O\left( \beta^{(s)}  \frac{\log^2 d}{d^2} \right).
  \end{align*}
  Thus\footnote{$\alpha \le O(\beta^{(s)} \log d / d)$},
  \begin{align*}
    \frac{\rd}{\rd t} [\bvt_k]^2
    &\le 8 \tat_k [\bvt_k]^4
      + O\left(\frac{\log d}{d} \alpha  \right)
      + O\left( \beta^{(s)}  \frac{\log^2 d}{d^2} \right) \\
    &\le O\left( \beta^{(s)}  \frac{\log^2 d}{d^2} \right).
  \end{align*}
  Integrate both sides and recall that $T = \frac{o(d/\log d)}{\beta^{(s)}}$.
  Thus, the change of $[\bvt_k]^2$ is $o(\log d / d)$. 
\end{proof}

\begin{lemma}[Bound on the norm growth]
  In Phase 2, for any reinitialized 
  component $\vt$ and $k \in [d]$, we have $\left|\ns{v^{(t_2)}} 
  - \ns{v^{(t_2)}}\right| = o(\delta_0^2)$. 
\end{lemma}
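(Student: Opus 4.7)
The plan is to adapt the argument of the preceding (tangent) lemma to the radial direction. Starting from Lemma~\ref{lemma: d |v|2}, I would write
\[
\frac{1}{2\ns{\vt}}\frac{\rd}{\rd t}\ns{\vt}
= \sum_{i=1}^d a_i [\bvt_i]^4 - \sum_{i=1}^d \hatt_i \Et_{i, w}\{[\zt]^4\} - \Tt_\varnothing\left([\bvt]^{\otimes 4}\right) - \frac{\lambda}{2}.
\]
By the preceding lemma, $[\bvt_k]^2 = [\bv^{(t_1)}_k]^2 + o(\log d / d) = O(\log d / d)$ for every $k \in [d]$ and every $t \in [t_1^{(s)}, t_2^{(s)}]$, so Lemma~\ref{lemma: phase 2, bounds for zt} applies throughout Phase 2 and gives $\Et_{i,w}[\zt]^4 = [\bvt_i]^4 \pm O((\log d/d)\alpha)$. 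Combining the first two terms on the right-hand side collapses them into $\sum_i \tat_i [\bvt_i]^4$ plus an $O((\log d / d)\alpha)$ error.

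Next I would bound each remaining piece in absolute value. Using the upper bound $\tat_i \le \beta^{(s)}/(1-\gamma)$ from Proposition~\ref{prop:main}~(c) together with $\|\bvt\|_\infty^2 = O(\log d / d)$ and $\sum_i [\bvt_i]^2 = 1$, we get $\left|\sum_i \tat_i [\bvt_i]^4\right| \le O(\beta^{(s)} \log d / d)$. For the perturbation, $|\Tt_\varnothing([\bvt]^{\otimes 4})| \le \fn{\Tt_\varnothing} \le m\delta_1^2 = O(\alpha^3)$. Putting everything together,
\[
\left|\frac{\rd}{\rd t}\log \ns{\vt}\right|
\le O\!\left(\frac{\beta^{(s)}\log d}{d}\right) + \lambda + O(\alpha^3) + O\!\left(\frac{\log d}{d}\alpha\right).
\]
Integrating over the Phase 2 window $T = t_2^{(s)} - t_1^{(s)} = o(d/\log d)/\beta^{(s)}$, the leading $O(\beta^{(s)}\log d/d) \cdot T$ term is $o(1)$ by exactly the same estimate used in the preceding lemma, and the other three contributions integrate to $o(1)$ via the Theorem~\ref{thm:main} parameter scaling (in particular $\lambda \le O(\eps/\sqrt d)$, $\beta^{(s)} \ge \eps$, $m\delta_1^2 = O(\alpha^3)$, and the hypothesis $\log(1/\eps) = o(d/\log d)$). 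Exponentiating yields $\ns{v^{(t_2)}} = \ns{v^{(t_1)}}(1 + o(1))$, and since the reinitialization at time $t_1^{(s)}$ sets $\ns{v^{(t_1)}} = \delta_0^2$, this gives $\left|\ns{v^{(t_2)}} - \ns{v^{(t_1)}}\right| = o(\delta_0^2)$.

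The main technical obstacle, relative to the tangent lemma, is that the regularizer $-\lambda/2$ now enters the evolution equation directly, so the critical smallness condition is $\lambda T = o(1)$ rather than just $(\log d/d) \cdot T = o(1)$. Verifying this requires using the tight branch $\lambda \le O(\eps/\sqrt d)$ of the parameter choice (the other branch $\lambda \le O(\log d/d)$ is too loose here), together with the stopping criterion $\beta^{(s)} \ge \eps$ and the hypothesis $\log(1/\eps) = o(d/\log d)$ that controls the length of Phase 2. A subsidiary point is that the bound $\|\bvt\|_\infty^2 \le O(\log d / d)$ demanded by Lemma~\ref{lemma: phase 2, bounds for zt} must hold on the entire interval on which we integrate; this is automatic once one folds the present argument into the joint continuity argument that also establishes the preceding tangent lemma, so that both bounds are maintained simultaneously along Phase 2.
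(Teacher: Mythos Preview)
Your approach for the upper direction matches the paper's: it too starts from Lemma~\ref{lemma: d |v|2}, applies Lemma~\ref{lemma: phase 2, bounds for zt} to collapse $a_i[\bvt_i]^4 - \hatt_i\Et_{i,w}\{[\zt]^4\}$ into $\tat_i[\bvt_i]^4 \pm O((\log d/d)\alpha)$, bounds $\sum_i \tat_i[\bvt_i]^4 \le (\max_i \tat_i)\,O(\log d/d) = O(\beta^{(s)}\log d/d)$, and integrates over $T = o(d/\log d)/\beta^{(s)}$. The one difference is that the paper simply \emph{drops} the nonpositive terms $-\lambda/2$ and $-\Tt_\varnothing([\bvt]^{\otimes 4})$ when upper-bounding the growth rate, so $\lambda$ never appears in the estimate at all; consistent with the title ``Bound on the norm growth'', the paper's proof only establishes the one-sided inequality.

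The genuine gap is in your two-sided bound, specifically the claim $\lambda T = o(1)$. First, the stopping criterion does not give $\beta^{(s)}\ge\eps$: the proof of Theorem~\ref{thm:main} only shows $\beta^{(s)}\ge\Omega(\eps/\sqrt d)$ before termination, and Assumption~\ref{assumption: induction, oracle}(d) only guarantees $\lambda \le O(\min_s\beta^{(s)})$, so in the late epochs $\lambda/\beta^{(s)}=\Theta(1)$ and hence $\lambda T = \Theta(\log(1/\delta_1)+\log(1/\lambda)) = \Theta(\log(d/\eps))$, which is not $o(1)$. Second, even granting your claim $\beta^{(s)}\ge\eps$, you would only get $\lambda T \le O(\log(d/\eps)/\sqrt d)$, and the hypothesis $\log(1/\eps)=o(d/\log d)$ does not force this to be $o(1)$ (for instance $\log(1/\eps)=\sqrt d$ is permitted and then $\lambda T = \Theta(1)$). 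So the shrinkage side of your argument does not close; if you want the lower bound you cannot treat the regularizer term by the same crude integration you use for the others.
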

\begin{proof}
  By Lemma~\ref{lemma: d |v|2} and Lemma~\ref{lemma: phase 2, bounds for zt},
  we have
  \begin{align*}
    \frac{1}{2 \ns{\vt}} \frac{\rd}{\rd t} \ns{\vt}
    &\le \sum_{i=1}^d 
      \left( a_i [\bvt_i]^4 - \hatt_i \Et_{i, w} [\zt]^4 \right) \\
    &\le \sum_{i=1}^d 
      \left( \tat_i [\bvt_i]^4 
        + a_i O\left(\frac{\log d}{d} \alpha \right)\right) \\
    &\le \left( \max_{i \in [d]} \tat_i \right)
      O\left( \frac{\log d}{d} \right)
        + O\left(\frac{\log d}{d} \alpha \right) \\
    &= \left( \max_{i \in [d]} \tat_i \right)
      O\left( \frac{\log d}{d} \right).
  \end{align*}
  Recall that $\max_{i \in [d]} \tat_i \le O(\beta^{(s)})$ and
  $\|\vt\| \le O(\delta_0)$. Hence,
  \[
    \frac{\rd}{\rd t} \ns{\vt}
    \le O\left( \beta^{(s)} \frac{\log d}{d} \right) \delta_0^2.
  \]
  Integrate both sides, use the fact $T = \frac{o(d/\log d)}{\beta^{(s)}}$, and
  then we complete the proof.
\end{proof}

\begin{proofof}{Lemma~\ref{lem:phase2}}
Lemma~\ref{lem:phase2} follows by combining the above lemmas with the analysis in Appendix~\ref{sec: appendix, induction hypothesis}.
\end{proofof}

\section{Proof for Theorem~\ref{thm:main}}\label{sec:proof_main_theorem}
In the section, we give a proof of Theorem~\ref{thm:main}. 
\maintheorem*

Note that Proposition~\ref{prop:main} guarantees any ground truth component with $a_i\geq \beta^{(s)}/(1-\gamma)$ must have been fitted before epoch $s$ starts. When $\beta^{(s)}$ decreases below $O(\epsilon/\sqrt{d}),$ all the ground truth components larger than $O(\epsilon/\sqrt{d})$ have been fitted and the residual $\fn{T-T^*}$ must be less than $\epsilon.$ Since $\beta^{(s)}$ decreases in a constant rate, the algorithm must terminate in $O(\log(d/\epsilon))$ epochs.

\begin{proof}
According to Lemma~\ref{lem:phase1} and Lemma~\ref{lem:phase2}, we know Proposition~\ref{prop:main} holds through the algorithm.
We first show that $\beta^{(s)}$ is always lower bounded by $\Omega(\epsilon/\sqrt{d})$ before the algorithm ends. For the sake of contradiction, assume $\beta^{(s)}\leq O(\frac{\epsilon}{\sqrt{d}})$. We show that $\fn{T^{(s,0)}-T^*}<\epsilon,$ which is a contradiction because our algorithm should have terminated before this epoch. For simplicity, we drop the superscript on epoch $s$ in the proof. 

We can upper bound $\fn{T^* - \Tt}$ by splitting $T^*$ into $\sum_{i\in [d]}T^*_i$ and splitting $\Tt$ into $\sum_{i\in [d]}\Tt_i + \Tt_\varnothing.$ Then, we have 
\begin{align*}
    \fn{T^*-\Tt} \leq& \fn{\sum_{i\in d}(a_i-\hatt_i)e_i^{\otimes 4}} + \sum_{i\in [d]}\fn{\Tt_i - \hatt_i e_i^{\otimes 4}} + \fn{\Tt_\varnothing}\\
    \leq& O\pr{\sqrt{d}\max\pr{\beta^{(s)},\lambda}} + O(\alpha + m\delta_1^2),
\end{align*}
where the second inequality holds because $(a_i-\hatt_i)\leq O(\max\pr{\beta^{(s)},\lambda}), \fn{\Tt_i - \hatt_i e_i^{\otimes 4}}\leq O(\hatt_i \alpha)$ and $\fn{\Tt_\varnothing}\leq m\delta_1^2.$
Choosing $\lambda,\alpha = O(\frac{\epsilon}{\sqrt{d}})$ and $\delta_1^2 = O(\frac{\epsilon}{m\sqrt{d}}),$ we have 
\[\fn{T^*-\Tt}<\epsilon.\]

Since $\beta^{(s)}$ starts from $O(1)$ and decreases by a constant factor at each epoch, it will decrease below $O(\frac{\epsilon}{\sqrt{d}})$ after $O(\log(d/\epsilon))$ epochs. This means our algorithm terminates in $O(\log(d/\epsilon))$ epochs.
\end{proof}

\section{Experiments}\label{sec:experiment}
In Section~\ref{sec:exp_detail}, we give detailed settings for our experiments in Figure~\ref{fig:ortho}. Then, we give additional experiments on non-orthogonal tensors in Section~\ref{sec:exp_nonortho}.

\subsection{Experiment settings for orthogonal tensor decomposition}\label{sec:exp_detail}
We chose the ground truth tensor $T^*$ as $\sum_{i\in [5]} a_i e_i^{\otimes 4}$ with $e_i\in \R^{10}$ and $a_i/a_{i+1} = 1.2.$ We normalized $T^*$ so its Frobenius norm equals $1$. 

Our model $T$ was over-parameterized to have $50$ components. Each component $W[:,i]$ was randomly initialized from $\delta_0 \text{Unif}(\mathbb{S}^{d-1})$ with $\delta_0 =10^{-15}.$

The objective function is $\frac{1}{2}\fns{T-T^*}.$ We ran gradient descent with step size $0.1$ for $2000$ steps. We repeated the experiment from $5$ different experiments and plotted the results in Figure~\ref{fig:ortho}. Our experiments was ran on a normal laptop and took a few minutes. 

\subsection{Additional results on non-orthogonal tensor decomposition}\label{sec:exp_nonortho}
In this subsection, we give some empirical observations that suggests non-orthogonal tensor decomposition may not follow the greedy low-rank learning procedure in~\cite{li2020towards}.

\paragraph{Ground truth tensor $T^*$:} The ground truth tensor is a $10\times 10\times 10\times 10$ tensor with rank $5$. It's a symmetric and non-orthogonal tensor with $\fn{T^*}=1.$ The specific ground truth tensor we used is in the code.

\paragraph{Greedy low-rank learning (GLRL):} We first generate the trajectory of the greedy low-rank learning. In our setting, GLRL consists of $5$ epochs. At initialization, the model has no component. At each epoch, the algorithm first adds a small component (with norm $10^{-60}$) that maximizes the correlation with the current residual to the model, then runs gradient descent until convergence. 

To find the component that has best correlation with residual $R$, we ran gradient descent on $R(w^{\otimes 4})$ and normalize $w$ after each iteration. In other words, we ran projected gradient descent to solve $\min_{w\mid \n{w}=1}R(w^{\otimes 4}).$ We repeated this process from $50$ different initializations and chose the best component among them.

In the experiment, we chose the step size as $0.3$. And at the $s$-th epoch, we ran $s\times 2000$ iterations to find the best rank-one approximation and also ran $s\times 2000$ iterations on our model after we included the new component. After each epoch, we saved the current tensor as a saddle point. We also included the zero tensor as a saddle point so there are $6$ saddles in total. 

Figure~\ref{fig:greedy_loss} shows that the loss decreases sharply in each epoch and eventually converges to zero.

\begin{figure}[h]
\centering
\subfigure{
    \includegraphics[width=3in]{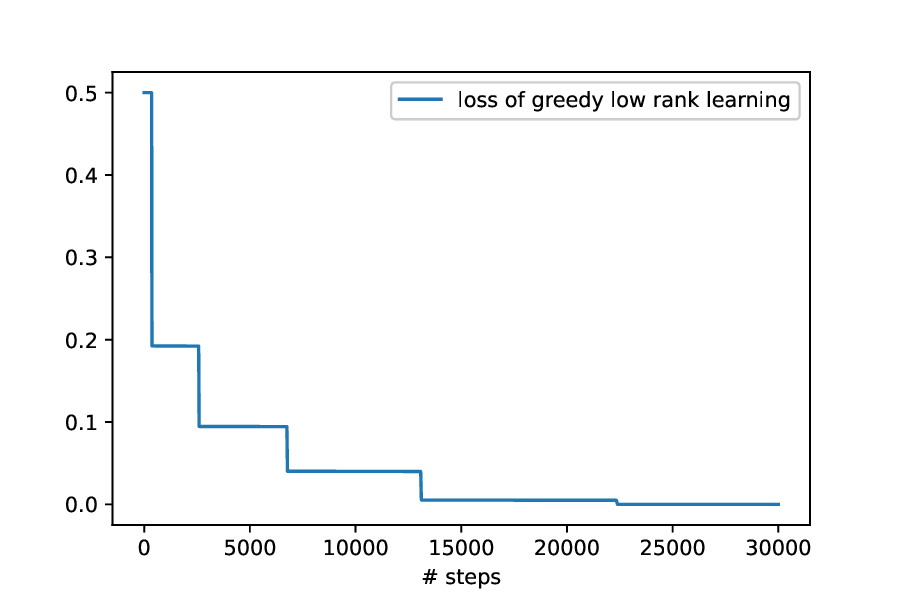}
}
\caption{Loss trajectory of greedy low-rank learning. }
\label{fig:greedy_loss}
\end{figure}

\paragraph{Over-parameterized gradient descent:}
If the over-parameterized gradient descent follows the greedy low-rank learning procedure, one should expect that the model passes the same saddles when the tensor rank increases. To verify this, we ran experiments with gradient descent and computed the distance to the closest GLRL saddles at each iteration. 

Our model has $50$ components and each component is initialized from $\delta_0 \text{Unif}(\mathbb{S}^{d-1})$ with $\delta_0 = 10^{-60}.$ We ran gradient descent with step size $0.3$ for $1000$ iterations.

\begin{figure}[h]
\centering
\subfigure{
    \includegraphics[width=2.65in]{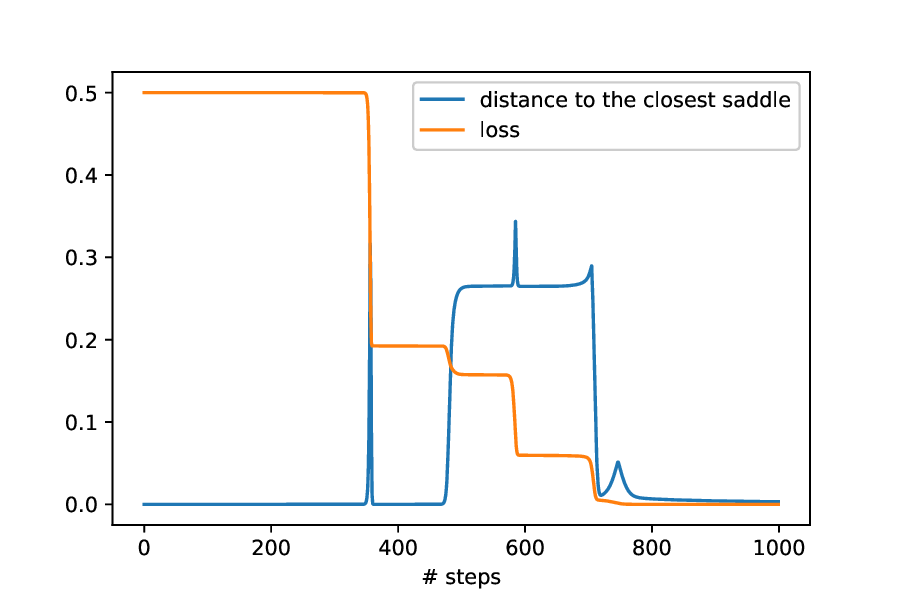}
}
\subfigure{
    \includegraphics[width=2.65in]{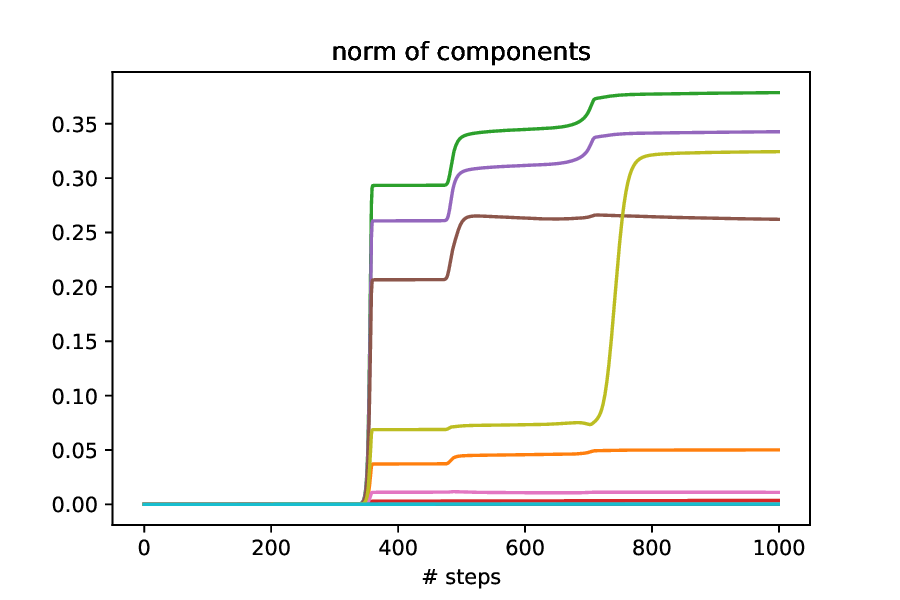}
}
\caption{Non-orthogonal tensor decomposition with number of components $m=50$ and initialization scale $\delta_0=10^{-60}.$
The left figure shows the loss trajectory and the distance to the closest GLRL saddles; the right figures shows the norm trajectory of different components.}
\label{fig:distance_norm_growth}
\end{figure}

Figure~\ref{fig:distance_norm_growth} (left) shows that after fitting the first direction, over-parameterized gradient descent then has a very different trajectory from GLRL. After roughly $450$ iterations, the loss continues decreasing but the distance to the closest saddle is high. After $800$ iterations, gradient descent converges and the distance to the closest saddle (which is $T^*$) becomes low. 

In Figure~\ref{fig:distance_norm_growth} (right), we plotted the norm trajoeries for $10$ of the components. The figure shows that some of the already large components become even larger at roughly $450$ iterations, which corresponds to the second drop of the loss. We picked two of these components and found that their correlation $\inner{\bar{w}}{\bar{v}}$ drops from $1$ at the $400$-th iteration to $0.48$ at the $550$-th iteration. This suggests that two large component in the same direction can actually split into two directions in the training. 

One might suspect that this phenomenon would disappear if we use more aggressive over-parameterization and even smaller initialization. We then let our model have $1000$ components and let the initialization size to be $10^{-100}$ and re-did the experiments. We observed almost the same behavior as before. Figure~\ref{fig:distance_norm_growth_large} (left) shows the same pattern for the distance to closest GLRL saddles as in Figure~\ref{fig:distance_norm_growth}. In Figure~\ref{fig:distance_norm_growth_large} (right), we randomly chose $10$ of the $1000$ components and plotted their norm change, and we again observe that one large component becomes even larger at roughly iteration 700 that corresponds to the second drop of the loss function.

\begin{figure}[h]
\centering
\subfigure{
    \includegraphics[width=2.65in]{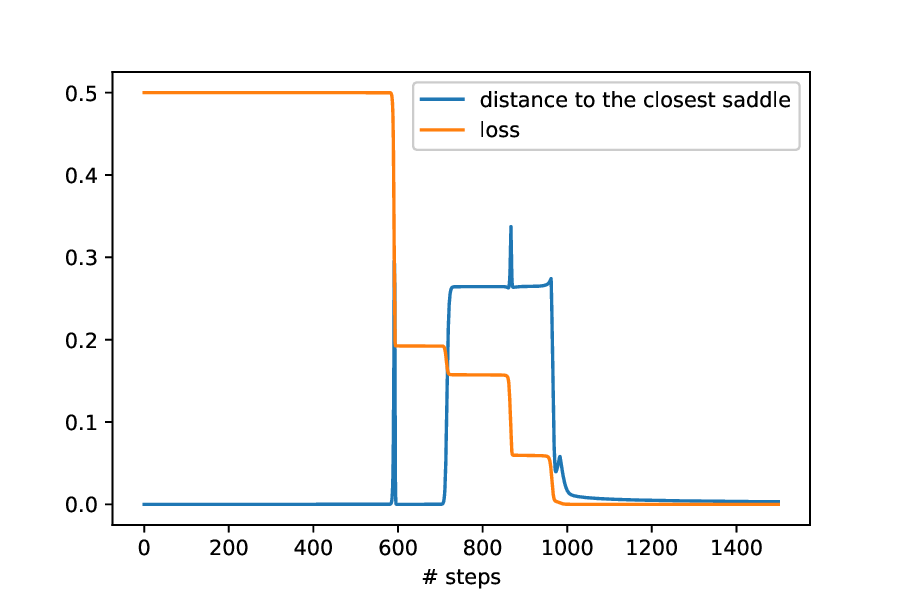}
}
\subfigure{
    \includegraphics[width=2.65in]{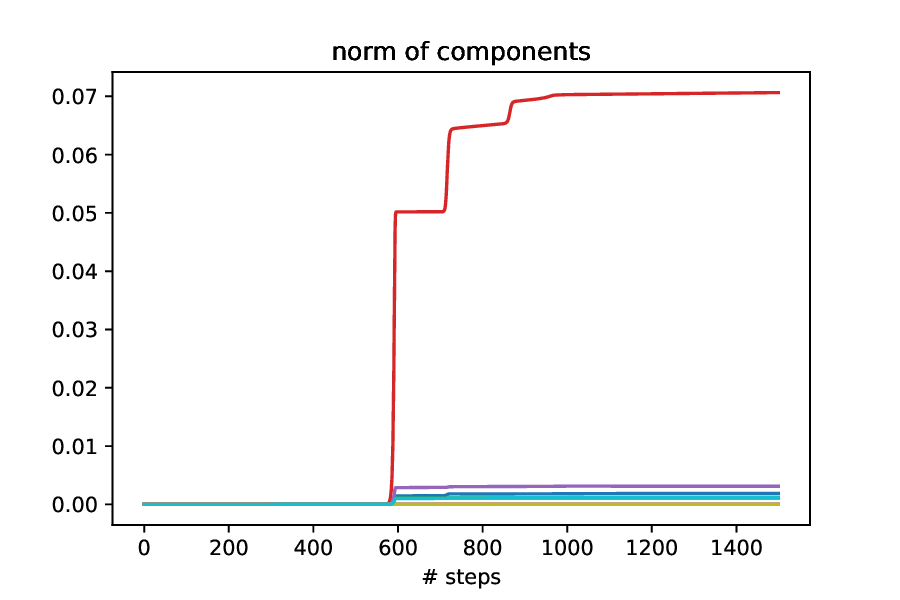}
}
\caption{Non-orthogonal tensor decomposition with number of components $m=1000$ and initialization scale $\delta_0=10^{-100}.$
The left figure shows the loss trajectory and the distance to the closest GLRL saddles; the right figures shows the norm trajectory of different components.}
\label{fig:distance_norm_growth_large}
\end{figure}

\end{document}